\DeclareMathOperator*{\argmin}{arg\,min}
\DeclareMathOperator*{\argmax}{arg\,max}
\newcommand\vone{\mathbb{1}}
\newcommand\vzero{\mathbb{0}}
\newcommand\ico{\bm{\Upsilon}}
\newcommand\tsp{\tau}
\newcommand\Tsp{\mathbf{T}}
\newcommand\TP{\psi}
\newcommand\LL{\ell}
\newcommand\myheader[1]{\noindent\(\bullet\,\,\)\textbf{#1}.}
\newcommand{\bfA}{\mathbf{A}}
\newcommand{\bfv}{\mathbf{v}}
\newcommand{\bfz}{\mathbf{z}}
\newcommand{\bfu}{\mathbf{u}}
\newcommand{\bfx}{\mathbf{x}}
\newcommand{\bfr}{\mathbf{r}}
\newcommand{\bmzeta}{\bm{\zeta}}
\newcommand{\bmalpha}{\bm{\alpha}}
\newcommand{\bmpi}{\mathbf{D}}
\newcommand{\bmPi}{\mathbf{M}}
\newcommand{\bfP}{\mathbf{P}}
\newcommand{\bmxi}{\bm{\xi}}
\newcommand{\bfw}{\mathbf{w}}
\newcommand{\bfp}{\mathbf{p}}
\newcommand{\bfq}{\mathbf{q}}
\newcommand{\calL}{\mathcal{L}}
\newcommand{\sqfunc}{g} 
\newcommand{\interior}{\mathsf{int}}
\newcommand{\bfe}{\mathbb{e}} 
\newcommand{\ran}{\mathsf{ran}} 
\newcommand{\cran}{\mathsf{ran}_{\mathtt{c}}} 
\newcommand{\bfc}{\mathbf{c}} 
\newcommand{\inj}{\mathsf{pad}} 
\newcommand{\proj}{\mathbf{Q}} 
\newcommand{\drop}{\mathbf{P}} 
\newcommand{\con}{\mathsf{trunc}} 
\newcommand{\closure}{\mathsf{clos}} 
\newcommand{\gsm}{\mathsf{link}}
\newcommand{\iter}{\times}
\newcommand{\margin}{\bmpi}
\newsavebox\foobox
\numberwithin{theorem}{section} 
\begin{document}

\title{Unified Binary and Multiclass Margin-Based Classification}

\author{%
  \name{Yutong Wang\textsuperscript{1,2}} \email{yutongw@umich.edu}\\
  \name{Clayton Scott\textsuperscript{1,3}} \email{clayscot@umich.edu}\\
  \addr
  \textsuperscript{1}Department of Electrical Engineering and Computer Science\\
  \textsuperscript{2}Michigan Institute of Data Science\\
 \textsuperscript{3}Department of Statistics\\
 University of Michigan\\
 Ann Arbor, MI 48109, USA
}

\editor{Zhihua Zhang}

\maketitle

\begin{abstract}
  The notion of margin loss has been central to the development and analysis of algorithms for binary classification. To date, however, there remains no consensus as to the analogue of the margin loss for multiclass classification. In this work, we show that a broad range of multiclass loss functions, including many popular ones, can be expressed in the \emph{relative margin form}, a generalization of the margin form of binary losses. The relative margin form is broadly useful for understanding and analyzing multiclass losses as shown by our prior work \citep{wang2020weston,wang2021exact}. To further demonstrate the utility of this way of expressing multiclass losses, we use it to extend the seminal result of \cite{bartlett2006convexity} on classification-calibration of binary margin losses to multiclass.
  We then analyze the class of Fenchel-Young losses, and expand the set of these losses that are known to be classification-calibrated.
\end{abstract}
\begin{keywords}
   Classification, loss functions, consistency, margins, label encodings
\end{keywords}


%




\section{Introduction}

Classification into \(k \ge 2\) categories is the learning task of selecting a \emph{classifier}, i.e., a function from the feature space \(\mathcal{X}\) to the set of labels \([k]:=\{1,\dots, k\}\), given training data. Many of the most popular and successful classification methods aim to find a classifier with minimum risk, where the risk of a classifier is the expected value of a loss function that measures the quality of predictions. Indeed, logistic regression, support vector machines, boosting, and neural network methods can all be viewed as algorithms to minimize the risk associated with a certain loss. In practice, ``loss-based'' approaches to binary (\(k = 2\)) and multiclass (\(k \ge 3\)) cases are formulated differently. Because of this discrepancy, the theory and practice of multiclass methods often lag behind their binary counterparts. The purpose of this work is to bridge this gap by introducing a framework that unifies binary and multiclass loss-based classification.

The standard approach to binary classification is to learn a \emph{discriminant} function \(g : \mathcal{X} \to \mathbb{R}\) that maps an instance \(x\) to a \emph{discriminant} \(g(x)\), e.g., \(g(x) = w^T x + b \) for linear classification. Identifying \(\{1,2\}\) with \(\{- 1, +1\}\), a label \(y\) is predicted by \(\mathrm{sign}(g(x))\). The loss ascribed to a discriminant \( g \) and a pair \( (x,y)\) is defined in terms of a function \(\psi: \mathbb{R} \to \mathbb{R}\), with the loss being equal to \(\TP((-1)^{y} g(x))\). Loss functions of this form are referred to as binary \emph{margin loss} functions. Examples include the logistic loss \( \psi(t) = \log(1 + e^{-t})\) (logistic regression), the hinge loss \( \psi(t) = \max(0, 1-t)\) (support vector machines), the exponential loss \( \psi(t) = e^{-t}\) (AdaBoost), and the sigmoid loss \( \psi(t) = 1/(1 + e^t)\) (some neural networks).

The conventional approach to multiclass classification seeks to learn a \emph{class-score function} \(f = (f_{1},\dots, f_{k}): \mathcal{X} \to \mathbb{R}^{k}\), e.g., a feed-forward neural network. The label for an input \(x\) is predicted by taking the argmax of the \emph{class-score vectors} \(\bfv := f(x)\). Multiclass loss functions, e.g., cross entropy, may be viewed as functions \(\calL: \{1,\dots,k\} \times \mathbb{R}^{k} \to \mathbb{R}\). The loss incurred by a class-score function \(f\) on a pair \((x,y) \in \mathcal{X} \times [k]\) is then \(\calL(y, f(x))\). When a multiclass loss function is expressed in terms of a class-score function output, we say that it is in \emph{class-score form}.

We show that a large family of multiclass loss functions in class-score form, including cross-entropy, multiclass exponential loss \citep{mukherjee2013theory}, multiclass hinge losses \citep{crammer2001algorithmic,weston1998multi},
Gamma-Phi losses \citep{beijbom2014guess},
and Fenchel-Young losses \citep{blondel2020learning}, can be expressed in what we call the \emph{relative-margin form}. Instead of being defined in terms of a class-score function output \(f(x)\), in the relative margin form a loss is expressed \(\TP(\ico_{y} g(x))\), which involves the following elements:

\emph{1.}~a symmetric\footnote{
  Recall that a function  is symmetric if its value does not change when its input is permuted.
  } function \(\psi : \mathbb{R}^{k-1} \to \mathbb{R}\),

\emph{2.}~a set of \((k-1)\times (k-1)\) matrices \(\{\ico_{y}\}_{y \in [k]}\) that encode the label, and

\emph{3.}~a discriminant function of the form \(g : \mathcal{X} \to \mathbb{R}^{k-1}\).

When \(k=2\), we have \(\ico_{y} = (-1)^{y}\), and the relative margin form coincides with the standard binary margin form. See Figure~\ref{fig:framework} for an illustration of the framework. Thus, the relative margin form generalizes the notion of margin loss from binary classification.

\begin{figure}
  \centering
  \includegraphics[width=0.95\textwidth]{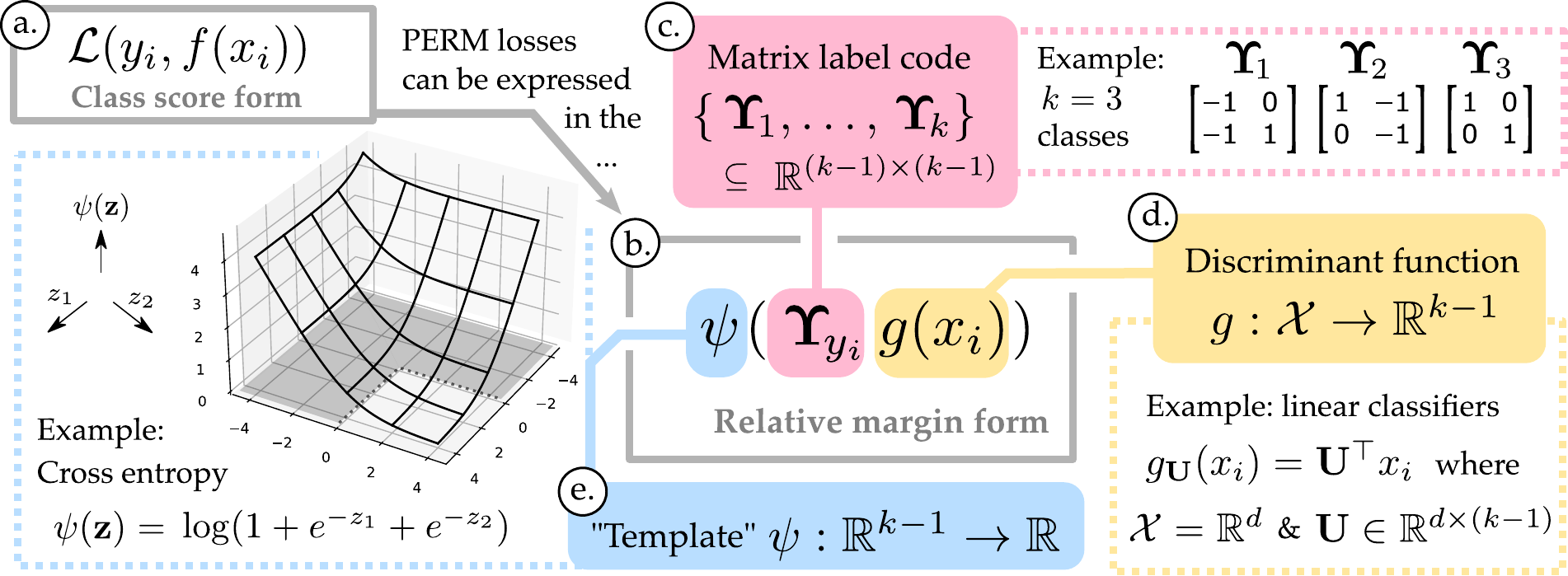}
  \caption{\textbf{Relative margin form}.\ \emph{Panel \((\mathrm{a})\)}:\ Multiclass losses \(\calL\) satisfying the permutation equivariant and  relative margin-based conditions (i.e., PERM losses, Definition~\ref{definition:PERM-loss}) can be expressed in the relative margin form as in \emph{Panel \((\mathrm{b})\)}. See
    {Theorem}~\ref{theorem:relative-margin-form}. The relative margin form employs three components: \emph{Panels \((\mathrm{c})\)}.\ the matrix label code \(\{\ico_{y}\}_{y \in [k]}\),
\emph{\((\mathrm{d})\)}.\
the discriminant function \(g\),
and
\emph{\((\mathrm{e})\)}.\
the ``template'', a symmetric function
 \(\psi : \mathbb{R}^{k-1} \to \mathbb{R}\).
  }\label{fig:framework}
\end{figure}

\subsection{Our contributions}\label{section:our-contribution}

In this work, we develop the relative margin from as described above, and illustrate the utility of this framework by establishing
novel results on \emph{classification-calibration}\footnote{For readers unfamiliar with this theory, we provide a brief review below in the ``Background on Classification-Calibration'' section
(Section~\ref{section:background-on-CC}).}. More specifically, our contributions are:

\noindent\(\bullet\) \textbf{Theorem~\ref{theorem:relative-margin-form}: characterization of losses expressible in the relative margin form}.
We show that a multiclass loss function  can be expressed in the relative margin form if and only if both of the following conditions are met: \emph{1.~\underline{p}ermutation \underline{e}quivariance}: \(\calL\) treats all classes equally, and \emph{2.~\underline{r}elative \underline{m}argin-based}: \(\calL(y,f(x))\) depends only on the differences between the \(f_{j}(x)\)'s rather than the values of the \(f_{j}(x)\)'s themselves.
Below, such losses are referred to as \underline{PERM} losses.

Our characterization implies that many popular losses such as the cross entropy, multiclass exponential, and Fenchel-Young losses~\citep{blondel2019structured} are PERM losses.
PERM losses are characterized by their \emph{template}, a symmetric function \(\TP: \mathbb{R}^{k-1} \to \mathbb{R}\).
Thus, our result implies that for the analysis of existing and design of new PERM losses, it suffices to focus on the template \(\TP\) which can be simpler than the original \(\calL\).

\noindent\(\bullet\) \textbf{{Theorem}~\ref{theorem: nested family of regular PERM losses are CC - exposition version}:
  Extending a fundamental result on classification-calibration (CC) to the multiclass case}.
A seminal result of \citet{bartlett2006convexity} in the binary case shows that a convex margin loss is classification-calibrated if and only if \(\TP\) is differentiable at $0$ and has negative derivative there.
We use the PERM loss framework to prove a multiclass extension  of this result
for \(\calL\) that are totally regular
({Definition}~\ref{definition:regular-PERM-loss}).
We use Theorem~\ref{theorem: nested family of regular PERM losses are CC - exposition version} to prove a novel sufficient condition for CC of \emph{sums} of losses
({Proposition}~\ref{proposition:sum-of-totally-regular-loss-is-totally-regular}),
and, as a corollary, establish CC for sums of Gamma-Phi losses (Example~\ref{example:sum-of-Gamma-Phi-losses}).
A key component of the totally regular property is that the gradient of the template \(\nabla_{\TP}\) must be entry-wise negative everywhere\footnote{Thus, in the binary case, our result is weaker than that of \citet{bartlett2006convexity}, which only requires negativity at \(0 \in \mathbb{R}\). We leave as an open question whether this gap can be closed.}.

\noindent\(\bullet\) \textbf{{Theorem}~\ref{theorem: sufficient condition for FY loss to be CC - exposition version}:
Expanding previous sufficient conditions of CC for Fenchel-Young losses}.
Fenchel-Young losses, defined via taking the convex conjugate of a certain \emph{negentropy} function, have recently been proposed for structured prediction, including multiclass classification.
However, an open question is whether Fenchel-Young losses are classification-calibrated under the assumption that the negentropy is \emph{strictly convex}.
Our Theorem~\ref{theorem: sufficient condition for FY loss to be CC - exposition version}
answers this in the affirmative.

\noindent\(\bullet\) \textbf{The matrix label code}.
The key component in our analysis is
the
\emph{matrix label code}  (Definition~\ref{definition:matrix-label-code}) which extends to the multiclass case the universally adopted \(\{\pm 1\}\) label code for binary classification.
To obtain our main results, we developed a suite of technical lemmas for working with the matrix label code in
Appendix~\ref{section:appendix:matrix-label-code}.
We expect the matrix label code to be useful for multiclass classification research beyond the scope here.

\subsection{Background}\label{section:background-on-CC}
In this section, we define the probabilistic setting assumed throughout this work.
Moreover, we first review the background on classification-calibration.

Let \(\{(x_{i},y_{i})\}_{i = 1}^{n}\) be drawn from a joint distribution \(P\) over \(\mathcal{X} \times [k]\).
Let \(\mathbb{I}\) be the indicator function and \( \mathbb{I}\{y \ne j\}\) be the \emph{01-loss} for \(y,j \in [k]\).
The goal of classification is to select a {classifier} \(h : \mathcal{X} \to [k]\) minimizing the  \emph{01-risk}
  \(R_{01,P}(h) := \mathbb{E}_{(X,Y) \sim P} \left[ \mathbb{I}\{Y \ne h(X)\}\right]\) objective.
A fundamental approach for learning a classifier is \emph{empirical risk minimization} (ERM), which selects an \(h\) minimizing the empirical 01-risk
  \(\hat{R}_{01}^{n}(h) := \frac{1}{n}\sum_{i=1}^{n} \mathbb{I}\{y_{i} \ne h(x_{i})\}\) over a class of functions \(\mathcal{H}\).

Directly minimizing the empirical 01-risk objective is often intractable\footnote{See \citet{bhattacharyya2018hardness} and the references therein} due to the discreteness of the objective.
The {surrogate-based approach} addresses this issue as follows.
Define
the \emph{\(\calL\)-risk}
for a fixed
continuous loss function \(\calL\)
by \(  R_{\calL}(f):=\mathbb{E}_{(X,Y) \sim P} [\calL(Y,f(X))]\)
and
the {empirical} \(\calL\)-risk by \(\hat{R}_{\calL}^{n}(f)
:=\frac{1}{n}\sum_{i=1}^{n} \calL(y_{i},f(x_{i}))
\).  Minimizing \(\hat{R}_{\calL}^{n}(f)\) over a family of \(f\)'s, e.g., neural networks, is employed as a tractable surrogate objective.
A class-score function \(f\) induces a discrete-valued classifier
\(\argmax \circ f(x) :=\argmax_{j=1,\dots,k} f_{j}(x)\) with ties broken arbitrarily.
An important question is whether good performance with respect to the surrogate \(\calL\)-risk ``transfers'' back to the 01-risk, the original objective.

%
\begin{definition}\label{definition:consistency-transfer-property}
A surrogate loss \(\calL\) has the \emph{consistency transfer property}\footnote{
This property appears in many works, e.g., \citet{steinwart2007compare,bartlett2006convexity,tewari2007consistency,zhang2004statistical}. The name ``consistency transfer property'' was used by \citet{wang2023classification}.
} if
for any distribution \(P\) over \(\mathcal{X} \times [k]\)
and
any sequence of class-score functions \(\{\hat{f}^{(n)}\}_{n}\), e.g., \(\hat{f}^{(n)}\) is an empirical \(\calL\)-risk minimizer over a family of functions depending on \(n\), the following is satisfied:
\(\lim_{n\to\infty}R_{\calL}(\hat{f}^{(n)}) = \inf_{f} R_{\calL}(f)\)
implies that
\(\lim_{n\to\infty}R_{01}(\argmax \circ \hat{f}^{(n)}) = \inf_{h}R_{01}(h)\).
  The infimums are, respectively, over all measurable functions \(f: \mathcal{X} \to \mathbb{R}^{k}\) and \(h : \mathcal{X} \to [k]\).
\end{definition}

Thus, the consistency transfer property (CTP) provides justification for \(\calL\)-risk minimization when minimizing the 01-risk is the original objective of interest.
For binary margin-based losses \(\TP\), the seminal result of \citet[Theorem 1.3]{bartlett2006convexity} shows that CTP is equivalent to a functional property of \(\TP\) known as \emph{classification-calibration} (CC).
For the multiclass case,
\citet{tewari2007consistency} define the CC property (reviewed in
detail in
Section~\ref{sec:cc-and-consistency} below) and show its equivalence to the CTP as well.

\subsection{Related work}\label{section:related-work}

\noindent\(\bullet\) \textbf{Sufficient conditions for CC of multiclass convex margin-based losses}.
For binary convex loss in margin form, the sufficient conditions for CC are easy to verify and apply to common losses of interest. Indeed, \citet[Theorem 2.1]{bartlett2006convexity} asserts that a nonnegative binary convex loss in margin form \(\TP: \mathbb{R} \to \mathbb{R}_{\ge 0}\) is CC if and only if \(\TP\) is differentiable at \(0\) and  \(\TP'(0)<0\).

There are several difficulties for extending the above characterization to the multiclass case.
The result in the binary case relies on the expression of a binary margin-based loss as \(\TP((-1)^{y}z)\).
While proposed multiclass extensions of the expression exist (to be reviewed below),
existing work on sufficient conditions for CC of multiclass losses focus on losses in the class-score form \(\calL : [k] \times \mathbb{R}^{k} \to \mathbb{R}\).

Most previous works studied CC of losses of a specific form  such as the Gamma-Phi losses \citep{zhang2004statistical,beijbom2014guess,wang2023classification},
Fenchel-Young losses \citep{duchi2018multiclass,blondel2020learning},
and hinge-like losses \citep{tan2022loss}.
Notably, \citet{tewari2007consistency} derive a characterization of CC for multiclass losses analogous to that of \citet[Theorem 2.1]{bartlett2006convexity}.
However, their sufficient condition \citep[Theorem 7]{tewari2007consistency}, while geometrically elegant, is hard to verify for common losses such as the multiclass exponential loss\footnote{See Footnote 7 in \citet[Table 1]{tewari2007consistency}.}.

\noindent\(\bullet\) \textbf{Multiclass margins \& margin-based loss}.
A line of work has considered the problem of extending the binary margin \((-1)^{y}z\) and binary margin form \(\TP((-1)^{y}z)\) to the multiclass setting. For the margin,
\citet{crammer2001algorithmic}
and
\citet[\S9.2]{mohri2018foundations}
defined a notion of \emph{scalar-valued} multiclass margin as \(\min_{j \in [k]:j\ne y} v_{y} -v_{j}\) where \(y\) is the ground truth label.
In other words, this notion of multiclass margin is the minimum difference of score between the ground truth label and the rest of the labels.
While intuitive, according to \citet[\S5.1]{tewari2007consistency}, convex losses based on this notion of margin are never classification-calibrated.

\citet{lee2004multicategory,zou2008new} proposes definitions of \emph{vector-valued} multiclass margin vectors and associated margin losses for multiclass SVMs and boosting, respectively.
These losses are known to be classification-calibrated if and only if sum-to-zero constraints are enforced on the input margin vectors \citep[Theorem 6]{dogan2016unified}.
\citet{lee2004multicategory} develops a multiclass SVM based on a hinge loss that leverages this notion of margin.
However, in practice, enforcing these sum-to-zero constraints leads to significantly slower computations \citep{fu2022proximal}.

\citet{dogan2016unified} developed a framework using relative margins to unify the analysis of several variants of multiclass support vector machines.
This notion of relative margin\footnote{\citet{rosset2003margin} also uses the name ``margin vector'' in conflict with  the later work by \citet{zou2008new} which defines a different notion of margin. Due to this confusion, we will refer to the earlier notion of \citet{rosset2003margin} by ``relative margins''.} is first introduced by \citet[\S 4]{rosset2003margin}.
Both of these prior works only established classification-calibration of specific losses for specific algorithms.
By contrast, our work develops a framework that characterizes when multiclass losses can be expressed via relative margins and proves general classification-calibration results for a large family of losses.



\noindent\(\bullet\) \textbf{Label encodings for multiclass classification}.
As alluded to earlier, the multiclass SVM introduced by \citet{lee2004multicategory} is classification-calibrated provided that certain sum-to-zero constraints are enforced.
Moreover, enforcing these constraints is computationally prohibitive.
The \emph{simplex code}~\citep{hill2007framework}
is a multiclass label code designed to address this computational issue by using a constraint-free reparametrization of the multiclass SVM dual formulation
\citep{wu2010multicategory,saberian2011multiclass,mroueh2012multiclass,van2016gensvm,pouliot2018equivalence}.


Another approach to label encoding is the
{multivector construction}\footnote{
  Also known as ``Kesler's construction'' which, according to \citet{crammer2003ultraconservative}, is credited to \citep{kesler1961preliminary}.
  A description of the construction can be found in  \citet[\S5.12]{duda2006pattern} and \citet[\S17.7]{shalev2014understanding}.
}~\citep[\S17.7]{shalev2014understanding}
which has been used to study the multiclass perceptron, logistic regression and SVMs \citep[\S5.12]{duda2006pattern}.
Moreover, the multivector construction has been used to study the PAC learning sample complexity of multiclass linear classifiers~\citep{daniely2014optimal}.

While the simplex code and the multivector constructions are elegant techniques, the scope of these work are on specific algorithms and classifiers. In particular, 
a framework for general multiclass margin-based losses and analysis of their properties, e.g., classification-calibration, is lacking.

Our work fills this gap by using the framework of the relative-margin form and matrix label code to \emph{1.} characterize the set of multiclass losses expressible in this form (Theorem~\ref{theorem:relative-margin-form}),
\emph{2.} prove sufficient conditions for classification-calibration of a large family of multiclass losses (Theorems~\ref{theorem: nested family of regular PERM losses are CC - exposition version})
and \emph{3.} apply our results to expand the previously known family of classification-calibrated Fenchel-Young losses (Theorem~\ref{theorem: sufficient condition for FY loss to be CC - exposition version}).



\subsection{Notations}

In this subsection, we briefly discuss some of the key notations.
For the reader's convenience, we include a more comprehensive reference for the notations in Table~\ref{table:conventions}
in
Section~\ref{section:appendix:full-notations} of the appendix.
Moreover, we tabulate the mathematical objects defined in Table~\ref{table:definitions}.

Throughout this work, let \(k \ge 2\) denote the number of classes.
Denote the \(k\)-probability simplex by \(\Delta^k = \{ \bfp \in \mathbb{R}_{\ge 0}^k: \sum_{j=1}^k p_j= 1\}\).

\noindent\textbf{Vectors/matrices}. Let the square bracket with subscript \([\cdot]_j\) be the projection of a vector onto its \(j\)-th component, i.e., \([\bfv]_j := v_j\) where \(\bfv = (v_1,\dots, v_k) \in \mathbb{R}^k\).
Given two vectors \(\bfw, \bfv \in \mathbb{R}^{k}\), we write \(\bfw \succeq \bfv\) (resp.\ \(\bfw \succ \bfv\)) if \(w_{j } \ge v_{j}\) (resp.\ \(w_{j } > v_{j}\)) for all \(j \in [k]\).
All-zeros/all-ones/\(i\)-th elementary basis vector in \(\mathbb{R}^{n}\) are denoted \(\vzero^{(n)}\)/\(\vone^{(n)}\)/\(\bfe^{(n)}_{i}\), respectively.
When the ambient dimension is clear, we drop the superscript \((n)\).
The \(n\times n\) identity matrix is denoted \(\mathbf{I}_{n}\).

\noindent\textbf{Permutations and permutation matrices}.
A bijection from \([k]\) to itself is called a \emph{permutation} (on \([k]\)).
Denote by \(\mathtt{Sym}(k)\) the set of all permutations on \([k]\).
For each \(\sigma \in \mathtt{Sym}(k)\), let \(\mathbf{S}_{\sigma}\) denote the permutation matrix corresponding to \(\sigma\).
In other words, if \(\bfv \in \mathbb{R}^{k}\) is a vector, then \([\mathbf{S}_{\sigma} \bfv]_{j} = [ \bfv ]_{\sigma(j)}= v_{\sigma(j)}\).


\section{Permutation equivariant relative margin (PERM) losses}

In this section, we define PERM losses and prove fundamental properties used throughout the rest of the work.
We begin with a discussion of the two defining properties: permutation equivariance and relative-margins.

Recall that in the introduction, we denoted loss functions as \(\calL : [k] \times \mathbb{R}^{k} \to \mathbb{R}\).
  For a class-score function \(f: \mathcal{X} \to \mathbb{R}^{k}\), the loss is evaluated as \(\calL(y,f(x))\) on an instance \((x,y)\).
  Throughout the rest of this work, for mathematical convenience we will use the equivalent notation convention \(\calL_{y}(f(x))\) where \(y\) appears  in the subscript.
In this convention, a multiclass loss function
is
a vector-valued
function
  \(\calL  =(\calL_1,\dots, \calL_k) : \mathbb{R}^k \to \mathbb{R}^k\).

  With this convention, the property that the loss treats each of the \(k\) classes equally is captured by
  \emph{permutation equivariance}, a notion similar to but distinct from symmetry\footnote{
  Recall that a function  is symmetric if its value does not change when its input is permuted. Sometimes, this is also referred to as permutation \emph{in}variance \citep[\S 3.1]{bronstein2021geometric}.
For our purposes, \emph{equi}variance is the correct descriptor.
  }.
   Denote by \(\bfv\) the class-score vector \(f(x)\) on some generic instance \((x,y)\).
The permutation equivariance property states that if the class-score vector \(\bfv\) is ``relabeled'' by some permutation \(\sigma \in \mathtt{Sym}(k)\), then the vector of losses \(\calL(\bfv)\) should be ``relabeled'' in the same way, i.e., \(\calL(\mathbf{S}_{\sigma}\bfv) = \mathbf{S}_{\sigma}\calL(\bfv)\).

{Relative margins}\footnote{The idea of relative margins goes back to \cite{rosset2003margin}.
  We note that \citet{jebara2008relative} also use this term in a unrelated context in \emph{binary} SVMs.
} have been recently used by \citet{dogan2016unified} and by \citet{fathony2016adversarial} in the context of multiclass SVMs
that only utilize
the set of differences \(v_{y}-v_{j}\)  over all \(y,j \in [k]\) such that \(y\ne j\), rather than the non-relative or ``absolute''
class-scores \((v_{1},\dots, v_{k})\) themselves.
Below, it will be notationally convenient to use a matrix that converts the vector of class-score \(\bfv\)
into relative margins:

\begin{definition}\label{definition:relative-marginalization-mapping}
  Let
  \(    \bmpi:=
    \begin{bmatrix}
                    -\mathbf{I}_{k-1}
      &
      \vone^{(k-1)}
    \end{bmatrix} \in \mathbb{R}^{(k-1) \times k}
\).
Observe that
\([\bmpi\bfv]_{y}
  = v_k - v_{y}
\) for \(y \in [k-1]\) and \(\bfv \in \mathbb{R}^{k}\).
Equivalently,
\(  \bmpi\bfv  = (v_k-v_1, v_k - v_2,\dots, v_k - v_{k-1})^{\top}
\).
  \end{definition}
%

 We are now ready to define

\begin{definition}[PERM losses]\label{definition:PERM-loss}
  Let \(k \ge 2\) be an integer.
A \emph{\(k\)-ary multiclass loss function}
is
a vector-valued
function
  \(\calL  =(\calL_1,\dots, \calL_k) : \mathbb{R}^k \to \mathbb{R}^k\).
  We say that \(\calL\) is
  \begin{compactenum}
    \item\label{definition:PERM-loss-permutation-equivariant}
    {\emph{permutation equivariant}}
        if
        \(\calL(\mathbf{S}_{\sigma}\bfv) = \mathbf{S}_{\sigma}\calL(\bfv)\)
for all \(\bfv \in \mathbb{R}^k\) and \(\sigma \in \mathtt{Sym}(k)\),
    \item\label{definition:PERM-loss-reduced-form} \emph{relative margin-based} if
      for each \(y \in [k]\) there exists a function \(\LL_{y} : \mathbb{R}^{k-1} \to \mathbb{R}\) so that
      \begin{equation}
        \calL_y(\mathbf{v}) = \LL_{y}(\bmpi \bfv) =
        \LL_{y}(v_{k}-v_{1},\,v_{k}- v_{2},\,\dots,\, v_{k}-v_{k-1}), \quad \mbox{for all \(\bfv \in \mathbb{R}^{k}\).}
        \label{equation:relative-margin-form-with-reduced-form}
    \end{equation}
We refer to the vector-valued function \(\LL := (\LL_{1},\dots, \LL_{k})\)  as the \emph{reduced form} of \(\calL\).
    \item\label{definition:PERM-loss-template} \underline{\emph{PERM}} if
      \(\calL\) is both \underline{p}ermutation \underline{e}quivariant and \underline{r}elative \underline{m}argin-based.
      In this case, the function \(\TP := \LL_{k}\)  is referred to as the \emph{template} of \(\calL\).
  \end{compactenum}
\end{definition}

While the concepts of permutation equivariance and relative margin have been studied largely in isolation in previous works,   our work is the first to systematically study the properties of  losses having both properties.
Below in
  {Theorem}~\ref{theorem:relative-margin-form}, we show
that a PERM loss is completely determined by its template
via what we call the
\emph{relative-margin form}.

\begin{remark}[On the name ``template'']
  The symbol of the template \(\TP\) is chosen intentionally to match that of \citet{bartlett2006convexity}, where
  a (binary) margin loss is expressed as \(\TP((-1)^{y} g(x))\).
  Treating \(g(x) = z\) as an arbitrary input to \(\TP\), the rationale behind the name is that the ``positive branch'' \(\TP((-1)^{2}z)\),  of the margin loss serves as a ``template''
  for the ``negative branch'' \(\TP((-1)^{1}z)\).
\end{remark}

Before proceeding, let us examine some notable PERM losses:
\begin{example}\label{example:cross entropy}
  The \emph{cross entropy} (also multinomial logistic) loss is given by
  \[
\calL_{y}^{\mathsf{CE}}(\bfv) = \log\left(1 + \textstyle\sum_{j \in [k] : j \ne y} \exp(- (v_{y} - v_{j}))\right), \quad \mbox{for all }\bfv \in \mathbb{R}^{k}.
  \]
  It is easy to see that its template is the function \(\psi^{\mathsf{CE}}(\bfz) =\log\left(1 + \textstyle\sum_{j=1}^{k-1} \exp(- z_{j})\right) \).
 When \(k=2\), we have \(\psi^{\mathsf{CE}}(z) = \log(1+ \exp(-z))\) which is the binary logistic loss (also known as the binary cross entropy).
 When \(k=3\), we have
 \(\psi^{\mathsf{CE}}(\bfz) = \log(1 + \exp(-z_{1}) + \exp(-z_{2}))\) is a function defined over the 2D plane \(\mathbb{R}^{2}\), plotted in Figure~\ref{fig:framework}.
\end{example}
\begin{example}\label{example:multiclass-exponential-loss}
 The \emph{Gamma-Phi} loss \citep{beijbom2014guess}, denoted \(\calL_{y}^{\gamma,\phi}(\bfv) \), is a generalization of the cross entropy loss defined as follows: Let \(\gamma : \mathbb{R} \to \mathbb{R}\)
  and \(\phi : \mathbb{R} \to \mathbb{R}_{\ge 0}\) be functions.
  Define
  \[
\calL_{y}^{\gamma,\phi}(\bfv) := \gamma\Big(\textstyle\sum_{j \in [k] : j \ne y} \phi( v_{y} - v_{j})\Big), \quad \mbox{for all }\bfv \in \mathbb{R}^{k}.
  \]
  It is easy to see that its template is the function \(\psi^{\gamma,\phi}(\bfz) =\gamma\left(\textstyle\sum_{j=1}^{k-1} \phi( z_{j})\right) \).
  The cross entropy \(\calL^{\mathsf{CE}}\) is the Gamma-Phi loss where
  \(\phi(t) = \exp(-t)\) and
  \(\gamma(t) = \log(1+t)\).
  The \emph{multiclass exponential}  \citep{mukherjee2013theory} loss \(\calL^{\mathsf{Exp}}\) is the Gamma-Phi loss where
  \(\phi(t) = \exp(-t)\) and
  \(\gamma\) is the identity.
\end{example}
\begin{example}
  A notable case of the Gamma-Phi loss is when \(\phi(t) = \max\{0,1-t\}\) is the hinge loss and
  \(\gamma\) is the identity. The resulting loss is known as the Weston-Watkins hinge loss \citep{weston1998multi,bredensteiner1999multicategory,vapnik1998statistical}, which is well-known to be \emph{not} classification-calibrated \citep{liu2007fisher,tewari2007consistency}.
  However, the Weston-Watkins hinge loss is calibrated with respect to a discrete loss called the ``ordered partition loss'' related to ranking with ties \citep{wang2020weston}.
\end{example}

\begin{example}
  The \emph{Crammer-Singer} hinge loss \citep{crammer2001algorithmic} is a well-known loss that is a PERM loss but \emph{not} a Gamma-Phi loss. It is defined as
  \[
    \textstyle
\calL_{y}^{\mathsf{CS}}(\bfv) := \max_{j \in [k] : j \ne y} \left\{ \max\{0,1-( v_{y} - v_{j})\}\right\}, \quad \mbox{for all }\bfv \in \mathbb{R}^{k}.
  \]
\end{example}

Section~\ref{section:FY-loss-main} features another example of a family of PERM losses, namely, the  \emph{Fenchel-Young losses} \citep{blondel2020learning}.

\subsection{Matrix label code and the relative margin form}\label{section:matrix-label-code}



This section introduces the multiclass generalization of \(\{\pm1\}\): the \emph{matrix label code} \(\{\ico_{y}\}_{y=1}^{k}\).
\begin{definition}[Matrix label code]\label{definition:matrix-label-code}
  For \(k \ge 2\) and \(y \in [k]\), define the \((k-1)\times(k-1)\) matrix \(\ico_y\)  as follows:
  For \(y=k\), \(\ico_k := \mathbf{I}_{k-1}\).
  For \(y \in [k-1]\), define \(\ico_{y}\) column-wise by
  \[
    [\ico_{y}]_{:j} :=
    \begin{cases}
      \bfe^{(k-1)}_{j} &: j \ne y \\
      - \vone^{(k-1)} &: j = y,
    \end{cases}
   \quad \mbox{for each \(j \in [k-1]\).}
  \]
\end{definition}
Equivalently, to construct \(\ico_{y}\)  for each \(y \in [k-1]\), first take the identity matrix \(\mathbf{I}_{k-1}\), then replace the \(y\)-th column by all
\(-1\)'s.

Note that \(\ico_{y}^{2} = \mathbf{I}_{k-1}\) for all \(y \in [k-1]\), i.e., \(\ico_{y}\) is an \emph{involution}.
Moreover, in the binary case where \(k=2\), we have \(\ico_{1} = -1\) and \(\ico_{2}=1\), i.e., the matrix label code reduces to the label encoding \(\{\pm 1\}\).

Next, recall that {a function \(f: \mathbb{R}^n \to \mathbb{R}\)
  is \emph{symmetric} if \(f (\mathbf{S}_{\sigma}(\cdot)) = f(\cdot)\) for all \(\sigma \in \mathtt{Sym}(n)\).}
  We now state the main property of the PERM loss and the matrix label code:
  \begin{theorem}[Relative-margin form]\label{theorem:relative-margin-form}
    Let \(\calL: \mathbb{R}^k \to \mathbb{R}^k\) be a PERM loss with template \(\TP\),
    and let
\(\bfv \in \mathbb{R}^{k}\) and \(y \in [k]\) be arbitrary.
    Then
    \(\TP\) is a symmetric function.
    Moreover,
  \begin{equation}
    \label{equation:relative-margin-form}
\calL_y(\bfv) =
  \TP(\ico_y \bmpi\bfv).
\end{equation}
Conversely, let \(\TP : \mathbb{R}^{k-1} \to \mathbb{R}\) be a symmetric function. Define a multiclass loss function \(\calL = (\calL_{1},\dots, \calL_{k}) : \mathbb{R}^{k} \to \mathbb{R}^{k}\)  according to \cref{equation:relative-margin-form}. Then \(\calL\) is a PERM loss with template \(\TP\).
  \end{theorem}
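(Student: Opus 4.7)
The plan is to reduce both directions to a single key identity: for every \(y \in [k]\) and \(\bfv \in \mathbb{R}^{k}\), the coordinates of \(\ico_{y}\bmpi\bfv\), viewed as a multiset, equal \(\{v_{y}-v_{j} : j \in [k] \setminus \{y\}\}\). For \(y = k\) this is immediate since \(\ico_{k} = \mathbf{I}_{k-1}\) and \(\bmpi\bfv = (v_{k}-v_{1},\dots,v_{k}-v_{k-1})\). For \(y \in [k-1]\), a direct column-by-column computation using the two-case definition of \(\ico_{y}\) yields \([\ico_{y}\bmpi\bfv]_{i} = v_{y} - v_{i}\) for \(i \in [k-1]\setminus\{y\}\) and \([\ico_{y}\bmpi\bfv]_{y} = v_{y}-v_{k}\). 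Equivalently, if \(\sigma \in \mathtt{Sym}(k)\) is the transposition swapping \(y\) and \(k\), then \(\ico_{y}\bmpi\bfv = \bmpi\mathbf{S}_{\sigma}\bfv\). I would isolate this identity as a lemma at the start.

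For the forward direction, suppose \(\calL\) is PERM with reduced form \(\LL\) and template \(\TP := \LL_{k}\). First, to show \(\TP\) is symmetric, I would restrict permutation equivariance to the subgroup of \(\sigma \in \mathtt{Sym}(k)\) fixing \(k\); any such \(\sigma\) induces some \(\sigma' \in \mathtt{Sym}(k-1)\), and one checks \(\bmpi\mathbf{S}_{\sigma}\bfv = \mathbf{S}_{\sigma'}\bmpi\bfv\). Equivariance then gives \(\TP(\mathbf{S}_{\sigma'}\bmpi\bfv) = \calL_{k}(\mathbf{S}_{\sigma}\bfv) = \calL_{k}(\bfv) = \TP(\bmpi\bfv)\); surjectivity of \(\bmpi\) onto \(\mathbb{R}^{k-1}\) promotes this to full symmetry of \(\TP\) under \(\mathtt{Sym}(k-1)\). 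Next, for the formula \(\calL_{y}(\bfv) = \TP(\ico_{y}\bmpi\bfv)\): the case \(y = k\) is immediate, and for \(y \in [k-1]\), take \(\sigma\) to be the transposition of \(y\) and \(k\), so that equivariance yields \(\calL_{y}(\bfv) = \calL_{k}(\mathbf{S}_{\sigma}\bfv) = \TP(\bmpi\mathbf{S}_{\sigma}\bfv)\), which equals \(\TP(\ico_{y}\bmpi\bfv)\) by the key identity.

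For the converse, given symmetric \(\TP\) and \(\calL_{y}(\bfv) := \TP(\ico_{y}\bmpi\bfv)\), relative margin-basedness is witnessed by \(\LL_{y}(\bfz) := \TP(\ico_{y}\bfz)\), and \(\LL_{k}(\bfz) = \TP(\ico_{k}\bfz) = \TP(\bfz)\) confirms that the template is \(\TP\). Permutation equivariance amounts to showing \(\TP(\ico_{j}\bmpi\mathbf{S}_{\sigma}\bfv) = \TP(\ico_{\sigma(j)}\bmpi\bfv)\) for all \(\sigma \in \mathtt{Sym}(k)\) and \(j \in [k]\). By the key identity, the multiset of coordinates of the left-hand input is \(\{v_{\sigma(j)} - v_{\sigma(i)} : i \in [k] \setminus \{j\}\}\), which under the change of variables \(m = \sigma(i)\) becomes \(\{v_{\sigma(j)} - v_{m} : m \in [k] \setminus \{\sigma(j)\}\}\), the multiset of coordinates of the right-hand input. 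Since a symmetric function depends only on the multiset of its arguments, the two sides agree.

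The main obstacle is bookkeeping rather than any deep step: it is easy to get tangled between indices in \([k]\) (for class scores and permutation matrices) and in \([k-1]\) (for the ambient space of \(\TP\)), especially when intermixing \(\bmpi\), \(\mathbf{S}_{\sigma}\), and the piecewise definition of \(\ico_{y}\). Isolating the multiset-of-differences characterization of \(\ico_{y}\bmpi\bfv\) up front reduces every subsequent step to a single substitution, turning both directions into short, essentially mechanical arguments.
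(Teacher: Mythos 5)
Your proposal is correct. The forward direction is essentially the paper's argument: the key identity you isolate (\(\ico_{y}\bmpi\bfv = \bmpi\mathbf{S}_{\sigma}\bfv\) for \(\sigma\) the transposition of \(y\) and \(k\)) is exactly Lemma~\ref{lemma: rho pi sigma relation}, the commutation \(\bmpi\mathbf{S}_{\sigma'} = \mathbf{S}_{\sigma}\bmpi\) for permutations fixing \(k\) is Proposition~\ref{proposition:commuting-relations-full-perm}, and surjectivity of \(\bmpi\) is used in the same way to lift the identity on \(\bmpi\bfv\) to all of \(\mathbb{R}^{k-1}\). Where you genuinely diverge is the converse. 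The paper first reduces equivariance to transpositions \(\Tsp_{y}\) (via generation of \(\mathtt{Sym}(k)\) by the \(\tsp_{y}\), Corollary~\ref{corollary: generator} and Lemma~\ref{lemma:transposition-suffices-for-permutation}) and then invokes algebraic identities among the label-code matrices, chiefly \(\ico_{\tsp_{y}(j)} = \Tsp_{(j,y)}^{(k-1)}\ico_{j}\ico_{y}\) (Lemma~\ref{lemma: permutation lemma}), together with symmetry of \(\TP\). You instead handle an arbitrary \(\sigma\) in one step by observing that \(\ico_{j}\bmpi\mathbf{S}_{\sigma}\bfv\) and \(\ico_{\sigma(j)}\bmpi\bfv\) have the same multiset of coordinates \(\{v_{\sigma(j)} - v_{m} : m \ne \sigma(j)\}\), so a symmetric \(\TP\) cannot distinguish them. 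This is shorter and more self-contained for this theorem; the paper's route has the side benefit that the matrix identities it develops (involutivity, the \(\bmPi_{\sigma}\) homomorphism, Lemma~\ref{lemma:subtraction-matrix-label-code}) are reused in later arguments where the multiset viewpoint would not suffice, e.g.\ in the gradient computations of Lemma~\ref{lemma:Az-matrix-is-nonsing-M-matrix}. One small point to make explicit if you write this up: two vectors in \(\mathbb{R}^{k-1}\) with the same multiset of entries (with multiplicity) are related by a permutation matrix, which is what licenses the final appeal to symmetry of \(\TP\).
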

    \Cref{equation:relative-margin-form} is referred to as the \emph{relative margin form} of \(\calL\).
Theorem~\ref{theorem:relative-margin-form} and other results in this section are proved in Section~\ref{section:appendix:proof-of-theorem-relative-margin-form} in the appendix.
  \begin{remark}
The relative margin form  can be interpreted as a one-to-one correspondence between PERM losses \(\calL : \mathbb{R}^{k} \to \mathbb{R}^{k}\) and symmetric functions \(\TP:\mathbb{R}^{k-1} \to \mathbb{R}\).
Thus, we can refer to a PERM loss by either \(\calL\) or \(\TP\) without ambiguity.
  \end{remark}

\iftoggle{arxiv}{
  \begin{remark}[Uniqueness of the matrix label code]
  In Section~\ref{section:uniqueness-of-MLC} of the appendix, we
  prove a unique-ness result: any set of matrices \(\{{\ico}_{y}'\}_{y=1}^{k}\) satisfying the conclusion of Theorem~\ref{theorem:relative-margin-form}  is equal to the matrix label code
 (Definition~\ref{definition:matrix-label-code}) up to row permutations of the \(\ico_{y}\)'s.
  \end{remark}
}{
  \begin{remark}[Uniqueness of the matrix label code]
  In
  Section~\ref{section:uniqueness-of-MLC} of the appendix,
we prove\footnote{We omit the proofs of some of the technical intermediate lemmas. For the omitted proofs, see  the arXiv version of our
    manuscript \citep{wang2023unified}.
  } a unique-ness result: any set of matrices \(\{{\ico}_{y}'\}_{y=1}^{k}\) satisfying the conclusion of Theorem~\ref{theorem:relative-margin-form}  is equal to the matrix label code
 (Definition~\ref{definition:matrix-label-code}) up to row permutations of the \(\ico_{y}\)'s.
  \end{remark}
}
\begin{remark}\label{remark:explicit-form-of-matrix-label-form-times-discrimaint}
  Observe
  that  \(\ico_y \bmpi\bfv = (v_{y} - v_{1}, v_{y}-v_{2}, \dots, v_{y} - v_{k})^{\top} \in \mathbb{R}^{k-1} \) where the \(v_{y} -v_{y}\) entry is omitted.
  This is Lemma~\ref{lemma: rho pi sigma relation} in the appendix.
  Now, note that the right hand side is exactly the relative margin as defined in \citet[Eqn.\ (8)]{rosset2003margin}.
  The advantage of the left hand expression  \(\ico_y \bmpi\bfv \) is that the label \(y\) acts  via its label encoding \(\ico_{y}\)  by left matrix multplication.
  This ``disentanglement'' of the label encoding and the loss can facilitate calculations and will be crucial in our key
{Lemma}~\ref{lemma:Az-matrix-is-nonsing-M-matrix}.
\end{remark}
\begin{remark}[Prior work]\label{remark:our-prior-work}
The matrices in the definition of the matrix label code have been used by \citet{wang2020weston,wang2021exact} for
analyzing the Weston-Watkins (WW) SVMs \citep{weston1998multi}.
\citet{wang2020weston} show that the hinge loss from the WW-SVM is calibrated with respect to the ordered partition loss, and uses this theory to explain the empirical observation made by \cite{dogan2016unified} that WW-SVM performs well even under significant label noise.
\citet{wang2021exact}
derive a reparametrization of the Weston-Watkins dual problem that decomposes into subproblems that can be solved exactly in \(O(k\log(k))\) time, leading to faster performance for the linear WW-SVMs with many classes.
The ``disentanglement'' discussed in the previous remark is the key ingredient for deriving this reparametrization.

This work extends \citet{wang2020weston,wang2021exact} beyond the multiclass SVM setting, and  provides a framework for using these matrices for
margin-based losses.
\end{remark}

\subsection{Relationship between class-score functions and discriminant functions}
As mentioned in the introduction, the ``sign'' function
converts a real-valued discriminant function \(g: \mathcal{X} \to \mathbb{R}\) to a discrete-valued classifier \(h : \mathcal{X} \to \{1,2\}\)
by choosing \(h(x)\) to be \(y \in \{1,2\}\) such that \((-1)^{y}g(x) \ge 0\) with ties broken arbitrarily.
To generalize this to the multiclass case, first recall that when \(k=2\), the matrix label code is equal to \(\ico_{y} = (-1)^{y}\).
Thus,  \((-1)^{y}g(x) \ge 0\) iff \(\ico_{y}g(x) \ge 0\).
More simply put, \(y\) is the predicted class if and only if \(\ico_{y}\) is the sign of \(g(x)\).

Next, we define the multiclass ``sign'' function analogously.
 Given
 a vector-valued discriminant function \(g: \mathcal{X} \to \mathbb{R}^{k-1}\) we define a discrete-valued classifier \(h : \mathcal{X} \to [k]\) by taking \(y \in [k]\) such that \(\ico_{y} g(x) \succeq \vzero\). Similar to the binary case, ties can be broken arbitrarily.
Thus, \(\ico_{y}\) can be viewed as the sign of \(g(x)\).
 To relate this multiclass ``sign'' function method for producing a classifier to the conventional ``\(\argmax\)'' method, we have
  \begin{proposition}\label{proposition:argmax-equivalence}
    Let \(\bfv \in \mathbb{R}^{k}\), \(\bfz := \bmpi \bfv\), and \(y \in [k]\). Then
    \(y \in \argmax_{j} v_{j}\) iff
    \(\ico_{y} \bfz \succeq \vzero\).
    Furthermore, let \(\bfv' :=
    \begin{bmatrix}
0&-\mathbf{z}^{\top}
    \end{bmatrix}^{\top}
    \). Then
\(y \in \argmax_{j} v_{j}\)
iff
\(y \in \argmax_{j} v_{j}'\).
  \end{proposition}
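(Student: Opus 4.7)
The plan is to establish the two equivalences separately, each reducing to a short entrywise computation once the right identity is in hand.

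For the first equivalence, I would invoke the explicit form of $\ico_y \bmpi \bfv$ already recorded in Remark~\ref{remark:explicit-form-of-matrix-label-form-times-discrimaint} (formalized as Lemma~\ref{lemma: rho pi sigma relation} in the appendix), namely that $\ico_y \bmpi \bfv \in \mathbb{R}^{k-1}$ is the vector whose entries are $v_y - v_j$ for $j \in [k]\setminus\{y\}$. Unpacking $\succeq \vzero$ componentwise, the inequality $\ico_y \bmpi\bfv \succeq \vzero$ is therefore literally the conjunction $v_y \ge v_j$ for every $j \ne y$, which is precisely the definition of $y \in \argmax_j v_j$. So the first iff is immediate from the remark.

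For the second equivalence, the key observation is that every entry of $\bfv'$ has the form $v_j - v_k$ for some $j \in [k]$: indeed, the leading $0$ equals $v_k - v_k$, while the remaining entries $-z_j$ equal $v_j - v_k$ for $j \in [k-1]$ by definition of $\bfz = \bmpi\bfv$. Consequently the entries of $\bfv'$ agree (up to a reindexing that sends the $0$-entry back to position $k$) with those of $\bfv - v_k\vone^{(k)}$. Since subtracting a constant from every coordinate of a vector preserves the set of maximizing indices, and since the reindexing is explicit, we can transport the conclusion $y \in \argmax_j v_j$ to the corresponding membership in $\argmax_j v_j'$.

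Neither step involves a genuine obstacle; the content is just the translation-invariance of $\argmax$ together with the entrywise identity from Remark~\ref{remark:explicit-form-of-matrix-label-form-times-discrimaint}. The only care needed is careful bookkeeping of the indexing in the second part so that the argmax-membership statement lines up with the literal stacking in the definition of $\bfv'$.
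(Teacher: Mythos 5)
Your first equivalence is correct and is exactly the paper's argument: read off \(\ico_{y}\bmpi\bfv=(v_{y}-v_{j})_{j\ne y}\) from Remark~\ref{remark:explicit-form-of-matrix-label-form-times-discrimaint} and unpack \(\succeq\vzero\) entrywise.

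The second part has a genuine gap, and it sits precisely in the step you defer to ``careful bookkeeping.'' You correctly compute \(v'_{1}=0=v_{k}-v_{k}\) and \(v'_{j+1}=-z_{j}=v_{j}-v_{k}\) for \(j\in[k-1]\), so \(\bfv'\) is the vector \(\bfv-v_{k}\vone^{(k)}\) with its coordinates cyclically shifted: coordinate \(k\) of \(\bfv\) lands in coordinate \(1\) of \(\bfv'\), and coordinate \(j\) lands in coordinate \(j+1\). Translation invariance of \(\argmax\) therefore yields \(y\in\argmax_{j}v_{j}\) iff \(\sigma(y)\in\argmax_{j}v'_{j}\) for that cyclic shift \(\sigma\) --- not the same-index equivalence you assert. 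The bookkeeping cannot be made to ``line up'': for \(k=2\) and \(\bfv=(1,0)^{\top}\) one gets \(\bfz=v_{2}-v_{1}=-1\) and \(\bfv'=(0,1)^{\top}\), so \(\argmax_{j}v_{j}=\{1\}\) while \(\argmax_{j}v'_{j}=\{2\}\). (The paper's own proof writes \(\bfv'=(0,v_{2}-v_{1},\dots,v_{k}-v_{1})^{\top}\), which is incompatible with \([\bmpi\bfv]_{j}=v_{k}-v_{j}\), so it suffers from the same defect.) The same-index claim is only salvageable if \(\bfv'\) is taken to be \(\begin{bmatrix}-\bfz^{\top} & 0\end{bmatrix}^{\top}\), with the zero in the \emph{last} coordinate; then \(\bfv'=\bfv-v_{k}\vone^{(k)}\) exactly, no reindexing occurs, and your translation-invariance argument closes immediately --- this is the padding convention the paper uses elsewhere (e.g.\ Remark~\ref{remark:relative-margin-form-summarized}). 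So your structural analysis is right, but asserting that the membership ``transports'' to the same index \(y\) is exactly the false step; you should either prove the permuted version or flag the mismatch in the definition of \(\bfv'\).
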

  \begin{proof}
    For the first part, recall from
   {Remark}~\ref{remark:explicit-form-of-matrix-label-form-times-discrimaint} that
    \(\ico_{y} \bmpi \bfv = (v_{y}-v_{1},v_{y}-v_{2},\dots, v_{y} - v_{k})^{\top}\) where the \(v_{y}- v_{y}\) entry is omitted.
    Therefore,
    the assertion
    ``\(y \in \argmax_{j} v_{j}\) iff
    \(\ico_{y} \bfz \succeq 0\)'' follows immediately from the definition of \(\succeq\) and \(\bfz\).
    For the ``Furthermore'' part, note that by construction we have
    \(\bfv' = (0,v_{2}-v_{1}, v_{3}-v_{1}, \dots, v_{k} -v_{1})^{\top}\).
    By adding the constant \(v_{1}\) to all entries of \(\bfv'\), we recover \(\bfv\). Thus, the indices maximizing \(\bfv'\) are the same as those of \(\bfv = (v_{1},v_{2},\dots, v_{k})^{\top}\).
  \end{proof}
  The first part  of Proposition~\ref{proposition:argmax-equivalence} gives an intuitive explanation for our earlier definition of a multiclass ``sign''.
  The second part of
Proposition~\ref{proposition:argmax-equivalence} gives a simple formula for computing the ``\(\argmax\)'' from the discriminant function.
\begin{remark}
    Proposition~\ref{proposition:argmax-equivalence} can be seen as an equivalence between class-score functions and discriminant functions.
    Given a class-score function \(f : \mathcal{X} \to \mathbb{R}^{k}\),
    we can derive
    a discriminant function \(g : \mathcal{X} \to \mathbb{R}^{k-1}\)
    by defining \(g(x) := \bmpi f(x)\) for all \(x \in \mathcal{X}\).
    Conversely, we can view the discriminant function \(g\) as the ``given'',
    and derive a class-score function \(f\) by defining \(f(x) :=
    \begin{bmatrix}
      0 & -g(x)
    \end{bmatrix}^{\top}
    \) for all \(x \in \mathcal{X}\).
    In both cases, \(\argmax f(x) = \argmax
    \begin{bmatrix}
      0 & -g(x)
    \end{bmatrix}^{\top}
    \).
  \end{remark}

\section{Classification-calibration and Consistency}\label{sec:cc-and-consistency}

This section is a brief review of the core definitions and theorem from \citet{tewari2007consistency} in preparation for our results on sufficient conditions for classification-calibration.

Below, we assume the probability setting introduced earlier in Section~\ref{section:background-on-CC}.
\begin{definition}[Range and its convex hull]\label{definition:loss-surface}
  Let \(f: \mathbb{R}^m \to \mathbb{R}^n\) be a function.
  Denote by \( \ran(f) := \{f(x) : x \in \mathbb{R}^m\} \) the \emph{range} of \(f\), and \(\cran(f) := \mathtt{conv}(\ran(f))\) the convex hull of the range of \(f\).
  \end{definition}

  \begin{definition}[\cite{tewari2007consistency}]\label{definition: multiclass classification calibration}
  A set \(S \subseteq \mathbb{R}^k_+\) is \emph{classification-calibrated} if there exists a
  function\footnote{The function \(\theta\) is called a \emph{calibrated link} for \(S\).}
  \(\theta : \mathbb{R}^k \to [k]\) such that
  \begin{equation}
    \textstyle
    \inf \{ \langle \bfp, \bm{\zeta}\rangle :\bm{\zeta} \in S \,:\,p_{\theta(\bm{\zeta})} < \max \bfp \} > \inf_{\bm{\zeta} \in S} \langle \bfp, \bm{\zeta}\rangle
\label{equation:mcc-inequality}
\end{equation}
 for all \(\bfp \in \Delta^k\).
 A multiclass loss function \(\calL\) is \emph{classification-calibrated} if \(\cran(\mathcal{L})\) is classification-calibrated.
\end{definition}

Intuitively, Definition~\ref{definition: multiclass classification calibration} says that the lowest achievable conditional risk when predicting the wrong label (Eqn.~\eqref{equation:mcc-inequality} LHS) is still strictly larger than the
conditional Bayes risk (Eqn.~\eqref{equation:mcc-inequality} RHS).
  Define \(\underline{\smash{\argmax}} : \mathbb{R}^k \to [k]\) by
  \(
    \underline{\smash{\argmax}} (v) = \min \{i \in [k]: v_i = \max_{j \in [k]} v_j\}.
  \)
  When $\calL$ is permutation equivariant and classification-calibrated, we can take $\theta$ in Definition~\ref{definition: multiclass classification calibration} to be simply $\underline{\smash{\argmax}}$. See \citet[Lemma 4]{tewari2007consistency}.
As alluded to earlier, the significance of Definition~\ref{definition: multiclass classification calibration}
is demonstrated by the following theorem:

\begin{theorem}[\citealt{tewari2007consistency}]\label{theorem: multiclass classification calibration}
    Let \(\calL\) be a PERM loss. Let \(\mathcal{F}\) be the set of \textcolor{black}{all} Borel functions \(\mathcal{X} \to \mathbb{R}^k\).
  If
\(\calL\) is classification-calibrated,
  then \(\calL\) has the \emph{consistency transfer property}:
  For all  sequence of function classes \(\{\mathcal{F}_n\}_n\) such that \(\mathcal{F}_n \subseteq \mathcal{F}\), \(\bigcup_n \mathcal{F}_n = \mathcal{F}\), all \(\hat f_n \in \mathcal{F}_n\) and all probability distributions \(P\) on \(\mathcal{X}\times [k]\)
  \[\textstyle
    R_{\calL,P}(\hat f_n) \overset{P}{\to}  \inf_{f} R_{\calL,P}(f)
    \quad \mbox{implies} \quad
    R_{01,P}(\underline{\smash{\argmax}} \circ\hat f_n) \overset{P}{\to}
    \inf_{h}
    R_{01,P}(h)
  \]
  where the infimums are taken over all Borel functions \(f : \mathcal{X} \to \mathbb{R}^k\) and \(h : \mathcal{X} \to [k]\), respectively.

\end{theorem}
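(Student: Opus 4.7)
The plan is to derive the theorem via the standard \emph{calibration function} argument, which quantitatively relates excess surrogate risk to excess $01$-risk pointwise in the posterior. First, conditioning on $X$ yields a conditional-risk decomposition. Let $\eta(X) := (P(Y=1\mid X),\ldots,P(Y=k\mid X)) \in \Delta^k$, let $C_\calL(\bfp,\bfv) := \langle \bfp, \calL(\bfv)\rangle$, and let $C_\calL^\star(\bfp) := \inf_{\bfv} C_\calL(\bfp,\bfv)$. Then
\[
R_{\calL,P}(f) - \inf_{f'} R_{\calL,P}(f') = \mathbb{E}_X\bigl[C_\calL(\eta(X), f(X)) - C_\calL^\star(\eta(X))\bigr],
\]
and analogously $R_{01,P}(\underline{\smash{\argmax}}\circ f) - \inf_h R_{01,P}(h) = \mathbb{E}_X[\max_j \eta_j(X) - \eta_{\underline{\smash{\argmax}}(f(X))}(X)]$. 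The statement thus reduces to an $L^1(P_X)$-type implication between these two non-negative integrands.

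The heart of the argument is the \emph{calibration function}
\[
\delta(\epsilon) := \inf\bigl\{\, C_\calL(\bfp,\bfv) - C_\calL^\star(\bfp) :\ \bfp\in \Delta^k,\ \bfv\in\mathbb{R}^k,\ \max_j p_j - p_{\underline{\smash{\argmax}}(\bfv)}\ge \epsilon\,\bigr\}.
\]
I would show $\delta(\epsilon)>0$ for every $\epsilon>0$. At a fixed $\bfp$, this is precisely the strict inequality in Definition~\ref{definition: multiclass classification calibration}, invoking the remark after that definition that for a PERM loss the calibrated link may be taken to be $\underline{\smash{\argmax}}$. To lift pointwise positivity to a \emph{uniform} bound over $\bfp \in \Delta^k$, I would combine compactness of $\Delta^k$ with lower semicontinuity of the value function $\bfp \mapsto \inf\{\cdots\}$.

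Since $\delta$ need not be convex, I would work with its largest convex minorant $\delta^{\star\star}$, which is nondecreasing, continuous at $0$ with $\delta^{\star\star}(0)=0$, and strictly positive on $(0,\infty)$. Jensen's inequality applied to the decomposition in the first paragraph then gives
\[
\delta^{\star\star}\!\bigl(R_{01,P}(\underline{\smash{\argmax}}\circ f) - \inf_h R_{01,P}(h)\bigr) \le R_{\calL,P}(f) - \inf_{f'} R_{\calL,P}(f').
\]
Given $R_{\calL,P}(\hat f_n)\overset{P}{\to} \inf_{f'} R_{\calL,P}(f')$, this forces $\delta^{\star\star}(R_{01,P}(\underline{\smash{\argmax}}\circ \hat f_n) - \inf_h R_{01,P}(h))\overset{P}{\to}0$, and inverting $\delta^{\star\star}$ near $0$ delivers the desired convergence in probability. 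The hypotheses $\mathcal{F}_n \subseteq \mathcal{F}$ and $\bigcup_n \mathcal{F}_n = \mathcal{F}$ play no role in this direction; they would be needed only for the converse (that the surrogate risk can indeed be driven to its infimum).

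The principal obstacle is the uniform-in-$\bfp$ positivity of $\delta$. The function $\bfv \mapsto \max_j p_j - p_{\underline{\smash{\argmax}}(\bfv)}$ is discontinuous due to tie-breaking in $\underline{\smash{\argmax}}$, so standard lower-semicontinuity arguments cannot be applied to $\bfv$ directly. The remedy is to exploit that CC is defined in Definition~\ref{definition: multiclass classification calibration} using $\cran(\calL) = \mathtt{conv}(\ran(\calL))$ rather than $\ran(\calL)$: convex-combining loss vectors smooths out ties, and together with compactness of $\Delta^k$ yields the required uniform strict positivity.
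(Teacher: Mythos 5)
First, note that the paper does not prove this theorem at all: it is imported verbatim from \citet{tewari2007consistency}, so there is no in-paper argument to compare against. Your plan follows the Bartlett--Jordan--McAuliffe ``calibration function'' template (pointwise regret comparison, convex minorant, Jensen, inversion near $0$), which is a legitimate and genuinely different architecture from Tewari--Bartlett's own proof; the outer layers of your argument (conditional-risk decomposition, the positivity of $\delta^{\star\star}$ on $(0,1]$ for a nondecreasing $\delta$ positive on $(0,1]$, the inversion step, and the observation that the hypotheses on $\{\mathcal{F}_n\}$ are not needed for this direction) are all sound.

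The genuine gap is the step you yourself flag as the principal obstacle: upgrading the pointwise-in-$\bfp$ strict inequality of Definition~\ref{definition: multiclass classification calibration} to the uniform statement $\delta(\epsilon)>0$. Your proposed remedy --- compactness of $\Delta^k$ together with lower semicontinuity of the value function $\bfp \mapsto \inf\{\cdots\}$, plus the remark that passing from $\ran(\calL)$ to $\cran(\calL)$ ``smooths out ties'' --- does not go through as stated. The feasible set $\{\bm\zeta \in S : p_{\theta(\bm\zeta)} \le \max_j p_j - \epsilon\}$ depends on $\bfp$ discontinuously, and a parametric infimum over a $\bfp$-dependent feasible set is in general only \emph{upper} semicontinuous in the parameter, which is the wrong direction for bounding an infimum over $\bfp$ away from zero. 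More seriously, $S = \cran(\calL)$ is unbounded (by Corollary~\ref{corollary: helpful summary of S(L)} it contains $\bm\zeta + \lambda\vone$ for all $\lambda \ge 0$), so a minimizing sequence $(\bfp^{(n)}, \bm\zeta^{(n)})$ can have $\bm\zeta^{(n)}$ escaping to infinity while $\bfp^{(n)}$ approaches the boundary of the simplex or a point with ties among maximal coordinates; compactness of $\Delta^k$ alone gives no control over this. Handling exactly these degenerate sequences is the content of the hard lemma in \citet{tewari2007consistency} (and of Steinwart's uniform-calibration machinery), and it is the entire substance of the theorem; without it your argument establishes only that each fixed posterior $\bfp$ has positive excess risk under wrong prediction, which is the hypothesis, not the conclusion.
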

\begin{remark}
In applications, \(\hat f_n\) is often taken to be an \(\calL\)-risk empirical minimizer over a training dataset of cardinality \(n\). However, the above property holds for \emph{any} sequence of functions $\hat f_n \in \mathcal{F}_n$.
\end{remark}

Checking that a multiclass loss \(\calL\) is classification-calibration is a non-trivial task, requiring delicate analysis of the loss function \citep{tewari2007consistency}.
Recently, \citet{wang2023classification} established a sufficient condition of classification-calibration for Gamma-Phi losses (Example~\ref{example:multiclass-exponential-loss}).
Below, we derive a sufficient conditions for general PERM losses.

\section{Regular PERM losses}
To demonstrate the utility of the relative margin form and Theorem~\ref{theorem:relative-margin-form},  we prove
in this section
{{Theorem}~\ref{theorem: nested family of regular PERM losses are CC - exposition version},
  a sufficient condition for classification-calibration (CC) of multiclass loss functions}.
 To this end, we begin with the necessary definitions.
  \begin{definition}\label{definition:coercive-functions}
    A function $f : \mathbb{R}^{n} \to \mathbb{R}$ is
    \begin{compactenum}
      \item \emph{coercive}
        if for all $c \in \mathbb{R}$, the
        set
    $
      \{\bfv \in \mathbb{R}^{n}: f(\bfv) \le c\}
    $
    (i.e., the \emph{$c$-sublevel set})
    is bounded,
  \item \emph{semi-coercive}
    if for all $c \in \mathbb{R}$ there exists $b \in \mathbb{R}$ such that
    \[
      \textstyle
      \{\bfv \in \mathbb{R}^{n}: f(\bfv) \le c\}
      \subseteq
      \{\bfv \in \mathbb{R}^n: b \le \min_{j \in [n]} v_{j} \}.
    \]
    \end{compactenum}
  \end{definition}
  The definition of a coercive function is well-studied \citep{boyd2004convex}.
  However, semi-coercivity appears to be new.
  Intuitively, a function is semi-coercive if, for all \(c \in \mathbb{R}\), its \(c\)-sublevel set is contained in a translation of the positive orthant.

\begin{definition}[Regular PERM loss]\label{definition:regular-PERM-loss}
    Let \(\calL\) be a PERM loss with template \(\TP\).
    We say that \(\calL\) is \emph{regular} if
    \(\TP\) is nonnegative, twice differentiable, strictly convex, semi-coercive,
    and
      the gradient \(\nabla_{\TP}(\bfz) \prec \vzero\) is entrywise negative for all \(\bfz \in \mathbb{R}^{k-1}\).
  \end{definition}

Below, we give a sufficient condition for a Gamma-Phi loss (Example~\ref{example:multiclass-exponential-loss}) to be a regular PERM loss. But first, we discuss the intuition behind the condition \(\nabla_{\TP}(\bfz) \prec \vzero\).
The condition \(\nabla_{\TP}(\bfz) \prec \vzero\) is reminiscent of a condition in \citet[Theorem 6]{bartlett2006convexity}, which shows that in the binary case a convex margin loss \(\TP\) is classification-calibrated if and only if \(\TP\) is differentiable at \(0\) and \(\dot \TP(0) <0\) where the ``overdot'' denotes taking derivative of a univariate function.
If we view \(-\dot\TP(\cdot)\) as a ``vector field'' on \(\mathbb{R}\), then this condition on the derivative can be stated as ``the vector field \(-\dot\TP(\cdot)\) near \(0\) points toward the positive half of the real line''.

\begin{wrapfigure}{r}{0.5\textwidth}
  \centering
  \includegraphics[width=0.29\textwidth]{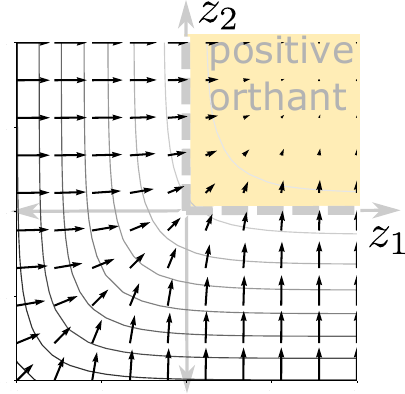}
  \caption{Vector field \(-\nabla_{\TP}(\bfz) \) where \(\TP\) is the template of the cross entropy (Example~\ref{example:cross entropy}).
The contour lines are shaded according to the value of the function \(\TP\) (darker is larger). See Figure~\ref{fig:framework}.
  }\label{fig:vector-field}
\end{wrapfigure}

The condition \(\nabla_{\TP}(\bfz) \prec \vzero\) can be thought of in a similar fashion.
See Figure~\ref{fig:vector-field} for a plot of the vector field \(-\nabla_{\TP}(\bfz)\) where \(\TP \equiv \TP^{\mathsf{CE}}\) is the template of the cross entropy (Example~\ref{example:cross entropy}), and where \(k=3\).
Note that the vectors point toward the shaded region whose interior is the positive orthant in \(\mathbb{R}^{2}\).

  However, Definition~\ref{definition:regular-PERM-loss} requires the strict negativity of the gradient for all of \(\mathbb{R}^{k-1}\), whereas the derivative of \(\TP\) is only required to be negative at \(0\) in the binary case in \citet[Theorem 6]{bartlett2006convexity}.

  Next, we define the ``truncation'' of a PERM loss \(\calL\). Intuitively, if \(k \ge 3\) is the number of classes and \(\calL\) is  \(k\)-ary, then the truncation of \(\calL\) results in a \((k-1)\)-ary loss.

  \begin{proposition}[Truncation]\label{proposition:truncation-of-a-PERM-loss}
    Assume \(k \ge 3\).
  Let \(\calL : \mathbb{R}^{k} \to \mathbb{R}^{k}_{\ge 0}\) be a PERM loss with template \(\TP: \mathbb{R}^{k-1} \to \mathbb{R}\).
  Define
  \(\con[\TP](\bfw) := \lim_{\lambda \to \infty} \TP(\lambda, \bfw)\) for all \(\bfw \in \mathbb{R}^{k-2}\).
  If \(\calL\) is regular
  ({Definition}~\ref{definition:regular-PERM-loss}), then \(\con[\TP]\) is a well-defined  symmetric function \(\mathbb{R}^{k-2} \to \mathbb{R}\) (i.e., all limits exist in \(\mathbb{R}\)), referred to as the \emph{truncation} of \(\TP\).
  \end{proposition}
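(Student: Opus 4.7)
The proposition makes two claims: (i) for every $\bfw \in \mathbb{R}^{k-2}$, the limit $\lim_{\lambda \to \infty} \TP(\lambda, \bfw)$ exists as a finite real number, and (ii) the resulting map $\bfw \mapsto \con[\TP](\bfw)$ is symmetric in its $k-2$ arguments. My plan is to dispatch both using only two pieces of the regularity hypothesis, namely the nonnegativity of $\TP$ and the entry-wise strict negativity of $\nabla_{\TP}$, together with the symmetry of $\TP$ guaranteed by Theorem~\ref{theorem:relative-margin-form}.

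For existence, I would fix $\bfw \in \mathbb{R}^{k-2}$ and view $\lambda \mapsto \TP(\lambda, \bfw)$ as a scalar function of one real variable. Its derivative is the first component of $\nabla_{\TP}$ evaluated at $(\lambda, \bfw)$, which by regularity is strictly negative everywhere, so the scalar function is strictly decreasing in $\lambda$. Regularity also gives $\TP(\lambda, \bfw) \ge 0$, so this strictly decreasing function is bounded below. Hence the limit as $\lambda \to \infty$ exists and lies in $[0, \TP(0, \bfw)] \subset \mathbb{R}$; this is the scalar $\con[\TP](\bfw)$.

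For symmetry, I would invoke Theorem~\ref{theorem:relative-margin-form}, which asserts that the template $\TP : \mathbb{R}^{k-1} \to \mathbb{R}$ is a symmetric function, i.e., invariant under every permutation of its $k-1$ arguments. Specializing to the subgroup of permutations that fix the first coordinate, one has $\TP(\lambda, \bfw) = \TP(\lambda, \sigma(\bfw))$ for every permutation $\sigma$ of the $k-2$ remaining coordinates and every $\lambda \in \mathbb{R}$, $\bfw \in \mathbb{R}^{k-2}$. Sending $\lambda \to \infty$ on both sides (both limits exist by part (i)) gives $\con[\TP](\bfw) = \con[\TP](\sigma(\bfw))$, so $\con[\TP]$ is symmetric.

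I do not foresee a genuine obstacle here: the argument is essentially a monotone convergence observation followed by a commutation of limit with permutation. The main thing worth noting is that only the nonnegativity and negative-gradient portions of \textbf{Definition~\ref{definition:regular-PERM-loss}} are used; the strict convexity, twice differentiability, and semi-coercivity conditions play no role in this proposition and presumably earn their keep in the subsequent classification-calibration results.
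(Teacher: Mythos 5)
Your proof is correct and follows essentially the same route as the paper's: monotonicity of $\lambda \mapsto \TP(\lambda,\bfw)$ from $\nabla_{\TP} \prec \vzero$, boundedness below from nonnegativity, and symmetry of $\con[\TP]$ inherited from the symmetry of $\TP$ by passing the permutation through the limit. The paper states the symmetry step even more tersely ("follows immediately"), so your explicit commutation of the limit with the coordinate permutation is, if anything, slightly more careful.
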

  \begin{proof}
    Let \(h(\lambda) := \TP(\lambda, \bfw)\).
The condition that \(\nabla_{\TP}(\cdot)  \prec \vzero\) implies that
\(h\)
is  decreasing as a function of \(\lambda\).
Moreover, \(h\) is nonnegative since \(\TP\) is nonnegative.
Thus, \(\lim_{\lambda \to +\infty} h(\lambda)\) exists.
The symmetry of \(\con[\TP]\) follows immediately from the symmetry of \(\TP\).
  \end{proof}
  By Theorem~\ref{theorem:relative-margin-form}, the symmetric function \(\con[\TP]\) induces a unique loss function which we call the \emph{truncation} of \(\calL\).
  Note that if
\(\TP^{\mathsf{CE}}(\bfz) = \log(1 + \exp(-z_{1}) + \exp(-z_{2}))\)
is the template of the cross entropy with \(k=3\) (Example~\ref{example:cross entropy}), then
\(\con[\TP^{\mathsf{CE}}](w) = \lim_{\lambda \to \infty} \TP^{\mathsf{CE}}(\lambda, w) = \log(1+\exp(-w))\), which is just the binary cross entropy/logistic loss.

  Next, we define the \(m\)-fold iterated truncation of the template of a PERM loss.
  \begin{corollary}\label{corollary:iterated-truncation-of-a-PERM-loss}
    For each \(m \in \{0,1,\dots,k-2\}\), define \(\con^{\iter{m}}[\TP]\) to be \(m\)-fold repeated applications of \(\con\) to \(\TP\), i.e.,
    \(
\con^{\iter{m}}[\TP] :=\con[\cdots \con[\con[\TP]] \cdots]
    \)
    where \(\con\) appears \(m\)-times.
  By convention, let \(\con^{\times 0}[\TP]  = \TP\).
  Moreover, for each \(n \in \{2,\dots, k\}\), define the notational shorthand \(\TP^{(n)} := \con^{\times (k-n)}[\TP]\).
  If, for \(n \in \{2,\dots, k-1\}\), \(\TP^{(n+1)} : \mathbb{R}^{n} \to \mathbb{R}\) is a symmetric function such that the associated PERM loss, denoted \(\calL^{(n+1)}\), is a regular PERM loss, then \(\TP^{(n)} : \mathbb{R}^{n-1} \to \mathbb{R}\) is a symmetric function.
  \end{corollary}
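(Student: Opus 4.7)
The plan is to deduce this corollary as an immediate specialization of the truncation proposition (Proposition~\ref{proposition:truncation-of-a-PERM-loss}). Unwinding the recursive definition of iterated truncation, one has
\[
\TP^{(n)} \;=\; \con^{\iter{(k-n)}}[\TP]
\;=\; \con\bigl[\con^{\iter{(k-n-1)}}[\TP]\bigr]
\;=\; \con\bigl[\TP^{(n+1)}\bigr],
\]
so establishing the corollary reduces to showing that applying $\con$ to the template $\TP^{(n+1)}$ yields a well-defined symmetric function $\mathbb{R}^{n-1} \to \mathbb{R}$.

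To apply Proposition~\ref{proposition:truncation-of-a-PERM-loss}, I would rename the proposition's ``$k$'' to be $n+1$ and its ``$\calL$'' to be $\calL^{(n+1)}$, so that the proposition's template (called ``$\TP$'' there) is instantiated by our $\TP^{(n+1)} : \mathbb{R}^{n} \to \mathbb{R}$. The hypothesis $n \in \{2,\dots,k-1\}$ gives $n+1 \ge 3$, matching the dimensional precondition of the proposition, and by assumption $\calL^{(n+1)}$ is a regular PERM loss, supplying exactly the remaining hypothesis. The conclusion of the proposition, transcribed under this renaming, asserts that $\con[\TP^{(n+1)}] : \mathbb{R}^{(n+1)-2} \to \mathbb{R}$, i.e., a function $\mathbb{R}^{n-1} \to \mathbb{R}$, is well-defined (all limits exist in $\mathbb{R}$) and symmetric. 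Combined with the identification $\TP^{(n)} = \con[\TP^{(n+1)}]$ from the previous paragraph, this is precisely the desired claim.

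There is no genuine obstacle beyond bookkeeping. The one subtlety worth flagging is that the corollary does \emph{not} assert that $\TP^{(n)}$ induces a regular PERM loss — it only asserts that $\TP^{(n)}$ is a symmetric function. Consequently the proof does not need to propagate nonnegativity, twice differentiability, strict convexity, semi-coercivity, or the entrywise negativity $\nabla_{\TP} \prec \vzero$ from $\TP^{(n+1)}$ down to $\TP^{(n)}$; those properties (which would be needed to iterate the corollary) will be handled separately wherever the paper invokes regularity of the intermediate truncations. For the corollary as stated, existence of the pointwise limit follows from monotonicity of $\lambda \mapsto \TP^{(n+1)}(\lambda,\bfw)$ (from $\nabla_{\TP^{(n+1)}} \prec \vzero$) together with its nonnegativity, and symmetry of $\con[\TP^{(n+1)}]$ follows directly from symmetry of $\TP^{(n+1)}$ in its last $n-1$ coordinates, exactly as in the proof of Proposition~\ref{proposition:truncation-of-a-PERM-loss}.
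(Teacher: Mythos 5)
Your proposal is correct and follows exactly the route the paper intends: the corollary is an immediate instantiation of Proposition~\ref{proposition:truncation-of-a-PERM-loss} with its ``$k$'' set to $n+1$ and its loss set to $\calL^{(n+1)}$, combined with the identity $\TP^{(n)} = \con[\TP^{(n+1)}]$. Your remark that only symmetry (not regularity) needs to be propagated, with regularity of the intermediate truncations supplied separately as a hypothesis, matches how the paper packages this in Definition~\ref{definition:totally-regular-PERM-loss}.
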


The \(n\)-ary truncated loss captures the behavior of \(\TP\) when the {first} \(k-n\) inputs to the template \(\TP\) approach \(+\infty\).
Note that if \(\TP^{\mathsf{CE}}\) is the template of the \(k\)-ary cross entropy, then
\(\con^{\times (k-n)}[\TP^{\mathsf{CE}}]\) is the template of the \(n\)-ary cross entropy.
This follows from a calculation similar to the one right before Corollary~\ref{corollary:iterated-truncation-of-a-PERM-loss}.

Next, we define ``totally regular PERM loss'' which generalizes the above observation:
\begin{definition}[Totally regular PERM loss]\label{definition:totally-regular-PERM-loss}
  Let \(\calL : \mathbb{R}^{k} \to \mathbb{R}^{k}\) be a regular PERM loss with template \(\TP: \mathbb{R}^{k-1} \to \mathbb{R}\).
  For each \(n \in \{2,\dots, k\}\),
  let
  \(    \TP^{(n)} : \mathbb{R}^{n-1} \to \mathbb{R} \)
  be as in Corollary~\ref{corollary:iterated-truncation-of-a-PERM-loss}
  and
let \(\calL^{(n)}\) be the unique\footnote{
The existence and uniqueness is guaranteed by Theorem~\ref{theorem:relative-margin-form}.
} PERM loss associated to \(\TP^{(n)}\).
  We say that \(\calL\) is a {\emph{totally regular}} PERM loss if
  \(\calL^{(n)}\) is regular for each \(n \in \{2,\dots, k\}\).
\end{definition}

Before proving our main result, we note that Gamma Phi losses form a large family of totally regular PERM losses:
\begin{proposition}
  \label{proposition:sufficient-condition-for-gamma-phi-losses-to-be-regular}
  Let \(\gamma,\phi : \mathbb{R} \to \mathbb{R}\) be functions and \(\calL^{\gamma,\phi}\) be the associated Gamma-Phi loss as defined  in Example~\ref{example:multiclass-exponential-loss}.
  Suppose that all of the following holds
\begin{compactenum}
  \item $\gamma$ is a twice differentiable function such that
  \(\gamma \ge 0\),
$\tfrac{d\gamma}{dt} > 0$
and
\(\tfrac{d^{2}\gamma}{dt^{2}} \ge 0\) on all of \(\mathbb{R}\),
\item $\phi$ is a twice differentiable function such that \(\tfrac{d\phi}{dt} <0 \), \(\tfrac{d^{2} \phi}{dt^{2}} > 0\) on all of \(\mathbb{R}\).
Moreover, $\phi(t) \to 0$ as $t\to +\infty$.
\end{compactenum}
Then $\calL^{\gamma,\phi}$ is totally regular.
\end{proposition}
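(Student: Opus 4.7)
The plan is to exploit the closure of the Gamma-Phi family under the truncation operation of Proposition~\ref{proposition:truncation-of-a-PERM-loss}: since $\phi(t)\to 0$ as $t\to+\infty$ and $\gamma$ is continuous, direct computation gives
\begin{equation*}
\con[\TP^{\gamma,\phi}](\bfw) = \lim_{\lambda\to+\infty}\gamma\Big(\phi(\lambda) + \sum_{j=1}^{n-2}\phi(w_j)\Big) = \gamma\Big(\sum_{j=1}^{n-2}\phi(w_j)\Big),
\end{equation*}
so the truncation of an $n$-ary Gamma-Phi template is an $(n{-}1)$-ary Gamma-Phi template with the \emph{same} $\gamma,\phi$. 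Iterating via Corollary~\ref{corollary:iterated-truncation-of-a-PERM-loss}, every $\TP^{(n)}$ is again a Gamma-Phi template, so Definition~\ref{definition:totally-regular-PERM-loss} reduces total regularity to proving that the $n$-ary Gamma-Phi template is a regular PERM template for each $n\in\{2,\dots,k\}$.

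Fix such $n$ and write $\TP(\bfz)=\gamma(S(\bfz))$ with $S(\bfz):=\sum_{j=1}^{n-1}\phi(z_j)$. Four of the five regularity requirements reduce to routine chain-rule calculations: nonnegativity follows from $\gamma\ge 0$, twice differentiability is immediate, the gradient entries $[\nabla\TP(\bfz)]_j=\gamma'(S(\bfz))\,\phi'(z_j)$ are strictly negative because $\gamma'>0$ and $\phi'<0$, and the Hessian
\begin{equation*}
\nabla^2\TP(\bfz) = \gamma''(S(\bfz))\,\nabla S(\bfz)\nabla S(\bfz)^\top + \gamma'(S(\bfz))\,\mathrm{diag}(\phi''(z_1),\dots,\phi''(z_{n-1}))
\end{equation*}
is strictly positive definite because the first summand is PSD ($\gamma''\ge 0$) and the second is strictly diagonally positive ($\gamma'>0$, $\phi''>0$), giving strict convexity.

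The only step requiring thought is semi-coercivity, which is also where the asymptotic hypothesis $\phi(t)\to 0$ does real work. First I would record the global shape of $\phi$ and $\gamma$ under the hypotheses: $\phi$ is strictly decreasing with $\phi(+\infty)=0$, hence $\phi>0$ everywhere; moreover, since $\phi'$ is nondecreasing (by $\phi''>0$) and strictly negative, integrating $\phi'$ against any reference value gives $\phi(-\infty)=+\infty$, so $\phi:\mathbb{R}\to(0,+\infty)$ is a continuous bijection with a well-defined inverse. Similarly $\gamma$ is strictly increasing and, by convexity combined with $\gamma'>0$, unbounded above. Now fix $c\in\mathbb{R}$. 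If $\bfz$ lies in the sublevel set $\{\TP\le c\}$, either the set is empty (vacuous case) or $S(\bfz)\le \gamma^{-1}(c)$; since each $\phi(z_j)>0$, each individual summand then satisfies $\phi(z_j)\le \gamma^{-1}(c)$, and inverting the decreasing map $\phi$ yields $z_j\ge \phi^{-1}(\gamma^{-1}(c))=:b$ for every $j$, so $\min_j z_j\ge b$. This establishes semi-coercivity and completes the verification that each $\TP^{(n)}$ is a regular PERM template. The main obstacle is this semi-coercivity step: it is the one place where one must combine positivity, strict monotonicity, and unboundedness properties of $\phi$ and $\gamma$ that all have to be extracted from the purely differential hypotheses in the statement.
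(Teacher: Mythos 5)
Your proof is correct and follows essentially the same route as the paper's: chain rule for the negative gradient, the rank-one-plus-diagonal Hessian decomposition for strict convexity, inversion of $\gamma$ and $\phi$ for semi-coercivity, and closure of the Gamma-Phi family under truncation (with the same $\gamma,\phi$) to upgrade regularity to total regularity. Your additional verification that $\phi>0$ everywhere and $\phi(t)\to+\infty$ as $t\to-\infty$ (so that $\phi^{-1}$ is defined where needed) makes explicit a point the paper's proof leaves implicit, but it is the same argument.
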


We now state our main result:
\begin{theorem}\label{theorem: nested family of regular PERM losses are CC - exposition version}
  If \(\calL\) is totally regular, then
  \(\calL\) is classification-calibrated.
\end{theorem}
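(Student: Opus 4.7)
The plan is to use the relative margin form (Theorem~\ref{theorem:relative-margin-form}) to express the conditional \(\calL\)-risk at a conditional class distribution \(\bfp \in \Delta^k\) as
\[
    C_\TP(\bfp, \bfz) \;:=\; \sum_{y=1}^k p_y\, \TP(\ico_y \bfz), \qquad \bfz := \bmpi \bfv \in \mathbb{R}^{k-1},
\]
a strictly convex function of \(\bfz\) (positive combination of \(\TP\) composed with the involutions \(\ico_y\)). By Proposition~\ref{proposition:argmax-equivalence} the event ``the induced classifier predicts \(y\)'' corresponds to \(\ico_y \bfz \succeq \vzero\), and by Lemma~4 of \citet{tewari2007consistency} the canonical link \(\underline{\smash{\argmax}}\) suffices for permutation-equivariant losses. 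Hence showing \(\calL\) is classification-calibrated reduces to showing: for every \(\bfp \in \Delta^k\) with unique max at some \(y^*\),
\[
    \inf\{\,C_\TP(\bfp, \bfz) : \ico_{y^*} \bfz \not\succeq \vzero\,\} \;>\; \inf_{\bfz \in \mathbb{R}^{k-1}} C_\TP(\bfp, \bfz),
\]
with the tie case following by a continuity/limiting argument.

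The proof proceeds by induction on \(k\). The base case \(k = 2\) is a direct calculation: the template \(\TP\) is a univariate strictly convex function with \(\TP' < 0\), so if \(p_2 > p_1\) the derivative of \(p_1\TP(-z) + p_2\TP(z)\) at \(z = 0\) is \((p_2 - p_1)\TP'(0) < 0\), forcing any (possibly asymptotic) minimum into \(\{z > 0\}\) and hence a strict gap against \(\{z \le 0\}\). For the inductive step, assume the result for all totally regular PERM losses of arity at most \(k-1\); by Definition~\ref{definition:totally-regular-PERM-loss} every truncation \(\calL^{(n)}\) with \(n \le k-1\) is itself totally regular. WLOG set \(y^* = k\) by permutation equivariance, and let \(\{\bfz_n\}\) be a minimizing sequence for \(C_\TP(\bfp, \cdot)\) subject to \(\ico_k \bfz_n \not\succeq \vzero\); pass to a subsequence so each coordinate either converges in \(\mathbb{R}\) or diverges.

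Two cases arise. \emph{Case~(a): \(\{\bfz_n\}\) is bounded.} Strict convexity gives a unique unconstrained minimizer \(\bfz^*\) satisfying \(\sum_y p_y \ico_y^\top \nabla\TP(\ico_y \bfz^*) = \vzero\). Exploiting the explicit column structure of \(\ico_y\) in Definition~\ref{definition:matrix-label-code}, the pointwise negativity \(\nabla \TP \prec \vzero\), and the strict separation \(p_k > p_y\) for \(y \ne k\), a coordinate-by-coordinate analysis of the first-order equation should force \(\ico_k \bfz^* = \bfz^* \succ \vzero\); the constrained feasible set is then closed and bounded away from \(\bfz^*\), and strict convexity yields the strict gap. \emph{Case~(b): some coordinate of \(\{\bfz_n\}\) diverges.} Semi-coercivity of \(\TP\) together with \(\nabla \TP \prec \vzero\) forces every divergent coordinate to go to \(+\infty\). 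Let \(J \subseteq [k-1]\) index the divergent coordinates and let \(\bfz_n^{J^c} \to \bfw\). Iteratively applying Proposition~\ref{proposition:truncation-of-a-PERM-loss} gives \(\lim_n \TP(\ico_y \bfz_n) = \TP^{(k - |J|)}(\tilde{\ico}_y \bfw)\) for an appropriate reduced matrix label code, and thus \(\lim_n C_\TP(\bfp, \bfz_n) = C_{\TP^{(k-|J|)}}(\bfp', \bfw)\) for a collapsed distribution \(\bfp'\) on the surviving classes. Since \(y^* = k\) must survive (otherwise the sequence would contradict finiteness of the \(p_k\)-weighted term in the infimum), \(\bfp'\) inherits its unique max at \(y^*\), and the constraint descends to the corresponding constraint for the truncated problem; the inductive hypothesis applied to the totally regular \(\calL^{(k-|J|)}\) yields the strict inequality, which lifts back to \(\calL\).

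The main obstacle is the bookkeeping in case~(b): I must identify, class by class, which coordinates of \(\ico_y \bfz_n\) diverge and which converge so that the limit of the conditional risk truly equals the conditional risk of the truncated template at the correctly collapsed \(\bfp'\), and so that the constraint \(\ico_k \bfz_n \not\succeq \vzero\) descends to a genuine constraint for the reduced problem. These steps rely on the disentangled label-code formula \(\ico_y \bmpi \bfv\) (Remark~\ref{remark:explicit-form-of-matrix-label-form-times-discrimaint}) and the explicit structure of Definition~\ref{definition:matrix-label-code}, and presumably on the matrix-label-code lemmas collected in the paper's appendix. A secondary subtlety is verifying in case~(a) that the unconstrained minimizer is strictly positive componentwise; this uses the strict separation \(p_k > p_y\) in an essential way and is where the regularity assumption ``\(\nabla \TP \prec \vzero\) everywhere'' (as opposed to just at the origin in the binary setting) is used.
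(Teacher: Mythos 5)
Your strategy---directly verifying the calibration inequality via minimizing sequences and an induction on \(k\), with divergent coordinates handled by truncation---is genuinely different from the paper's route, which never touches minimizing sequences: the paper invokes \citet[Theorem 7]{tewari2007consistency} (calibration of a symmetric convex set \(S\) is equivalent to admissibility of \(S\) and of all its coordinate projections \(\proj^{\iter{m}}(S)\)), proves admissibility of \(\cran(\LL)\) via uniqueness of positive normals (which reduces to injectivity of the link function, established through the M-matrix structure of \(\bfA(\bfz)\)), and identifies \(\proj^{\iter{m}}(\cran(\LL))\) with \(\cran^{\circ}(\LL^{(k-m)})\). Your case~(b) plays the role of that projection identification, and your case~(a) plays the role of admissibility. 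The plan is coherent, but it has one gap that is not merely ``bookkeeping.''

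The concrete gap is your opening reduction. Definition~\ref{definition: multiclass classification calibration} requires the strict inequality over \(\bm{\zeta} \in \cran(\calL)\), the \emph{convex hull} of the range, whereas your displayed inequality quantifies only over \(\bfz \in \mathbb{R}^{k-1}\), i.e.\ over \(\ran(\LL)\). For the unconstrained side this is harmless (a linear functional has the same infimum over a set and its convex hull), but the constrained side does not reduce: a convex combination \(\sum_i \alpha_i \LL(\bfz_i)\) can have its argmin at a wrong label even when every \(\LL(\bfz_i)\) has its argmin at \(y^*\), so the constrained infimum over \(\cran(\calL)\) could a priori be strictly smaller than the constrained infimum over \(\ran(\LL)\), and proving your inequality would not prove the one in the definition. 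The paper closes exactly this hole with the structural result \(\cran(\LL) = \{\LL(\bfz) + \lambda\vone : \lambda \ge 0\}\) (Proposition~\ref{proposition: boundary of S is R}), since adding \(\lambda\vone\) changes neither \(\langle\bfp,\cdot\rangle\) by less than \(\lambda\) nor the predicted label; but that result is itself a large fraction of the paper's work (the homeomorphism \(F\), concavity of \(\beta\) along lines, and Theorem~\ref{theorem: conv hull of boundary}), so you cannot assume it silently. Beyond this, two sub-claims you flag as ``should follow'' are where the remaining substance lies: in case~(a), that \(p_{y^*} > p_j\) for all \(j \ne y^*\) forces the unconstrained minimizer to satisfy \(\ico_{y^*}\bfz^* \succ \vzero\) strictly (this is essentially the admissibility/unique-normal statement, and the nonlinear first-order condition \(-p_k\nabla_{\TP}(\bfz^*) = \bfA(\bfz^*)\tilde{\bfp}\) does not yield it by a direct coordinate argument---the paper instead inverts it via strict monotonicity of \(\bfA(\bfz)\) to show the link is a bijection); and in case~(b), that the limit risk is the conditional risk of \(\calL^{(k-|J|)}\) at the collapsed distribution with the constraint descending correctly, which is the content of Lemmas~\ref{lemma: trivial projection identity}--\ref{lemma: nesting structural lemma}. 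Note also that case~(b) can only occur when \(p_y = 0\) for every dropped class (else the term \(p_y\TP(\ico_y\bfz_n) \to +\infty\) by semi-coercivity), so your two cases are really the interior-\(\bfp\) and boundary-\(\bfp\) cases; making that explicit would simplify your induction.
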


  In view of Theorem~\ref{theorem: nested family of regular PERM losses are CC - exposition version},
  the assumptions of Proposition~\ref{proposition:sufficient-condition-for-gamma-phi-losses-to-be-regular}
  turns out to be
  a sufficient condition for a Gamma-Phi loss to be classification-calibrated.
  Theorem~\ref{theorem: nested family of regular PERM losses are CC - exposition version} and Proposition~\ref{proposition:sufficient-condition-for-gamma-phi-losses-to-be-regular} together recover the result that the coherence loss \citep{zhang2009coherence} is classification-calibrated.
  However, this sufficient condition is subsumed by  a previous result in \citet[Theorem 3.3]{wang2023classification}.
  Nevertheless, that
  $\calL^{\gamma,\phi}$ is totally regular has nontrivial consequences as we will see next.

  First, new totally regular PERM losses can be constructed from existing ones:
\begin{proposition}\label{proposition:sum-of-totally-regular-loss-is-totally-regular}
  Let \(\calL\) and \(\calL' : \mathbb{R}^{k} \to \mathbb{R}^{k}\) be totally regular PERM losses and \(\lambda >0\) be a number. Then \(\lambda \calL\) and \(\calL + \calL'\) are also totally regular.
\end{proposition}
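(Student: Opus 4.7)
My plan is to reduce everything to templates using Theorem~\ref{theorem:relative-margin-form}, check each clause of Definition~\ref{definition:regular-PERM-loss} individually, and then verify that truncation commutes with the two operations so that an induction on the arity handles ``totally regular''.

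First, I would observe that if $\TP$ and $\TP'$ are the templates of $\calL$ and $\calL'$, respectively, then $\lambda\TP$ and $\TP+\TP'$ are symmetric functions on $\mathbb{R}^{k-1}$, so by Theorem~\ref{theorem:relative-margin-form} they are the templates of unique PERM losses. Using the identity $\calL_y(\bfv) = \TP(\ico_y \bmpi \bfv)$ it is then immediate that $\lambda\calL$ has template $\lambda\TP$ and $\calL+\calL'$ has template $\TP+\TP'$. Hence it suffices to prove that the class of templates of regular PERM losses is closed under positive scaling and addition, and likewise for the iterated truncations.

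Next I would verify, one clause at a time, that each condition in Definition~\ref{definition:regular-PERM-loss} is preserved. Nonnegativity, twice differentiability, strict convexity, and the entrywise strict negativity $\nabla_{\TP}\prec\vzero$ are all manifestly preserved under $\TP\mapsto\lambda\TP$ (for $\lambda>0$) and under $(\TP,\TP')\mapsto \TP+\TP'$. For semi-coercivity I would use the inclusion $\{\TP+\TP'\le c\}\subseteq\{\TP\le c\}$, which follows from $\TP'\ge 0$; combining with semi-coercivity of $\TP$ gives the required $b$. The scaling case reduces to replacing the sublevel $c$ with $c/\lambda$.

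Then I would show that truncation commutes with both operations. Writing $h_{\bfw}(\lambda)=\TP(\lambda,\bfw)$, the proof of Proposition~\ref{proposition:truncation-of-a-PERM-loss} already gives that $h_{\bfw}$ is monotone decreasing and bounded below, so the limit exists; linearity of limits (together with the analogous fact for $\TP'$) yields
\begin{equation*}
\con[\lambda\TP] = \lambda\con[\TP], \qquad \con[\TP+\TP']=\con[\TP]+\con[\TP'].
\end{equation*}
By a trivial induction on $m$, the same identities hold for $\con^{\iter m}$, so $(\lambda\TP)^{(n)}=\lambda\TP^{(n)}$ and $(\TP+\TP')^{(n)}=\TP^{(n)}+(\TP')^{(n)}$ for each $n\in\{2,\dots,k\}$.

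Finally, I would conclude: since $\calL$ and $\calL'$ are totally regular, every $\TP^{(n)}$ and $(\TP')^{(n)}$ is regular, so by the preservation lemma from the second paragraph, every $\lambda\TP^{(n)}$ and $\TP^{(n)}+(\TP')^{(n)}$ is regular. By the commuting identities above, these are precisely the iterated truncations of the templates of $\lambda\calL$ and $\calL+\calL'$, proving that both losses are totally regular. I do not anticipate a real obstacle; the only mildly subtle point is verifying semi-coercivity of $\TP+\TP'$, which is handled cleanly once one uses nonnegativity of the summands.
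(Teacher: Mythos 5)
Your proposal is correct and follows essentially the same route as the paper's proof: pass to templates, note that all clauses of Definition~\ref{definition:regular-PERM-loss} except semi-coercivity are trivially preserved, handle semi-coercivity via the sublevel-set inclusion $\{\TP+\TP'\le c\}\subseteq\{\TP\le c\}$ using nonnegativity, and reduce total regularity to the fact that truncation commutes with scaling and addition. The paper states this last step as following ``immediately from the definition of the truncation''; you spell out the limit-linearity and induction, which is a faithful elaboration rather than a different argument.
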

Thus,  \(\lambda \calL\) and \(\calL + \calL'\) are both classification-calibrated by Theorem~\ref{theorem: nested family of regular PERM losses are CC - exposition version}.
Next, we apply this result to sums of Gamma-Phi losses:
  \begin{example}[Sum of Gamma-Phi losses]\label{example:sum-of-Gamma-Phi-losses}
  Consider two Gamma-Phi losses each satisfying the assumptions of Proposition~\ref{proposition:sufficient-condition-for-gamma-phi-losses-to-be-regular}. For concreteness, take the cross entropy \(\calL^{\mathsf{CE}}\) and the multiclass exponential loss \(\calL^{\mathsf{Exp}}\) (See Example~\ref{example:multiclass-exponential-loss}).
  Then by Proposition~\ref{proposition:sum-of-totally-regular-loss-is-totally-regular},
  the loss \(\tfrac{1}{2}(\calL^{\mathsf{CE}} + \calL^{\mathsf{Exp}})\)
is totally regular and thus classification-calibrated by Theorem~\ref{theorem: nested family of regular PERM losses are CC - exposition version}.
Note that the sum of two Gamma-Phi losses need not be another Gamma-Phi losses.
Thus, Theorem~\ref{theorem: nested family of regular PERM losses are CC - exposition version} combined with Proposition~\ref{proposition:sum-of-totally-regular-loss-is-totally-regular} gives the first sufficient condition
for when the sum of classification-calibrated losses are again classification-calibrated, in the multiclass case\footnote{Note that the situation is much simpler in the binary case, due to the result of \citet[Theorem 6]{bartlett2006convexity}
  that
convex margin loss \(\TP\) is classification-calibrated if and only if \(\TP\) is differentiable at \(0\) and \(\dot \TP(0) <0\) where the ``overdot'' denotes taking derivative of a univariate function.
}.
\end{example}

In the next section,  we will apply Theorem~\ref{theorem: nested family of regular PERM losses are CC - exposition version} to obtain  a sufficient condition for classification-calibration of Fenchel-Young losses (Theorem~\ref{theorem: sufficient condition for FY loss to be CC - exposition version}).


\section{Fenchel-Young losses: sufficient conditions for classification-calibration}\label{section:FY-loss-main}

In this section, we consider a subset of PERM losses called the Fenchel-Young losses:
\begin{definition}[\citealt{blondel2020learning}]\label{definition:Fenchel-Young}
  Let \(\Omega: \Delta^k \to \mathbb{R}\) be a continuous function and \(\mu \in \mathbb{R}_{\ge 0}\).
  Define \(\bfc_{y}:=\mu(\vone^{(k)} - \bfe_{y}^{(k)})\).
The \emph{Fenchel-Young} loss associated to \(\Omega\) and \(\mu\)
is the loss function \(\calL^{\Omega, \mu}: \mathbb{R}^k \to \mathbb{R}^k_{\ge 0}\) whose \(y\)-th component is given by
\begin{equation}
  \textstyle
  \calL_{y}^{\Omega, \mu}(\bfv)
  :=
  \max_{\bfp \in \Delta^k}
  - \Omega(\bfp)
  +
  \Omega(\bfe_{y}^{(k)})
  +
\langle \bfv + \bfc_{y}, \bfp -\bfe_{y}^{(k)} \rangle.
\label{equation:Fenchel-Young}
\end{equation}
The maximization in Eqn.~\ref{equation:Fenchel-Young} is essentially the Fenchel  conjugate\footnote{Also known as the convex conjugate.} of \(-\Omega\). See
Definition~\ref{definition:convex conjugate}.
\end{definition}

It is not obvious that Fenchel-Young losses are indeed PERM losses.
  We prove this fact in
{Proposition}~\ref{proposition: FY loss is PERM} in the appendix.

\begin{definition}[Negentropy]\label{definition:negentropy}
  A function \(\Omega: \Delta^k \to \mathbb{R}\) is a \emph{negentropy} if :
  \begin{compactenum}
    \item \(\Omega\) is closed (maps closed sets to closed sets) and convex,
  \item \(\Omega\) is symmetric, i.e., \(\Omega(\mathbf{S}_{\sigma}\bfp) = \Omega(\bfp)\) for all \(\bfp \in \Delta^k\) and \(\sigma \in \mathtt{Sym}(k)\),
    \item \(-\Omega(\bfp) \ge 0\) for all \(\bfp \in \Delta^k\) and \(\Omega(\bfe_{i}^{(k)}) = 0\) for all \(i \in [k]\).
  \end{compactenum}
\end{definition}
The negative entropy is more convenient to work with due to its convexity.
We use the
term ``negentropy'' in the interest of brevity.
This term has been previously used in statistics \citep{hyvarinen2000independent} and in machine learning
\citep{mensch2019geometric}.
\begin{example}\label{example:CE-as-FY-loss}
 When \(\Omega(\bfp) := \sum_{y = 1}^{k} p_{y} \log(p_{y})\)  is the negative Shannon entropy and \(\mu = 0\),  we have that \(\calL^{\Omega,0}\) is the cross entropy
({Example}~\ref{example:cross entropy}).
\end{example}

Below, we use the notation  that
\(\tilde{\bfp} = [p_{1},\dots, p_{k-1}]^{\top}\in \mathbb{R}^{k-1}\) is the sub-vector of \(\bfp\) with the last element dropped.
Later in Proposition~\ref{proposition: FY loss is PERM}, we show that the Fenchel-Young loss Eqn.~\eqref{equation:Fenchel-Young} is a PERM loss with template
\begin{equation}
  \textstyle
\TP^{\Omega,\mu}(\bfz)
:=
\max_{\bfp \in {\Delta}^k}
-\Omega(\bfp)
+
\mu \vone^{\top} \tilde{\bfp}
- \langle \tilde{\bfp}, \bfz\rangle.
\label{equation:FY-template-simple}
\end{equation}

\begin{remark}
  \citet{blondel2020learning} allow the vector \(\bfc_{y} \in \mathbb{R}^{k}\)
  to be arbitrary, in which case the resulting loss is known as \emph{cost-sensitive Fenchel-Young} loss.
  However, known calibration results~\citep{blondel2019structured,nowak2019general} are only for \(\bfc_{y}\) as in Definition~\ref{definition:Fenchel-Young}. \end{remark}

In order to state our sufficient condition for classification-calibration (Theorem~\ref{theorem: sufficient condition for FY loss to be CC - exposition version}),
we briefly review two key notions from the theory of convex analysis and Legendre transformation following \citet[Section 26]{rockafellar1970convex}:

\begin{definition}[Convex conjugates]\label{definition:convex conjugate}
  Let \(D \subseteq \mathbb{R}^n\) be a closed convex set.
  Let $f: D \to \mathbb{R}$ be a function.
  Define $D^* := \{y \in \mathbb{R}^{n}: \sup_{x \in D} \langle y, x \rangle - f(x) < \infty \}$.
  The \emph{convex conjugate} of a function $f : D \to \mathbb{R}$ is the function
  $f^*: D^* \to \mathbb{R}$ given by
  \(    f^*(y) = \sup_{x \in D} \langle y, x \rangle - f(x)
\).
\end{definition}

\begin{definition}[Convex functions of Legendre type]\label{definition: Legendre type}
  Let $D \subseteq \mathbb{R}^n$ be a closed convex set.
  A convex function $f: D \to \mathbb{R}$ is said to be of \emph{Legendre type} if
  \begin{compactenum}
    \item\label{definition: Legendre type - domain} $C:=\mathrm{int}(D)$ is an open convex subset of $\mathbb{R}^{n}$,
    \item\label{definition: Legendre type - function} $f$ is strictly convex and differentiable on $C$,
    \item\label{definition: Legendre type - boundary}
    for all $\{x_i\}_{i=1}^{\infty} \subseteq C$ with $\lim_{i \to \infty} x_i \in \mathtt{bdry}(D)$, we have
    $\lim_{i\to \infty} \|\nabla_f(x_i)\| = +\infty$.
  \end{compactenum}
\end{definition}

For example, when $D = \Delta^k$ and $f = -H$ is the negative Shannon entropy, then $f: D \to \mathbb{R}_{\le 0}$ is of Legendre type. See the paragraph immediately following \citet[Definition 3]{blondel2020learning}.

\begin{definition}[Regular negentropies]\label{definition:regular-entropy}
  A negentropy $\Omega: \Delta^k \to \mathbb{R}$ is a \emph{regular negentropy} if
  ${\Omega}$ is of Lengedre type and twice differentiable.
\end{definition}
The term ``negentropy'' was previously used by \citet{mensch2019geometric}.
To the best of our knowledge, the term ``\emph{regular} negentropy'' is new.

\begin{proposition}[Truncation of a negentropy]\label{proposition:truncation-of-a-negentropy}
 Assume \(k \ge 3\).
 Let \(\inj: \Delta^{k-1} \to \Delta^{k}\)
 be the ``zero-padding'' operator, i.e.,
\(  \inj(\bfq)
  := [0,q_{1},\dots, q_{k-1}] \in \Delta^{k}
\)
 for all \(\bfq = [q_{1},\dots, q_{k-1}] \in  \Delta^{k-1}\).
  Let \(\Omega : \Delta^{k} \to \mathbb{R}\) be a negenetropy.
  Define the \emph{truncation} of \(\Omega\), denoted \(\con[\Omega] : \Delta^{k-1} \to \mathbb{R}\), by
  \(\con[\Omega](\cdot) := \Omega(\inj(\cdot))\).
  Then \(\con[\Omega]\) is a
  negentropy.
  \end{proposition}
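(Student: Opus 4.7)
The plan is to verify each of the three defining properties of a negentropy (Definition~\ref{definition:negentropy}) for \(\con[\Omega]\) directly. The proof is essentially a routine consequence of the affinity of \(\inj\), so the work consists of bookkeeping: tracking how the zero-padding interacts with convex combinations, permutations, and standard basis vectors. I do not expect a genuine obstacle; the only subtle point is the permutation-extension step below.

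For closedness and convexity, I would exploit that \(\inj\) is the restriction to \(\Delta^{k-1}\) of an affine map \(\mathbb{R}^{k-1}\to\mathbb{R}^{k}\). Thus for \(\bfq,\bfq'\in\Delta^{k-1}\) and \(t\in[0,1]\), \(\inj(t\bfq+(1-t)\bfq')=t\,\inj(\bfq)+(1-t)\inj(\bfq')\), and convexity of \(\Omega\) transports to \(\con[\Omega]\). Since \(\Omega\) is continuous on the compact set \(\Delta^{k}\) and \(\inj\) is continuous, \(\con[\Omega]\) is continuous on \(\Delta^{k-1}\), hence closed in the sense required by Definition~\ref{definition:negentropy}.

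For symmetry, given \(\sigma\in\mathtt{Sym}(k-1)\), define its extension \(\tilde\sigma\in\mathtt{Sym}(k)\) by \(\tilde\sigma(1)=1\) and \(\tilde\sigma(j+1)=\sigma(j)+1\) for \(j\in[k-1]\). A direct index check shows
\[
\inj(\mathbf{S}_{\sigma}\bfq)=(0,q_{\sigma(1)},\dots,q_{\sigma(k-1)})=\mathbf{S}_{\tilde\sigma}\inj(\bfq).
\]
Applying the symmetry of \(\Omega\) from Definition~\ref{definition:negentropy} then yields \(\con[\Omega](\mathbf{S}_{\sigma}\bfq)=\Omega(\mathbf{S}_{\tilde\sigma}\inj(\bfq))=\Omega(\inj(\bfq))=\con[\Omega](\bfq)\), as required.

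Finally, for the sign and normalization conditions: since \(\inj(\bfq)\in\Delta^{k}\) whenever \(\bfq\in\Delta^{k-1}\), the non-positivity of \(\Omega\) on \(\Delta^{k}\) gives \(-\con[\Omega](\bfq)=-\Omega(\inj(\bfq))\ge 0\). For the vertex values, note \(\inj(\bfe_{i}^{(k-1)})=\bfe_{i+1}^{(k)}\) for each \(i\in[k-1]\), since zero-padding shifts the nonzero coordinate by one slot. Thus \(\con[\Omega](\bfe_{i}^{(k-1)})=\Omega(\bfe_{i+1}^{(k)})=0\) by property~3 of Definition~\ref{definition:negentropy}. Together these three checks establish that \(\con[\Omega]\) is a negentropy.
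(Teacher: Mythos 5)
Your proof is correct; the paper states this proposition without a proof, treating it as routine, and your verification of the three negentropy axioms via the affinity of \(\inj\), the permutation extension \(\tilde\sigma\) fixing the first coordinate, and the vertex identity \(\inj(\bfe_{i}^{(k-1)})=\bfe_{i+1}^{(k)}\) is exactly the expected argument. The only point worth flagging is that your closedness step invokes continuity of \(\Omega\), which Definition~\ref{definition:negentropy} does not explicitly grant (though Definition~\ref{definition:Fenchel-Young} does); this can be avoided entirely by observing that \(\inj\) is itself a closed map (a homeomorphism onto a closed face of \(\Delta^{k}\)), so \(\Omega\circ\inj\) maps closed sets to closed sets directly from the closedness of \(\Omega\).
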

  The following is analogous to the earlier iterated truncation construction in Corollary~\ref{corollary:iterated-truncation-of-a-PERM-loss}:

\begin{corollary}\label{corollary:iterated-truncation-of-a-negenetropy}
    For each \(m \in \{0,1,\dots,k-2\}\), define \(\con^{\iter{m}}[\Omega]\) to be \(m\)-fold repeated applications of \(\con\) to \(\Omega\), i.e.,
    \(
\con^{\iter{m}}[\Omega] :=\con[\cdots \con[\con[\Omega]] \cdots]
    \)
    where \(\con\) appears \(m\)-times.
  By convention, let \(\con^{\times 0}[\Omega]  = \Omega\).
  Moreover, for each \(n \in \{2,\dots, k\}\), define the notational shorthand \(\Omega^{(n)} := \con^{\times (k-n)}[\Omega]\).
  Then \(\Omega^{(n)} : \Delta^{n} \to \mathbb{R}\) is a negentropy.
  \end{corollary}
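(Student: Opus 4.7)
The plan is a straightforward induction on the number of truncations, leveraging Proposition~\ref{proposition:truncation-of-a-negentropy} as the one-step engine.

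\textbf{Base case.} For \(m=0\) we have \(\con^{\times 0}[\Omega] = \Omega\), which is a negentropy on \(\Delta^{k}\) by assumption.

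\textbf{Inductive step.} Suppose \(m \in \{0,1,\dots, k-3\}\) and the inductive hypothesis holds: \(\con^{\times m}[\Omega] : \Delta^{k-m} \to \mathbb{R}\) is a negentropy. Since \(m \le k-3\), the dimension satisfies \(k - m \ge 3\), so Proposition~\ref{proposition:truncation-of-a-negentropy} applies (with \(k\) in the statement of that proposition replaced by \(k-m\)) to the negentropy \(\con^{\times m}[\Omega]\) on \(\Delta^{k-m}\). The conclusion is that
\[
\con^{\times (m+1)}[\Omega] \;=\; \con\!\left[\con^{\times m}[\Omega]\right] : \Delta^{k-m-1} \to \mathbb{R}
\]
is again a negentropy, completing the induction and yielding the claim for every \(m \in \{0,1,\dots,k-2\}\).

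\textbf{Reindexing.} For \(n \in \{2,\dots,k\}\), set \(m = k - n\), so that \(m\) ranges over \(\{0,\dots, k-2\}\) and \(\Omega^{(n)} = \con^{\times(k-n)}[\Omega]\) lives on \(\Delta^{n}\). By the induction just completed, \(\Omega^{(n)}\) is a negentropy on \(\Delta^{n}\).

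\textbf{Anticipated obstacle.} There is essentially no mathematical obstacle beyond bookkeeping: all of the work (closedness/convexity, symmetry, nonnegativity of \(-\Omega\), and vanishing on vertices) is absorbed into the single-step Proposition~\ref{proposition:truncation-of-a-negentropy}. The only thing to watch is that the ambient dimension never drops below the hypothesis \(k \ge 3\) needed to apply that proposition; the induction is cut off at \(m = k-3\) precisely so that the final application acts on \(\Delta^{3}\) and outputs a negentropy on \(\Delta^{2}\), matching the range \(n \in \{2,\dots,k\}\) in the statement.
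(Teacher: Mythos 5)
Your proof is correct and matches the argument the paper leaves implicit: the corollary is stated without proof precisely because it follows by exactly this induction, applying Proposition~\ref{proposition:truncation-of-a-negentropy} one step at a time while the ambient dimension stays at least $3$. Your bookkeeping of the dimension constraint (cutting the induction at $m=k-3$ so the last application maps $\Delta^{3}$ to $\Delta^{2}$) is the only delicate point, and you handled it correctly.
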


\begin{definition}[Totally regular negentropy]\label{definition:totally-regular-negentropy}
  Let \({\Omega} : {\Delta}^{k} \to \mathbb{R}\) be a negentropy.
  We say that $\Omega$ is a {\emph{totally regular}} negentropy if
  \(\Omega^{(n)}\) is a regular negentropy for each \(n \in \{2,\dots, k\}\).
\end{definition}

To the best of our knowledge, the definition of a totally regular negentropy is new.
As suggested by the name,
totally regular negentropies induce
Fenchel-Young losses that are totally regular.
This is a critical step\footnote{
  See the proof of
  {Theorem}~\ref{proposition: nested Delta family is nested PERM} in the appendix.
}
towards proving the main result of this section below.
The result leverages Theorem~\ref{theorem: nested family of regular PERM losses are CC - exposition version}
to
establish a sufficient condition for the classification-calibration of Fenchel-Young losses:

\begin{theorem}\label{theorem: sufficient condition for FY loss to be CC - exposition version}
  
 Let \(\Omega\) be a totally regular negentropy,
  \(\mu \in \mathbb{R}_{\ge 0}\) be fixed,
  and \(\calL^{\Omega,\mu}\) be the Fenchel-Young loss associated to \(\Omega\) and \(\mu\).
Then
\(\calL^{\Omega,\mu}\) is classification-calibrated.

\end{theorem}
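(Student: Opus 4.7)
The plan is to apply Theorem~\ref{theorem: nested family of regular PERM losses are CC - exposition version}, which reduces the problem to showing that \(\calL^{\Omega,\mu}\) is a totally regular PERM loss whenever \(\Omega\) is a totally regular negentropy. That \(\calL^{\Omega,\mu}\) is a PERM loss, with template \(\TP^{\Omega,\mu}\) given by Eqn.~\eqref{equation:FY-template-simple}, is handled by Proposition~\ref{proposition: FY loss is PERM}. So the work splits into two pieces: \emph{(i)} showing that the truncation operation \(\con\) on templates is compatible with the truncation \(\con\) on negentropies, i.e.\ \(\con[\TP^{\Omega,\mu}] = \TP^{\con[\Omega],\mu}\), and \emph{(ii)} showing that whenever \(\Omega\) is a regular negentropy, \(\calL^{\Omega,\mu}\) is a regular PERM loss in the sense of Definition~\ref{definition:regular-PERM-loss}. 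Combining these, iterating (i) gives that the \(n\)-ary truncated template equals \(\TP^{\Omega^{(n)},\mu}\), and then (ii) applied to each \(\Omega^{(n)}\) yields regularity of all \(\calL^{(n)}\), which is exactly the definition of being totally regular.

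For step (i), I would examine the maximization defining \(\TP^{\Omega,\mu}(\lambda,\bfw)\) and observe that as \(\lambda \to +\infty\), the penalty \(-\langle \tilde{\bfp}, (\lambda,\bfw)\rangle\) forces any near-optimal \(\bfp\) to satisfy \(p_{1} \to 0\). Combined with closedness of \(\Omega\), the maximum localizes to the face \(\{p_{1}=0\}\) of \(\Delta^{k}\), which is canonically identified with \(\Delta^{k-1}\) via the zero-padding \(\inj\). On this face, \(\Omega(\inj(\bfq)) = \con[\Omega](\bfq)\), \(\vone^{\top}\tilde{\bfp} = \vone^{\top}\bfq\) (with the new reduced simplex coordinates), and the inner product with \(\bfw\) survives. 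A careful limit argument, using that the maximum of a continuous concave function over a compact convex set depends continuously on parameters provided the maximum is attained in the relative interior of the limiting face, yields the claimed equality \(\con[\TP^{\Omega,\mu}] = \TP^{\con[\Omega],\mu}\).

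For step (ii), I would verify each clause of Definition~\ref{definition:regular-PERM-loss}. Nonnegativity of \(\TP^{\Omega,\mu}\) follows directly from the Fenchel-Young inequality and the fact that the negentropy vanishes at the vertices \(\bfe_{y}^{(k)}\). Twice differentiability and strict convexity of \(\TP^{\Omega,\mu}\) follow from standard Legendre duality: since \(-\Omega\) is of Legendre type and twice differentiable, its conjugate is twice differentiable and strictly convex on its domain. The key entrywise negativity \(\nabla \TP^{\Omega,\mu}(\bfz) \prec \vzero\) comes from Danskin's theorem: the unique maximizer \(\bfp^{*}(\bfz) \in \Delta^{k}\) satisfies \(\nabla \TP^{\Omega,\mu}(\bfz) = -\tilde{\bfp}^{*}(\bfz)\), and the Legendre-type boundary condition (gradient of \(\Omega\) blows up at the boundary of \(\Delta^{k}\)) ensures \(\bfp^{*}(\bfz)\) lies in the relative interior, so every component is strictly positive and the gradient is entrywise strictly negative. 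Semi-coercivity follows from the same observation: if \(\min_{j} z_{j} \to -\infty\) along some sequence, then optimality forces the corresponding component of \(\bfp^{*}\) toward \(1\), driving \(\TP^{\Omega,\mu}(\bfz) \to +\infty\) because \(-\Omega \ge 0\) and the linear term \(-\langle \tilde{\bfp}, \bfz\rangle\) grows without bound; thus every sublevel set is bounded below componentwise.

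The main obstacle will be step (i): interchanging the limit \(\lambda \to \infty\) with the max over \(\Delta^{k}\) is delicate because the maximizer may approach the boundary face \(\{p_{1}=0\}\), where the Legendre-type condition makes the gradient of \(-\Omega\) singular. I would handle this by a two-sided bound --- restricting the domain of the max to \(\{p_{1}=0\}\) gives a lower bound equal to \(\TP^{\con[\Omega],\mu}(\bfw)\), while for the upper bound I would split any \(\bfp \in \Delta^{k}\) as \(\bfp = p_{1}\bfe_{1} + (1-p_{1})\inj(\bfq)\) for some \(\bfq \in \Delta^{k-1}\), use continuity/closedness of \(\Omega\) together with the fact that \(-\lambda p_{1}\) dominates any bounded perturbation of the objective, and conclude that the difference between the unconstrained max and the face-restricted max vanishes as \(\lambda \to \infty\). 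Once step (i) is nailed down, steps (ii) and the final application of Theorem~\ref{theorem: nested family of regular PERM losses are CC - exposition version} go through essentially by invoking the already-developed machinery.
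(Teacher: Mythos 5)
Your proposal is correct and follows essentially the same route as the paper: reduce via Theorem~\ref{theorem: nested family of regular PERM losses are CC - exposition version} to total regularity of \(\calL^{\Omega,\mu}\), prove the truncation compatibility \(\con[\TP^{\Omega,\mu}] = \TP^{\con[\Omega],\mu}\) by a two-sided bound localizing the max to the face \(\{p_1=0\}\) (the paper's Proposition~\ref{proposition:trunc-of-FY-loss}, Claims 1--4), and deduce regularity of each \(\calL^{\Omega^{(n)},\mu}\) from regularity of \(\Omega^{(n)}\) via Legendre duality (the ``Furthermore'' part of Proposition~\ref{proposition: FY loss is PERM}). The minor differences --- invoking Danskin's theorem for \(\nabla\TP = -\tilde{\bfp}^{*}\) rather than the identity \(\nabla_{\varphi^*} = (\nabla_\varphi)^{-1}\), and phrasing semi-coercivity via sequences rather than the direct sublevel-set bound \(\TP(\bfz) \ge -\min\bfz\) --- are cosmetic.
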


In light of Theorem~\ref{theorem: multiclass classification calibration},
if \(\Omega\) satisfies the conditions of Theorem~\ref{theorem: sufficient condition for FY loss to be CC - exposition version}, then
{\(\calL^{\Omega,\mu}\) satisfies the consistency transfer property ({Definition}~\ref{definition:consistency-transfer-property}).
  Theorem~\ref{theorem: sufficient condition for FY loss to be CC - exposition version} recovers all known classification-calibration sufficient conditions~\citep{nowak2019general,blondel2019structured} which require that \(\Omega\) be \emph{strongly}-convex whereas our result only requires \emph{strict} convexity.
  The following proposition and example show
  that
  there is always a nonempty set of losses to which the result applies but the prior results don't.

\begin{proposition}\label{proposition:non-strongly-convex-negentropy}
Let \(\Omega : \Delta^{k} \to \mathbb{R}\) be
a totally regular negentropy and
\(\sqfunc : \mathbb{R}_{\ge 0} \to \mathbb{R}_{\ge 0}\)
be convex, twice differentiable and strictly increasing.
  Let \(\bfu := (1/k) \vone^{(k)}\in \Delta^{k}\) be the uniform probability vector.
Define \(\Theta : \Delta^{k} \to \mathbb{R}\) by
\(  \Theta(\bfp) := \sqfunc({\Omega}(\bfp) - {\Omega}(\bfu)) - \sqfunc(-{\Omega}(\bfu))
 \) for \( \bfp \in \Delta^{k}
\).
Then \(\Theta\) is a totally regular negentropy.
 Furthermore, if $g'(0) = 0$, then $\Theta$ is not strongly-convex.
\end{proposition}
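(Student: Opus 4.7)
The plan is to verify the two claims separately: that $\Theta$ is totally regular, then that $\sqfunc'(0)=0$ obstructs strong convexity. Writing $c := \Omega(\bfu)$ for brevity, the key preliminary observation is that because $\inj$ simply prepends a zero and leaves the constants $c$ and $\sqfunc(-c)$ untouched, a one-line induction on $k-n$ gives the explicit formula
\[
\Theta^{(n)}(\bfp) \,=\, \sqfunc\bigl(\Omega^{(n)}(\bfp)-c\bigr) - \sqfunc(-c), \qquad \bfp \in \Delta^{n}.
\]
By hypothesis each $\Omega^{(n)}$ is a regular negentropy (Corollary~\ref{corollary:iterated-truncation-of-a-negenetropy} and Definition~\ref{definition:totally-regular-negentropy}), so total regularity of $\Theta$ reduces to verifying the axioms of Definitions~\ref{definition:negentropy} and~\ref{definition:regular-entropy} for each $\Theta^{(n)}$ directly from this formula.

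The negentropy axioms pass through straightforwardly: closedness, continuity, and symmetry are inherited from $\Omega^{(n)}$; convexity follows from the standard rule that a convex non-decreasing function composed with a convex function is convex; the inequality $\Theta^{(n)}\le 0$ is equivalent, by strict monotonicity of $\sqfunc$, to $\Omega^{(n)}\le 0$, which holds because $\Omega^{(n)}$ is a negentropy; and $\Theta^{(n)}(\bfe_i^{(n)})=0$ since $\Omega^{(n)}$ vanishes at vertices. For regularity, twice-differentiability is immediate from the chain rule, and strict convexity on $\mathrm{int}(\Delta^n)$ follows from strict convexity of $\Omega^{(n)}$ there together with strict monotonicity of $\sqfunc$. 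For the Legendre boundary condition I compute
\[
\nabla \Theta^{(n)}(\bfp) \,=\, \sqfunc'\!\bigl(\Omega^{(n)}(\bfp)-c\bigr)\,\nabla\Omega^{(n)}(\bfp),
\]
and note that regularity of $\Omega^{(n)}$ yields $\lVert\nabla\Omega^{(n)}(\bfp)\rVert\to\infty$ as $\bfp$ approaches $\mathtt{bdry}(\Delta^n)$, so it suffices to bound the scalar factor below by a strictly positive constant along any such approach. Two easy facts suffice: a convex strictly increasing $\sqfunc$ on $\mathbb{R}_{\ge 0}$ satisfies $\sqfunc'(t)>0$ for every $t>0$ (otherwise $\sqfunc'$ would vanish on an initial interval and $\sqfunc$ would be constant there); and a Jensen-plus-symmetry argument applied to the strictly convex $\Omega$ on $\mathrm{int}(\Delta^k)$ gives $\Omega(\bfq)>c$ for every $\bfq\in\Delta^k\setminus\{\bfu\}$, so since zero-padding any $\bfp\in\Delta^n$ produces a point different from $\bfu$, we obtain $\Omega^{(n)}(\bfp)-c>0$ on all of $\Delta^n$. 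This boundary step is the main technical obstacle; the rest is routine bookkeeping.

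For the second claim, I compute the Hessian of $\Theta$ at $\bfp=\bfu$, where $\Omega(\bfu)-c=0$:
\[
\nabla^2\Theta(\bfu) \,=\, \sqfunc''(0)\,\nabla\Omega(\bfu)\,\nabla\Omega(\bfu)^{\top} \,+\, \sqfunc'(0)\,\nabla^2\Omega(\bfu).
\]
The second term vanishes by hypothesis. For the first, differentiating the identity $\Omega\circ\mathbf{S}_\sigma=\Omega$ at $\bfu$ and using $\mathbf{S}_\sigma\bfu=\bfu$ yields $\mathbf{S}_\sigma\nabla\Omega(\bfu)=\nabla\Omega(\bfu)$ for every $\sigma\in\mathtt{Sym}(k)$, so $\nabla\Omega(\bfu)$ is a scalar multiple of $\vone^{(k)}$. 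Consequently $\bfz^{\top}\nabla\Omega(\bfu)=0$ for every direction $\bfz$ tangent to $\Delta^k$ at $\bfu$, i.e.\ those satisfying $\vone^{\top}\bfz=0$, and the rank-one term also annihilates such $\bfz$. Therefore $\bfz^{\top}\nabla^2\Theta(\bfu)\bfz=0$ for all tangent $\bfz$, which contradicts any lower bound $\nabla^2\Theta\succeq\alpha\mathbf{I}$ on the tangent space with $\alpha>0$. Hence $\Theta$ is not strongly convex.
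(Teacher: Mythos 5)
Your argument is correct and essentially the paper's own: you derive the formula \(\Theta^{(n)} = g(\Omega^{(n)}-\Omega(\bfu)) - g(-\Omega(\bfu))\), verify the negentropy and Legendre-type axioms for the composition with the key step that \(g'\) evaluated at \(\Omega^{(n)}-\Omega(\bfu)\) is bounded away from zero near \(\mathtt{bdry}(\Delta^{n})\), and rule out strong convexity by showing the Hessian of \(\Theta\) annihilates the tangent space at \(\bfu\) when \(g'(0)=0\) (the paper does the same in the reduced parametrization, where the gradient at the interior minimizer vanishes outright). One minor imprecision: for \(n=k\) your claim that \(\Omega^{(n)}-\Omega(\bfu)>0\) on all of \(\Delta^{n}\) fails at \(\bfu\) itself, but this is harmless since the boundary condition only concerns sequences tending to \(\mathtt{bdry}(\Delta^{k})\), where \(\Omega>\Omega(\bfu)\) does hold --- a fact that for boundary points requires the constancy-on-a-segment-through-\(\bfu\) argument rather than Jensen alone, since strict convexity is only assumed on the interior.
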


\begin{example}\label{example:not-strongly-convex-totally-regular-negentropy}
For a concrete application of {Proposition}~\ref{proposition:non-strongly-convex-negentropy}, take \(\Omega\) to be the negative Shannon entropy (Example~\ref{example:CE-as-FY-loss}) and $g(x) = x^{2}$ the square function.
Then \(\Theta\) as defined in {Proposition}~\ref{proposition:non-strongly-convex-negentropy}
is a totally regular negenetropy that is not strongly-convex.
Moreover, \(\calL^{\Theta,\mu}\) is classification-calibration for any \(\mu \in \mathbb{R}_{\ge 0}\)
by
{Theorem}~\ref{theorem: sufficient condition for FY loss to be CC - exposition version}.
\end{example}
Note that previous sufficient conditions
\citep{nowak2019general,blondel2019structured} do not cover our example above.

\section{Discussion and Future Directions}

We proposed the relative margin form as a multiclass extension to the popular (binary) margin loss framework.
We characterized the set of losses that can be expressed in the relative margin form --- the PERM losses.
Our Theorem~\ref{theorem: nested family of regular PERM losses are CC - exposition version} generalizes to the multiclass case the seminal result of \cite[Theorem 6]{bartlett2006convexity}.
Central to our analysis is the matrix label code ({Definition}~\ref{definition:matrix-label-code}) which we expect to be useful beyond the scope here.
Utilizing our framework, we extended the existing set of known sufficient conditions for the classification-calibration of Fenchel-Young losses.
We expect the framework to be useful for research on multiclass classification, e.g., for \emph{\(\mathcal{H}\)-consistency}  \citep{long2013consistency,zhang2020bayes,awasthi2022multi}.

\myheader{Characterization of classification-calibration}
 One weakness of our work is that, compared to the characterization of CC in the binary case \citep[Theorem 6]{bartlett2006convexity}, we are only able to prove an analogous result (Theorem~\ref{theorem: nested family of regular PERM losses are CC - exposition version}) in the multiclass case under additional assumptions.
  Future work will investigate whether this can be improved.

\myheader{Cost-weighted classification}
Our results are limited to the traditional notion of classification-calibration under the assumption that the cost of misclassifying all classes are the same.
The more general notion of \(\mathbf{c}\)-classification-calibration described in \cite{williamson2016composite} allows the cost of misclassification to be class-dependent according to some class-specific weight vector \(\mathbf{c}\).
An interesting future direction is to consider
a setting where a general \(\mathbf{c}\)-classification-calibration sufficient condition similar to our
Theorem~\ref{theorem: nested family of regular PERM losses are CC - exposition version} can be derived for non-symmetric losses.
More specifically, given a classification-calibrated PERM loss, is there a principled way to modify the loss to make it \(\mathbf{c}\)-classification calibrated?

\myheader{Combining losses}  \cite{hui2023cut} demonstrate the effectiveness of summing existing losses (the cross entropy and the square loss) to design new losses for classification.
Unfortunately, our result in Proposition~\ref{proposition:sum-of-totally-regular-loss-is-totally-regular}
does not apply to their new loss since the square loss is not totally regular.
One approach may be to connect our technique with that of \citet[\S 6]{williamson2023geometry} for studying the geometric properties of sums of losses.


\myheader{Beyond classification}
While this work is concerned with classification-calibration, there are many works on calibration for  more general discrete supervised learning tasks.
\citet{ramaswamy2016convex} developed theory for multiclass classification with abstain option and, more generally, losses defined over finite sets i.e., discrete losses.
\citet{finocchiaro2019embedding} showed that there exists polyhedral losses that are calibrated with respect to arbitrary discrete losses.
An interesting question is if the PERM loss framework can useful for analyzing such surrogate losses.
Beyond classification-calibration, a potentially fruit direction is to employ the PERM loss framework to study optimization and learning theory.

\section*{Acknowledgments}
The authors were supported in part by the National Science Foundation under awards 1838179 and 2008074, and by the Department of Defense, Defense Threat Reduction Agency under award HDTRA1-20-2-0002.
YW is supported by the Eric and Wendy Schmidt AI in Science Postdoctoral Fellowship, a Schmidt Futures program.

\bibliography{references}

\begin{thebibliography}{60}
\providecommand{\natexlab}[1]{#1}
\providecommand{\url}[1]{\texttt{#1}}
\expandafter\ifx\csname urlstyle\endcsname\relax
  \providecommand{\doi}[1]{doi: #1}\else
  \providecommand{\doi}{doi: \begingroup \urlstyle{rm}\Url}\fi

\bibitem[Awasthi et~al.(2022)Awasthi, Mao, Mohri, and Zhong]{awasthi2022multi}
Pranjal Awasthi, Anqi Mao, Mehryar Mohri, and Yutao Zhong.
\newblock Multi-{C}lass {$H$}-{C}onsistency {B}ounds.
\newblock In \emph{Advances in Neural Information Processing Systems}, 2022.

\bibitem[Bartlett et~al.(2006)Bartlett, Jordan, and
  McAuliffe]{bartlett2006convexity}
Peter~L Bartlett, Michael~I Jordan, and Jon~D McAuliffe.
\newblock Convexity, classification, and risk bounds.
\newblock \emph{Journal of the American Statistical Association}, 101\penalty0
  (473):\penalty0 138--156, 2006.

\bibitem[Beijbom et~al.(2014)Beijbom, Saberian, Kriegman, and
  Vasconcelos]{beijbom2014guess}
Oscar Beijbom, Mohammad Saberian, David Kriegman, and Nuno Vasconcelos.
\newblock Guess-averse loss functions for cost-sensitive multiclass boosting.
\newblock In \emph{International Conference on Machine Learning}, pages
  586--594. PMLR, 2014.

\bibitem[Beltagy and Shenawy(2013)]{beltagy2013boundary}
M~Beltagy and S~Shenawy.
\newblock On the boundary of closed convex sets in {$E^n$}.
\newblock \emph{arXiv preprint arXiv:1301.0688}, 2013.

\bibitem[Bhattacharyya et~al.(2018)Bhattacharyya, Ghoshal, and
  Saket]{bhattacharyya2018hardness}
Arnab Bhattacharyya, Suprovat Ghoshal, and Rishi Saket.
\newblock Hardness of learning noisy halfspaces using polynomial thresholds.
\newblock In \emph{Conference On Learning Theory}, pages 876--917. PMLR, 2018.

\bibitem[Blackwell and Girshick(1979)]{blackwell1979theory}
David~A Blackwell and Meyer~A Girshick.
\newblock \emph{Theory of games and statistical decisions}.
\newblock Courier Corporation, 1979.

\bibitem[Blondel(2019)]{blondel2019structured}
Mathieu Blondel.
\newblock Structured prediction with projection oracles.
\newblock \emph{Advances in neural information processing systems}, 32, 2019.

\bibitem[Blondel et~al.(2020)Blondel, Martins, and
  Niculae]{blondel2020learning}
Mathieu Blondel, Andr{\'e}~FT Martins, and Vlad Niculae.
\newblock Learning with {F}enchel-{Y}oung losses.
\newblock \emph{Journal of Machine Learning Research}, 21\penalty0
  (35):\penalty0 1--69, 2020.

\bibitem[Boyd and Vandenberghe(2004)]{boyd2004convex}
Stephen Boyd and Lieven Vandenberghe.
\newblock \emph{Convex optimization}.
\newblock Cambridge University Press, 2004.

\bibitem[Bredensteiner and Bennett(1999)]{bredensteiner1999multicategory}
Erin~J Bredensteiner and Kristin~P Bennett.
\newblock Multicategory classification by support vector machines.
\newblock \emph{Computational Optimization: A Tribute to Olvi Mangasarian
  Volume I}, pages 53--79, 1999.

\bibitem[Bronstein et~al.(2021)Bronstein, Bruna, Cohen, and
  Veli{\v{c}}kovi{\'c}]{bronstein2021geometric}
Michael~M Bronstein, Joan Bruna, Taco Cohen, and Petar Veli{\v{c}}kovi{\'c}.
\newblock Geometric deep learning: Grids, groups, graphs, geodesics, and
  gauges.
\newblock \emph{arXiv preprint arXiv:2104.13478}, 2021.

\bibitem[Crammer and Singer(2001)]{crammer2001algorithmic}
Koby Crammer and Yoram Singer.
\newblock On the algorithmic implementation of multiclass kernel-based vector
  machines.
\newblock \emph{Journal of machine learning research}, 2\penalty0
  (Dec):\penalty0 265--292, 2001.

\bibitem[Crammer and Singer(2003)]{crammer2003ultraconservative}
Koby Crammer and Yoram Singer.
\newblock Ultraconservative online algorithms for multiclass problems.
\newblock \emph{Journal of Machine Learning Research}, 3\penalty0
  (Jan):\penalty0 951--991, 2003.

\bibitem[Daniely and Shalev-Shwartz(2014)]{daniely2014optimal}
Amit Daniely and Shai Shalev-Shwartz.
\newblock Optimal learners for multiclass problems.
\newblock In \emph{Conference on Learning Theory}, pages 287--316. PMLR, 2014.

\bibitem[Dogan et~al.(2016)Dogan, Glasmachers, and Igel]{dogan2016unified}
{\"U}r{\"u}n Dogan, Tobias Glasmachers, and Christian Igel.
\newblock A unified view on multi-class support vector classification.
\newblock \emph{Journal of Machine Learning Research}, 17\penalty0
  (45):\penalty0 1--32, 2016.

\bibitem[Duchi et~al.(2018)Duchi, Khosravi, and Ruan]{duchi2018multiclass}
John Duchi, Khashayar Khosravi, and Feng Ruan.
\newblock Multiclass classification, information, divergence and surrogate
  risk.
\newblock \emph{The Annals of Statistics}, 46\penalty0 (6B):\penalty0
  3246--3275, 2018.

\bibitem[Duda et~al.(2006)Duda, Hart, and Stork]{duda2006pattern}
Richard~O Duda, Peter~E Hart, and David~G Stork.
\newblock \emph{Pattern classification}.
\newblock John Wiley \& Sons, 2006.

\bibitem[Fathony et~al.(2016)Fathony, Liu, Asif, and
  Ziebart]{fathony2016adversarial}
Rizal Fathony, Anqi Liu, Kaiser Asif, and Brian Ziebart.
\newblock Adversarial multiclass classification: A risk minimization
  perspective.
\newblock \emph{Advances in Neural Information Processing Systems}, 29, 2016.

\bibitem[Finocchiaro et~al.(2019)Finocchiaro, Frongillo, and
  Waggoner]{finocchiaro2019embedding}
Jessica Finocchiaro, Rafael Frongillo, and Bo~Waggoner.
\newblock An embedding framework for consistent polyhedral surrogates.
\newblock \emph{Advances in Neural Information Processing Systems}, 32, 2019.

\bibitem[Fu et~al.(2022)Fu, Chen, and Ye]{fu2022proximal}
Sheng Fu, Piao Chen, and Zhisheng Ye.
\newblock Simplex-based proximal multicategory support vector machine.
\newblock \emph{IEEE Transactions on Information Theory}, 2022.

\bibitem[Hill and Doucet(2007)]{hill2007framework}
Simon~I Hill and Arnaud Doucet.
\newblock A framework for kernel-based multi-category classification.
\newblock \emph{Journal of Artificial Intelligence Research}, 30:\penalty0
  525--564, 2007.

\bibitem[Hui et~al.(2023)Hui, Belkin, and Wright]{hui2023cut}
Like Hui, Mikhail Belkin, and Stephen Wright.
\newblock Cut your losses with squentropy.
\newblock In \emph{Proceedings of the 40th International Conference on Machine
  Learning}, pages 14114--14131. PMLR, 2023.

\bibitem[Hyv{\"a}rinen and Oja(2000)]{hyvarinen2000independent}
Aapo Hyv{\"a}rinen and Erkki Oja.
\newblock Independent component analysis: algorithms and applications.
\newblock \emph{Neural Networks}, 13\penalty0 (4-5):\penalty0 411--430, 2000.

\bibitem[Jebara and Shivaswamy(2008)]{jebara2008relative}
Tony Jebara and Pannagadatta Shivaswamy.
\newblock Relative margin machines.
\newblock \emph{Advances in Neural Information Processing Systems}, 21, 2008.

\bibitem[Kalnishkan and Vyugin(2008)]{kalnishkan2008weak}
Yuri Kalnishkan and Michael~V Vyugin.
\newblock The weak aggregating algorithm and weak mixability.
\newblock \emph{Journal of Computer and System Sciences}, 74\penalty0
  (8):\penalty0 1228--1244, 2008.

\bibitem[Kesler(1961)]{kesler1961preliminary}
Carl Kesler.
\newblock \emph{Preliminary Experiments on Perceptron: Application to Bubble
  Chamber Event Recognition}.
\newblock Cornell University, 1961.

\bibitem[Lee(2013)]{lee2013smooth}
John~M Lee.
\newblock Smooth manifolds.
\newblock In \emph{Introduction to smooth manifolds}, pages 1--31. Springer,
  2013.

\bibitem[Lee et~al.(2004)Lee, Lin, and Wahba]{lee2004multicategory}
Yoonkyung Lee, Yi~Lin, and Grace Wahba.
\newblock Multicategory support vector machines: Theory and application to the
  classification of microarray data and satellite radiance data.
\newblock \emph{Journal of the American Statistical Association}, 99\penalty0
  (465):\penalty0 67--81, 2004.

\bibitem[Liu(2007)]{liu2007fisher}
Yufeng Liu.
\newblock Fisher consistency of multicategory support vector machines.
\newblock In \emph{Artificial intelligence and statistics}, pages 291--298.
  PMLR, 2007.

\bibitem[Long and Servedio(2013)]{long2013consistency}
Phil Long and Rocco Servedio.
\newblock Consistency versus realizable h-consistency for multiclass
  classification.
\newblock In \emph{International Conference on Machine Learning}, pages
  801--809. PMLR, 2013.

\bibitem[Mensch et~al.(2019)Mensch, Blondel, and
  Peyr{\'e}]{mensch2019geometric}
Arthur Mensch, Mathieu Blondel, and Gabriel Peyr{\'e}.
\newblock Geometric losses for distributional learning.
\newblock In \emph{International Conference on Machine Learning}, pages
  4516--4525. PMLR, 2019.

\bibitem[Mohri et~al.(2018)Mohri, Rostamizadeh, and
  Talwalkar]{mohri2018foundations}
Mehryar Mohri, Afshin Rostamizadeh, and Ameet Talwalkar.
\newblock \emph{Foundations of {M}achine {L}earning}.
\newblock MIT press, 2018.

\bibitem[Mroueh et~al.(2012)Mroueh, Poggio, Rosasco, and
  Slotine]{mroueh2012multiclass}
Youssef Mroueh, Tomaso Poggio, Lorenzo Rosasco, and Jean-Jeacques Slotine.
\newblock Multiclass learning with simplex coding.
\newblock \emph{Advances in Neural Information Processing Systems}, 25, 2012.

\bibitem[Mukherjee and Schapire(2013)]{mukherjee2013theory}
Indraneel Mukherjee and Robert~E Schapire.
\newblock A theory of multiclass boosting.
\newblock \emph{Journal of Machine Learning Research}, 2013.

\bibitem[Munkres(2018)]{munkres2018analysis}
James~R Munkres.
\newblock \emph{Analysis on {M}anifolds}.
\newblock CRC Press, 2018.

\bibitem[Nowak-Vila et~al.(2019)Nowak-Vila, Bach, and Rudi]{nowak2019general}
Alex Nowak-Vila, Francis Bach, and Alessandro Rudi.
\newblock A general theory for structured prediction with smooth convex
  surrogates.
\newblock \emph{arXiv preprint arXiv:1902.01958}, 2019.

\bibitem[Plemmons(1977)]{plemmons1977m}
Robert~J Plemmons.
\newblock M-matrix {C}haracterizations. i. {N}onsingular {M}-matrices.
\newblock \emph{Linear Algebra and its applications}, 18\penalty0 (2):\penalty0
  175--188, 1977.

\bibitem[Pouliot(2018)]{pouliot2018equivalence}
Guillaume Pouliot.
\newblock Equivalence of multicategory {SVM} and simplex cone {SVM}: Fast
  computations and statistical theory.
\newblock In \emph{International conference on machine learning}, pages
  4133--4140. PMLR, 2018.

\bibitem[Ramaswamy and Agarwal(2016)]{ramaswamy2016convex}
Harish~G Ramaswamy and Shivani Agarwal.
\newblock Convex calibration dimension for multiclass loss matrices.
\newblock \emph{The Journal of Machine Learning Research}, 17\penalty0
  (1):\penalty0 397--441, 2016.

\bibitem[Rockafellar(1970)]{rockafellar1970convex}
R~Tyrrell Rockafellar.
\newblock \emph{Convex analysis}, volume~18.
\newblock Princeton {U}niversity {P}ress, 1970.

\bibitem[Rosset et~al.(2003)Rosset, Zhu, and Hastie]{rosset2003margin}
Saharon Rosset, Ji~Zhu, and Trevor Hastie.
\newblock Margin maximizing loss functions.
\newblock \emph{Advances in Neural Information Processing Systems}, 16, 2003.

\bibitem[Rotman(2006)]{rotman2006first}
Joseph~J. Rotman.
\newblock \emph{A First Course in Abstract Algebra with Applications}.
\newblock Prentice-Hall, 3rd edition, 2006.

\bibitem[Saberian and Vasconcelos(2011)]{saberian2011multiclass}
Mohammad Saberian and Nuno Vasconcelos.
\newblock Multiclass boosting: Theory and algorithms.
\newblock \emph{Advances in Neural Information Processing Systems}, 24, 2011.

\bibitem[Shalev-Shwartz and Ben-David(2014)]{shalev2014understanding}
Shai Shalev-Shwartz and Shai Ben-David.
\newblock \emph{Understanding {M}achine {L}earning: From {T}heory to
  {A}lgorithms}.
\newblock Cambridge {U}niversity {P}ress, 2014.

\bibitem[Steinwart(2007)]{steinwart2007compare}
Ingo Steinwart.
\newblock How to compare different loss functions and their risks.
\newblock \emph{Constructive Approximation}, 26\penalty0 (2):\penalty0
  225--287, 2007.

\bibitem[Tan and Zhang(2022)]{tan2022loss}
Zhiqiang Tan and Xinwei Zhang.
\newblock On loss functions and regret bounds for multi-category
  classification.
\newblock \emph{IEEE Transactions on Information Theory}, 2022.

\bibitem[Tewari and Bartlett(2007)]{tewari2007consistency}
Ambuj Tewari and Peter~L Bartlett.
\newblock On the consistency of multiclass classification methods.
\newblock \emph{Journal of Machine Learning Research}, 8\penalty0 (5), 2007.

\bibitem[van~den Burg and Groenen(2016)]{van2016gensvm}
Gertjan van~den Burg and Patrick Groenen.
\newblock Gen{SVM}: A generalized multiclass support vector machine.
\newblock \emph{Journal of Machine Learning Research}, 17:\penalty0 1--42,
  2016.

\bibitem[Vapnik(1998)]{vapnik1998statistical}
Vladimir Vapnik.
\newblock \emph{Statistical Learning Theory}.
\newblock Wiley-Interscience, 1998.

\bibitem[Wang and Scott(2020)]{wang2020weston}
Yutong Wang and Clayton Scott.
\newblock Weston-{W}atkins hinge loss and ordered partitions.
\newblock \emph{Advances in Neural Information Processing Systems},
  33:\penalty0 19873--19883, 2020.

\bibitem[Wang and Scott(2021)]{wang2021exact}
Yutong Wang and Clayton Scott.
\newblock An exact solver for the {W}eston-{W}atkins {SVM} subproblem.
\newblock In \emph{International Conference on Machine Learning}, pages
  10894--10904. PMLR, 2021.

\bibitem[Wang and Scott(2023)]{wang2023classification}
Yutong Wang and Clayton Scott.
\newblock On classification-calibration of gamma-phi losses.
\newblock In \emph{The Thirty Sixth Annual Conference on Learning Theory},
  pages 4929--4951. PMLR, 2023.

\bibitem[Weston and Watkins(1998)]{weston1998multi}
Jason Weston and Chris Watkins.
\newblock Multi-class support vector machines.
\newblock Technical Report CSD-TR-98-04, Department of Computer Science, Royal
  Holloway, University of London, Egham, UK, 1998.

\bibitem[Williamson and Cranko(2023)]{williamson2023geometry}
Robert~C Williamson and Zac Cranko.
\newblock The geometry and calculus of losses.
\newblock \emph{Journal of Machine Learning Research}, 24\penalty0
  (342):\penalty0 1--72, 2023.

\bibitem[Williamson et~al.(2016)Williamson, Vernet, and
  Reid]{williamson2016composite}
Robert~C Williamson, Elodie Vernet, and Mark~D Reid.
\newblock Composite multiclass losses.
\newblock \emph{Journal of Machine Learning Research}, 17:\penalty0 1--52,
  2016.

\bibitem[Wu and Lange(2010)]{wu2010multicategory}
Tong~Tong Wu and Kenneth Lange.
\newblock Multicategory vertex discriminant analysis for high-dimensional data.
\newblock \emph{The Annals of Applied Statistics}, 4\penalty0 (4):\penalty0
  1698--1721, 2010.

\bibitem[Zhang and Agarwal(2020)]{zhang2020bayes}
Mingyuan Zhang and Shivani Agarwal.
\newblock Bayes consistency vs. {$H$}-consistency: The interplay between
  surrogate loss functions and the scoring function class.
\newblock \emph{Advances in Neural Information Processing Systems},
  33:\penalty0 16927--16936, 2020.

\bibitem[Zhang(2004)]{zhang2004statistical}
Tong Zhang.
\newblock Statistical analysis of some multi-category large margin
  classification methods.
\newblock \emph{Journal of Machine Learning Research}, 5\penalty0
  (Oct):\penalty0 1225--1251, 2004.

\bibitem[Zhang et~al.(2009)Zhang, Jordan, Li, and Yeung]{zhang2009coherence}
Zhihua Zhang, Michael Jordan, Wu-Jun Li, and Dit-Yan Yeung.
\newblock Coherence functions for multicategory margin-based classification
  methods.
\newblock In \emph{Artificial Intelligence and Statistics}, pages 647--654.
  PMLR, 2009.

\bibitem[Zou et~al.(2008)Zou, Zhu, and Hastie]{zou2008new}
Hui Zou, Ji~Zhu, and Trevor Hastie.
\newblock New multicategory boosting algorithms based on multicategory
  {F}isher-consistent losses.
\newblock \emph{The Annals of Applied Statistics}, 2\penalty0 (4):\penalty0
  1290, 2008.

\end{thebibliography}

\appendix

\section{Notational conventions and definitions}\label{section:appendix:full-notations}

For the reader's convenience, we tabulate the notational convention used in this work in Table~\ref{table:conventions}.
Moreover, we tabulate the mathematical objects defined in Table~\ref{table:definitions}.
Finally, we require the following additional notations:

\noindent\textbf{Permutations}.
A bijection from $[k]$ to itself is called a \emph{permutation} (on $[k]$).
Recall we denote by $\mathtt{Sym}(k)$ the set of all permutations on $[k]$.
Two permutations \(\sigma,\sigma' \in \mathtt{Sym}(k)\) can be composed resulting in another permtuation \(\sigma \sigma' \in \mathtt{Sym}(k)\) defined by function composition: for \(y \in [k]\),  we have \(\sigma \sigma'(y) := \sigma(\sigma'(y))\).
For $i,j \in [k]$, let $\tsp_{(i,j)} \in \mathtt{Sym}(k)$ denote the \emph{transposition} which swaps $i$ and $j$, leaving all other elements unchanged.
More precisely, $\tsp_{(i,j)}(i) = j$, $\tsp_{(i,j)}(j) = i$ and $\tsp_{(i,j)}(y) = y$ for $y \in [k] \setminus \{i,j\}$.
Define the notational shorthand $\tsp_{i} := \tsp_{(k,i)}$, the transposition that swaps $k$ and $i$.

\noindent\textbf{Permutation matrices}.
Recall that for each $\sigma \in \mathtt{Sym}(k)$, $\mathbf{S}_{\sigma} \in \mathbb{R}^{k\times k}$ denotes the permutation matrix corresponding to $\sigma$.
In other words, if $\bfv \in \mathbb{R}^{k}$ is a vector, then $[\mathbf{S}_{\sigma} \bfv]_{j} = [ \bfv ]_{\sigma(j)}= v_{\sigma(j)}$.
Note that if $\sigma, \sigma' \in \mathtt{Sym}(k)$, then $\mathbf{S}_{\sigma \sigma'} = \mathbf{S}_{\sigma'} \mathbf{S}_{\sigma}$ where the order of compositions are reversed (Lemma~\ref{lemma:matrix-permutation-is-order-reversing}).
Define the notational shorthand $\Tsp_{(i,j)} := \mathbf{S}_{\tsp_{(i,j)}}$ the matrix corresponding to the transposition of $i$ and $j$.
Likewise, define $\Tsp_{i} := \Tsp_{(k,i)}$.
When \(k\) is ambiguous, we disambiguate it with the superscript notation and write \(\mathbf{S}_{(i,j)}^{(k)}\) and \(\mathbf{T}_{(i,j)}^{(k)}\).

\noindent\textbf{Topology}.
Let $S$ be a subset of a topological space. Let $\interior(S)$ and $\mathtt{bdry}(S)$ denote the interior and the boundary of the set $S$, respectively.
See Table~\ref{table:conventions} for the full list of symbols.

\begin{table}[ht]
  \centering
\begin{tabular}{@{}lll@{}}
\toprule
Mathematical object   & Description of notation                     & Example of notation                  \\ \midrule\midrule
Set of all permutations           & &\(\mathtt{Sym}(k)\)            \\
Permutations           & Lower case sigma or tau      & $\sigma, \tau$           \\
  Transpositions        & tau with subscripts & $\tau_{(i,j)}$           \\
  \midrule
Vector                & Bold lower case              & $\bfv,\bfw$             \\
Entries of vector     & Normal font lower case       & $v_1,v_{2},\dots$                   \\
  \midrule
Special vector                & Blackboard font &\\
All zeros/ones vector in $\mathbb{R}^{n}$ &                              & $\vzero^{(n)}, \vone^{(n)}$ \\
$i$-th elem.\ basis vector in $\mathbb{R}^{n}$ &                              & $\bfe^{(n)}_{i}$ \\
  \midrule
  Matrix                & Bold upper case              & $\mathbf{A}$             \\
  $j$-th Column & & $[ \mathbf{A} ]_{:j}$             \\
  $n\times n$ Identity                & & \(\mathbf{I}_{n}\)             \\
  \bottomrule
\end{tabular}
\caption{Notation convention used throughout this work.} \label{table:conventions}
\end{table}

\begin{table}[ht]
  \centering
\begin{tabular}{@{}lll@{}}
\toprule
Symbol & Description &Defined in...\\ \midrule\midrule
  \(\calL\) & Multiclass loss function &
{Definition}~\ref{definition:PERM-loss}
  \\
  \(\LL\) & Reduced form of \(\calL\)&
{Definition}~\ref{definition:PERM-loss} item~\ref{definition:PERM-loss-reduced-form} \\
  \(\TP\) & Template of \(\calL\)&
{Definition}~\ref{definition:PERM-loss} item~\ref{definition:PERM-loss-template} \\
  \(\Omega\) & Negentropy  & Definition~\ref{definition:negentropy} \\
  \(\calL^{\Omega,\mu}\) & Fenchel-Young loss assoc.\ to \(\Omega,\mu\)&
{Definition}~\ref{definition:Fenchel-Young}
  \\
  \(\con[\TP]\)& truncation  of \(\TP\)& Proposition~\ref{proposition:truncation-of-a-PERM-loss} \\
  \(\TP^{(n)}\) & \(\con\) applied \(k-n\) times to \(\TP\)&
{Corollary}~\ref{corollary:iterated-truncation-of-a-PERM-loss}
  \\
\(\calL^{(n)}\)& PERM loss corresponding to \(\TP^{(n)}\)&
{Definition}~\ref{definition:totally-regular-PERM-loss}
  \\
  \(\con[\LL]\)&truncation of \(\LL\) &
{Definition}~\ref{definition:truncation-of-the-reduced-form}
  \\
  \(\inj\)&
 ``zero-padding'' operator
                     &
{Proposition}~\ref{proposition:truncation-of-a-negentropy}
  \\
  \(\con[\Omega]\)& truncation of \(\Omega\) &
{Proposition}~\ref{proposition:truncation-of-a-negentropy}
  \\
  \(\Omega^{(n)}\) & \(\con\) applied \(k-n\) times to \(\Omega\) &
{Corollary}~\ref{corollary:iterated-truncation-of-a-negenetropy}
  \\
  \(\bmpi\) &relative margins conversion matrix&
{Definition}~\ref{definition:relative-marginalization-mapping}
  \\
  \(\bmPi_{\sigma}\) & \(\bmpi \mathbf{S}_{\sigma} \bmpi^{\dagger}\)&
{Definition}~\ref{definition:irreducibility-representation}
  \\
  \(\ico\) & matrix label code &
{Definition}~\ref{definition:matrix-label-code}
  \\
  \(\Tsp\) &Transposition matrix&
Section~\ref{section:appendix:full-notations}
\\
  \(\tsp\) &Transposition permutation&
Section~\ref{section:appendix:full-notations}
              \\
  \(\bfA(\bfz)\) & gradient of \(\LL(\bfz)\)&
{Lemma}~\ref{lemma:Az-matrix-is-nonsing-M-matrix}
  \\
  \(\drop\) (resp.\ \(\proj\)) & drop last (resp.\ first) coordinate &
{Lemma}~\ref{lemma:interior-projection-surjectivity}
                                                     (resp.\ ~\ref{corollary:interior-projection-surjectivity})
                                  \\
  \(\ran(\cdot)\) and \(\cran(\cdot)\) & range of convex hull of range &
{Definition}~\ref{definition:loss-surface}
  \\
        \(\cran^{\bullet}(f)\)
        and
  \(\cran^{\circ}(f)\)
       &closure and interior of \(\cran(f)\)
                     &
     {Definition}~\ref{definition: loss surface - alternative characterization}
     \\
  \(    \mathcal{N}(\bmzeta; S)\)
       &Positive normals&
  {Definition}~\ref{definition:admissible-sets}
  \\
     \bottomrule
\end{tabular}
\caption{Mathematical objects and where they are defined.} \label{table:definitions}
\end{table}

\section{Properties of the matrix label code}\label{section:appendix:matrix-label-code}

As mentioned in the main text, the definition of matrix label code has already been introduced in \citet[Definition S3.14]{wang2020weston}.
However, that work, the matrix label code is used to analyze a particular optimization problem resulting from the Weston-Watkins SVM \citep{weston1998multi}.
Here, we develop a comprehensive theory for using matrix label code with PERM loss.
\iftoggle{arxiv}{}{
\textcolor{black}{
  With the exception of
{Lemma}~\ref{lemma:subtraction-matrix-label-code},
  all results in this section  have already appeared in  \citet{wang2020weston,wang2021exact} using slightly different notation.
  For the reader's convenience, the proofs are omitted here but  are all included in the arXiv version of this manuscript \citep{wang2023unified}.
}
}

    First, we note that
for $y \in \{1,\dots, k-1\}$, the matrix $\ico^{(k)}_y$ acts on
 a vector $\bfz = (z_1,\dots, z_{k-1})^{\top} \in \mathbb{R}^{k-1}$ by
  \begin{equation}\label{equation: action of rho i plus 1}
    \forall j \in [k-1], \,
    [\ico^{(k)}_{y}\bfz]_{j} :=
  \begin{cases}
    z_j - z_{y} &: j \ne y\\
    -z_{y} &: j = y.
  \end{cases}
\end{equation}
\begin{remark}
  Throughout this section, we consider \(k\) fixed and write \(\ico_{1},\cdots, \ico_{k}\) instead of \(\ico_{1}^{(k)},\cdots, \ico_{k}^{(k)}\).
Moreover, whenever \(\mathbf{T}_{y}\) does not have a superscript, we implicitly assume that \(\mathbf{T}_{y} = \mathbf{T}_{y}^{(k)}\).
  In some results, we will work with \((k-1)\times (k-1)\) permutation matrices in which case we will write, for example,
  \(\mathbf{T}_{(y,y')}^{(k-1)}\) where \(y,y' \in [k-1]\).
\end{remark}
\begin{lemma}\label{lemma: rho pi sigma relation}
For all $y \in [k]$, we have
\(\bmpi\Tsp_{y} = \ico_{y} \bmpi.\)
    In particular, for all $i > 1$ and $j \in [k-1]$, we have
    \[
  [\ico_y \bmpi \bfv]_{j}
  =
  \begin{cases}
    v_y - v_{j} &: y \ne j
    \\
    v_y - v_k &: y = j.
  \end{cases}
    \]
\end{lemma}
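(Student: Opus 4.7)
The plan is to verify the identity $\bmpi \mathbf{T}_y = \ico_y \bmpi$ by showing that both sides act the same way on an arbitrary $\bfv \in \mathbb{R}^{k}$, and then to read off the ``in particular'' formula as an immediate consequence of the left-hand-side computation.

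First I would dispose of the trivial case $y = k$: here $\tau_k$ is the identity on $[k]$, so $\mathbf{T}_k = \mathbf{I}_k$, while by definition $\ico_k = \mathbf{I}_{k-1}$, making both sides equal to $\bmpi$. For $y \in [k-1]$, I would fix $\bfv \in \mathbb{R}^{k}$ and compute $\bmpi \mathbf{T}_y \bfv$ coordinate-wise using the fact that $\tau_y$ only swaps indices $y$ and $k$. Writing $\bfv' := \mathbf{T}_y \bfv$, one has $v'_k = v_y$, $v'_y = v_k$, and $v'_j = v_j$ for $j \in [k-1] \setminus \{y\}$. Applying the definition $[\bmpi \bfv']_j = v'_k - v'_j$ then yields $[\bmpi \mathbf{T}_y \bfv]_j = v_y - v_j$ for $j \ne y$ and $[\bmpi \mathbf{T}_y \bfv]_y = v_y - v_k$.

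Next I would compute $\ico_y \bmpi \bfv$. Setting $\bfz := \bmpi \bfv$ so that $z_i = v_k - v_i$, the column description of $\ico_y$ in Definition~\ref{definition:matrix-label-code} gives $[\ico_y \bfz]_i = z_i - z_y$ when $i \ne y$ and $[\ico_y \bfz]_y = -z_y$. Substituting $z_i = v_k - v_i$ collapses the $v_k$ terms and leaves $[\ico_y \bfz]_i = v_y - v_i$ for $i \ne y$, and $[\ico_y \bfz]_y = v_y - v_k$. Since this matches $\bmpi \mathbf{T}_y \bfv$ coordinate-by-coordinate for every $\bfv$, the matrix identity $\bmpi \mathbf{T}_y = \ico_y \bmpi$ follows. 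The ``in particular'' formula is then exactly the closed-form expression already produced for $[\bmpi \mathbf{T}_y \bfv]_j$, rewritten through $\ico_y \bmpi \bfv$.

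I do not expect genuine obstacles here — the proof is an index-pushing verification. The only place to be careful is keeping straight the column description of $\ico_y$ (where the $y$-th column is $-\vone$ and the others are standard basis vectors), since it is tempting to confuse rows and columns; getting a sign or an index wrong there would make the cancellation with $z_i = v_k - v_i$ fail. Everything else is bookkeeping.
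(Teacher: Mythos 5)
Your proposal is correct and follows essentially the same route as the paper's proof: both sides of $\bmpi\Tsp_y = \ico_y\bmpi$ are evaluated coordinate-wise on an arbitrary $\bfv$ (handling $y=k$ trivially), and the two closed forms are seen to coincide, with the ``in particular'' formula falling out of the computation of $\bmpi\Tsp_y\bfv$. The index bookkeeping you describe — $v'_k = v_y$, $v'_y = v_k$, and the cancellation $z_i - z_y = v_y - v_i$ — matches the paper exactly.
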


\iftoggle{arxiv}{\begin{proof}
If $y = k$, then $\Tsp_y$ and $\ico_y$ are both identity matrices and there is nothing to show. Otherwise, suppose that $y < k$.
Consider $\bfv \in \mathbb{R}^k$.
We first calculate $\bmpi\Tsp_{y}\bfv$.
For each $j \in [k-1]$, we have
\begin{equation}
  \label{equation: commuting relations step 1}
  [\bmpi \Tsp_y \bfv]_{j}
  =
  [\Tsp_y\bfv]_k -
  [\Tsp_y\bfv]_{j}
  =v_y - v_{\tsp_y(j)}
  =
  \begin{cases}
    v_y - v_{j} &: y \ne j
    \\
    v_y - v_k &: y = j.
  \end{cases}
\end{equation}
Note that the first equality follows from the definition of \(\bmpi\) (Definition~\ref{definition:relative-marginalization-mapping}).
Next, we compute
    $\ico_{y} \bmpi \bfv$.
    Using Eqn.~\eqref{equation: action of rho i plus 1}, we have for each $j \in [k-1]$ that
\[
  [\ico_y \bmpi \bfv]_j =
  \begin{cases}
    [\bmpi \bfv]_j - [\bmpi \bfv]_{y} &: j \ne y \\
    -[\bmpi \bfv]_{y} &: j =y.
  \end{cases}
\]
For the $j \ne y$ case, we have
\(    [\bmpi \bfv]_j - [\bmpi \bfv]_{y}
    =
    (v_k - v_{j} )
    -
    (v_k - v_{y})
    =
    v_y - v_{j}
\).
For the $y = j$ case, we have
\(    -[\bmpi \bfv]_{y}
    = -(v_k - v_{y})
    =v_y - v_k
\).
  Thus,
  $[\bmpi \Tsp_y \bfv]_{j}
  =
  [\ico_y \bmpi \bfv]_j
  $
  for all $j$, which implies that
  $\bmpi \Tsp_y \bfv = \ico_y \bmpi \bfv $. Since $\bfv$ was arbitrary, we have $\bmpi \Tsp_y = \ico_y \bmpi$.
\end{proof}}{}

\begin{remark}\label{remark:pseudo-inverse-injectivity}
Let \(\bmpi^\dagger\) denote the Moore-Penrose inverse of \(\bmpi\).
Since \(\bmpi\) contains a copy of the identity matrix, \(\bmpi\) has full column rank. Hence, \(\bmpi \bmpi^\dagger\) is the identity.
\end{remark}
\begin{definition}\label{definition:irreducibility-representation}
  For each \(\sigma \in \mathtt{Sym}(k)\), define the \((k-1)\times (k-1)\) matrix \(\bmPi_{\sigma} := \bmpi \mathbf{S}_{\sigma} \bmpi^{\dagger}\).
\end{definition}
\begin{lemma}\label{lemma: rho i is Pi sigma i}
  For all $y \in [k]$, we have $\bmPi_{\tsp_y} = \ico_y$.
\end{lemma}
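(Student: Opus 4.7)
The plan is to prove this by a direct two-line calculation that composes the two facts already established just before the statement, so the main work is identifying the correct pieces rather than developing new machinery.

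First, I would unfold the definition: by Definition~\ref{definition:irreducibility-representation}, we have $\bmPi_{\tsp_y} = \bmpi \mathbf{S}_{\tsp_y} \bmpi^{\dagger} = \bmpi \mathbf{T}_y \bmpi^{\dagger}$, using the shorthand $\mathbf{T}_y = \mathbf{S}_{\tsp_y}$ introduced in the ``Permutation matrices'' paragraph. Next, I would apply Lemma~\ref{lemma: rho pi sigma relation}, which gives the key commuting relation $\bmpi \mathbf{T}_y = \ico_y \bmpi$. Substituting, $\bmPi_{\tsp_y} = \ico_y \bmpi \bmpi^{\dagger}$. Finally, invoking Remark~\ref{remark:pseudo-inverse-injectivity}, which notes that $\bmpi$ has full column rank (because it contains a copy of $-\mathbf{I}_{k-1}$), so $\bmpi \bmpi^{\dagger}$ equals the $(k-1)\times(k-1)$ identity, yields $\bmPi_{\tsp_y} = \ico_y$.

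There is no real obstacle here: the lemma is essentially a repackaging of Lemma~\ref{lemma: rho pi sigma relation} into the representation-theoretic notation $\bmPi_{\sigma}$. The only thing to be careful about is the direction of the pseudoinverse: we need $\bmpi \bmpi^{\dagger} = \mathbf{I}$, which requires $\bmpi$ to have full \emph{row} rank, i.e., rank $k-1$. This follows immediately from the fact that $\bmpi = [-\mathbf{I}_{k-1} \,\, \vone^{(k-1)}]$ has its first $k-1$ columns forming $-\mathbf{I}_{k-1}$. So the Moore--Penrose identity $\bmpi \bmpi^{\dagger} = \mathbf{I}_{k-1}$ holds, and nothing further is needed.

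If desired, one could alternatively verify the identity column by column by computing $\bmPi_{\tsp_y} \bfe_j$ for each $j \in [k-1]$ and matching against Definition~\ref{definition:matrix-label-code}, but this would duplicate the content already present in Lemma~\ref{lemma: rho pi sigma relation} and so the short argument above is preferable.
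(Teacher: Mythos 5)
Your proof is correct and is essentially the same as the paper's: both unfold $\bmPi_{\tsp_y} = \bmpi \Tsp_y \bmpi^{\dagger}$, apply the commuting relation $\bmpi \Tsp_y = \ico_y \bmpi$ from Lemma~\ref{lemma: rho pi sigma relation}, and cancel $\bmpi\bmpi^{\dagger} = \mathbf{I}_{k-1}$. Your side remark about needing full \emph{row} rank for $\bmpi\bmpi^{\dagger}=\mathbf{I}$ is the right way to state the condition (the paper's Remark~\ref{remark:pseudo-inverse-injectivity} phrases it as ``column rank,'' but the substance is the same).
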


\iftoggle{arxiv}
{
  \begin{proof}
    From Lemma~\ref{lemma: rho pi sigma relation}, we have $\ico_y \bmpi = \bmpi \Tsp_y$ which implies that
    $
    \ico_y = \ico_y \bmpi \bmpi^{\dagger} = \bmpi \Tsp_y \bmpi^{\dagger} = \bmPi_{\tsp_y}
    $.
  \end{proof}
}{}

\begin{lemma}\label{lemma:matrix-permutation-is-order-reversing}
If \(\sigma,\sigma' \in \mathtt{Sym}(k)\), then \(\mathbf{S}_{\sigma\sigma'} = \mathbf{S}_{\sigma'} \mathbf{S}_{\sigma}\).
\end{lemma}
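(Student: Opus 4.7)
The plan is a direct component-wise computation starting from the defining identity $[\mathbf{S}_\tau \bfv]_j = v_{\tau(j)}$ recalled in Table~\ref{table:conventions} and the notations section. Fix an arbitrary $\bfv \in \mathbb{R}^k$ and an arbitrary index $j \in [k]$; since matrices are determined by their action on all vectors, it suffices to show the $j$-th entry of $\mathbf{S}_{\sigma\sigma'}\bfv$ agrees with the $j$-th entry of $\mathbf{S}_{\sigma'}\mathbf{S}_\sigma\bfv$.

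On the left-hand side, the definition of $\mathbf{S}_{\sigma\sigma'}$ together with the composition rule $(\sigma\sigma')(j) = \sigma(\sigma'(j))$ (which is just the definition of composition of the permutations $\sigma$ and $\sigma'$ used in the paper) gives $[\mathbf{S}_{\sigma\sigma'}\bfv]_j = v_{(\sigma\sigma')(j)} = v_{\sigma(\sigma'(j))}$. On the right-hand side, I apply the definition of $\mathbf{S}_{\sigma'}$ first and then the definition of $\mathbf{S}_\sigma$: setting $\bfw := \mathbf{S}_\sigma \bfv$ so that $w_i = v_{\sigma(i)}$, one computes $[\mathbf{S}_{\sigma'}\mathbf{S}_\sigma\bfv]_j = [\mathbf{S}_{\sigma'}\bfw]_j = w_{\sigma'(j)} = v_{\sigma(\sigma'(j))}$. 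The two sides match, completing the proof.

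The only thing to watch for is not to accidentally reverse the convention; the order-reversal in the lemma is a direct consequence of the fact that the definition $[\mathbf{S}_\tau \bfv]_j = v_{\tau(j)}$ places $\tau$ on the indexing side, so successive matrix applications pull out the rightmost permutation first. There is no real obstacle here beyond being careful with the bookkeeping of subscripts.
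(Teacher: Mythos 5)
Your proof is correct and is essentially identical to the paper's own argument: both fix an arbitrary vector and index, expand $[\mathbf{S}_{\sigma\sigma'}\bfv]_j = v_{\sigma(\sigma'(j))}$ from the definition, and compute $[\mathbf{S}_{\sigma'}\mathbf{S}_{\sigma}\bfv]_j$ by applying the defining identity twice to obtain the same expression. No issues.
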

\iftoggle{arxiv}{\begin{proof}
  Let \(\bfv \in \mathbb{R}^{k}\) and \(y \in [k]\) be arbitrary.
  By definition, we have  \([\mathbf{S}_{\sigma \sigma'}\bfv]_{y} = v_{\sigma(\sigma'(y))} \).
  On the other hand,
  \([\mathbf{S}_{\sigma'} (\mathbf{S}_{\sigma}\bfv)]_{y} =
[ \mathbf{S}_{\sigma}\bfv]_{\sigma'(y)}
=
[ \bfv]_{\sigma(\sigma'(y))}
=
v_{\sigma(\sigma'(y))}
  \). Thus, we are done.
\end{proof}}{}

\begin{lemma}\label{lemma: Pi is a group hom}
  For all \(\sigma, \sigma' \in \mathtt{Sym}(k)\), we have \(\bmPi_{\sigma \sigma'} = \bmPi_{\sigma'} \bmPi_{\sigma}\).
\end{lemma}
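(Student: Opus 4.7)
The plan is to first establish the general intertwining relation $\bmpi \mathbf{S}_\sigma = \bmPi_\sigma \bmpi$ for every $\sigma \in \mathtt{Sym}(k)$, extending Lemma~\ref{lemma: rho pi sigma relation} (which handles only the transpositions $\tsp_y$). Once this identity is in hand, the claim follows by a short chain of substitutions:
\[
\bmPi_{\sigma'}\bmPi_\sigma \bmpi \;=\; \bmPi_{\sigma'}\bmpi\mathbf{S}_\sigma \;=\; \bmpi \mathbf{S}_{\sigma'}\mathbf{S}_\sigma \;=\; \bmpi \mathbf{S}_{\sigma\sigma'} \;=\; \bmPi_{\sigma\sigma'}\bmpi,
\]
where the third equality uses Lemma~\ref{lemma:matrix-permutation-is-order-reversing}. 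Right-multiplying by $\bmpi^\dagger$ and invoking $\bmpi\bmpi^\dagger = \mathbf{I}_{k-1}$ (Remark~\ref{remark:pseudo-inverse-injectivity}) cancels the trailing $\bmpi$, yielding $\bmPi_{\sigma'}\bmPi_\sigma = \bmPi_{\sigma\sigma'}$.

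The crux is therefore proving $\bmpi \mathbf{S}_\sigma = \bmPi_\sigma \bmpi$. Unwinding $\bmPi_\sigma := \bmpi \mathbf{S}_\sigma \bmpi^\dagger$, this is equivalent to showing $\bmpi \mathbf{S}_\sigma (\mathbf{I}_k - \bmpi^\dagger\bmpi) = 0$. From the calculation $\bmpi(\mathbf{I}_k - \bmpi^\dagger\bmpi) = \bmpi - \bmpi = 0$, the range of $\mathbf{I}_k - \bmpi^\dagger\bmpi$ lies in $\ker(\bmpi)$. Direct inspection of $\bmpi = [-\mathbf{I}_{k-1},\,\vone^{(k-1)}]$ shows $\ker(\bmpi) = \mathrm{span}(\vone^{(k)})$, a one-dimensional subspace. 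Since every permutation matrix fixes the all-ones vector, $\mathbf{S}_\sigma \vone^{(k)} = \vone^{(k)}$, so $\mathbf{S}_\sigma$ maps $\ker(\bmpi)$ into itself, and applying $\bmpi$ on the left annihilates it.

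The only subtle point is that $\bmpi^\dagger\bmpi$ is \emph{not} the identity on $\mathbb{R}^k$, so one cannot formally ``cancel'' the inner $\bmpi^\dagger\bmpi$ sitting between $\mathbf{S}_{\sigma'}$ and $\mathbf{S}_\sigma$ in the product $\bmPi_{\sigma'}\bmPi_\sigma$. The plan sidesteps this by exploiting that the defect $\mathbf{I}_k - \bmpi^\dagger\bmpi$ is supported precisely on the one-dimensional line $\mathrm{span}(\vone^{(k)})$, which every $\mathbf{S}_\sigma$ preserves. With this observation isolated into the intertwining lemma, the rest of the argument is a formal manipulation using the definitions and Lemma~\ref{lemma:matrix-permutation-is-order-reversing}.
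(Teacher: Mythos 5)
Your proof is correct. It rests on the same underlying fact as the paper's proof --- that the ``defect'' $\mathbf{I}_k - \bmpi^\dagger\bmpi$ of the projector from being the identity is supported on a subspace that every permutation matrix preserves --- but you package it differently. The paper inserts $\mathbf{P} := \bmpi^\dagger\bmpi$ between $\mathbf{S}_{\sigma'}$ and $\mathbf{S}_\sigma$ and argues on the \emph{range} side: $\mathbf{P}$ is the orthogonal projection onto $\mathcal{V} = \{\bfv : \sum_j v_j = 0\}$, each $\mathbf{S}_\sigma$ maps $\mathcal{V}$ into itself, hence $\mathbf{P}\mathbf{S}_\sigma\bmpi^\dagger = \mathbf{S}_\sigma\bmpi^\dagger$. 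You instead argue on the \emph{kernel} side: $\ran(\mathbf{I}_k - \bmpi^\dagger\bmpi) \subseteq \ker(\bmpi) = \mathrm{span}(\vone^{(k)})$, which is fixed by every $\mathbf{S}_\sigma$ since $\mathbf{S}_\sigma\vone = \vone$, so $\bmpi\mathbf{S}_\sigma(\mathbf{I}_k-\bmpi^\dagger\bmpi)=0$. The two arguments are dual ($\mathcal{V}^\perp = \ker\bmpi$), but yours isolates the slightly more elementary one-dimensional observation. Your route also yields the general intertwining identity $\bmpi\mathbf{S}_\sigma = \bmPi_\sigma\bmpi$ for \emph{all} $\sigma \in \mathtt{Sym}(k)$ as a standalone byproduct, which extends Lemma~\ref{lemma: rho pi sigma relation} beyond transpositions and makes the final chain of substitutions purely formal; the paper obtains the same conclusion without ever stating that identity. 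Both proofs are complete and of comparable length.
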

\iftoggle{arxiv}{\begin{proof}
  Note that \(\bmPi_{\sigma \sigma'} =
\bmpi \mathbf{S}_{\sigma \sigma'} \bmpi^{\dagger}
=
\bmpi \mathbf{S}_{\sigma'} \mathbf{S}_{\sigma} \bmpi^{\dagger}
  \)
  by Lemma~\ref{lemma:matrix-permutation-is-order-reversing}.
Thus, it suffices to show
 \(
    \bmpi \mathbf{S}_{\sigma'} \mathbf{S}_{\sigma} \bmpi^\dagger = \bmpi \mathbf{S}_{ \sigma' } \bmpi^\dagger \bmpi \mathbf{S}_{\sigma} \bmpi^\dagger
 \).
 First, by
 the definition of \(\bmpi\)
the nullspace of \(\bmpi\) is
\(
    \ker (\bmpi) = \{ \bfv \in \mathbb{R}^{k} : v_1 = v_2 = \cdots = v_k\}
\).
Next, let \(\mathcal{V} \subseteq \mathbb{R}^k\) be the subspace
  \(\{\bfv \in \mathbb{R}^k : v_1 + \cdots + v_k = 0\}
\).
Then
    \(\{ \bfv \in \mathbb{R}^{k} : v_1 = v_2 = \cdots = v_k\} = \mathcal{V}^{\perp}\) is the subspace of vectors orthogonal to \(\mathcal{V}\).
    A fundamental result in linear algebra states that
    \(      \ran(\bmpi)^{\dagger} = \ran(\bmpi)^{\top} = (\ker (\bmpi))^{\perp}.
\)
    Thus, $\ran(\bmpi)^{\dagger} = \mathcal{V}$.
    Taken together, if we let $\bfP := \bmpi^\dagger \bmpi \in \mathbb{R}^{k\times k}$, then $\bfP$ is a projection matrix on $\mathcal{V}$.
  Thus, $\bfP\bfv = \bfv$ for all $\bfv \in \mathcal{V}$.
  Since \(\mathbf{S}_{\sigma}\) is a permutation matrix, we have \(\mathbf{S}_{\sigma}(\mathcal{V}) \subseteq \mathcal{V}\).
  Thus, $\ran ( \mathbf{S}_{\sigma} \bmpi^{\dagger} ) \subseteq \mathcal{V}$ which implies that
  $\mathbf{P}  \mathbf{S}_{\sigma} \bmpi^{\dagger}  =  \mathbf{S}_{\sigma} \bmpi^{\dagger}$.
  This proves
 \(
 \bmpi \mathbf{S}_{\sigma'} \mathbf{S}_{\sigma} \bmpi^\dagger =
 \bmpi \mathbf{S}_{ \sigma' } \mathbf{P} \mathbf{S}_{\sigma} \bmpi^\dagger=
 \bmpi \mathbf{S}_{ \sigma' } \bmpi^\dagger \bmpi \mathbf{S}_{\sigma} \bmpi^\dagger
    \)
    as desired.
\end{proof}}{}

\begin{lemma}\label{lemma: rho is involutional}
  For all $y \in [k]$, $\ico_y^2$ is the identity.
\end{lemma}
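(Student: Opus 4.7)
The plan is to deduce the claim from the two representation-theoretic lemmas already established, rather than to grind through the column-by-column definition of $\ico_y$.

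First I would invoke Lemma~\ref{lemma: rho i is Pi sigma i}, which identifies $\ico_y = \bmPi_{\tsp_y}$ for every $y \in [k]$, where $\tsp_y \in \mathtt{Sym}(k)$ is the transposition swapping $k$ and $y$. Squaring gives $\ico_y^2 = \bmPi_{\tsp_y}\bmPi_{\tsp_y}$, and then Lemma~\ref{lemma: Pi is a group hom} applied with $\sigma = \sigma' = \tsp_y$ collapses the product to $\bmPi_{\tsp_y \tsp_y}$.

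Next I would observe that $\tsp_y$ is an involution in $\mathtt{Sym}(k)$: composing a transposition with itself returns the identity permutation $\mathrm{id}_{[k]}$. Hence $\bmPi_{\tsp_y \tsp_y} = \bmPi_{\mathrm{id}_{[k]}} = \bmpi \mathbf{S}_{\mathrm{id}_{[k]}} \bmpi^{\dagger} = \bmpi \bmpi^{\dagger}$, which equals $\mathbf{I}_{k-1}$ by Remark~\ref{remark:pseudo-inverse-injectivity} (since $\bmpi$ has full column rank). This yields $\ico_y^2 = \mathbf{I}_{k-1}$, as required. The case $y = k$ can either be included in this argument (using that $\tsp_k$ is the identity already) or dispatched directly from $\ico_k = \mathbf{I}_{k-1}$.

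There is no real obstacle: the structural lemmas do all the work, and the only ingredient not already proved in the manuscript is the trivial fact that a transposition squares to the identity. A direct proof via the column-wise definition (Definition~\ref{definition:matrix-label-code}) is also available as a sanity check, but would duplicate effort compared to leveraging the homomorphism property $\sigma \mapsto \bmPi_\sigma$ that Lemma~\ref{lemma: Pi is a group hom} supplies.
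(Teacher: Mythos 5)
Your proof is correct and follows essentially the same route as the paper's: identify $\ico_y = \bmPi_{\tsp_y}$ via Lemma~\ref{lemma: rho i is Pi sigma i}, collapse the square to $\bmPi_{\tsp_y^2}$ via Lemma~\ref{lemma: Pi is a group hom}, and use that a transposition squares to the identity. Your extra step spelling out $\bmPi_{\mathrm{id}} = \bmpi\bmpi^{\dagger} = \mathbf{I}_{k-1}$ is a harmless (and slightly more explicit) elaboration of what the paper leaves implicit.
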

\iftoggle{arxiv}{\begin{proof}
  By Lemma \ref{lemma: rho i is Pi sigma i} and Lemma \ref{lemma: Pi is a group hom}, we have
  \(    \ico_y^2 = \bmPi_{\tsp_y} \bmPi_{\tsp_y} = \bmPi_{\tsp_y^2}
\).
  Since $\tsp_y$ is a transposition, $\tsp_y^2$ is the identity. Thus, $\ico_y^2$ is also the identity.
\end{proof}}{}
\begin{lemma}\label{lemma: transpositional identity}
  Let $y_1,y_2 \in [k-1]$ be distinct, i.e., \(y_{1}\ne y_{2}\). Then
  $\tsp_{y_1} \tsp_{y_2} \tsp_{y_1} = \tsp_{(y_1,y_2)}$, as elements of \(\mathtt{Sym}(k)\). Moreover,
  $\Tsp_{y_1}\Tsp_{y_{2}}\Tsp_{y_{1}} = \Tsp_{(y_{1},y_{2})}$, as elements of \(\mathbb{R}^{k\times k}\).
\end{lemma}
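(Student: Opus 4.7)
The plan is to prove the two identities in sequence, treating the permutation identity as the main content and the matrix identity as a direct corollary via \Cref{lemma:matrix-permutation-is-order-reversing}.

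For the permutation identity $\tsp_{y_1}\tsp_{y_2}\tsp_{y_1} = \tsp_{(y_1,y_2)}$, I would use the standard conjugation formula for transpositions in a symmetric group: for any $\sigma \in \mathtt{Sym}(k)$ and distinct $a,b \in [k]$,
\[
\sigma \, \tsp_{(a,b)} \, \sigma^{-1} = \tsp_{(\sigma(a), \sigma(b))}.
\]
Specializing with $\sigma = \tsp_{y_1}$ (which is its own inverse, being a transposition) and $(a,b) = (k,y_2)$ gives
\[
\tsp_{y_1} \tsp_{y_2} \tsp_{y_1} \;=\; \tsp_{y_1}\tsp_{(k,y_2)}\tsp_{y_1}^{-1} \;=\; \tsp_{(\tsp_{y_1}(k),\, \tsp_{y_1}(y_2))}.
\]
Now $\tsp_{y_1}(k) = y_1$ by definition of $\tsp_{y_1} = \tsp_{(k,y_1)}$, and $\tsp_{y_1}(y_2) = y_2$ because $y_2 \in [k-1]\setminus\{y_1\}$, so $y_2 \notin \{k,y_1\}$. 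Hence the right-hand side is $\tsp_{(y_1,y_2)}$. (If the reader is not willing to cite the conjugation formula, I would instead prove the identity by evaluating both sides on each element of $[k]$ in cases $y \in \{y_1, y_2, k\}$ and $y \notin \{y_1,y_2,k\}$, which is an entirely mechanical check.)

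For the matrix identity, I would combine the permutation identity just proved with \Cref{lemma:matrix-permutation-is-order-reversing} applied twice. Writing $\Tsp_{y_i} = \mathbf{S}_{\tsp_{y_i}}$, the order-reversing rule gives
\[
\mathbf{S}_{\tsp_{y_1}}\mathbf{S}_{\tsp_{y_2}}\mathbf{S}_{\tsp_{y_1}} \;=\; \mathbf{S}_{\tsp_{y_2}\tsp_{y_1}}\mathbf{S}_{\tsp_{y_1}} \;=\; \mathbf{S}_{\tsp_{y_1}\tsp_{y_2}\tsp_{y_1}} \;=\; \mathbf{S}_{\tsp_{(y_1,y_2)}} \;=\; \Tsp_{(y_1,y_2)},
\]
where the third equality invokes the permutation identity. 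The symmetric shape of the expression is what makes the order-reversal irrelevant to the final form.

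There is no real obstacle here; the only place one could slip is in forgetting that $\sigma \mapsto \mathbf{S}_\sigma$ is order-reversing rather than order-preserving. Because the factor $\tsp_{y_1}$ appears on both sides of $\tsp_{y_2}$, the two applications of the order-reversal cancel in the sense that conjugation by $\tsp_{y_1}$ in $\mathtt{Sym}(k)$ is intertwined with conjugation by $\mathbf{S}_{\tsp_{y_1}}$ in $\mathbb{R}^{k\times k}$, so the matrix identity falls out cleanly once the permutation identity is in hand.
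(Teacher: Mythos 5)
Your proof is correct. The route differs from the paper's: the paper proves the permutation identity by an exhaustive evaluation of both sides on every $j \in [k]$ (the cases $j=k$, $j \notin \{y_1,y_2\}$, $j=y_1$, $j=y_2$), and then remarks that the matrix identity "follows analogously." You instead invoke the standard conjugation formula $\sigma\,\tsp_{(a,b)}\,\sigma^{-1} = \tsp_{(\sigma(a),\sigma(b))}$, which packages the same case analysis into a single cited fact; your specialization $\sigma = \tsp_{y_1}$, $(a,b)=(k,y_2)$ and the checks $\tsp_{y_1}(k)=y_1$, $\tsp_{y_1}(y_2)=y_2$ are all correct. This buys brevity and makes the structural reason for the identity (conjugation relabels the entries of a transposition) visible, at the cost of either citing an external fact or reproducing essentially the paper's case check to establish it — a trade-off you rightly flag. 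Your derivation of the matrix identity via two applications of Lemma~\ref{lemma:matrix-permutation-is-order-reversing} is also correct and is actually more explicit than the paper's one-line "follows analogously"; the key observation that the palindromic word $\tsp_{y_1}\tsp_{y_2}\tsp_{y_1}$ is invariant under the order reversal induced by $\sigma \mapsto \mathbf{S}_\sigma$ is exactly what makes the argument go through.
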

\iftoggle{arxiv}{\begin{proof}
  This is simply an exhaustive case-by-case proof over all inputs $j \in [k]$.
  First, let $j=k$. Then $\tsp_{(y_1,y_2)}(k) = k$ since $k \not \in \{y_1,y_2\}$. On the other hand
  $
    \tsp_{y_1} \tsp_{y_2} \tsp_{y_1} (k)
    =
    \tsp_{y_1} \tsp_{y_2} (y_1)
    =
    \tsp_{y_1} (y_1)
    =k.
  $
  Now, let $j \in [k-1]$.  If $j \not \in \{y_1,y_2\}$, then $\tsp_{(y_1,y_2)}(j) = j$ and
  $
    \tsp_{y_1} \tsp_{y_2} \tsp_{y_1} (j)
    =
    \tsp_{y_1} \tsp_{y_2} (j)
    =
    \tsp_{y_1} (j)
    =j.
  $
  If $j = y_1$, then $\tsp_{(y_1,y_2)}(y_1) = y_2$ and
  $
    \tsp_{y_1} \tsp_{y_2} \tsp_{y_1} (y_1)
    =
    \tsp_{y_1} \tsp_{y_2} (1)
    =
    \tsp_{y_1} (y_2)
    =y_2.
  $
  If $j = y_2$, then $\tsp_{(y_1,y_2)}(y_2) = y_1$ and
  $
    \tsp_{y_1} \tsp_{y_2} \tsp_{y_1} (y_2)
    =
    \tsp_{y_1} \tsp_{y_2} (y_2)
    =
    \tsp_{y_1} (1)
    =y_1.
  $
  The ``Moreover'' part
follows analogously.
\end{proof}}{}
\begin{corollary}\label{corollary: generator}
  Every $\sigma \in \mathtt{Sym}(k)$ can be written as a product $\sigma = \tsp_{y_1} \tsp_{y_2} \cdots  \tsp_{y_n}$ for some integer \(n\ge 0\) and \(y_{i} \in [k-1]\) for each \(i \in [n]\).
\end{corollary}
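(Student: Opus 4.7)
The plan is to reduce to the standard fact that every permutation of $[k]$ is a product of transpositions, and then show that each such transposition can itself be expressed as a product of the ``star'' transpositions $\tsp_{y} = \tsp_{(k,y)}$ for $y \in [k-1]$. Since $\mathtt{Sym}(k)$ is generated by all transpositions, it then suffices to check that each transposition lies in the subgroup generated by $\{\tsp_{1},\dots, \tsp_{k-1}\}$.

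More concretely, I would proceed in three short steps. First, recall (or cite) the elementary fact that every $\sigma \in \mathtt{Sym}(k)$ can be written as a finite product of transpositions $\tsp_{(a_{1},b_{1})} \tsp_{(a_{2},b_{2})} \cdots \tsp_{(a_{m},b_{m})}$ with $a_{i},b_{i} \in [k]$ and $a_{i}\ne b_{i}$. Second, I would handle each transposition $\tsp_{(a,b)}$ case by case: if $a = k$ (resp.\ $b = k$), then $\tsp_{(a,b)} = \tsp_{b}$ (resp.\ $\tsp_{a}$), so it is already of the desired form; otherwise both $a,b \in [k-1]$ are distinct, and by Lemma~\ref{lemma: transpositional identity} we have $\tsp_{(a,b)} = \tsp_{a}\tsp_{b}\tsp_{a}$, which is a product of three star transpositions with indices in $[k-1]$. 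Third, substituting each of these expressions back into the transposition decomposition of $\sigma$ yields a product $\tsp_{y_{1}}\tsp_{y_{2}}\cdots \tsp_{y_{n}}$ with all $y_{i} \in [k-1]$, as required. The empty product (with $n=0$) covers the identity permutation.

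The statement ``every permutation is a product of transpositions'' can itself be proved by a standard induction on $k$: for $\sigma \in \mathtt{Sym}(k)$, multiply on the left by $\tsp_{(k,\sigma^{-1}(k))}$ (or handle the trivial case $\sigma(k)=k$) to reduce to a permutation fixing $k$, which by induction is a product of transpositions on $[k-1]$. I would likely just invoke this as a standard fact rather than reproving it, in the interest of brevity.

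There is essentially no serious obstacle here; the content is purely group-theoretic folklore, and Lemma~\ref{lemma: transpositional identity} supplies exactly the identity needed to rewrite non-star transpositions in terms of star ones. The only minor thing to watch is the order of composition: since in the paper's convention $\mathbf{S}_{\sigma\sigma'} = \mathbf{S}_{\sigma'}\mathbf{S}_{\sigma}$ (Lemma~\ref{lemma:matrix-permutation-is-order-reversing}), the corollary is stated at the level of permutations and not matrices, so the decomposition order is unambiguous and no reversing of factors is needed.
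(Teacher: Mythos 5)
Your proof is correct and follows essentially the same route as the paper: invoke the standard fact that all transpositions generate $\mathtt{Sym}(k)$, note that transpositions involving $k$ are already of the form $\tsp_{y}$, and use Lemma~\ref{lemma: transpositional identity} to write $\tsp_{(y_1,y_2)} = \tsp_{y_1}\tsp_{y_2}\tsp_{y_1}$ for the remaining ones.
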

\iftoggle{arxiv}{\begin{proof}
  A standard result in group theory\footnote{See \citet[Prop.\ 2.35]{rotman2006first}, for instance.} states that the set of \emph{all} transpositions \(\mathcal{T}\) generates \(\mathtt{Sym}(k)\).
  We prove that the proper {sub}set \(\mathcal{T}' := \{\tsp_y: y \in [k-1]\} \subset \mathcal{T}\) generates the group \(\mathtt{Sym}(k)\).
  By Lemma~\ref{lemma: transpositional identity}, transpositions between labels in \([k-1]\) can be generated by \(\mathcal{T}'\).
  Furthermore, \(\tsp_y = \tsp_{(k,y)}\) by definition, so transposition between \(k\) and elements of \([k-1]\) can be generated by \(\mathcal{T}'\) as well.
  Hence, all of \(\mathcal{T}\) can be generated by \(\mathcal{T}'\).
\end{proof}}{}

\begin{lemma}\label{lemma: commuting relations with transposition}
  For all \(y_{1},y_{2} \in [k-1]\), we have
  \( \Tsp_{(y_{1},y_{2})}^{(k-1)} \bmpi = \bmpi \Tsp_{(y_{1},y_{2})}^{(k)}\).
\end{lemma}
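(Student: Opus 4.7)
The plan is to verify the identity by a direct entrywise computation on an arbitrary vector \(\bfv \in \mathbb{R}^{k}\). First I would unpack the two defining conventions: by \Cref{definition:relative-marginalization-mapping}, for any \(\bfw \in \mathbb{R}^{k}\) and \(y \in [k-1]\) we have \([\bmpi \bfw]_{y} = w_{k} - w_{y}\); and by the permutation matrix convention stated in the notations, \([\mathbf{S}_{\sigma} \bfv]_{j} = v_{\sigma(j)}\). The one structural fact I would highlight before computing is that since \(y_{1}, y_{2} \in [k-1]\), the transposition \(\tsp_{(y_{1}, y_{2})}\) (as an element of \(\mathtt{Sym}(k)\)) fixes \(k\); consequently \([\mathbf{T}_{(y_{1},y_{2})}^{(k)} \bfv]_{k} = v_{\tsp_{(y_{1},y_{2})}(k)} = v_{k}\). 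This is the crucial compatibility ingredient that makes \(\bmpi\) commute with the two transposition actions in the stated sense.

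Then for each \(y \in [k-1]\) I would compute both sides:
\[
[\bmpi \mathbf{T}_{(y_{1},y_{2})}^{(k)} \bfv]_{y}
= [\mathbf{T}_{(y_{1},y_{2})}^{(k)} \bfv]_{k} - [\mathbf{T}_{(y_{1},y_{2})}^{(k)} \bfv]_{y}
= v_{k} - v_{\tsp_{(y_{1},y_{2})}(y)},
\]
and
\[
[\mathbf{T}_{(y_{1},y_{2})}^{(k-1)} \bmpi \bfv]_{y}
= [\bmpi \bfv]_{\tsp_{(y_{1},y_{2})}(y)}
= v_{k} - v_{\tsp_{(y_{1},y_{2})}(y)}.
\]
Since both expressions coincide for every \(y \in [k-1]\) and every \(\bfv \in \mathbb{R}^{k}\), the matrix identity \(\mathbf{T}_{(y_{1}, y_{2})}^{(k-1)} \bmpi = \bmpi \mathbf{T}_{(y_{1}, y_{2})}^{(k)}\) follows.

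An alternative route, which I would mention as a sanity check, would be to reduce to \Cref{lemma: rho pi sigma relation}: by \Cref{lemma: transpositional identity} we have \(\mathbf{T}_{(y_{1}, y_{2})}^{(k)} = \Tsp_{y_{1}} \Tsp_{y_{2}} \Tsp_{y_{1}}\), so applying \(\bmpi \Tsp_{y} = \ico_{y} \bmpi\) three times gives \(\bmpi \mathbf{T}_{(y_{1}, y_{2})}^{(k)} = \ico_{y_{1}} \ico_{y_{2}} \ico_{y_{1}} \bmpi\), after which one checks \(\ico_{y_{1}} \ico_{y_{2}} \ico_{y_{1}} = \mathbf{T}_{(y_{1}, y_{2})}^{(k-1)}\). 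This works but is more indirect; the direct coordinate computation is shorter and more transparent. There is no real obstacle here — the only thing to be careful about is tracking whether \(k\) is fixed by the transposition, which is guaranteed by the hypothesis \(y_{1}, y_{2} \in [k-1]\).
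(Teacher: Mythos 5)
Your proof is correct and follows essentially the same route as the paper: a direct entrywise computation of \([\bmpi \mathbf{T}_{(y_1,y_2)}^{(k)}\bfv]_{y}\) and \([\mathbf{T}_{(y_1,y_2)}^{(k-1)}\bmpi\bfv]_{y}\) for arbitrary \(\bfv\), using the fact that \(\tsp_{(y_1,y_2)}\) fixes \(k\) (which you make explicit and the paper uses implicitly).
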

\iftoggle{arxiv}{\begin{proof}
Now, let \(\bfv \in \mathbb{R}^k\) be arbitrary.
Then, by definition, for all \(j \in [k-1]\), we have
\(  [\bmpi \Tsp_{(y_1,y_2)}^{(k)}\bfv]_{j}
  =
  [\Tsp_{(y_1,y_2)}^{(k)}\bfv]_{k}
  -
  [\Tsp_{(y_1,y_2)}^{(k)}\bfv]_{j}
  =
  v_k - v_{\tsp_{(y_1,y_2)}(j)}.
\)
On the other hand,
\(  [\Tsp_{(y_1,y_2)}^{(k-1)} \bmpi \bfv]_{j}
  =
  [\bmpi \bfv]_{\tsp_{(y_1,y_2)}(j)}
  =
  v_k - v_{\tsp_{(y_1,y_2)}(j)}
\)
which implies
\(
\Tsp_{(y_1,y_2)}^{(k-1)} \bmpi
=\bmpi \Tsp_{(y_1,y_2)}^{(k)}\).
\end{proof}}{}

\begin{lemma}\label{lemma: projective transposition}
  Let \(y_1,y_2 \in [k-1]\) be distinct. Then \(\Tsp_{(y_1,y_2  )}^{(k-1)} = \ico_{y_1}\ico_{y_2} \ico_{y_1}.\)
\end{lemma}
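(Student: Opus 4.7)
The plan is to chain together the representation-theoretic machinery already developed for $\bmPi_\sigma := \bmpi \mathbf{S}_\sigma \bmpi^\dagger$. The key identification is that both sides of the desired equation can be recognized as $\bmPi_{\tsp_{(y_1,y_2)}}$, with the left-hand side arising via the commuting relation Lemma~\ref{lemma: commuting relations with transposition} and the right-hand side arising via the anti-homomorphism property Lemma~\ref{lemma: Pi is a group hom} together with the transpositional identity Lemma~\ref{lemma: transpositional identity}.

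First I would compute $\bmPi_{\tsp_{y_1} \tsp_{y_2} \tsp_{y_1}}$ in two different ways. On the one hand, applying Lemma~\ref{lemma: Pi is a group hom} twice yields
\[
\bmPi_{\tsp_{y_1} \tsp_{y_2} \tsp_{y_1}}
= \bmPi_{\tsp_{y_1}} \bmPi_{\tsp_{y_2}} \bmPi_{\tsp_{y_1}}
= \ico_{y_1} \ico_{y_2} \ico_{y_1},
\]
where the second equality is Lemma~\ref{lemma: rho i is Pi sigma i}. On the other hand, by the permutation identity $\tsp_{y_1} \tsp_{y_2} \tsp_{y_1} = \tsp_{(y_1,y_2)}$ from Lemma~\ref{lemma: transpositional identity}, the same quantity equals $\bmPi_{\tsp_{(y_1,y_2)}} = \bmpi \mathbf{T}_{(y_1,y_2)}^{(k)} \bmpi^\dagger$.

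Next I would simplify this last expression using Lemma~\ref{lemma: commuting relations with transposition}, which lets me push the transposition matrix through $\bmpi$ and decrement its dimension:
\[
\bmpi \mathbf{T}_{(y_1,y_2)}^{(k)} \bmpi^\dagger
= \mathbf{T}_{(y_1,y_2)}^{(k-1)} \bmpi \bmpi^\dagger
= \mathbf{T}_{(y_1,y_2)}^{(k-1)},
\]
where the last equality uses Remark~\ref{remark:pseudo-inverse-injectivity}, which says $\bmpi \bmpi^\dagger$ is the $(k-1)\times(k-1)$ identity because $\bmpi$ has full column rank. Equating the two evaluations of $\bmPi_{\tsp_{y_1}\tsp_{y_2}\tsp_{y_1}}$ gives the claim. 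No step is really an obstacle here; the proof is essentially a bookkeeping assembly of four previous lemmas, and the only thing to watch is the reversed composition order in Lemma~\ref{lemma: Pi is a group hom}, which makes the product $\ico_{y_1}\ico_{y_2}\ico_{y_1}$ come out in the right (symmetric) order despite the order-reversal.
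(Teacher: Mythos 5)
Your proof is correct and follows essentially the same route as the paper's: both establish $\ico_{y_1}\ico_{y_2}\ico_{y_1} = \bmPi_{\tsp_{(y_1,y_2)}}$ via Lemmas~\ref{lemma: Pi is a group hom} and~\ref{lemma: transpositional identity}, and identify $\bmPi_{\tsp_{(y_1,y_2)}}$ with $\Tsp_{(y_1,y_2)}^{(k-1)}$ via Lemma~\ref{lemma: commuting relations with transposition} and $\bmpi\bmpi^{\dagger} = \mathbf{I}$. Your explicit note about the order-reversal in Lemma~\ref{lemma: Pi is a group hom} being harmless for the palindromic word $\tsp_{y_1}\tsp_{y_2}\tsp_{y_1}$ is a correct and worthwhile observation.
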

\iftoggle{arxiv}{\begin{proof}
  First, we note that
\begin{align*}
  \ico_{y_1}\ico_{y_2} \ico_{y_1} =
  \bmPi_{\tsp_{y_1}}\bmPi_{\tsp_{y_2}}\bmPi_{ \tsp_{y_1}}
                                  &=
  \bmPi_{\tsp_{y_1}\tsp_{y_2} \tsp_{y_1}}
  \quad \because \mbox{Lemma \ref{lemma: Pi is a group hom}}
  \\
                                  &=
  \bmPi_{\tsp_{(y_1,y_2)}}
  \quad \because \mbox{Lemma \ref{lemma: transpositional identity}.}
\end{align*}

Next, from Lemma~\ref{lemma: commuting relations with transposition}, we have
  \( \Tsp_{(y_{1},y_{2})}^{(k-1)} \bmpi = \bmpi \Tsp_{(y_{1},y_{2})}^{(k)}\).
Multiplying both sides
on the right by \(\bmpi^{\dagger}\), we have
\begin{equation}
\Tsp_{(y_1,y_2)}^{(k-1)} \bmpi \bmpi^{\dagger}
=
\bmpi \Tsp_{(y_1,y_2)}^{(k)}\bmpi^{\dagger}.
\label{equation:projective-transposition-helper-1}
\end{equation}
To conclude, we have
  \(    \Tsp_{(y_1,y_2)}^{(k-1)}
    \overset{\textit{1}}{=}
\Tsp_{(y_1,y_2)}^{(k-1)} \bmpi \bmpi^{\dagger}
    \overset{\textit{2}}{=}
\bmpi \Tsp_{(y_1,y_2)}^{(k)}\bmpi^{\dagger}
    \overset{\textit{3}}{=}
\bmPi_{\tsp_{(y_1,y_2)}}\)
where
\emph{1} follows from Remark~\ref{remark:pseudo-inverse-injectivity},
\emph{2}
is Eqn.~\eqref{equation:projective-transposition-helper-1}
and \emph{3} follows from the definition of \(\bmPi\).
\end{proof}}{}

\begin{lemma}\label{lemma: permutation lemma}
  Let \(y, j \in [k]\). Then
  \begin{equation}
    \label{equation: permutation lemma main}
    \ico_{\tsp_y(j)} =
    \begin{cases}
      \Tsp_{(j,y)}^{(k-1)} \ico_j \ico_y &: \mbox{both } y \mbox{ and }j \in [k-1] \\
      \ico_j \ico_y &: \mbox{otherwise}.
    \end{cases}
  \end{equation}
\end{lemma}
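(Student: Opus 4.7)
The plan is to prove the identity by exhaustive case analysis on whether $y$ or $j$ equals $k$, leveraging the involutivity of the matrix label code (Lemma~\ref{lemma: rho is involutional}) and the projective-transposition identity $\Tsp_{(y_1,y_2)}^{(k-1)} = \ico_{y_1}\ico_{y_2}\ico_{y_1}$ (Lemma~\ref{lemma: projective transposition}). The computation is almost entirely symbolic, so the only real work is bookkeeping.

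First I would dispose of the ``otherwise'' branch of \cref{equation: permutation lemma main} in two easy sub-cases. If $y = k$, then $\tsp_y$ is the identity permutation, $\ico_y = \ico_k = \mathbf{I}_{k-1}$, and $\tsp_y(j) = j$, so both sides reduce to $\ico_j$. If instead $j = k$ (and $y \ne k$ has been allowed to be anything), then $\tsp_y(j) = \tsp_y(k) = y$, while $\ico_j = \ico_k = \mathbf{I}_{k-1}$, so the right hand side $\ico_j\ico_y = \ico_y$ equals $\ico_{\tsp_y(j)}$ as desired.

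Next I would handle the ``both $y,j \in [k-1]$'' case. Split into $y = j$ and $y \neq j$. If $y = j$, then $\tsp_y(j) = \tsp_y(y) = k$, so $\ico_{\tsp_y(j)} = \mathbf{I}_{k-1}$, while $\Tsp_{(j,y)}^{(k-1)} = \Tsp_{(y,y)}^{(k-1)} = \mathbf{I}_{k-1}$ and $\ico_j\ico_y = \ico_y^2 = \mathbf{I}_{k-1}$ by Lemma~\ref{lemma: rho is involutional}; both sides collapse to $\mathbf{I}_{k-1}$. If $y \ne j$, then $\tsp_y(j) = j$ since $j \notin \{y,k\}$, so $\ico_{\tsp_y(j)} = \ico_j$. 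To verify that the right hand side also equals $\ico_j$, I would apply Lemma~\ref{lemma: projective transposition} with $(y_1,y_2) = (j,y)$ to obtain $\Tsp_{(j,y)}^{(k-1)} = \ico_j\ico_y\ico_j$, and then compute
\[
\Tsp_{(j,y)}^{(k-1)}\ico_j\ico_y
= \ico_j\ico_y\ico_j\ico_j\ico_y
= \ico_j\ico_y\ico_y
= \ico_j,
\]
using $\ico_j^2 = \ico_y^2 = \mathbf{I}_{k-1}$ from Lemma~\ref{lemma: rho is involutional} twice.

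There is no meaningful obstacle here: the lemma is essentially a packaging of the group-theoretic relation $\tsp_y(j) = \tsp_{(j,y)}\circ\tsp_y$ (when $j,y \ne k$) translated through the representation $\bmPi_{(\cdot)}$, together with the special roles played by $\ico_k = \mathbf{I}_{k-1}$. The only point requiring mild care is making sure the two sub-cases $y = j$ and $y \ne j$ of the ``both in $[k-1]$'' branch are treated separately, since the transposition $\tsp_{(y,y)}$ is trivial while $\tsp_{(y,j)}$ for $y \ne j$ is not.
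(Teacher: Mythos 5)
Your proof is correct and follows essentially the same route as the paper's: an exhaustive case analysis relying on Lemma~\ref{lemma: rho is involutional} for the involutivity $\ico_y^2 = \mathbf{I}_{k-1}$ and Lemma~\ref{lemma: projective transposition} for $\Tsp_{(j,y)}^{(k-1)} = \ico_j\ico_y\ico_j$. The only difference is cosmetic case bookkeeping (you organize by branch of the formula, the paper splits on $y=j$ versus $y\ne j$ first), and both cover all cases.
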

\iftoggle{arxiv}{\begin{proof}
  First we consider the case when \(y = j\). Then \(\tsp_y(j) = \tsp_y(y) =k\).
  Hence, the left hand side of Eqn.~\eqref{equation: permutation lemma main}
  is the identity by definition.
  For the right hand side of Eqn.~\eqref{equation: permutation lemma main}, we recall that \(\Tsp_{(y,j)}\) when \(y=j\) is equal to the identity by definition.
  Furthermore, \(\ico_j \ico_y = \ico_y^2\) is also the identity by Lemma~\ref{lemma: rho is involutional}.
  Thus, Eqn.~\eqref{equation: permutation lemma main} holds when \(y=j\).

  Below, we focus on the case when $y \ne j$.
  We will refer to the two conditions of
  the right hand side of
    Eqn.~\eqref{equation: permutation lemma main} as ``sub-cases''.
  First, the $y,j \in [k-1]$ sub-case. We  have $\tsp_y(j) = j$, thus $\ico_{\tsp_y(j)} = \ico_j$. Now, for the right hand side of Eqn.~\ref{equation: permutation lemma main}, we have
  \begin{align*}
    \Tsp_{(j,y)}^{(k-1)}\ico_j \ico_y
    &=
    (\ico_j \ico_y \ico_j) \ico_j \ico_y
    \quad \because \mbox{Lemma \ref{lemma: projective transposition}}
    \\
    &=
    \ico_j \ico_y \ico_y
    \quad \because \mbox{Lemma \ref{lemma: rho is involutional} \( \implies \ico_{j}^{2}\) is the identity}
    \\
    & =
    \ico_j
    \quad \because \mbox{Lemma \ref{lemma: rho is involutional} \( \implies \ico_{y}^{2}\) is the identity.}
  \end{align*}

  Next, consider the second sub-case of Eqn.~\eqref{equation: permutation lemma main} splits into two ``sub-sub-cases'':   \emph{(a)} $j = k$ and $y < k$
\emph{(b)} $y = k$ and \(j \in [k]\) is arbitrary.
  For (a), note that $\tsp_{y}(j) =\tsp_{y}(k) = y$.
  So $\ico_{\tsp_y(j)} = \ico_y$. The right hand side of Eqn.~\eqref{equation: permutation lemma main}, we have $\ico_j \ico_y = \ico_y$ since $\ico_j = \ico_k$ is the identity by definition.
  For (b), the left hand side of Eqn.~\eqref{equation: permutation lemma main}  \(\ico_{\tsp_{y}(j)} = \ico_{\tsp_{k}(j)} = \ico_{j}\).
  Moreover, the right hand side of Eqn.~\eqref{equation: permutation lemma main}
  $\ico_j \ico_y = \ico_{j}\ico_k = \ico_{j}$, as desired.
\end{proof}}{}

\begin{proposition}\label{proposition:commuting-relations-full-perm}
  For an arbitrary $\sigma \in \mathtt{Sym}(k-1)$, let $\sigma' \in \mathtt{Sym}(k)$ denote ``the permutation that extends \(\sigma\) to \([k]\)'', i.e.,
  $\sigma'(k) := k$ and $\sigma'(y) := \sigma(y)$ for $y \in [k-1]$.
    Then we have $\mathbf{S}_{\sigma} \bmpi = \bmpi \mathbf{S}_{\sigma'}$.
\end{proposition}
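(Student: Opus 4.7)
The plan is to prove the identity $\mathbf{S}_\sigma \bmpi = \bmpi \mathbf{S}_{\sigma'}$ by a direct computation, checking that the two $(k-1) \times k$ matrices agree when applied to an arbitrary vector $\bfv \in \mathbb{R}^k$, coordinate by coordinate. The key facts I would use are the definition of $\bmpi$ from Definition~\ref{definition:relative-marginalization-mapping} (namely, $[\bmpi \bfv]_y = v_k - v_y$ for $y \in [k-1]$) and the definition of permutation matrices from the notations section (namely, $[\mathbf{S}_\sigma \bfv]_y = v_{\sigma(y)}$).

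Concretely, I would fix $\bfv \in \mathbb{R}^k$ and $y \in [k-1]$ and unfold each side. The left-hand side gives
\[
[\mathbf{S}_\sigma \bmpi \bfv]_y = [\bmpi \bfv]_{\sigma(y)} = v_k - v_{\sigma(y)},
\]
where the last equality uses $\sigma(y) \in [k-1]$ so the formula for $\bmpi$ applies. For the right-hand side, I would first apply $\mathbf{S}_{\sigma'}$ and then $\bmpi$:
\[
[\bmpi \mathbf{S}_{\sigma'} \bfv]_y = [\mathbf{S}_{\sigma'} \bfv]_k - [\mathbf{S}_{\sigma'} \bfv]_y = v_{\sigma'(k)} - v_{\sigma'(y)}.
\]
By definition of $\sigma'$ we have $\sigma'(k) = k$ and $\sigma'(y) = \sigma(y)$, so this equals $v_k - v_{\sigma(y)}$, matching the left-hand side. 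Since $\bfv$ and $y$ were arbitrary, the matrix identity follows.

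An alternative route would be to reduce to the transposition case already handled in Lemma~\ref{lemma: commuting relations with transposition}: any $\sigma \in \mathtt{Sym}(k-1)$ factors as a product of transpositions $\tsp_{(y_1,y_2)}^{(k-1)}$ with $y_1,y_2 \in [k-1]$, and the extension $\sigma'$ factors as the product of the same transpositions viewed in $\mathtt{Sym}(k)$ (since none involve $k$). Then repeatedly applying Lemma~\ref{lemma: commuting relations with transposition} and Lemma~\ref{lemma:matrix-permutation-is-order-reversing} to chain the factors would yield the claim. I would prefer the direct coordinate computation since it avoids an induction and makes transparent the role of the defining property $\sigma'(k) = k$, which is the only place the specific structure of the extension is used. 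No real obstacle arises; the only thing to be careful about is that $\sigma(y) \in [k-1]$, so the formula $[\bmpi \bfw]_y = w_k - w_y$ applies at index $\sigma(y)$ as well.
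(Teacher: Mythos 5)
Your proof is correct, but it takes a genuinely different (and more elementary) route than the paper. The paper's proof decomposes $\sigma$ into transpositions on $[k-1]$ (via the group-theoretic fact recalled in Corollary~\ref{corollary: generator}), converts the factorization to matrices using Lemma~\ref{lemma:matrix-permutation-is-order-reversing}, and then pushes $\bmpi$ through the product one transposition at a time using Lemma~\ref{lemma: commuting relations with transposition} --- i.e., exactly the ``alternative route'' you sketch at the end. Your primary argument instead observes that the coordinate computation underlying Lemma~\ref{lemma: commuting relations with transposition} never actually uses that the permutation is a transposition: the only properties needed are $\sigma'(k)=k$ and $\sigma'(y)=\sigma(y)$ for $y\in[k-1]$, together with $[\bmpi\bfv]_y=v_k-v_y$ and $[\mathbf{S}_{\sigma}\bfw]_y=w_{\sigma(y)}$. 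Running that computation for a general $\sigma$ gives $[\mathbf{S}_\sigma\bmpi\bfv]_y=v_k-v_{\sigma(y)}=[\bmpi\mathbf{S}_{\sigma'}\bfv]_y$ directly, with no decomposition and no appeal to the order-reversal of matrix composition. This is shorter and isolates the one structural fact that matters ($\sigma'$ fixes $k$); the paper's route has the mild advantage of reusing already-proved lemmas rather than redoing a coordinate check, but your version is self-contained and arguably cleaner. Your care in noting that $\sigma(y)\in[k-1]$ (so the formula for $\bmpi$ applies at index $\sigma(y)$) is exactly the right point to flag.
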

\iftoggle{arxiv}{\begin{proof}
  First, recall that \(\mathbf{S}_{\sigma} \in \mathbb{R}^{(k-1)\times (k-1)}\)
  and \(\mathbf{S}_{\sigma'} \in \mathbb{R}^{k\times k}\) are the permutation matrices corresponding to \(\sigma\) and \(\sigma'\), respectively.
As mentioned in the proof of Corollary~\ref{corollary: generator}, a standard result in group theory states that
  the set of transpositions on \([k-1]\) generates \(\mathtt{Sym}(k-1)\) as a group. Thus, there exists an integer \(l \ge 0\) and indices \(y_{1},j_{1},\dots, y_{l},j_{l} \in [k-1]\) such that
$\sigma = \tsp_{(y_{1},j_{1})} \tsp_{(y_{2},j_{2})}\cdots \tsp_{(y_{l},j_{l})}$.
By Lemma~\ref{lemma:matrix-permutation-is-order-reversing}, we have
\[
  \mathbf{S}_{\sigma} =
  \Tsp_{(y_{l},j_{l})}^{(k-1)} \cdots \Tsp_{(y_{2},j_{2})}^{(k-1)}\Tsp_{(y_{1},j_{1})}^{(k-1)}
  \quad \mbox{and}
  \quad
    \mathbf{S}_{\sigma'} =
\Tsp_{(y_{l},j_{l})}^{(k)} \cdots \Tsp_{(y_{2},j_{2})}^{(k)}\Tsp_{(y_{1},j_{1})}^{(k)}
\]
By Lemma~\ref{lemma: commuting relations with transposition}, we have
\begin{align*}
  \mathbf{S}_{\sigma} \bmpi =
\Tsp_{(y_{l},j_{l})}^{(k-1)} \cdots \Tsp_{(y_{2},j_{2})}^{(k-1)}\Tsp_{(y_{1},j_{1})}^{(k-1)}
\bmpi
  =
\bmpi
\Tsp_{(y_{l},j_{l})}^{(k)} \cdots \Tsp_{(y_{2},j_{2})}^{(k)}\Tsp_{(y_{1},j_{1})}^{(k)}
  =
\bmpi
    \mathbf{S}_{\sigma'}
\end{align*}
as desired.
\end{proof}}{}

\begin{lemma}\label{lemma:subtraction-matrix-label-code}
  Let \(\vone := \vone^{(k-1)}\) denote\footnote{We drop the superscript \((k-1)\) for convenience.} the \((k-1)\)-dimensional (column) vector of all ones.
  Let \(\bfe_{y} := \bfe_{y}^{(k-1)}\) denote the \((k-1)\)-dimensional \(y\)-th elementary basis (column) vector.
  Let \(y,j \in [k]\) be such that \(y \ne j\). Then we have the following identities
  \begin{align}
    \ico_{y} - \Tsp_{(y,j)} \ico_{j} = (\vone + \bfe_{j})(\bfe_{j} - \bfe_{y})^{\top} \qquad&\mbox{if \(y,j\in[k-1]\),}
\label{equation:subtraction-matrix-label-code}
    \\
      \ico_{y} - \ico_{j} = (\vone + \bfe_{j})(\bfe_{j})^{\top}\qquad&\mbox{
  if \(y  = k\),}
\label{equation:subtraction-matrix-label-code-special-case-1}
    \\
    \ico_{y} - \ico_{j} = -(\vone + \bfe_{y})(\bfe_{y})^{\top}\qquad&\mbox{if \(j = k\).}
\label{equation:subtraction-matrix-label-code-special-case-2}
  \end{align}
\end{lemma}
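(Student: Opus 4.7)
\textbf{Proof proposal for \Cref{lemma:subtraction-matrix-label-code}.} The plan is to handle the two ``boundary'' identities \eqref{equation:subtraction-matrix-label-code-special-case-1} and \eqref{equation:subtraction-matrix-label-code-special-case-2} first by a direct column-by-column calculation, and then reduce \eqref{equation:subtraction-matrix-label-code} to \eqref{equation:subtraction-matrix-label-code-special-case-1} using the earlier \Cref{lemma: permutation lemma}.

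First I would dispatch \eqref{equation:subtraction-matrix-label-code-special-case-1}. Here \(\ico_{y}=\ico_{k}=\mathbf{I}_{k-1}\) and \(j\in[k-1]\). By \Cref{definition:matrix-label-code}, the columns of \(\ico_{j}\) agree with those of \(\mathbf{I}_{k-1}\) except in the \(j\)-th column, where \(\mathbf{I}_{k-1}\) has \(\bfe_{j}\) and \(\ico_{j}\) has \(-\vone\). Thus \(\mathbf{I}_{k-1}-\ico_{j}\) has all zero columns except the \(j\)-th column, which equals \(\bfe_{j}-(-\vone)=\vone+\bfe_{j}\). This is precisely the rank-one matrix \((\vone+\bfe_{j})\bfe_{j}^{\top}\). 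Identity \eqref{equation:subtraction-matrix-label-code-special-case-2} is symmetric: now \(\ico_{j}=\mathbf{I}_{k-1}\) and the single nonzero column of \(\ico_{y}-\mathbf{I}_{k-1}\) is the \(y\)-th, equal to \(-\vone-\bfe_{y}=-(\vone+\bfe_{y})\), giving \(-(\vone+\bfe_{y})\bfe_{y}^{\top}\).

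For \eqref{equation:subtraction-matrix-label-code}, with \(y,j\in[k-1]\) and \(y\ne j\), I would apply \Cref{lemma: permutation lemma} with the roles of its indices set to \((y,j)\): since \(\tsp_{y}(j)=\tsp_{(k,y)}(j)=j\) (as \(j\neq y\) and \(j\ne k\)), the lemma yields \(\ico_{j}=\Tsp^{(k-1)}_{(j,y)}\ico_{j}\ico_{y}\). Right-multiplying by \(\ico_{y}\) and invoking \Cref{lemma: rho is involutional} (\(\ico_{y}^{2}=\mathbf{I}_{k-1}\)) gives the key commutation identity
\[
\Tsp^{(k-1)}_{(y,j)}\ico_{j} \;=\; \ico_{j}\ico_{y},
\]
where I also used that the transposition is symmetric in its arguments. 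Hence
\[
\ico_{y}-\Tsp^{(k-1)}_{(y,j)}\ico_{j}
\;=\;\ico_{y}-\ico_{j}\ico_{y}
\;=\;(\mathbf{I}_{k-1}-\ico_{j})\ico_{y}
\;=\;(\vone+\bfe_{j})\bfe_{j}^{\top}\ico_{y},
\]
where the last step uses \eqref{equation:subtraction-matrix-label-code-special-case-1} (already proven). It remains to identify the row \(\bfe_{j}^{\top}\ico_{y}\). Reading off the \(j\)-th row of \(\ico_{y}\) directly from \Cref{definition:matrix-label-code}: for each column index \(m\), the \((j,m)\)-entry is \(1\) if \(m=j\) (since \(j\ne y\), so the \(j\)-th column of \(\ico_{y}\) is \(\bfe_{j}\)), \(-1\) if \(m=y\) (the \(y\)-th column is \(-\vone\)), and \(0\) otherwise. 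Thus \(\bfe_{j}^{\top}\ico_{y}=(\bfe_{j}-\bfe_{y})^{\top}\), yielding \((\vone+\bfe_{j})(\bfe_{j}-\bfe_{y})^{\top}\) as claimed.

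There is essentially no technical obstacle: the only subtlety is making sure the dimensions and superscripts \((k\text{ vs.\ }k-1)\) on the transposition matrices line up (since \(\ico_{j}\in\mathbb{R}^{(k-1)\times(k-1)}\), the transposition in \eqref{equation:subtraction-matrix-label-code} must be \(\Tsp^{(k-1)}_{(y,j)}\)), and handling the edge case \(j=y\) is ruled out by hypothesis. The proof is otherwise a short bookkeeping exercise built on the involution property and the commutation relation already established in the appendix.
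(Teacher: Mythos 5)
Your proof is correct, and for the main identity \eqref{equation:subtraction-matrix-label-code} it takes a genuinely different route from the paper. The paper proves all three identities by the same direct column-by-column inspection: for \eqref{equation:subtraction-matrix-label-code} it observes that only the \(y\)-th and \(j\)-th columns of \(\ico_{y} - \Tsp_{(y,j)}\ico_{j}\) can be nonzero and computes them to be \(-(\vone+\bfe_{j})\) and \(\vone+\bfe_{j}\), then matches this against the rank-one matrix \((\vone+\bfe_{j})(\bfe_{j}-\bfe_{y})^{\top}\). You instead prove only the two boundary cases by column inspection and then derive the general case algebraically: your commutation identity \(\Tsp^{(k-1)}_{(y,j)}\ico_{j}=\ico_{j}\ico_{y}\) does follow from Lemma~\ref{lemma: permutation lemma} exactly as you say (with \(\tsp_{y}(j)=j\) and the involution \(\ico_{y}^{2}=\mathbf{I}\); there is no circularity, since Lemma~\ref{lemma: permutation lemma} does not depend on the present lemma), and the factorization \(\ico_{y}-\ico_{j}\ico_{y}=(\mathbf{I}_{k-1}-\ico_{j})\ico_{y}=(\vone+\bfe_{j})\bfe_{j}^{\top}\ico_{y}\) together with the row computation \(\bfe_{j}^{\top}\ico_{y}=(\bfe_{j}-\bfe_{y})^{\top}\) is accurate. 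Your approach is more modular — it exposes \emph{why} the rank-one structure appears (the difference factors through \(\mathbf{I}-\ico_{j}\)) and reuses the special case — at the cost of importing the group-theoretic machinery of Lemmas~\ref{lemma: permutation lemma} and \ref{lemma: rho is involutional}; the paper's version is longer but entirely self-contained entry bookkeeping. You are also right to flag that the transposition matrix in \eqref{equation:subtraction-matrix-label-code} must be read as \(\Tsp^{(k-1)}_{(y,j)}\) for the dimensions to match, which is how the paper uses it.
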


\iftoggle{arxiv}{\begin{proof}
  First, we recall the notation that if \(\mathbf{A}\) is a matrix and \(l\) is a column index, then \([\mathbf{A}]_{:l}\) denote the \(l\)-th column of \(\mathbf{A}\).

  We prove \cref{equation:subtraction-matrix-label-code}.
  Let \(l \in [k-1] \setminus \{y,j\}\). Then both \([\ico_{y}]_{:l}\) and
  \([\Tsp_{(y,j)} \ico_{j}]_{:l} = \bfe_{l}\).
  Thus, only the \(y\)-th and \(j\)-th columns of
  \(\ico_{y} - \Tsp_{(y,j)} \ico_{j} \)
 are possibly nonzero.
We claim that the \(y\)-th and \(j\)-th columns are
\(-(\vone + \bfe_{j})\)
and
\(\vone + \bfe_{j}\), respectively.
To prove the claim, first by definition we have \([\ico_{y}]_{:y} =-\vone\).
On the other hand, \([\ico_{j}]_{:y} =\bfe_{y}\) and so
\( [\Tsp_{(y,j)} \ico_{j}]_{:y}= \bfe_{j}\).
Hence, the \(y\)-th column of
\(\ico_{y} - \Tsp_{(y,j)} \ico_{j} \)
is
\(-(\vone + \bfe_{j})\).
Furthermore,
\([\ico_{y}]_{:j} =\bfe_{j}\)
and
\([\ico_{j}]_{:j} = -\vone\).
Thus,
\([\Tsp_{(y,j)}\ico_{j}]_{:j} = -\vone\) as well.
This proves that the \(j\)-th column of
\(\ico_{y} - \Tsp_{(y,j)} \ico_{j} \)
 is equal to
\(\vone + \bfe_{j}\). Thus our claim holds and \cref{equation:subtraction-matrix-label-code} follows.

Next, we consider the case when \(y = k\).
Let \(l \in [k-1] \setminus \{j\}\) be an index not equal to \(j\).
Then both \([\ico_{y}]_{:l}\) and \([\ico_{j}]_{:l} = \bfe_{l}\).
Thus, only the \(j\)-th column of \(\ico_{y} - \ico_{j}\) is possibly nonzero. Now, since
\([\ico_{j}]_{:j} = -\vone\),
\cref{equation:subtraction-matrix-label-code-special-case-1}
follows immediately.

Finally, \cref{equation:subtraction-matrix-label-code-special-case-2}
follows  from
\cref{equation:subtraction-matrix-label-code-special-case-1}
by swapping \(y\) and \(j\).
\end{proof}}{}

\section{Proof of {Theorem}~\ref{theorem:relative-margin-form}}\label{section:appendix:proof-of-theorem-relative-margin-form}

 We first recall the definition of a template from
{Theorem}~\ref{theorem:relative-margin-form}. For the reader's convenience, we restate it as a definition below:

\begin{lemma}\label{lemma:relative-margin-form-special-case}
  Let \(\calL\) be a PERM loss with template \(\TP\). Let \(\bmpi\) be as in Definition~\ref{definition:relative-marginalization-mapping}. Then for all \(\bfv \in \mathbb{R}^{k}\) we have
  \(\calL_{k}(\bfv):=[\calL(\bfv)]_{k} = \TP(\bmpi \bfv)\).
\end{lemma}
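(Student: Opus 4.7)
The plan is straightforward: unpack the two relevant parts of Definition~\ref{definition:PERM-loss}. Since $\calL$ is PERM, it is in particular relative margin-based (item \ref{definition:PERM-loss-reduced-form}), so there exist functions $\LL_1,\dots,\LL_k : \mathbb{R}^{k-1} \to \mathbb{R}$ such that $\calL_y(\bfv) = \LL_y(\bmpi\bfv)$ for all $\bfv \in \mathbb{R}^k$ and all $y \in [k]$. Specializing this to $y = k$ yields $\calL_k(\bfv) = \LL_k(\bmpi\bfv)$. Finally, by item \ref{definition:PERM-loss-template} of the same definition, the template is literally $\TP := \LL_k$, so $\calL_k(\bfv) = \TP(\bmpi \bfv)$, which is what we wanted.

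There is essentially no obstacle: this lemma is purely a matter of reading off the definition. It serves as a warm-up statement isolating the $y=k$ component, presumably because this is the ``base case'' from which Theorem~\ref{theorem:relative-margin-form} will extend the formula to arbitrary $y$ by invoking permutation equivariance (item \ref{definition:PERM-loss-permutation-equivariant}) together with the commuting relation $\bmpi \Tsp_y = \ico_y \bmpi$ of Lemma~\ref{lemma: rho pi sigma relation}. The only thing to be careful about when writing the proof is to make explicit that $\TP$ in the statement refers to the template as defined from the reduced form, so that no additional step is needed beyond applying the two definitional items in sequence.
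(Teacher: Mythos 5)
Your proof is correct and matches the paper's own argument, which likewise just specializes the relative margin-based condition of Definition~\ref{definition:PERM-loss} to $y = k$ and invokes $\TP := \LL_k$. Nothing more is needed.
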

\begin{proof}
Recall from Definition~\ref{definition:relative-marginalization-mapping}
that
\(  \bmpi\bfv  = (v_k-v_1, v_k - v_2,\dots, v_k - v_{k-1})^{\top}
\).
Thus, the result now follows immediately from Definition~\ref{definition:PERM-loss}.
\end{proof}

For the reader's convenience, the following is a restatement of Theorem~\ref{theorem:relative-margin-form}.

\begin{theorem}\label{theorem:appendix:relative-margin-form}

    Let $\calL: \mathbb{R}^k \to \mathbb{R}^k$ be a PERM loss with template $\TP$,
    and let
\(\bfv \in \mathbb{R}^{k}\) and \(y \in [k]\) be arbitrary.
    Then
    \(\TP\) is a symmetric function
    and
    \(\calL\) can be expressed as
  \begin{equation}
\calL_y(\bfv) =
  \TP(\ico_y \bmpi\bfv).
    \label{equation:appendix:relative-margin-form}
\end{equation}
Conversely, given a symmetric function \(\TP : \mathbb{R}^{k-1} \to \mathbb{R}\), the function \(\calL = (\calL_{1},\dots, \calL_{k}) : \mathbb{R}^{k} \to \mathbb{R}^{k}\)  defined componentwise via \cref{equation:appendix:relative-margin-form} is a PERM loss with template \(\TP\).
  \end{theorem}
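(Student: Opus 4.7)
The plan is to prove both directions by reducing everything to the identity $\calL_k(\bfv) = \TP(\bmpi \bfv)$ from Lemma~\ref{lemma:relative-margin-form-special-case}, together with the commutation relations between the matrix label code, the relative-margin matrix $\bmpi$, and permutation matrices developed in Section~\ref{section:appendix:matrix-label-code}. For the forward direction, the strategy is (i) to extract symmetry of $\TP$ from permutation equivariance restricted to permutations fixing the index $k$, and (ii) to obtain the formula $\calL_y(\bfv) = \TP(\ico_y \bmpi \bfv)$ by using the transposition $\tsp_y$ to transport the $y$-th component of $\calL$ to the $k$-th. For the converse, it suffices to define $\calL$ by \cref{equation:appendix:relative-margin-form} and verify the defining properties of a PERM loss, with permutation equivariance reduced to the generating transpositions $\{\tsp_j\}_{j \in [k-1]}$ via Corollary~\ref{corollary: generator}.

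Concretely, for the forward direction I first fix $\sigma \in \mathtt{Sym}(k-1)$ and let $\sigma' \in \mathtt{Sym}(k)$ be its extension with $\sigma'(k) = k$. Given $\bfz \in \mathbb{R}^{k-1}$, I choose any $\bfv \in \mathbb{R}^{k}$ with $\bmpi \bfv = \bfz$ (for instance $\bfv = [-z_1,\dots,-z_{k-1},0]^{\top}$) and compute
\[
\TP(\mathbf{S}_{\sigma} \bfz)
= \TP(\mathbf{S}_{\sigma} \bmpi \bfv)
= \TP(\bmpi \mathbf{S}_{\sigma'} \bfv)
= \calL_k(\mathbf{S}_{\sigma'} \bfv)
= [\mathbf{S}_{\sigma'} \calL(\bfv)]_k
= \calL_{\sigma'(k)}(\bfv)
= \calL_k(\bfv)
= \TP(\bfz),
\]
invoking Proposition~\ref{proposition:commuting-relations-full-perm}, Lemma~\ref{lemma:relative-margin-form-special-case}, and permutation equivariance. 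This yields symmetry of $\TP$. For the explicit formula, I note that $\tsp_y(k) = y$, apply permutation equivariance with $\sigma = \tsp_y$, and invoke Lemma~\ref{lemma: rho pi sigma relation}:
\[
\calL_y(\bfv)
= [\mathbf{S}_{\tsp_y} \calL(\bfv)]_k
= \calL_k(\mathbf{S}_{\tsp_y} \bfv)
= \TP(\bmpi \mathbf{S}_{\tsp_y} \bfv)
= \TP(\ico_y \bmpi \bfv).
\]

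For the converse, I set $\calL_y(\bfv) := \TP(\ico_y \bmpi \bfv)$. Taking $y = k$ and using $\ico_k = \mathbf{I}_{k-1}$ shows immediately that the template of $\calL$ is exactly $\TP$, and relative margin-based holds by defining $\LL_y(\bfz) := \TP(\ico_y \bfz)$. The remaining task is permutation equivariance, which by Corollary~\ref{corollary: generator} and a straightforward induction on the length of a generator word reduces to proving $\calL_y(\mathbf{S}_{\tsp_j} \bfv) = \calL_{\tsp_j(y)}(\bfv)$ for every $j \in [k-1]$ and every $y \in [k]$. Using Lemma~\ref{lemma: rho pi sigma relation}, the left-hand side equals $\TP(\ico_y \ico_j \bmpi \bfv)$ while the right-hand side equals $\TP(\ico_{\tsp_j(y)} \bmpi \bfv)$. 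Comparing the two labels via Lemma~\ref{lemma: permutation lemma} (with the roles of $y$ and $j$ interchanged) gives either $\ico_{\tsp_j(y)} = \ico_y \ico_j$ (when $y = k$, or when $y = j$, in which case $\ico_y^2 = \mathbf{I}_{k-1}$ by Lemma~\ref{lemma: rho is involutional}) or $\ico_{\tsp_j(y)} = \Tsp_{(y,j)}^{(k-1)} \ico_y \ico_j$ (when $y, j \in [k-1]$ are distinct); in the second situation the assumed symmetry of $\TP$ absorbs the extra transposition matrix $\Tsp_{(y,j)}^{(k-1)}$ and equality follows. The main obstacle is not conceptual but notational: one must carefully track the interplay between $\bmpi$, the $\ico_y$'s, and the permutation matrices across the case split, but all the necessary commutation identities have already been supplied by the lemmas of Section~\ref{section:appendix:matrix-label-code}.
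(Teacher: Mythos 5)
Your proposal is correct and follows essentially the same route as the paper's proof: the forward direction uses the identical chain through Proposition~\ref{proposition:commuting-relations-full-perm}, Lemma~\ref{lemma:relative-margin-form-special-case}, and Lemma~\ref{lemma: rho pi sigma relation}, and the converse reduces equivariance to the generating transpositions and resolves the case split with Lemma~\ref{lemma: permutation lemma} plus symmetry of $\TP$, exactly as in the paper (up to swapping the names of the indices $y$ and $j$).
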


\begin{proof}[Proof of Theorem~\ref{theorem:appendix:relative-margin-form}]
    We begin with  the ``\(\implies\)'' direction of the proof.
    First, we check that
    $\TP: \mathbb{R}^{k-1} \to \mathbb{R}$ is symmetric, i.e.,
    $\TP(\mathbf{S}_{\sigma} \bfz) = \TP(\bfz)$
    for
    all $\bfz \in \mathbb{R}^{k-1}$ and all $\sigma \in \mathtt{Sym}(k-1)$.
    Below, fix such \(\bfz\) and \(\sigma\).

    Pick $\bfv \in \mathbb{R}^{k}$ such that $\bmpi \bfv = \bfz$. For instance, let \(\bfv :=
    \begin{bmatrix}
      -\bfz^{\top} & 0
    \end{bmatrix}^{\top}
    \).
    Define $\sigma' \in \mathtt{Sym}(k)$ to be the permutation that extends \(\sigma\) to \([k]\) as in
Proposition~\ref{proposition:commuting-relations-full-perm}. Then
\begin{align*}
    \TP(\mathbf{S}_{\sigma}\bfz)
  =
    \TP(\mathbf{S}_{\sigma} \bmpi \bfv)
&  =
    \TP(\bmpi \mathbf{S}_{\sigma'}  \bfv) \quad \because \mbox{Proposition~\ref{proposition:commuting-relations-full-perm}}
  \\&
    =
  [ \calL(\mathbf{S}_{\sigma'}\bfv) ]_{k}
  \quad \because \mbox{Definition~\ref{definition:PERM-loss} of the template \(\TP\) of a PERM loss}
  \\&
    =
  [\mathbf{S}_{\sigma'}\calL(\bfv)]_k
  \quad \because \mbox{\(\calL\) is permutation equivariant (Definition~\ref{definition:PERM-loss})}
  \\
  &
  =
  [\calL(\bfv)]_{\sigma'(k)}
  \quad \because \mbox{\(\mathbf{S}_{\sigma'}\) is the permutation matrix of \(\sigma'\)}
  \\
  &
  =
  [\calL(\bfv)]_{k}
  \quad \because \mbox{Definition of \(\sigma'\) in Proposition~\ref{proposition:commuting-relations-full-perm}}
  \\
  &
  =
  \TP(\bmpi \bfv).
  \quad \because \mbox{Lemma~\ref{lemma:relative-margin-form-special-case}}
\end{align*}
Since by definition
\(
\TP(\bmpi \bfv)
=
\TP(\bfz)
\), we have by the above that
    $\TP(\mathbf{S}_{\sigma} \bfz) = \TP(\bfz)$ for all \(\bfz\).
    This proves that $\TP$ is symmetric. Next, we prove
    \cref{equation:appendix:relative-margin-form}, i.e., \(   [\calL(\bfv)]_y
  =
  \TP(\ico_{y} \bmpi \bfv)
 \) for all \( y \in [k]\).  Again, this is a straight forward computation:
\begin{align*}
  [\calL(\bfv)]_y
  &=
  [\calL(\bfv)]_{\tsp_{y}(k)}
    =
  [\Tsp_y \calL(\bfv)]_k
    \qquad \because \mbox{definitions of \(\tsp_{y}\) and \(\Tsp_{y}\)}
    \\
  &=
  [\calL(\Tsp_y \bfv)]_k
    \qquad \because \mbox{\(\calL\) is permutation equivariant}
    \\&
  =
  \TP(\bmpi \Tsp_y \bfv)
  \qquad \because \mbox{Lemma~\ref{lemma:relative-margin-form-special-case}}
  \\& =
  \TP(\ico_y \bmpi \bfv)
  \qquad \because \mbox{Lemma~\ref{lemma: rho pi sigma relation}.}
\end{align*}
This proves \cref{equation:appendix:relative-margin-form} and thus the ``\(\implies\)'' direction follows.
\iftoggle{arxiv}
{
  Now, we prove ``\(\impliedby\)'' direction, i.e., the ``Conversely'' part of the theorem.
  First, we argue that \(\calL\) defined via \cref{equation:appendix:relative-margin-form}  is relative margin-based. Define \(\LL_{y} : \mathbb{R}^{k-1} \to \mathbb{R}\) by \(\LL_{y}(\bfz) := \TP(\ico_{y} \bfz )\) for all \(\bfz \in \mathbb{R}^{k-1}\). Then \cref{equation:appendix:relative-margin-form}
  immediately implies \cref{equation:relative-margin-form-with-reduced-form} in item 2 of Definition~\ref{definition:PERM-loss}.

  Next, we check that $\calL$ is permutation equivariant.
  Let $\bfv \in \mathbb{R}^k$ be arbitrary.
  By Lemma~\ref{lemma:transposition-suffices-for-permutation} below, it suffices to prove
  the claim that $\calL(\Tsp_y \bfv) = \Tsp_y \calL(\bfv)$ for all $y \in [k]$.
  In other words, for all \(y,j \in [k]\) we have
  $[\calL(\Tsp_y \bfv)]_{j} = [\Tsp_y \calL(\bfv)]_{j}$.
  To see this, we have
  \begin{align*}
    [\Tsp_y \calL(\bfv)]_j &=
                             [\calL(\bfv)]_{\tsp_y(j)} \quad \because \mbox{Definition of \(\Tsp_{y}\)}
    \\ &=
         \TP(\ico_{\tsp_y(j)} \bmpi \bfv)
         \quad \because \mbox{Definition of $\calL$}
    \\&{=}
    \begin{cases}
      \TP(\Tsp_{(j,y)} \ico_j \ico_y \bmpi \bfv) &: y,j \in [k-1]\\
      \TP(\ico_j \ico_y \bmpi \bfv) &: \mbox{otherwise}
    \end{cases}
    \quad \because
    \mbox{Lemma~\ref{lemma: permutation lemma}}
    \\&=
    \TP(\ico_j \ico_y \bmpi \bfv)
    \quad \because \mbox{$\TP$ is symmetric}
    \\&=
    \TP(\ico_j \bmpi \Tsp_y \bfv)
    \quad \because \mbox{Lemma~\ref{lemma: rho pi sigma relation}}
    \\&=
    [\calL(\Tsp_y \bfv)]_j
    \quad \because \mbox{Definition of $\calL$}
  \end{align*}
  This proves that $\Tsp_y \calL(\bfv) = \calL(\Tsp_y \bfv)$ for all \(y \in [k]\).
}
{
    \textcolor{black}{
  The proof of the  ``\(\impliedby\)'' direction, i.e., the ``Conversely'' part of the theorem, proceeds similarly.
  For the reader's convenience, it is omitted here but is included in the arXiv version of this manuscript \citep{wang2023unified}.
  }
}
\end{proof}

\begin{lemma}\label{lemma:transposition-suffices-for-permutation}
  Let \(\calL : \mathbb{R}^{k} \to \mathbb{R}^{k}\)  be a function  such that
  \(\Tsp_{y} \calL(\bfv) = \calL(\Tsp_{y}\bfv)\) for all \(y \in [k]\) and \(\bfv \in \mathbb{R}^{k}\). Then
  \(\calL\) is permutation-equivariant, i.e., \(  \calL(\mathbf{S}_{\sigma} \bfv)
  =
  \mathbf{S}_{\sigma} \calL(\bfv)
  \) for all \(y \in [k]\) and \(\bfv \in \mathbb{R}^{k}\).
\end{lemma}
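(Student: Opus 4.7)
The plan is to reduce an arbitrary permutation to a product of the transpositions $\tau_y$ ($y \in [k-1]$) and then invoke the hypothesis iteratively, so the argument is essentially a straightforward induction on the length of such a product.

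More concretely, first I would appeal to Corollary~\ref{corollary: generator}, which guarantees that any $\sigma \in \mathtt{Sym}(k)$ admits a factorization $\sigma = \tsp_{y_1}\tsp_{y_2}\cdots\tsp_{y_n}$ with each $y_i \in [k-1]$. Combined with Lemma~\ref{lemma:matrix-permutation-is-order-reversing}, this yields $\mathbf{S}_\sigma = \Tsp_{y_n}\Tsp_{y_{n-1}}\cdots \Tsp_{y_1}$. Then I would induct on $n$. The base case $n = 0$ (i.e., $\sigma$ the identity) is immediate, since $\mathbf{S}_\sigma$ is the identity matrix.

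For the inductive step, let $\sigma' := \tsp_{y_1}\cdots\tsp_{y_{n-1}}$, so $\mathbf{S}_\sigma = \Tsp_{y_n} \mathbf{S}_{\sigma'}$. Fix $\bfv \in \mathbb{R}^k$. Using the hypothesis with the single transposition $\Tsp_{y_n}$ applied to the vector $\mathbf{S}_{\sigma'} \bfv$, we obtain
\[
\calL(\mathbf{S}_\sigma \bfv) \;=\; \calL\bigl(\Tsp_{y_n}\, \mathbf{S}_{\sigma'}\bfv\bigr) \;=\; \Tsp_{y_n}\, \calL\bigl(\mathbf{S}_{\sigma'}\bfv\bigr).
\]
The inductive hypothesis applied to $\sigma'$ (which has a factorization of length $n-1$) gives $\calL(\mathbf{S}_{\sigma'}\bfv) = \mathbf{S}_{\sigma'}\calL(\bfv)$, and so the right-hand side equals $\Tsp_{y_n}\mathbf{S}_{\sigma'}\calL(\bfv) = \mathbf{S}_\sigma \calL(\bfv)$, completing the induction.

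There isn't really a serious obstacle here: the entire content is in the fact that $\{\tsp_y : y \in [k-1]\}$ generates $\mathtt{Sym}(k)$ (Corollary~\ref{corollary: generator}) and that matrix representation reverses composition order (Lemma~\ref{lemma:matrix-permutation-is-order-reversing}). The only minor care needed is to keep the order of factors consistent between $\sigma$ and $\mathbf{S}_\sigma$ when peeling off one transposition at a time, which is why I strip $\tsp_{y_n}$ (corresponding to the leftmost matrix factor $\Tsp_{y_n}$) rather than $\tsp_{y_1}$.
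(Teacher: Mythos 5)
Your proof is correct and is essentially the same as the paper's: both factor $\sigma$ into the generators $\tsp_{y_i}$ via Corollary~\ref{corollary: generator}, use Lemma~\ref{lemma:matrix-permutation-is-order-reversing} to get $\mathbf{S}_\sigma = \Tsp_{y_n}\cdots\Tsp_{y_1}$, and repeatedly apply the single-transposition hypothesis. The only difference is that you package the repeated application as an explicit induction on $n$, where the paper writes it informally as a chain of equalities.
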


\begin{proof}
For an arbitrary $\sigma \in \mathtt{Sym}(k)$, write $\sigma = \tsp_{y_1} \tsp_{y_2} \cdots  \tsp_{y_n}$ as a product of transposition where \(n\ge 0\) is an integer and \(y_{i} \in [k-1]\) for each \(i \in [n]\). This is possible because of Corollary~\ref{corollary: generator}.
By Lemma~\ref{lemma:matrix-permutation-is-order-reversing}, we have
\(\mathbf{S}_{\sigma} = \Tsp_{y_n}\cdots \Tsp_{y_{2}}\Tsp_{y_1}\).
Thus
\[
  \calL(\mathbf{S}_{\sigma} \bfv)
  =
  \calL(
\Tsp_{y_n}\cdots \Tsp_{y_1}
\bfv)
  =
  \Tsp_{y_n}\calL(\Tsp_{y_{n-1}}\cdots \Tsp_{y_{1}}\bfv)
  =
  \cdots
  =
  \Tsp_{y_{n}}\cdots \Tsp_{y_{1}}\calL(\bfv)
  =
  \mathbf{S}_{\sigma} \calL(\bfv)
\]
where ``\(=\cdots =\)'' denotes repeated application of $\calL(\Tsp_y \bfv) = \Tsp_y \calL(\bfv)$ for all $y \in [k]$.
\end{proof}


\begin{remark}\label{remark:relative-margin-form-summarized}
For convenience,  we now summarize the relationship between \(\calL\), \(\LL\) and \(\TP\) from Definition~\ref{definition:PERM-loss} in terms of the input to \(\TP\), typically denoted below by \(\bfz \in \mathbb{R}^{k-1}\).
Let
\(y \in [k]\) and \(\bfz \in \mathbb{R}^{k-1}\) be arbitrary.
Define $\bfv \in \mathbb{R}^{k}$ be such that \(\bmpi\bfv = \bfz\), where \(\bmpi\) is as defined in Definition~\ref{definition:relative-marginalization-mapping}.
     Since \(\bmpi \bmpi^{\dagger}\) is the identity (Remark~\ref{remark:pseudo-inverse-injectivity}), we can for instance let \(\bfv =
    \bmpi^{\dagger} \bfz
    \).
    Then we have
\begin{equation}
  \LL_{y}(\bfz)
  =
  [\calL(\bfv)]_y
  =\TP(\ico_{y} \bfz).
\label{equation:relative-margin-form-summarized}
\end{equation}
The first equality is
\cref{equation:relative-margin-form-with-reduced-form} (rewritten using \(\bmpi \bfv = \bfz\)) and the second equality is
    \cref{equation:appendix:relative-margin-form}.
    A useful corollary of the above identity is the following:
    \begin{equation}
      \label{equation:permutation-equivariance-for-reduced-loss}
  \mathbf{S}_{\sigma} \LL(\bfz)
   =
    \LL(\bmPi_{\sigma}\bfz), \quad \mbox{for all \(\sigma \in \mathtt{Sym}(k)\).}
    \end{equation}
    To prove
      \cref{equation:permutation-equivariance-for-reduced-loss},
 we let  \(\bfv := \bmpi^{\dagger}\bfz\) as discussed earlier. Then
  \[\mathbf{S}_{\sigma} \LL(\bfz)
   \overset{\mathit{1}}{=}
    \mathbf{S}_{\sigma}\calL(\bfv)
   \overset{\mathit{2}}{=}
    \calL(\mathbf{S}_{\sigma}\bfv)
   \overset{\mathit{3}}{=}
    \LL(\bmpi\mathbf{S}_{\sigma}\bfv)
   \overset{\mathit{4}}{=}
    \LL(\bmpi\mathbf{S}_{\sigma}\bmpi^{\dagger}\bfz)
   \overset{\mathit{5}}{=}
    \LL(\bmPi_{\sigma}\bfz).
  \]
  For 1 and 3, we used \cref{equation:appendix:relative-margin-form}. For 2, we used permutation equivariance of \(\calL\). For 4, we used the definition of \(\bfv\).
  For 5, we used Definition~\ref{definition:irreducibility-representation}
of \(\bmPi_{\sigma}\).
\end{remark}

  \section{Well-incentivized losses}
  This section discusses the ``well-incentivized'' property of a loss that should be satisfied for a multiclass classification loss to be useful.
  This will be made clear after
{Remark}~\ref{remark:well-incentivized-loss-and-min-max-identity}.

\begin{definition}[Well-incentivized losses]
  Let \(\calL\) be a multiclass loss function (Definition~\ref{definition:PERM-loss}).
  We say that \(\calL\) is \emph{well-incentivized} (resp.\ \emph{strictly well-incentivized}) if
  for all \(\bfv \in \mathbb{R}^k\) and distinct \(y,j \in [k]\), \(v_{j} \le v_{y}\) (resp.\  \(v_{j} < v_{y}\)) implies \(\calL_{j}(\bfv) \ge \calL_{y}(\bfv)\)
(resp.\ \(\calL_{j}(\bfv) > \calL_{y}(\bfv)\)).
\end{definition}

\begin{remark}\label{remark:well-incentivized-loss-and-min-max-identity}
  Suppose that \(\calL\) is well-incentivized. Let \(\bfv \in \mathbb{R}^{k}\) and \(y \in \argmax_{j \in [k]} v_{j}\).
  Then \(\calL_{y}(\bfv) = \min_{j \in [k]} \calL_{j}(\bfv)\).
\end{remark}
{Remark}~\ref{remark:well-incentivized-loss-and-min-max-identity} implies that for well-incentivized losses, the \(\argmax\) predictor is correct, i.e., \(\calL_{y}(\bfv)\) is minimized when the class score \([\bfv]_{y}\) is the highest.
Next, we define a condition on the template \(\TP\) such that the corresponding PERM loss is well-incentivized:

\begin{definition}\label{definition:monotone-functions}
    A function $f : \mathbb{R}^{n} \to \mathbb{R}$ is
    \emph{non-increasing} (resp.\ \emph{decreasing})
    if for all \(\bfv,\bfw \in \mathbb{R}^{n}\) such that
    \(\bfv \succeq \bfw\) (resp.\ \(\bfv \succ \bfw\))
    we have
    \(f(\bfv) \le f(\bfw)\) (resp.\ \(f(\bfv) < f(\bfw)\)).
  \end{definition}

  \begin{proposition}\label{proposition:monotone-function}
    Let \(f: \mathbb{R}^{n} \to \mathbb{R}\) be a continuously differentiable function.
    If \(\nabla_{f}(\cdot) \preceq \vzero\) (resp.\ \(\nabla_{f}(\cdot) \prec \vzero\))
    then \(f\) is non-increasing (resp.\ decreasing).
  \end{proposition}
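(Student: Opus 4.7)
The plan is to reduce the multivariable statement to a one-dimensional one by restricting $f$ to the line segment joining $\bfw$ and $\bfv$, and then invoke the fundamental theorem of calculus. Concretely, given $\bfv \succeq \bfw$ (respectively $\bfv \succ \bfw$), I define the path $\gamma : [0,1] \to \mathbb{R}^n$ by $\gamma(t) := \bfw + t(\bfv - \bfw)$ and the scalar function $h(t) := f(\gamma(t))$. Since $f$ is continuously differentiable, the chain rule gives
\[
h'(t) = \langle \nabla_f(\gamma(t)), \bfv - \bfw \rangle.
\]

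Next I analyze the sign of $h'(t)$ componentwise. In the non-strict case, $\bfv - \bfw \succeq \vzero$ by hypothesis and $\nabla_f(\gamma(t)) \preceq \vzero$ by assumption, so each term $[\nabla_f(\gamma(t))]_j \cdot [\bfv - \bfw]_j$ is non-positive, whence $h'(t) \leq 0$ for all $t \in [0,1]$. In the strict case, $\bfv - \bfw \succ \vzero$ means every coordinate of $\bfv - \bfw$ is strictly positive, and $\nabla_f(\gamma(t)) \prec \vzero$ means every coordinate of the gradient is strictly negative, so each term in the sum defining $h'(t)$ is strictly negative, giving $h'(t) < 0$ for all $t \in [0,1]$.

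To finish, I apply the fundamental theorem of calculus to obtain
\[
f(\bfv) - f(\bfw) = h(1) - h(0) = \int_0^1 h'(t)\, dt,
\]
which is $\le 0$ in the non-strict case and $< 0$ in the strict case (since $h'$ is continuous and strictly negative on the entire compact interval). Therefore $f(\bfv) \le f(\bfw)$ (respectively $f(\bfv) < f(\bfw)$), which is exactly the conclusion of Definition~\ref{definition:monotone-functions}.

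There is no real obstacle here; the statement is a routine multivariable calculus fact. The only subtlety worth flagging is the strict-inequality case, where one must be careful that strict negativity of the \emph{integrand} at every point of a compact interval actually yields strict negativity of the integral — this follows from continuity of $h'$, which in turn follows from continuous differentiability of $f$ and smoothness of the linear path $\gamma$.
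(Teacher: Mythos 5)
Your proof is correct and follows essentially the same route as the paper: restrict $f$ to the line segment between the two points, apply the chain rule to get the sign of the derivative of the restriction from $\nabla_f \preceq \vzero$ (resp.\ $\prec \vzero$) and the sign of the direction vector, and integrate via the fundamental theorem of calculus. The only cosmetic difference is the orientation of the parametrization (the paper runs the path from $\bfv$ to $\bfw$), and your remark on why a strictly negative continuous integrand gives a strictly negative integral is a fine way to handle the strict case.
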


\begin{proposition}\label{proposition:sufficient-condition-for-well-incentivized-ness}
    Let $\calL$ be a PERM loss whose template \(\TP\) is non-increasing (resp.\ decreasing).
    Then \(\calL\) is (resp.\ strictly) well-incentivized.
  \end{proposition}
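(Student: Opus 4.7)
The plan is to apply the relative margin form (Theorem~\ref{theorem:relative-margin-form}) together with the explicit formula for $\ico_y \bmpi \bfv$ from Remark~\ref{remark:explicit-form-of-matrix-label-form-times-discrimaint}, and then exploit the symmetry of $\TP$ to reorder entries before applying the monotonicity hypothesis. Fix $\bfv \in \mathbb{R}^k$ and distinct $y, j \in [k]$ with $v_j \le v_y$ (resp.\ $v_j < v_y$). Write $\bfz^y := \ico_y \bmpi \bfv$ and $\bfz^j := \ico_j \bmpi \bfv$, so that by the relative margin form $\calL_y(\bfv) = \TP(\bfz^y)$ and $\calL_j(\bfv) = \TP(\bfz^j)$. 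The goal is to show $\TP(\bfz^y) \le \TP(\bfz^j)$ (resp.\ $<$).

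The first step is to read off the entries from Remark~\ref{remark:explicit-form-of-matrix-label-form-times-discrimaint}: the entries of $\bfz^y$ are the $k-1$ numbers $\{v_y - v_l : l \in [k] \setminus \{y\}\}$ and those of $\bfz^j$ are $\{v_j - v_l : l \in [k] \setminus \{j\}\}$. The second step is the key observation: these two multisets can be ``aligned'' entrywise in a favorable way. For every index $l \ne y, j$, the contribution $v_y - v_l$ appearing in $\bfz^y$ dominates the corresponding contribution $v_j - v_l$ in $\bfz^j$, since $v_y \ge v_j$. The remaining entry of $\bfz^y$ is $v_y - v_j$ while the remaining entry of $\bfz^j$ is $v_j - v_y$, and again $v_y - v_j \ge v_j - v_y$ because $v_y \ge v_j$ (strictly so in the strict case).

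The third step formalizes this alignment. Because $\bfz^y$ and $\bfz^j$ are vectors indexed differently, one produces a permutation $\sigma \in \mathtt{Sym}(k-1)$ which matches the slot of $\bfz^y$ holding $v_y - v_j$ with the slot of $\bfz^j$ holding $v_j - v_y$ and leaves the remaining slots in place. By construction, $\mathbf{S}_\sigma \bfz^y \succeq \bfz^j$ entrywise (resp.\ $\succ$ in the strict case). The symmetry of $\TP$ guaranteed by Theorem~\ref{theorem:relative-margin-form} yields $\TP(\bfz^y) = \TP(\mathbf{S}_\sigma \bfz^y)$, and then the non-increasing (resp.\ decreasing) hypothesis on $\TP$ (Definition~\ref{definition:monotone-functions}) gives $\TP(\mathbf{S}_\sigma \bfz^y) \le \TP(\bfz^j)$ (resp.\ $<$). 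Combining, $\calL_y(\bfv) \le \calL_j(\bfv)$ (resp.\ $<$), which is the well-incentivized (resp.\ strictly well-incentivized) property.

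The main obstacle is purely bookkeeping: writing down the permutation $\sigma$ cleanly enough that the entrywise comparison $\mathbf{S}_\sigma \bfz^y \succeq \bfz^j$ is visually obvious. One clean way is to index the entries of both vectors by $[k-1]$ via the matrix label code and then appeal to Lemma~\ref{lemma: rho pi sigma relation}, which explicitly identifies the $l$-th entry of $\ico_y \bmpi \bfv$ as either $v_y - v_l$ or $v_y - v_k$ depending on whether $l = y$; this case analysis (subdividing further on whether $y$ or $j$ equals $k$) then makes the permutation $\sigma$ either the identity or a single transposition, after which the entrywise inequality is immediate from $v_y \ge v_j$.
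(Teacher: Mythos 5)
Your proof is correct and follows essentially the same route as the paper's: express $\calL_y(\bfv)=\TP(\ico_y\bmpi\bfv)$, show that $\ico_y\bmpi\bfv$ entrywise dominates a permuted copy of $\ico_j\bmpi\bfv$ (the permutation being the identity when $y$ or $j$ equals $k$, and the transposition $\tsp_{(y,j)}$ otherwise), then invoke the symmetry and monotonicity of $\TP$. The only cosmetic difference is that the paper packages the entrywise comparison into the rank-one difference identities of Lemma~\ref{lemma:subtraction-matrix-label-code}, whereas you verify it by directly aligning the multisets $\{v_y-v_l\}_{l\ne y}$ and $\{v_j-v_l\}_{l\ne j}$.
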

\iftoggle{arxiv}{\begin{proof}[Proof of Proposition~\ref{proposition:monotone-function}]
   Let \(\bfv,\bfw \in \mathbb{R}^{n}\)  be such that \(\bfv \succeq \bfw\).
   Define \(\gamma : \mathbb{R} \to \mathbb{R}^{n}\) by \(\gamma(t) := (1-t) \bfv + t \bfw \).
   Our goal is to show that \(f(\bfv) \le f(\bfw)\).
    The ``strict'' case will be addressed later.
   By the Fundamental Theorem of Calculus,
   \[
     f(\bfw) - f(\bfv) = f(\gamma(1)) - f(\gamma(0)) =  \int_{0}^{1} \frac{d
f\circ \gamma
     }{dt}(t) dt
=
\int_{0}^{1} \nabla_{f}(\gamma(t))^{\top} \gamma'(t) dt
   \]
   where  we used the
   chain rule for curves \cref{equation:chain-rule-for-curves}
   for the last equality.
   Now, note that \(\nabla_{f}(\cdot) \preceq \vzero\) by assumption. Moreover, \(\gamma'(t) = \bfw - \bfv \preceq \vzero\).
   Therefore, \(\nabla_{f}(\gamma(t))^{\top} \gamma'(t) \ge 0\) for all \(t\).
The proof for the ``strict'' case follows by replacing every ``\(\succeq\)'' /
``\(\ge\)''
with ``\(\succ\)''/ ``\(>\)''.
\end{proof}
\begin{proof}[Proof of Proposition~\ref{proposition:sufficient-condition-for-well-incentivized-ness}]
    Below, let \(y,j\in[k]\) and \(\bfv \in \mathbb{R}^{k}\) be such that
    suppose that \(v_{y} \ge v_{j}\).
    Our goal is to show that \(\calL_{y}(\bfv) = \TP(\ico_{y}\bfz) \le \TP(\ico_{j} \bfz) = \calL_{j}(\bfv)\).
    The ``strict'' case will be addressed later.

    Let \(\bfz := \bmpi \bfv\).
    First, we consider the case when \(j = k\) or \(y = k\).
    If \(j=k\), then
    \[
      \ico_{y} \bfz
    - \ico_{j}\bfz
    =
      (\ico_{y}
    - \ico_{j})\bfz
    \overset{
\cref{equation:subtraction-matrix-label-code-special-case-2}
    }{=}
-(\vone + \bfe_{y})(\bfe_{y})^{\top} \bfz
    =
    -(\vone + \bfe_{y}) z_{y}  \succeq \vzero
\]
    where ``\(\succeq\)'' holds since \(z_{y} = v_{j} - v_{y} \le 0\).
    If \(y=k\), then applying a similar argument as in the previous case with
\cref{equation:subtraction-matrix-label-code-special-case-1}
yields
    \(
    \ico_{y} \bfz
    - \ico_{j}\bfz
    =
    (\vone + \bfe_{j}) z_{j} \succeq \vzero
    \).
    Thus, when \(y = k\) or \(j=k\), we have
    \(      \ico_{y}\bfz \succeq \ico_{j} \bfz
    \) which implies that
    \(\calL_{y}(\bfv) = \TP(\ico_{y}\bfz) \le \TP(\ico_{j} \bfz) = \calL_{j}(\bfv)\).
    Note that here, we used
    \cref{equation:appendix:relative-margin-form}
    for the first and last equalities.

    Finally, suppose that \(y,j \in [k-1]\), then
    \[
      \ico_{y} \bfz
    - \Tsp_{(y,j)}\ico_{j}\bfz
    =
(      \ico_{y}
    - \Tsp_{(y,j)}\ico_{j})\bfz
\overset{\cref{equation:subtraction-matrix-label-code}}{=} (\vone + \bfe_{j})(\bfe_{j} - \bfe_{y})^{\top} \bfz
    =
    (\vone + \bfe_{j}) (z_{j} - z_{y}) \succeq \vzero.
  \]
  Note that for the ``\(\succeq\)'', we used the fact that
    \(z_{j} - z_{y} = (v_{k} - v_{j}) - (v_{k}-v_{y}) = v_{y} - v_{j} \ge 0\).
If \(y ,j \in [k-1]\), then
    \(      \ico_{y}\bfz \succeq \Tsp_{(y,j)}\ico_{j} \bfz
    \) which implies that
    \(\calL_{y}(\bfv) = \TP(\ico_{y}\bfz) \le \TP(\Tsp_{(y,j)}\ico_{j} \bfz)
    =
\TP(\ico_{j} \bfz)
    = \calL_{j}(\bfv)\). Note that \(\TP(\Tsp_{(y,j)}\ico_{j} \bfz)
    =
\TP(\ico_{j} \bfz)\) holds because \(\TP\) is a symmetric function.
The proof for the strict case follows by replacing every ``\(\succeq\)'' /
``\(\ge\)''
with ``\(\succ\)''/ ``\(>\)''.
\end{proof}}{
    \textcolor{black}{
      The proofs of
Propositions~\ref{proposition:monotone-function}
     and
     \ref{proposition:sufficient-condition-for-well-incentivized-ness}
    are straightforward calculations involving the Fundamental Theorem of Calculus.
  For the reader's convenience, they are omitted here but  are included in the arXiv version of this manuscript \citep{wang2023unified}.
  }
}

  \section{Proof of Proposition~\ref{proposition:sufficient-condition-for-gamma-phi-losses-to-be-regular}}

  \iftoggle{arxiv}
  {
    For notational simplicity, we write \(\dot{\phi}\) and \(\ddot{\phi}\) for the first and second derivative of \(\phi\) and likewise for \(\gamma\).
    Let \(\TP\) be the template of \(\calL^{\gamma,\phi}\), which is given explicitly in Example~\ref{example:multiclass-exponential-loss}.
    We first show that $\nabla \TP (\bfz)\prec \vzero$ for all \(\bfz \in \mathbb{R}^{k-1}\).
    Equivalently, we show that
    \(
    \frac{\partial \TP}{\partial z_i}
    (\bfz)
    < 0
    \)
    for each \(i \in [k-1]\).
    Using the chain rule, we get
    \(  \frac{\partial \TP}{\partial z_i}
    (\bfz)
    =
    \dot{\gamma}\left(\sum_{j \in [k-1]} \phi(z_j)\right)
    \dot{\phi}(z_i)
    \)
    which is negative by the assumptions of  Proposition~\ref{proposition:sufficient-condition-for-gamma-phi-losses-to-be-regular}.

    Next, we show that \(\TP\) is strictly convex. To this end, it suffices to  show that the Hessian is positive definite  for all input \(\bfz \in \mathbb{R}^{k-1}\) \citep[\S 3.1.4]{boyd2004convex}.
    We being by calculating the mixed partial derivative for \(i,l \in [k-1]\):
    \[ \textstyle  \frac{\partial^2 \TP}{ \partial z_l \partial z_i} (z)
      =
      \ddot{\gamma}\left(\sum_{j \in [k-1]} \phi(z_j)\right)
      \dot{\phi}(z_l)
      \dot{\phi}(z_i)
      +
      \delta_{il}
      \dot{\gamma}\left(\sum_{j \in [k-1]} \phi(z_j)\right)
      \ddot{\phi}(z_i)\]
    where
    \(
    \delta_{il} = 1
    \)
    if \(i = l\) and
    \(
    \delta_{il} = 0
    \)
    otherwise.
    Abusing notation, we write
    \(\dot{\phi}(\bfz)\) to denote the (column) vector
    \(
    \begin{bmatrix}
      \dot{\phi}(z_1)
      &
        \cdots
      &
        \dot{\phi}(z_{k-1})
    \end{bmatrix}^{\top}
    \) and define
    \(\ddot{\phi}(\bfz)\) likewise.
    Thus, the Hessian of \(\TP\) at \(\bfz\) is given by
    \[
      \textstyle
      \ddot{\gamma}\left(\sum_{j \in [k-1]} \phi(z_j)\right)
      \dot{\phi}(\bfz)
      \dot{\phi}(\bfz)^{\top}
      +
      \dot{\gamma}\left(\sum_{j \in [k-1]} \phi(z_j)\right)
      \mathrm{diag}(\ddot{\phi}(\bfz)).
    \]
    The assumption on \({\phi}\) implies that
    \(
    \mathrm{diag}(\ddot{\phi}(\bfz))
    \)
    is a positive definite matrix.
    Moreover, \(
    \dot{\phi}(\bfz)
    \dot{\phi}(\bfz)^{\top}
    \) is a positive definite matrix since it is the outer product of a vector with itself.
    By the assumption on \(\gamma\), the scalar-valued coefficients
    \(    \ddot{\gamma}\left(\sum_{j \in [k-1]} \phi(z_j)\right)
    \)
    and
    \(  \dot{\gamma}\left(\sum_{j \in [k-1]} \phi(z_j)\right)
    \)
    are nonnegative and positive, respectively.
    Therefore, the Hessian of \(\TP\) is positive definite.

    Next, we check that $\TP$ is semi-coercive. Winding Definition~\ref{definition:monotone-functions}, this means that for all $c \in \mathbb{R}$, there exists $b \in \mathbb{R}$ such that
    \begin{equation}
      \label{equation: pre-coercive checking}
      \{\bfz \in \mathbb{R}^{k-1}: \TP(\bfz) \le c\}
      \subseteq
      \{\bfz \in \mathbb{R}^{k-1}: \min_{j \in [k-1]} z_{j}  \ge b\}.
    \end{equation}
    Note that \(\gamma\) and \(\phi\) are injective by our assumptions in Proposition~\ref{proposition:sufficient-condition-for-gamma-phi-losses-to-be-regular}.
    Next, note that
    \begin{align*}
      \textstyle
      \TP(\bfz) =
      \gamma\left(\sum_{j \in [k-1]} \phi(z_j)\right)
      \le c
      \iff &
             \sum_{j \in [k-1]} \phi(z_j)
             \le \gamma^{-1}(c)
      \\ \implies &
                    \max \left\{\phi(z_j): j \in [k-1]\right\}
                    \le \gamma^{-1}(c)
      \\ \iff &
                \phi\left(\min \left\{z_j: j \in [k-1]\right\}\right)
                \le \gamma^{-1}(c)
      \\ \iff &
                \phi(\min z)
                \le \gamma^{-1}(c)
      \\ \iff &
                \min z \ge \phi^{-1}(\gamma^{-1}(c)).
    \end{align*}
    If we let $\phi^{-1}(\gamma^{-1}(c))=:b$, then we get \cref{equation: pre-coercive checking}.
    Thus, we have proven that \(\calL^{\gamma,\phi}\)  is regular.
    Now, by the assumption that $\phi(t) \to 0$ as $t\to +\infty$, we immediately get that
    \(\calL^{\gamma,\phi}\) is totally regular since all truncations of \(\calL^{\gamma,\phi}\)
    are also Gamma-Phi losses with the same \(\gamma,\phi\).
  }
  {
    \textcolor{black}{
      The proof is a straightforward calculation involving the chain rule and properties of the functions \(\phi\) and \(\gamma\).
  For the reader's convenience, it is omitted here but is included in the arXiv version of this manuscript \citep{wang2023unified}.
  }
  }

    \section{Proof of Proposition~\ref{proposition:sum-of-totally-regular-loss-is-totally-regular}}
  We first prove that \(\calL + \calL'\) is regular.
  Let \(\TP\) and \(\TP'\) be the templates of \(\calL\) and \(\calL'\) respectively.
  Then \(\calL + \calL'\) has template \(\TP + \TP'\).
  All conditions in Definition~\ref{definition:regular-PERM-loss} clearly holds, except for the semi-coercive property which we now check.
  Let \(c,b \in \mathbb{R}\) be arbitrary and satisfy
  \[
      \{\bfz \in \mathbb{R}^{n}: \TP(\bfz)  \le c\}
      \subseteq
      \{\bfz \in \mathbb{R}^n: b \le \min_{j \in [n]} z_{j} \}.
  \]
  Since \(\TP\) and \(\TP'\) are both non-negative, we have
  \(\TP(\bfz) + \TP'(\bfz) \le c\) implies
  \(\TP(\bfz)  \le c\).
  Thus
\[
      \textstyle
      \{\bfz \in \mathbb{R}^{n}: \TP(\bfz) + \TP'(\bfz) \le c\}
      \subseteq
      \{\bfz \in \mathbb{R}^{n}: \TP(\bfz)  \le c\}
      \subseteq
      \{\bfz \in \mathbb{R}^n: b \le \min_{j \in [n]} z_{j} \},
    \]
    which shows that \(\TP + \TP'\) is semi-coercive, as desired.
    The fact that \(\calL+\calL'\) is \emph{totally} regular follows immediately from the definition of the truncation
in
  {Proposition}~\ref{proposition:truncation-of-a-PERM-loss}.
  That \(\lambda \calL\) is totally regular is completely straightforward and thus we omit its proof.

      \section{Properties of Regular PERM losses}\label{section:properties-of-regular-PERM-loss}

      In this section, we will prove several key properties of regular PERM losses which were introduced in {Definition}~\ref{definition:regular-PERM-loss}.
      Recall that
      a regular PERM loss has a template \(\TP\) such that
    \(\TP\) is nonnegative, twice differentiable, strictly convex, semi-coercive,
    and
      the gradient \(\nabla_{\TP}(\bfz) \prec \vzero\) is entrywise negative for all \(\bfz \in \mathbb{R}^{k-1}\).
      Section~\ref{section:semi-coercivity} focuses on consequences of the semi-coercivity condition.
      Sections~\ref{section:link-function} and \ref{section:geometry-of-loss-surface} focus on consequences of the other aforementioned conditions.
      Finally, Section~\ref{section:proof theorem totally regular PERM loss is CC} presents the proof of our main result Theorem~\ref{theorem: nested family of regular PERM losses are CC - exposition version}.
      \iftoggle{arxiv}{
      Below, we give an overview of which results explicitly use which conditions:
      \begin{compactenum}
        \item Lemma~\ref{lemma:semi-coercivity-of-template-implies-bounded-sublevel-set}
        ---
        Semi-coercivity of \(\TP\)

        \item
Proposition~\ref{proposition:semi-coercive-template-implies-coercive-conditional-risk} ---
Nonnegativity of \(\TP\).
\item
Proposition~\ref{proposition:strictly-convex-template-implies-strictly-convex-conditional-risk} --- Strict convexity of \(\TP\)
\item
Lemma~\ref{lemma:Az-matrix-is-nonsing-M-matrix}
---
\(\nabla_{\TP}(\cdot) \prec \vzero\)
\item
Lemma~\ref{lemma: useful lemma for proving convexity}
---
twice differentiability of \(\TP\)
      \end{compactenum}
}{}

      \subsection{Semi-coercive functions}\label{section:semi-coercivity}


      In this section, we study the properties of a PERM loss \(\calL\) whose template \(\TP\) is semi-coercive (Definition~\ref{definition:coercive-functions}).

  \begin{lemma}\label{lemma:semi-coercivity-of-template-implies-bounded-sublevel-set}
    Let $\calL : \mathbb{R}^{k} \to \mathbb{R}^{k}_+$ be a PERM loss whose template $\TP$ is semi-coercive. Let $\LL$ be the reduced form of $\calL$.
    Then, for all $\bmzeta \in \mathbb{R}^k$, the set
    $\{\bfz \in \mathbb{R}^{k-1} : \LL(\bfz) \preceq \bmzeta\}$ is bounded.
  \end{lemma}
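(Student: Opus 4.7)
The plan is to leverage the relative margin form $\LL_y(\bfz) = \TP(\ico_y \bfz)$ (Theorem~\ref{theorem:relative-margin-form} / Remark~\ref{remark:relative-margin-form-summarized}) and to apply the semi-coercivity of $\TP$ separately for each of the $k$ labels $y$, then combine the resulting per-label lower bounds with the explicit action of $\ico_y$ given in \cref{equation: action of rho i plus 1} to pin down each coordinate of $\bfz$ from both above and below.

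Concretely, suppose $\bfz \in \mathbb{R}^{k-1}$ satisfies $\LL(\bfz) \preceq \bmzeta$, i.e., $\TP(\ico_y \bfz) \le \zeta_y$ for every $y \in [k]$. By semi-coercivity of $\TP$ (Definition~\ref{definition:coercive-functions}), for each $y \in [k]$ there exists a constant $b_y \in \mathbb{R}$, depending only on $\zeta_y$, such that every entry of the vector $\ico_y \bfz \in \mathbb{R}^{k-1}$ is at least $b_y$. I would then read off what this says coordinate-wise.

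For $y = k$ the matrix $\ico_k = \mathbf{I}_{k-1}$, so the entries of $\ico_k \bfz$ are just $z_1,\ldots, z_{k-1}$, giving the lower bound $z_j \ge b_k$ for all $j \in [k-1]$. For $y \in [k-1]$, \cref{equation: action of rho i plus 1} says that $[\ico_y \bfz]_y = -z_y$, hence $-z_y \ge b_y$, i.e., $z_y \le -b_y$. Combining these two families of inequalities, every coordinate $z_y$ with $y \in [k-1]$ satisfies $b_k \le z_y \le -b_y$, so $\bfz$ lies in a bounded box. Since $\bfz$ was an arbitrary point of $\{\bfz : \LL(\bfz) \preceq \bmzeta\}$, the entire sublevel set is contained in this box, and boundedness follows.

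There is no real obstacle here beyond keeping track of the two cases ($y = k$ versus $y \in [k-1]$) in the definition of the matrix label code; the semi-coercivity assumption is tailor-made so that the bound on $[\ico_y \bfz]_y = -z_y$ for each $y \in [k-1]$ (an upper bound on $z_y$) complements the bound coming from $y = k$ (a lower bound on each $z_j$). Note that we do not need to use the richer information that the \emph{other} entries of $\ico_y \bfz$ for $y \in [k-1]$ are also bounded below; the single diagonal entry $-z_y$ is sufficient, which is why semi-coercivity (rather than full coercivity) of $\TP$ already suffices for the lemma.
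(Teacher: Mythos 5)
Your proof is correct. The two key facts you use --- that $\TP(\ico_y\bfz)\le\zeta_y$ together with semi-coercivity forces every entry of $\ico_y\bfz$ to be at least some $b_y$ depending only on $\zeta_y$, and that $[\ico_y\bfz]_y=-z_y$ for $y\in[k-1]$ while $\ico_k$ is the identity --- are exactly the facts the paper's proof rests on, so the underlying idea is the same. However, your execution is genuinely simpler and arguably cleaner: you apply the diagonal-entry observation to \emph{every} coordinate $y\in[k-1]$, which immediately traps $\bfz$ in the box $\prod_{y\in[k-1]}[b_k,-b_y]$. The paper instead rewrites the sublevel set as $\bigcap_{y}\ico_y(\{\bfz:\min\bfz\ge b_y\})$ and then runs an argmin/argmax case analysis on a generic point, bounding $|\min\bfz'|$ and $|\max\bfz'|$ separately via the characterization $\|\bfz'\|_\infty=\max\{|\max\bfz'|,|\min\bfz'|\}$ and constants $M_1,M_2$; your version dispenses with all of that machinery while using no information the paper does not also use. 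One small point worth stating explicitly in a final write-up: the box may be empty if $b_k>-b_y$ for some $y$, but the empty set is bounded, so the conclusion still holds.
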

  \begin{proof}
    Below, let \(\bfz \) denote an arbitrary element of \(\mathbb{R}^{k-1}\).
    In set-builder notations, we write \(\{\bfz : \mathtt{condition}\}\) to mean
    \(\{\bfz \in \mathbb{R}^{k-1} : \mathtt{condition}\}\).
    Now, by \cref{equation:relative-margin-form-summarized}, we have that
   \(\LL(\bfz) \preceq \bmzeta\)
   if and only if
\(\TP(\ico_y \bfz) \le \zeta_y\)
for all \(y \in [k]\).
    Thus, we get
    \begin{align}
      \textstyle
\{\bfz : \LL(\bfz) \preceq \bmzeta\}
=
\bigcap_{y \in [k]}\{\bfz : \TP(\ico_y \bfz) \le \zeta_y\}
           =\bigcap_{y \in [k]}\ico_y\left( \{\bfz : \TP(\bfz) \le \zeta_y\}\right)
\end{align}
For the last equality above, we used the fact that $\ico_{y} = \ico_{y}^{-1}$
(Lemma~\ref{lemma: rho is involutional})
and that
\[\{\bfz : \TP(\ico_y \bfz) \le \zeta_y\}
=
\ico_y^{-1}\left( \{\bfz : \TP(\bfz) \le \zeta_y\}\right)
\]
for all \(y \in [k]\).
Moreover, by the semi-coercivity assumption on \(\TP\) we have that for all \(y \in [k]\) there exists $b_y \in \mathbb{R}$ such that
$
\{\bfz : \TP(\bfz) \le \zeta_y\}
\subseteq
\{\bfz : \min \bfz \ge b_y\}
$.
Putting it all together, we have
    \begin{equation}
      \textstyle
\{\bfz : \LL(\bfz) \preceq \bmzeta\}
 \subseteq
      \bigcap_{y\in[k]}
      \ico_y(
\{\bfz : \min \bfz \ge b_y\}
) =: B
\label{equation:defining-equation-for-B}
\end{equation}
    Thus, it suffices to show that $B$ is bounded.
    Below, we prove this.

Since the empty set is bounded, we assume below that $B$ is nonempty.
    Let \(\bfz' \in B \subseteq \mathbb{R}^{k-1}\) be a fixed arbitrary point.
First, recall the infinity-norm: $\|\bfz' \|_\infty := \max \{ |z'_i| : i \in [k-1]\}$.
To prove that \(B\) is bounded, it suffices to show that there exists some number \(M\) such that \(\|\bfz'\|_{\infty} \le M\).
Note that we can express \(\|\bfz' \|_\infty\) alternatively as
\begin{equation}
  \label{equation: infinity norm alternative characterization}
  \textstyle
  \|\bfz' \|_\infty = \max \left\{ |\max \bfz'| , | \min \bfz'|\right\},
\end{equation}
where \(\max \bfz' := \max_{j \in [k-1]} z'_{j}\) and
\(\min \bfz' := \min_{j \in [k-1]} z'_{j}\).
Define $M_1 = \max \{ |b_y|: y \in [k] \}$ and $M_2 =
\max\{ |b_y + b_j| : y \in[k], j \in [k]\}$.
Finally, define
$M = \max \{M_1,M_2\}$.
We claim that
$\|\bfz'\|_\infty \le M$.

First, we note that $\min \bfz' \ge b_k$. To see this, first recall that $\ico_{k}$ is the identity (Definition~\ref{definition:matrix-label-code}).
From
\cref{equation:defining-equation-for-B}
we have
$
B \subseteq
\ico_{y}(\{\bfz : \min \bfz \ge b_y\})$ for each \(y \in [k]\).
Now, recall from Definition~\ref{definition:matrix-label-code} that
\(\ico_{k}\) is the identity matrix.
Thus in particular
$
B \subseteq
\{\bfz : \min \bfz \ge b_k\}$ and so $\min \bfz' \ge b_k$ holds.

Next, let $y \in \argmin_{j \in [k-1]} z'_{j}$ and thus \(z'_{y} = \min \bfz'\).
From \cref{equation:defining-equation-for-B}, we have $\bfz' \in \ico_{y}(\{\bfz: \min \bfz \ge b_{y}\})$.
Thus, $\ico_{y}\bfz' \in \{\bfz : \min \bfz \ge b_{y}\}$ and in particular, $[\ico_{y}\bfz']_{y} \ge b_{y}$.
Moreover, by \cref{equation: action of rho i plus 1}, we have $[\ico_{y} \bfz']_y = -z'_y = -\min \bfz'$, and thus $\min \bfz' \le - b_{y}$.
Combining with the result from the previous paragraph, we now have that $\min \bfz' \in [b_k, -b_{y}]$ and, in particular, $|\min \bfz'| \le M_1$.
Note that here, we used the fact that if \(\alpha,\beta,\gamma \in \mathbb{R}\) are such that \(\gamma \in [\alpha,\beta]\), then \(|\gamma| \le \max \{|\alpha|, |\beta|\}\).

Next, let $l \in \argmax_{j \in [k-1]} z_{j}'$ (and $y$ be still the same as before).
First consider the case when $l = y$.
Then $\bfz'$ is a constant vector and $\|\bfz'\|_\infty = |\min \bfz'|$ in which case
$\|\bfz'\|_\infty \le M_1 \le M$ holds.

Next, consider the case when $l \ne y$.
Then we have
$[\ico_{l}\bfz']_y = z'_l - z'_y = (\min \bfz') - (\max \bfz')$ by \cref{equation: action of rho i plus 1}.
Similar as in the previous case,
\(\bfz' \in
B \subseteq
\ico_{l}(\{\bfz : \min \bfz \ge b_l\})\) which implies that $[\ico_{l}\bfz']_y \ge b_{l}$.
Thus, $\max \bfz' \le \min \bfz' - b_{l} \le -(b_{y} + b_{l})$.
Furthermore, $\max \bfz' \ge \min \bfz' \ge b_k$.
Thus, we've shown that $\max \bfz' \in [b_k, - (b_{y} + b_{l})]$.
This implies that $|\max \bfz'| \le \max\{M_{1},M_{2}\}= M$, where again we used the fact that
\(\gamma \in [\alpha,\beta]\) implies \(|\gamma| \le \max \{|\alpha|, |\beta|\}\), for arbitrary \(\alpha,\beta,\gamma \in \mathbb{R}\).
To conclude, since $|\min \bfz'| \le M$, by
  Equation~\eqref{equation: infinity norm alternative characterization}, we have $\|\bfz'\|_\infty \le M$.
  \end{proof}

  \begin{proposition}\label{proposition:semi-coercive-template-implies-coercive-conditional-risk}
    Let \(\calL\) be a nonnegative PERM loss with template \(\TP\).
    If $\TP$ is semi-coercive  (Definition~\ref{definition:coercive-functions}),
    then
    $C^{\calL}_{\bfp}$ is coercive
    for all $\bfp \in \interior(\Delta^k)$.
  \end{proposition}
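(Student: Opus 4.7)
The plan is to reduce coercivity of the conditional risk $C^{\calL}_{\bfp}(\bfz) := \langle \bfp, \LL(\bfz)\rangle$ directly to Lemma~\ref{lemma:semi-coercivity-of-template-implies-bounded-sublevel-set}, exploiting both the nonnegativity of $\calL$ and the hypothesis that $\bfp$ lies in the \emph{interior} of $\Delta^k$, so that $p_y > 0$ for every $y \in [k]$.

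First, I would fix an arbitrary $c \in \mathbb{R}$ and consider the $c$-sublevel set $S_c := \{\bfz \in \mathbb{R}^{k-1} : C^{\calL}_{\bfp}(\bfz) \le c\}$. Since $\LL_j(\bfz) \ge 0$ for each $j$ (because $\calL$ is nonnegative and $\LL$ is its reduced form), for any $\bfz \in S_c$ and any fixed $y \in [k]$ we have
\[
p_y \LL_y(\bfz) \le \sum_{j \in [k]} p_j \LL_j(\bfz) = C^{\calL}_{\bfp}(\bfz) \le c.
\]
Because $p_y > 0$, this yields $\LL_y(\bfz) \le c / p_y$ for every $y$. Setting $\bmzeta := (c/p_1, c/p_2, \dots, c/p_k)^\top \in \mathbb{R}^k$, I obtain the inclusion
\[
S_c \subseteq \{\bfz \in \mathbb{R}^{k-1} : \LL(\bfz) \preceq \bmzeta\}.
\]

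The right-hand side is bounded by Lemma~\ref{lemma:semi-coercivity-of-template-implies-bounded-sublevel-set}, which applies precisely because $\TP$ is semi-coercive. Hence $S_c$ itself is bounded, so $C^{\calL}_{\bfp}$ is coercive, which is exactly the conclusion of Definition~\ref{definition:coercive-functions} applied to $C^{\calL}_{\bfp}$.

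There is no real obstacle here: the technical work has already been done in Lemma~\ref{lemma:semi-coercivity-of-template-implies-bounded-sublevel-set}, which handles the intersection-of-orthant-translates geometry induced by the matrix label code. The only subtlety worth flagging is the need for $\bfp \in \interior(\Delta^k)$ rather than merely $\bfp \in \Delta^k$: if any $p_y$ vanishes, the bound $\LL_y(\bfz) \le c / p_y$ fails and the reduction breaks. This also explains why the statement is restricted to the interior.
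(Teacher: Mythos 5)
Your proof is correct and follows essentially the same route as the paper's: drop all but one term of the nonnegative sum $\sum_y p_y \LL_y(\bfz)$ to get $\LL_y(\bfz) \le c/p_y$ for each $y$ (using $p_y>0$), then invoke Lemma~\ref{lemma:semi-coercivity-of-template-implies-bounded-sublevel-set} to bound the resulting set. Your phrasing via $\{\bfz : \LL(\bfz) \preceq \bmzeta\}$ matches the lemma's statement directly, which is the same set the paper writes as an intersection of sublevel sets of $\TP(\ico_y\cdot)$.
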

  \begin{proof}
    Let \(c \in \mathbb{R}\) be arbitrary and
    $S := \{ \bfz \in \mathbb{R}^{k-1}: C^{\calL}_{\bfp}(\bfz) =\langle \bfp, \LL(\bfz) \rangle \le c\}$ be the \(c\)-sublevel set.
    To show that \(C^{\calL}_{\bfp}\) is coercive, we show that \(S \subseteq \mathbb{R}^{k-1}\) is a bounded set.
    Observe that for all $\bfz \in S$ we have
    \(      \textstyle
      c \ge
      \langle \bfp, \LL(\bfz)\rangle
      =
      \sum_{y \in [k]} p_{y} \TP(\ico_{y} \bfz)
      \ge
      p_{y} \TP(\ico_{y} \bfz)
\)
for all \(y\in [k]\).
We remark that the preceding inequality crucially uses  \(\TP(\cdot) \ge 0\).
Thus,
    \(
      S \subseteq \bigcap_{y\in[k]}\{\bfz \in \mathbb{R}^{k-1}: \TP(\ico_{y} \bfz) \le c/p_{y}\}
    \), where
 the right hand side is a bounded set    (Lemma \ref{lemma:semi-coercivity-of-template-implies-bounded-sublevel-set}). Hence, $S$ is also bounded.
  \end{proof}

  \subsection{The link function}\label{section:link-function}

  In this section, we study the set of minimizers of the conditional risk of a PERM loss $\calL$, i.e., the set
  $\argmin_{\bfz \in \mathbb{R}^{k-1}} C^{\calL}_{\bfp}(\bfz)$.
  When $\calL$ is the multinomial cross entropy
  (Example~\ref{example:cross entropy}), this argmin is a singleton set for all $\bfp \in \interior(\Delta^{k})$ and the mapping from $\interior(\Delta^{k}) \ni \bfp$ to this unique minimizer recovers the logit function.

  For a general loss $\calL$, this mapping is sometimes referred to as the \emph{link function}~\citep{nowak2019general,williamson2016composite}.
  See Definition~\ref{definition: inverse link function} below.
  This section will study the properties of the link function, culminating in a sufficient condition for when the link function is a bijection (Proposition~\ref{proposition:link-function-is-bijection}).
\begin{proposition}
  \label{proposition:strictly-convex-template-implies-strictly-convex-conditional-risk}
  Let $\calL$ be a nonnegative PERM loss with template $\TP$.
  If $\TP$ is convex, then $C^{\calL}_{\bfp}$ is convex for all $\bfp \in \Delta^k$.
  Furthermore, if $\TP$ is strictly convex, then $C^{\calL}_{\bfp}$ is strictly convex for all $\bfp \in \interior(\Delta^k)$.
\end{proposition}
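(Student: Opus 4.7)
The plan is to leverage the relative-margin form (Theorem~\ref{theorem:appendix:relative-margin-form}) to express the conditional risk explicitly as
\[
C^{\calL}_{\bfp}(\bfz) \;=\; \langle \bfp, \LL(\bfz)\rangle \;=\; \sum_{y \in [k]} p_{y}\,\TP(\ico_{y}\bfz),
\]
so the problem reduces to studying a nonnegative linear combination of compositions of $\TP$ with the linear maps $\ico_{y}$. This is the decisive simplification: once the loss is written this way, both halves of the claim follow from two standard preservation-of-convexity facts applied termwise.

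For the first (convex) half, I would invoke the elementary fact that the composition $\TP \circ \ico_{y}$ of a convex function with a linear map is convex, and that a nonnegative linear combination of convex functions is convex. Since $\bfp \in \Delta^{k}$ has nonnegative entries and each summand $\TP(\ico_{y}\,\cdot)$ is convex, $C^{\calL}_{\bfp}$ is convex. No special structure of $\ico_{y}$ is needed here.

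For the second (strict convexity) half, the step that requires care is to verify that strict convexity is preserved under each composition $\TP \circ \ico_{y}$. This uses the fact that $\ico_{y}$ is invertible: by Lemma~\ref{lemma: rho is involutional}, $\ico_{y}^{2}$ is the identity, so $\ico_{y}$ is a bijection on $\mathbb{R}^{k-1}$. Then, for distinct $\bfz \ne \bfz'$ in $\mathbb{R}^{k-1}$, we have $\ico_{y}\bfz \ne \ico_{y}\bfz'$, and strict convexity of $\TP$ gives a strict inequality $\TP(\ico_{y}(\lambda\bfz + (1-\lambda)\bfz')) < \lambda\TP(\ico_{y}\bfz) + (1-\lambda)\TP(\ico_{y}\bfz')$ for $\lambda \in (0,1)$, using linearity of $\ico_{y}$. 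Hence each $\TP \circ \ico_{y}$ is strictly convex.

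Finally, since $\bfp \in \interior(\Delta^{k})$ means $p_{y} > 0$ for every $y \in [k]$, the sum $\sum_{y} p_{y}\,\TP(\ico_{y}\bfz)$ is a strictly positive combination of strictly convex functions, hence strictly convex. The main (and only mildly nontrivial) ingredient is the invertibility of $\ico_{y}$ from Lemma~\ref{lemma: rho is involutional}; everything else is a direct application of basic convex analysis. The nonnegativity hypothesis on $\calL$ plays no role here, so the argument is as clean as possible.
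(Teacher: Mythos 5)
Your proof is correct and follows essentially the same route as the paper's: both write $C^{\calL}_{\bfp}(\bfz)=\sum_{y}p_{y}\TP(\ico_{y}\bfz)$, use the invertibility of $\ico_{y}$ (Lemma~\ref{lemma: rho is involutional}) to get that each $\TP\circ\ico_{y}$ is (strictly) convex, and conclude by taking the convex combination. Your observation that the nonnegativity of $\calL$ is not needed here is also accurate.
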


\begin{proof}
  Recall that $C^{\calL}_{\bfp}(\bfz) = \sum_{y\in [k]} p_{y} \TP(\ico_{y} \bfz)$ where $\ico_{y}$ is an invertible matrix by Lemma~\ref{lemma: rho is involutional}.
  Thus, if $\TP$ is (strictly) convex, then $\bfz \mapsto \TP(\ico_{y} \bfz)$ is (strictly) convex for each $y \in [k]$.
  For each $\bfp \in \Delta^{k}$, $C^{\calL}_{\bfp}$ is a convex combination of convex function and is thus convex.
  Furthermore, if $\bfp \in \interior( \Delta^{k})$, then $C^{\calL}_{\bfp}$ is a convex combination of strictly convex function and is thus strictly convex.
  See \citet[Section 3.2.1]{boyd2004convex} for instance.
\end{proof}

An easy consequence of the above result is the following:

\begin{corollary}\label{corollary: inverse link map}
  Let $\bfp \in \interior(\Delta^k)$ be arbitrary and $\calL$ be a nonnegative PERM loss whose $\TP$ is semi-coercive.
  If $\TP$ is convex, then the infimum $\inf_{\bfz \in \mathbb{R}^{k-1}} C^{\calL}_{\bfp}(\bfz)$ is attained.
  Furthermore, if $\TP$ is strictly convex, then the infimum is attained by
  a \emph{unique} minimizer, i.e., $\argmin_{\bfz \in \mathbb{R}^{k-1}} C^{\calL}_{\bfp}(\bfz)$ is a singleton set.
\end{corollary}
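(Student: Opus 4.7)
The plan is to combine the two propositions cited immediately before the corollary with standard facts from convex analysis. First I would invoke Proposition~\ref{proposition:semi-coercive-template-implies-coercive-conditional-risk}, which gives that $C^{\calL}_{\bfp}$ is coercive whenever $\bfp \in \interior(\Delta^k)$ (note that this step only requires nonnegativity and semi-coercivity of $\TP$, both of which are assumed). Then I would invoke Proposition~\ref{proposition:strictly-convex-template-implies-strictly-convex-conditional-risk} to conclude that $C^{\calL}_{\bfp}$ is convex under the hypothesis that $\TP$ is convex, and strictly convex under the hypothesis that $\TP$ is strictly convex (again using $\bfp \in \interior(\Delta^k)$ in the strict case).

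Next, I would observe that $C^{\calL}_{\bfp} : \mathbb{R}^{k-1} \to \mathbb{R}$ is a finite-valued convex function on all of $\mathbb{R}^{k-1}$, hence automatically continuous (a standard fact, e.g.\ \citet[\S10]{rockafellar1970convex}). Combining continuity with coercivity yields existence of a minimizer by a routine argument: pick a minimizing sequence $\{\bfz_n\}$; because $\{\bfz_n\}$ is eventually contained in a sublevel set $\{\bfz : C^{\calL}_{\bfp}(\bfz) \le c\}$, which is bounded by coercivity (and closed by continuity, hence compact), we can extract a convergent subsequence whose limit $\bfz^{*}$ satisfies $C^{\calL}_{\bfp}(\bfz^{*}) = \inf C^{\calL}_{\bfp}$ by continuity.

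For the ``Furthermore'' part, I would use the standard fact that a strictly convex function on $\mathbb{R}^{k-1}$ can have at most one minimizer: if $\bfz^{*}$ and $\bfz^{**}$ were two distinct minimizers with common value $m$, then strict convexity would give $C^{\calL}_{\bfp}(\tfrac{1}{2}\bfz^{*} + \tfrac{1}{2}\bfz^{**}) < \tfrac{1}{2}m + \tfrac{1}{2}m = m$, contradicting that $m$ is the infimum. Combined with the existence part, this yields that $\argmin_{\bfz \in \mathbb{R}^{k-1}} C^{\calL}_{\bfp}(\bfz)$ is a singleton.

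There is no real obstacle here; the corollary is essentially a packaging of the two preceding propositions together with the textbook fact that a coercive continuous function attains its infimum and a strictly convex function has at most one minimizer. The only subtlety to flag is that both strict results require $\bfp \in \interior(\Delta^k)$ (so that all the coefficients $p_y$ in $C^{\calL}_{\bfp}(\bfz) = \sum_y p_y \TP(\ico_y \bfz)$ are strictly positive), but this is exactly the hypothesis of the corollary.
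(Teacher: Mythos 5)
Your proposal is correct and follows essentially the same route as the paper: both invoke Propositions~\ref{proposition:semi-coercive-template-implies-coercive-conditional-risk} and~\ref{proposition:strictly-convex-template-implies-strictly-convex-conditional-risk} to get coercivity and (strict) convexity of $C^{\calL}_{\bfp}$, then appeal to the standard facts that a continuous coercive function attains its infimum and a strictly convex function has at most one minimizer. The only difference is that you spell out the minimizing-sequence and midpoint arguments, whereas the paper simply cites \citet[Section 4.2]{boyd2004convex}.
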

\begin{proof}
  By Proposition~\ref{proposition:semi-coercive-template-implies-coercive-conditional-risk}, $C^{\calL}_{\bfp}$ is coercive.
  By Proposition~\ref{proposition:strictly-convex-template-implies-strictly-convex-conditional-risk}, $C^{\calL}_{\bfp}$ is strictly convex.
  By the Extreme Value Theorem, a continuous and coercive function has at least one global minimum.
  Furthermore, a {strictly} convex functions have at most one global minimum.
  For a reference of these result standards, see \citet[Section 4.2]{boyd2004convex}.
\end{proof}

In view of Corollary~\ref{corollary: inverse link map}, we define:
\begin{definition}\label{definition: inverse link function}
  Let $\calL$ be a PERM loss whose template $\TP$ is nonnegative, strictly convex and semi-coercive.
  Define the \emph{link function}
  $\gsm^{\calL}: \interior(\Delta^k) \to \mathbb{R}^{k-1}$ by letting $\gsm^{\calL}(\bfp)$ be the unique element of $\argmin_{\bfz \in \mathbb{R}^{k-1}} C^{\calL}_{\bfp}(\bfz)$.
\end{definition}

In this section, we give a sufficient condition on $\calL$ for $\gsm^{\calL}$ of Definition~\ref{definition: inverse link function} to be a \emph{bijection}.
We will need the concept of an \emph{M-matrix}, which is reviewed in Section~\ref{section:M-matrix}.


\begin{lemma}\label{lemma:Az-matrix-is-nonsing-M-matrix}
  Let $\calL: \mathbb{R}^{k} \to \mathbb{R}^k_+$ be a regular PERM loss (Definition~\ref{definition:regular-PERM-loss}) with reduced form \(\LL\) and template \(\TP\).
  For all $\bfz \in \mathbb{R}^{k-1}$, the $(k-1)\times(k-1)$ matrix
  \[
    \bfA(\bfz) :=
    \begin{bmatrix}
      \nabla_{\LL_1}(\bfz)
      & \cdots &
      \nabla_{\LL_{k-1}}(\bfz)
    \end{bmatrix}
  \]
  is a non-singular M-matrix with strictly positive diagonal elements.
  Furthermore, both $\bfA(\bfz)$ and \(\bfA(\bfz)^{\top}\) are strictly monotone.
\end{lemma}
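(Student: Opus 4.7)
The plan is to compute $\bfA(\bfz)$ explicitly entry-by-entry using the chain rule and the combinatorial structure of $\ico_j$, then to recognize the resulting sign pattern as that of a strictly diagonally dominant M-matrix. First, since $\LL_j(\bfz) = \TP(\ico_j\bfz)$ by \cref{equation:relative-margin-form-summarized}, the chain rule gives $\nabla_{\LL_j}(\bfz) = \ico_j^\top \bfu_j$ where $\bfu_j := \nabla_\TP(\ico_j \bfz)$. By regularity, $\bfu_j \prec \vzero$ entrywise. Reading off the columns of $\ico_j$ from Definition~\ref{definition:matrix-label-code} (for $j \in [k-1]$, column $l\neq j$ is $\bfe_l^{(k-1)}$ and column $j$ is $-\vone^{(k-1)}$), the transpose action yields
\[
[\bfA(\bfz)]_{lj} = [\ico_j^\top \bfu_j]_l = \begin{cases}[\bfu_j]_l < 0 & : l\neq j,\\ -\vone^\top \bfu_j > 0 & : l = j.\end{cases}
\]
Thus $\bfA(\bfz)$ has strictly positive diagonal and strictly negative off-diagonal entries, which is the required Z-matrix sign pattern.

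Next, I would verify strict column diagonal dominance. For each column $j$, the diagonal minus the sum of absolute values of off-diagonals equals
\[
(-\vone^\top \bfu_j) - \sum_{l\neq j} |[\bfu_j]_l| \;=\; -\sum_{m}[\bfu_j]_m + \sum_{l\neq j}[\bfu_j]_l \;=\; -[\bfu_j]_j \;>\; 0.
\]
By the standard characterization of non-singular M-matrices reviewed in Section~\ref{section:M-matrix} (a Z-matrix with positive diagonal entries that is strictly diagonally dominant is a non-singular M-matrix), $\bfA(\bfz)$ is therefore a non-singular M-matrix with strictly positive diagonal, proving the first claim. Since the Z-matrix property and strict diagonal dominance both transfer to the transpose (strict column dominance of $\bfA(\bfz)$ becomes strict row dominance of $\bfA(\bfz)^\top$), $\bfA(\bfz)^\top$ is also a non-singular M-matrix.

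For the strict monotonicity claim, the key additional observation is irreducibility: every off-diagonal entry of $\bfA(\bfz)$ is \emph{strictly} negative, so the directed graph associated with $\bfA(\bfz)$ is complete and hence strongly connected. By the classical Perron--Frobenius-type theorem for M-matrices (again appealing to Section~\ref{section:M-matrix}), the inverse of an irreducible non-singular M-matrix is entrywise strictly positive, which is the definition of strict monotonicity. Because irreducibility is preserved under transposition, the same conclusion applies to $\bfA(\bfz)^\top$. The only real technical point is lining up the precise equivalent characterizations of (strictly monotone, irreducible) non-singular M-matrices with the formulation used in the paper's Section~\ref{section:M-matrix}; the rest is a direct calculation from the definitions.
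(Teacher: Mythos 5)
Your computation of $\bfA(\bfz)$ is exactly the paper's: chain rule gives $\nabla_{\LL_j}(\bfz)=\ico_j^{\top}\nabla_{\TP}(\ico_j\bfz)$, the column structure of $\ico_j$ yields strictly negative off-diagonals and diagonal $-\vone^{\top}\bfu_j>0$, and the cancellation $-\sum_m [\bfu_j]_m+\sum_{l\ne j}[\bfu_j]_l=-[\bfu_j]_j>0$ gives strict (column) diagonal dominance, whence the non-singular M-matrix conclusion via Corollary~\ref{corollary: strictly diagonally dominant Z-matrix is nonsingular M-matrix}. Where you diverge is the last step: you obtain strict monotonicity by observing that all off-diagonals are nonzero, hence $\bfA(\bfz)$ is irreducible, and then invoking the Perron--Frobenius-type fact that an irreducible non-singular M-matrix has an entrywise strictly positive inverse. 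That argument is valid (a strictly positive inverse does imply the paper's definition of strict monotonicity, i.e., $\bfA\bfx\succ\vzero\Rightarrow\bfx\succ\vzero$), but it appeals to a heavier classical theorem that is not among the tools collected in Section~\ref{section:M-matrix}. The paper instead uses Lemma~\ref{lemma: strictly monotone lemma}: a non-singular M-matrix with positive diagonal is strictly monotone, proved in two lines directly from the Z-matrix sign pattern ($0<[\bfA\bfx]_i\le a_{ii}x_i$ once $\bfx\succeq\vzero$ is known from monotonicity), and then Corollary~\ref{corollary:strictly-monotone-transpose} for the transpose. Your route buys nothing extra here and requires importing an external result, so if you want the proof to be self-contained within the paper's framework, you should replace the irreducibility argument with the positive-diagonal lemma; otherwise the proof is complete. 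One minor imprecision: a strictly positive inverse is not ``the definition'' of strict monotonicity as stated in Definition~\ref{definition:M-matrix}, only a sufficient condition for it, so that implication should be spelled out.
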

\iftoggle{arxiv}{
  See Section~\ref{section:vector-calculus}
  for our notations on the derivative operator \(\nabla\).
}
  {
\textcolor{black}{
  While our usage of the operator \(\nabla\) is standard, for completeness we refer the reader to the ``Vector Calculus'' section in the appendix of the arXiv version of this manuscript \citep{wang2023unified}
  for our notations on the derivative operator \(\nabla\).}}
\vspace{1em}

The definition of a \emph{(strictly) monotone} matrix is discussed in Section~\ref{section:M-matrix}.
\begin{proof}
First, we compute the gradient of $\LL = (\LL_1,\dots, \LL_k) : \mathbb{R}^{k-1} \to \mathbb{R}^k$.
For each $y \in [k]$, we have by definition that $\LL_y(\bfz) = \TP(\ico_y \bfz)$ (\cref{equation:relative-margin-form-summarized}).
Thus, by the chain rule, we have
\begin{equation}
  \label{equation:loss-components-derivative-formula}
  \nabla_{\LL_y} (\bfz)
  =
  \ico_y^{\top}
\nabla_{\TP}(\ico_y \bfz).
\end{equation}
  Next, fix $y \in [k-1]$ and $\bfz \in \mathbb{R}^{k-1}$.
  Let $\bfw := \nabla_{\TP}(\ico_y \bfz)$.
  Then by the assumption that \(\nabla_{\TP}(\cdot) \prec \vzero\), we have \(\bfw\) is a entrywise negative column vector, i.e., \(\bfw \prec \vzero\).
  Moreover, by
  \cref{equation:loss-components-derivative-formula} we have
  \(
  \nabla_{\LL_y} (\bfz)
    =
    \ico_y^{\top}
    \bfw
  \).
  Thus, for each $j \in [k-1]$, we have $[\ico_y^{\top} \bfw]_j
  =([\ico_y]_{:j})^{\top} \bfw
  $.
  Recall from
Definition~\ref{definition:matrix-label-code} that
  \[[\ico_y]_{:j}
    =
  \begin{cases}
    \bfe_j^{(k-1)} &: j \ne y\\
    -\vone^{(k-1)} &: j = y
  \end{cases},
  \quad \mbox{ which implies that } \quad
 [\ico_y^{\top} \bfw]_j
    =
    \begin{cases}
      w_j &: j \ne y\\
      -\sum_{l \in [k-1]} w_l &: j = y.
    \end{cases}\]
  In particular, $[\ico_y^\top \bfw]_j \le 0$ for all $j\ne y$ which proves that $\bfA(\bfz)$ is a Z-matrix.
  Furthermore, note that the fact $\bfw < 0$ and $[\ico_y^\top \bfw]_{y} = -
  \sum_{l \in [k-1]} w_l$ implies that the diagonals of $\bfA(\bfz)$ are positive.
  Observe that $\ico_y^\top \bfw$ has the property that
  \[
    \textstyle
    |[\ico_y^\top \bfw]_{y}|
    =
    -\sum_{l \in [k-1]}
    w_l
  >
    -\sum_{l \in [k-1]: l \ne y}
    w_l
   =
    \sum_{l \in [k-1]: l \ne y}
    |[\ico_y^\top \bfw]_l|.\]
  Note that the strict inequality above follows from the fact that \(\bfw \prec \vzero\) and so in particular \(-w_{y } >0\).
  This proves that $\bfA(\bfz)$ is strictly diagonally dominant.
  By Corollary~\ref{corollary: strictly diagonally dominant Z-matrix is nonsingular M-matrix}, we have that $\bfA(\bfz)$ is a non-singular M-matrix.
  For the ``Furthermore'' part, we can apply Lemma~\ref{lemma: strictly monotone lemma} since the diagonal elements of $\bfA(\bfz)$ are positive.
  By Corollary~\ref{corollary:strictly-monotone-transpose}, \(\bfA(\bfz)^{\top}\) is also strictly monotone.
\end{proof}

\begin{lemma}\label{lemma: KKT structural lemma}
  Let $\calL: \mathbb{R}^{k} \to \mathbb{R}^k_+$ be a regular PERM loss.
  Let $\bfz \in \mathbb{R}^{k-1}$ and $\bfp \in \Delta^k$ be arbitrary. Then $\bfz$ minimizes $C^{\calL}_{\bfp}$ if and only if
  \begin{equation}
    \label{equation: KKT condition 9}
    -p_k\nabla_{\TP}(\bfz)
    =
    \bfA(\bfz)
    \begin{bmatrix}
      p_1
      &\cdots& p_{k-1}
    \end{bmatrix}^{\top}
  \end{equation}
  where \(\bfA(\bfz)\) is as in Lemma~\ref{lemma:Az-matrix-is-nonsing-M-matrix}.
  Furthermore, \(C^{\calL}_{\bfp}\) has a minimizer, then \(\bfp \in \interior(\Delta^k)\).
\end{lemma}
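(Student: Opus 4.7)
The plan is to derive the displayed identity as the first-order optimality condition for $C^{\calL}_{\bfp}$, using convexity to ensure that stationarity is equivalent to minimality. First, I would observe that by Proposition~\ref{proposition:strictly-convex-template-implies-strictly-convex-conditional-risk}, $C^{\calL}_{\bfp}$ is convex (strictly convex when $\bfp \in \interior(\Delta^k)$), and it is smooth because $\TP$ is twice differentiable. Hence $\bfz$ minimizes $C^{\calL}_{\bfp}$ if and only if $\nabla C^{\calL}_{\bfp}(\bfz) = \vzero$.

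Next, I would compute this gradient explicitly. Using \cref{equation:loss-components-derivative-formula}, namely $\nabla_{\LL_y}(\bfz) = \ico_y^{\top}\nabla_{\TP}(\ico_y \bfz)$, and recalling that $\ico_k = \mathbf{I}_{k-1}$ so $\nabla_{\LL_k}(\bfz) = \nabla_{\TP}(\bfz)$, I would split the sum as
\[
\nabla C^{\calL}_{\bfp}(\bfz) \;=\; \sum_{y=1}^{k-1} p_y\, \nabla_{\LL_y}(\bfz) \;+\; p_k \nabla_{\TP}(\bfz) \;=\; \bfA(\bfz)\,[p_1,\dots,p_{k-1}]^{\top} + p_k \nabla_{\TP}(\bfz),
\]
where the last equality uses the definition of $\bfA(\bfz)$ from Lemma~\ref{lemma:Az-matrix-is-nonsing-M-matrix} as the matrix whose columns are $\nabla_{\LL_1}(\bfz),\ldots,\nabla_{\LL_{k-1}}(\bfz)$. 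Setting the gradient to zero and rearranging gives \cref{equation: KKT condition 9}.

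For the ``Furthermore'' claim, I would argue by contradiction, leveraging the M-matrix structure of $\bfA(\bfz)$ and the sign of $\nabla_{\TP}$. Suppose $\bfz$ satisfies \cref{equation: KKT condition 9} but some coordinate of $\bfp$ vanishes. If $p_k = 0$, then the right-hand side is $\vzero$, and non-singularity of $\bfA(\bfz)$ (Lemma~\ref{lemma:Az-matrix-is-nonsing-M-matrix}) forces $[p_1,\ldots,p_{k-1}]^{\top} = \vzero$, contradicting $\bfp \in \Delta^k$. Otherwise $p_k > 0$; then since $\nabla_{\TP}(\bfz) \prec \vzero$, every coordinate of the left-hand side is strictly positive. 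Now suppose $p_y = 0$ for some $y \in [k-1]$, and examine the $y$-th coordinate of the right-hand side: because $\bfA(\bfz)$ is a Z-matrix (off-diagonal entries non-positive) and the $p_y$ term drops out, we get $[\bfA(\bfz)\tilde{\bfp}]_y = \sum_{j \neq y} [\bfA(\bfz)]_{yj}\, p_j \le 0$, contradicting strict positivity on the left. Hence all $p_y > 0$, i.e., $\bfp \in \interior(\Delta^k)$.

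I do not anticipate a serious obstacle: the forward direction is a straightforward chain-rule computation, and the ``Furthermore'' part reduces to a sign analysis that is essentially free once Lemma~\ref{lemma:Az-matrix-is-nonsing-M-matrix} (M-matrix structure and negativity of $\nabla_{\TP}$) is in hand. The only care needed is to split the sum to isolate the $y = k$ contribution (since $\ico_k$ is the identity and does not fit the pattern of the other columns) and to handle the $p_k = 0$ and $p_y = 0$ subcases separately in the contradiction argument.
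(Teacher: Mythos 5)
Your proposal is correct and follows essentially the same route as the paper: convexity of $C^{\calL}_{\bfp}$ (via Proposition~\ref{proposition:strictly-convex-template-implies-strictly-convex-conditional-risk}) makes stationarity equivalent to minimality, the gradient decomposes exactly as you write, and the $p_k=0$ case is handled by non-singularity of $\bfA(\bfz)$. The only cosmetic difference is in the final step: the paper invokes the strict monotonicity of $\bfA(\bfz)$ from Lemma~\ref{lemma:Az-matrix-is-nonsing-M-matrix} to conclude $p_y>0$ for $y\in[k-1]$, whereas you unpack the same conclusion directly from the Z-matrix sign structure; both are valid and rest on the same facts.
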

\begin{proof}
  Proposition~\ref{proposition:strictly-convex-template-implies-strictly-convex-conditional-risk} asserts that \(C^{\calL}_{\bfp}\) is convex.
  For a differentiable convex function, recall that the gradient-vanishing condition is necessary and sufficient for  optimality of unconstrained optimization~\citep{boyd2004convex}.
  Thus, \(\bfz\) minimizes \(C^{\calL}_{\bfp}\)
  iff
  \begin{equation}
    \label{equation: KKT condition}
    \textstyle
    \vzero
    =
    \nabla_{C^{\calL}_{\bfp}}(\bfz)
    = \sum_{j \in [k]} p_j
    \nabla_{\LL_j} (\bfz)
    =
    p_k
    \nabla_{\TP} (\bfz)
    +
    \bfA(\bfz)
    \begin{bmatrix}
      p_1 & \cdots & p_{k-1}
    \end{bmatrix}^{\top}.
  \end{equation}
  which is equivalent to
    \cref{equation: KKT condition 9}.

  Next, for the ``Furthermore'' part, first note that
  Lemma~\ref{lemma:Az-matrix-is-nonsing-M-matrix} says $\bfA(\bfz)$ is a non-singular M-matrix.
  Now, we first show that \(p_{k} \ne 0\). If $p_k = 0$, then Equation~\eqref{equation: KKT condition 9}
  reduces to \(
    \vzero
    =
    \bfA(\bfz)
    \begin{bmatrix}
      p_1 & \cdots & p_{k-1}
    \end{bmatrix}^{\top}\).
  Since $\bfA(\bfz)$ is non-singular, we must have $p_1 = \cdots = p_{k-1} = 0$ as well which implies that \(\bfp = \vzero\). But  this contradicts that $\bfp \in \Delta^k$.
  Thus, $p_k > 0$ and so $-p_k\nabla_{\TP}(\bfz)  \succ 0$.
  From Lemma \ref{lemma:Az-matrix-is-nonsing-M-matrix}, we have that $\bfA(\bfz)$ is strictly monotone.
  Combined with
    \cref{equation: KKT condition 9}, we can conclude  that $p_y  > 0$ for each \(y \in [k-1]\) as well.
\end{proof}


The ``Furthermore'' part of
  Lemma \ref{lemma: KKT structural lemma}
  immediately implies the following.

\begin{corollary}
  \label{corollary: boundary p has no argmin}
  If $\bfp \in \Delta^k \setminus \interior(\Delta^k)$, then $\argmin_{\bfz \in \mathbb{R}^{k-1}} C^{\calL}_{\bfp}(\bfz) = \emptyset$.
\end{corollary}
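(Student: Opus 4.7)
The plan is to observe that this corollary is the direct contrapositive of the ``Furthermore'' clause of Lemma~\ref{lemma: KKT structural lemma}. That clause asserts that whenever $C^{\calL}_{\bfp}$ admits a minimizer $\bfz \in \mathbb{R}^{k-1}$, the probability vector $\bfp$ must lie in the relative interior $\interior(\Delta^k)$. Since the hypothesis of the corollary excludes exactly this conclusion, there can be no minimizer.

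Concretely, I would argue by contradiction. Suppose $\bfp \in \Delta^k \setminus \interior(\Delta^k)$ yet there exists some $\bfz \in \argmin_{\bfz' \in \mathbb{R}^{k-1}} C^{\calL}_{\bfp}(\bfz')$. Then $C^{\calL}_{\bfp}$ has a minimizer, so the ``Furthermore'' part of Lemma~\ref{lemma: KKT structural lemma} (which in turn leverages the non-singular M-matrix structure of $\bfA(\bfz)$ from Lemma~\ref{lemma:Az-matrix-is-nonsing-M-matrix} together with the strict negativity $\nabla_{\TP}(\bfz) \prec \vzero$ guaranteed by regularity) forces $\bfp \in \interior(\Delta^k)$. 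This contradicts the standing assumption on $\bfp$.

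There is no real obstacle here; the substantive work was already done in Lemma~\ref{lemma: KKT structural lemma}, where one uses strict monotonicity of $\bfA(\bfz)$ to deduce that the stationarity condition $-p_k \nabla_{\TP}(\bfz) = \bfA(\bfz)[p_1,\ldots,p_{k-1}]^\top$ together with $p_k > 0$ implies $p_1,\ldots,p_{k-1} > 0$ as well. The corollary is simply the logical contrapositive, so the writeup should be no more than two or three lines.
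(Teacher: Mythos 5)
Your proposal is correct and is exactly the paper's argument: the paper derives this corollary as an immediate consequence (contrapositive) of the ``Furthermore'' part of Lemma~\ref{lemma: KKT structural lemma}, just as you do. Nothing further is needed.
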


\begin{proposition}\label{proposition:link-function-is-bijection}
  Let $\calL$ be a regular PERM loss.
  Then $\gsm^{\calL}$ (Definition~\ref{definition: inverse link function})  is a bijection.
\end{proposition}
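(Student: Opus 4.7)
The plan is to use the KKT-style characterization of minimizers of the conditional risk from Lemma~\ref{lemma: KKT structural lemma} together with the strict monotonicity of $\bfA(\bfz)^{-1}$ from Lemma~\ref{lemma:Az-matrix-is-nonsing-M-matrix}. Since the link function is already well-defined (existence and uniqueness of the minimizer follow from Corollary~\ref{corollary: inverse link map}), I only need to show that the map $\bfp \mapsto \gsm^{\calL}(\bfp)$ is both injective and surjective onto $\mathbb{R}^{k-1}$.

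\textbf{Surjectivity.} Given an arbitrary $\bfz \in \mathbb{R}^{k-1}$, I will explicitly build a preimage. Since $\calL$ is regular, $-\nabla_{\TP}(\bfz) \succ \vzero$. By Lemma~\ref{lemma:Az-matrix-is-nonsing-M-matrix}, $\bfA(\bfz)$ is a non-singular M-matrix and is strictly monotone, so the vector
\[
\tilde{\bfq} := \bfA(\bfz)^{-1}\bigl(-\nabla_{\TP}(\bfz)\bigr) \in \mathbb{R}^{k-1}
\]
is entrywise strictly positive. Set $p_k := 1/(1 + \vone^{\top}\tilde{\bfq}) \in (0,1)$ and $\tilde{\bfp} := p_k\, \tilde{\bfq}$, and let $\bfp := (\tilde{\bfp}^{\top}, p_k)^{\top}$. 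Then $\bfp \succ \vzero$ and $\vone^{\top}\tilde{\bfp} + p_k = p_k(1 + \vone^{\top}\tilde{\bfq}) = 1$, so $\bfp \in \interior(\Delta^k)$. Multiplying the definition of $\tilde{\bfq}$ by $p_k$ gives the identity
\[
\bfA(\bfz)\tilde{\bfp} = -p_k \nabla_{\TP}(\bfz),
\]
which is precisely \cref{equation: KKT condition 9}. By Lemma~\ref{lemma: KKT structural lemma}, this means $\bfz = \gsm^{\calL}(\bfp)$.

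\textbf{Injectivity.} Suppose $\gsm^{\calL}(\bfp) = \gsm^{\calL}(\bfp') = \bfz$ for some $\bfp, \bfp' \in \interior(\Delta^k)$. Applying Lemma~\ref{lemma: KKT structural lemma} to each and using that $p_k, p_k' > 0$ together with the invertibility of $\bfA(\bfz)$, I obtain
\[
\tilde{\bfp}/p_k \;=\; \bfA(\bfz)^{-1}\bigl(-\nabla_{\TP}(\bfz)\bigr) \;=\; \tilde{\bfp}'/p_k'.
\]
Call this common value $\tilde{\bfq}$. Imposing the simplex constraint $\vone^{\top}\tilde{\bfp} + p_k = 1$ forces $p_k(1 + \vone^{\top}\tilde{\bfq}) = 1$, and similarly for $p_k'$. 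Hence $p_k = p_k' = 1/(1+\vone^{\top}\tilde{\bfq})$, and consequently $\tilde{\bfp} = \tilde{\bfp}'$, so $\bfp = \bfp'$.

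The only potentially delicate point is confirming that the construction in the surjectivity step lands inside $\interior(\Delta^k)$ rather than merely on its boundary; this is secured by the \emph{strict} inequality $\nabla_{\TP}(\bfz) \prec \vzero$ together with the strict monotonicity of $\bfA(\bfz)^{-1}$, both of which are baked into the definition of a regular PERM loss (Definition~\ref{definition:regular-PERM-loss}). The remaining manipulations are purely algebraic consequences of the KKT identity in \cref{equation: KKT condition 9}.
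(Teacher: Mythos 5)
Your proof is correct and follows essentially the same route as the paper: both directions rest on the KKT identity of Lemma~\ref{lemma: KKT structural lemma} combined with the non-singularity and strict monotonicity of $\bfA(\bfz)$ from Lemma~\ref{lemma:Az-matrix-is-nonsing-M-matrix}, with the preimage in the surjectivity step constructed by the same normalization $p_k = 1/(1+\vone^{\top}\tilde{\bfq})$. If anything, your version is slightly cleaner in that you solve $\bfA(\bfz)\tilde{\bfq} = -\nabla_{\TP}(\bfz)$ directly, matching \cref{equation: KKT condition 9} exactly, whereas the paper's surjectivity step works with $\bfA(\bfz)^{\top}$ (harmless, since both the matrix and its transpose are strictly monotone, but less immediately aligned with the KKT equation).
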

\begin{proof}
  First, we prove that $\gsm$ is injective. Suppose that $\bfp,\bfq \in \interior(\Delta^k)$ are such that $\gsm^{\calL}(\bfp) = \gsm^{\calL}(\bfq) =: \bfz$.
  Then by
    \cref{equation: KKT condition 9}, we have that
  \begin{equation}
    \label{equation: KKT condition 4}
    -\nabla_{\TP}(\bfz)\bfA(\bfz)^{-1}
    =
    p_k^{-1}
    \begin{bmatrix}
      p_1 &\cdots& p_{k-1}
    \end{bmatrix}^{\top}
    =
    q_k^{-1}
    \begin{bmatrix}
      q_1 &\cdots& q_{k-1}
    \end{bmatrix}^{\top}.
  \end{equation}
  Thus, $(1-p_{k})/p_{k} =(p_{1}+\cdots + p_{k-1}) /p_{k} =
  (q_{1}+\cdots + q_{k-1}) /q_{k}
  = (1-q_{k})/q_{k}$ implies that $p_{k} = q_{k}$.
  Therefore, \cref{equation: KKT condition 4} implies that $p_{y} = q_{y}$ for each \(y \in [k-1]\) as well.
  Thus, \(\bfp = \bfq\) which proves that $\gsm$ is injective.

  Next, we prove that $\gsm$ is surjective. Pick $\bfz \in \mathbb{R}^{k-1}$.
  From Lemma~\ref{lemma:Az-matrix-is-nonsing-M-matrix}, we have that $\bfA(\bfz)$ is non-singular and strictly monotone.
  By non-singular-ness, there exists $\bfv \in \mathbb{R}^{k-1}$ such that $-\nabla_{\TP}(\bfz)
  =
  \bfA(\bfz)^{\top}
  \bfv
  $.
  Furthermore, since $-\nabla_{\TP}(\bfz) \succ 0$ and $\bfA(\bfz)$ is strictly monotone, we have $\bfv \succ 0$.
  Define $p_1,\dots, p_k$ by $p_k := (v_1+ \dots + v_{k-1}+1)^{-1}$ and
  $
    p_y := v_{y}p_k
  $
  for each \(y \in [k-1]\).
  Clearly, we have $\bfp \succ 0$. Furthermore,
  \[p_1 + p_2 + \dots + p_k =
    p_k(v_1 + \cdots + v_{k-1}+1) = 1.
\]
Thus, we have $\bfp \in \interior(\Delta^k)$.
By construction, $\bfz$ and $\bfp$ satisfy \cref{equation: KKT condition 9}. Thus $\gsm^{\calL}(\bfp) = \bfz$ follows from the definition of \(\gsm^{\calL}\).
\end{proof}

\begin{remark}
 Before proceeding, we remark that Proposition~\ref{proposition:link-function-is-bijection} gives theoretical support to the conjectural observation in \citet[Remark 3.1]{nowak2019general} regarding the injectivity of the \emph{link function}.
\end{remark}


\subsection{Geometry of the loss surface}\label{section:geometry-of-loss-surface}

Recall from Definition~\ref{definition:loss-surface} and
Theorem~\ref{theorem: multiclass classification calibration} that
the classification-calibration of the \emph{set} $\cran(\calL)$ implies the
classification-calibration of the \emph{loss} $\calL$.
In general, the set $\cran(\calL)$ may be difficult to compute.
In this section, we study the geometry of the set $\ran(\calL)$ when $\calL$ is a regular PERM loss which enables us to compute the convex hull $\cran(\calL)$ of $\ran(\calL)$.
One of the main tools is the mapping defined below:

\begin{corollary}
  \label{corollary: L is injective}
  Let $\calL$ be a regular PERM loss with reduced form $\LL$.
  Then $\LL: \mathbb{R}^{k-1} \to \mathbb{R}^k$ is injective.
\end{corollary}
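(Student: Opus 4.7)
The plan is to leverage the link function machinery developed in Section~\ref{section:link-function}, specifically the bijectivity of $\gsm^{\calL}$ established in Proposition~\ref{proposition:link-function-is-bijection}, together with the uniqueness of the conditional risk minimizer guaranteed by Corollary~\ref{corollary: inverse link map}.

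First I would fix two points $\bfz, \bfz' \in \mathbb{R}^{k-1}$ with $\LL(\bfz) = \LL(\bfz')$ and aim to show $\bfz = \bfz'$. The key observation is that equality of $\LL$-values forces equality of the conditional risk $C^{\calL}_{\bfp}(\cdot) = \langle \bfp, \LL(\cdot)\rangle$ at $\bfz$ and $\bfz'$ for \emph{every} choice of $\bfp \in \Delta^k$, since the inner product depends on $\LL(\bfz)$ and $\LL(\bfz')$ only through these vectors, not through $\bfz$ or $\bfz'$ separately.

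Next I would invoke Proposition~\ref{proposition:link-function-is-bijection}: since $\calL$ is a regular PERM loss, the link function $\gsm^{\calL} : \interior(\Delta^k) \to \mathbb{R}^{k-1}$ is a bijection. In particular, there exists some $\bfp \in \interior(\Delta^k)$ such that $\gsm^{\calL}(\bfp) = \bfz$, which by Definition~\ref{definition: inverse link function} means that $\bfz$ is the \emph{unique} element of $\argmin_{\bfw \in \mathbb{R}^{k-1}} C^{\calL}_{\bfp}(\bfw)$ (the uniqueness coming from the strict convexity in Proposition~\ref{proposition:strictly-convex-template-implies-strictly-convex-conditional-risk} together with the semi-coerciveness via Corollary~\ref{corollary: inverse link map}). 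But then $C^{\calL}_{\bfp}(\bfz') = C^{\calL}_{\bfp}(\bfz) = \inf_{\bfw} C^{\calL}_{\bfp}(\bfw)$, so $\bfz'$ is also a minimizer, and uniqueness forces $\bfz = \bfz'$.

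There is no real obstacle here: the entire content of the corollary is a one-line consequence of the stronger bijectivity statement for $\gsm^{\calL}$. If anything, the only thing to double-check is that the hypotheses of Corollary~\ref{corollary: inverse link map} (nonnegativity, strict convexity, semi-coerciveness of $\TP$) and of Proposition~\ref{proposition:link-function-is-bijection} (regularity of $\calL$) are subsumed by the assumption that $\calL$ is a regular PERM loss, which is immediate from Definition~\ref{definition:regular-PERM-loss}.
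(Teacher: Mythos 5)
Your proof is correct and is essentially identical to the paper's own argument: both use the surjectivity of $\gsm^{\calL}$ from Proposition~\ref{proposition:link-function-is-bijection} to find $\bfp \in \interior(\Delta^k)$ whose conditional risk is uniquely minimized at $\bfz$, and then conclude from $\LL(\bfz) = \LL(\bfz')$ that $\bfz'$ is also a minimizer, so the uniqueness guaranteed by Corollary~\ref{corollary: inverse link map} forces $\bfz = \bfz'$.
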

\begin{proof}
  Suppose that $\bfz, \bfw \in \mathbb{R}^{k-1}$ are such that $\LL(\bfz) = \LL(\bfw)$.
  By Proposition~\ref{proposition:link-function-is-bijection}, there exists $\bfp \in \interior(\Delta^k)$ such that $\gsm^{\calL}(\bfp) = \bfz$.
  Now, $\langle \bfp, \LL(\bfz)\rangle =
  \langle \bfp, \LL(\bfw)\rangle$ implies that both $\bfz,\bfw$ minimize $C^{\calL}_{\bfp}$.
  By Corollary \ref{corollary: inverse link map}, we have $\bfz = \bfw$ and so $\LL$ is injective.
\end{proof}

\begin{definition}\label{definition:loss-surface-foliation-F}
  Given a PERM loss $\calL$ with reduced form $\LL$, we define two functions $F$ and $G$ mapping from $\mathbb{R}^{k-1} \times \mathbb{R}$ to $\mathbb{R}^k$ by
  \(    F(\bfz,\lambda) = \LL(\bfz) + \lambda \vone
  \) and \(
    G(\bfz,t) = \LL(\bfz) + t\bfe^{(k)}_{k}
\), where \(\bfz \in \mathbb{R}^{k-1}\) and \(\lambda \in \mathbb{R}\).
For computing \(\nabla_{F}\),
 we view the tuple \(\bfz, \lambda\) as a (column) vector
\(
\begin{bmatrix}
  \bfz^{\top}
  &
  \lambda
\end{bmatrix}^{\top}
\).
Hence,
\(\nabla_{F} ( \bfz, \lambda ) =
\begin{bmatrix}
  \nabla_{\LL}(\bfz)^{\top} & \vone^{\top}
\end{bmatrix}^{\top}
\).
Likewise for the tuple \(\bfz, t\) and \(\nabla_{G}\).
\end{definition}

\begin{remark}
  In the context of Definition~\ref{definition:loss-surface-foliation-F}, it is perhaps more precise to write \(F(
\begin{bmatrix}
  \bfz^{\top}
  &
  \lambda
\end{bmatrix}^{\top}
  )\). However, this notation is cumbersome. Below, we will always use the tuple notation.
  The reason we discuss the vector notation is so that  \(\nabla F\) and \(\nabla G\) can be more simply treated as gradients of vector input-valued functions (rather than matrix input-valued).
\end{remark}

Below, we will study the properties of the two functions from Definition~\ref{definition:loss-surface-foliation-F}.

\subsubsection{Properties of the $F$ function}


The main result of this section is to prove that \(F\) is a homeomorphism
from \(\mathbb{R}^{k-1} \times \mathbb{R}\) to \(\mathbb{R}^{k}\) (Corollary~\ref{corollary:F-is-a-homeomorphism}).

\begin{lemma}\label{lemma: foliation map is injective}
  Let $\calL$ be a regular PERM loss with reduced form $\LL$ and $F$ be as in Definition~\ref{definition:loss-surface-foliation-F}. Then $F$ is injective. In other words,
  if $\LL(\bfz) + \lambda \vone = \LL(\bfw) + \mu \vone$ for some $\bfz,\bfw\in \mathbb{R}^{k-1}$ and $\lambda,\mu \in \mathbb{R}$, then both $\bfz = \bfw$ and $\lambda = \mu$.
\end{lemma}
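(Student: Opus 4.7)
The plan is to rewrite the hypothesis as $\LL(\bfz) - \LL(\bfw) = c \vone$ with $c := \mu - \lambda$, and then show $c = 0$, after which Corollary~\ref{corollary: L is injective} finishes the job. The main idea is to pair the displacement $c\vone$ along $\vone$ with the conditional risk: for any $\bfp \in \Delta^k$, since $\langle \bfp, \vone\rangle = 1$, we get
\[
C^{\calL}_{\bfp}(\bfz) - C^{\calL}_{\bfp}(\bfw) = \langle \bfp, \LL(\bfz) - \LL(\bfw)\rangle = c,
\]
independent of $\bfp$. Thus the sign of $c$ is simultaneously constrained by how $\bfz$ and $\bfw$ compare as candidates for minimizing $C^{\calL}_{\bfp}$.

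Next, I would invoke Proposition~\ref{proposition:link-function-is-bijection}, which asserts that the link map $\gsm^{\calL} : \interior(\Delta^k) \to \mathbb{R}^{k-1}$ is a bijection for regular PERM losses, so every point in $\mathbb{R}^{k-1}$ is the unique minimizer of $C^{\calL}_{\bfp}$ for a (unique) $\bfp \in \interior(\Delta^k)$. Choosing $\bfp := (\gsm^{\calL})^{-1}(\bfz)$ gives $C^{\calL}_{\bfp}(\bfz) \le C^{\calL}_{\bfp}(\bfw)$, hence $c \le 0$. Symmetrically, choosing $\bfp' := (\gsm^{\calL})^{-1}(\bfw)$ yields $c \ge 0$. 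Therefore $c = 0$, which immediately gives $\lambda = \mu$ and $\LL(\bfz) = \LL(\bfw)$.

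Finally, applying Corollary~\ref{corollary: L is injective} to $\LL(\bfz) = \LL(\bfw)$ concludes that $\bfz = \bfw$. The only substantive step is the "squeeze" on $c$ via the link map; everything else is bookkeeping. I do not foresee any real obstacle, provided that the reduced form $\LL$ and the conditional risk $C^{\calL}_{\bfp}(\bfz) = \langle \bfp, \LL(\bfz) \rangle$ are defined and behave as in the preceding sections (strict convexity yielding the unique minimizer used in the definition of $\gsm^{\calL}$ is the crucial input, but this is already packaged into Proposition~\ref{proposition:link-function-is-bijection}).
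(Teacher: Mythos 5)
Your proof is correct and relies on the same two ingredients as the paper's own argument: the surjectivity of the link function (Proposition~\ref{proposition:link-function-is-bijection}) and the injectivity of $\LL$ (Corollary~\ref{corollary: L is injective}). The paper phrases it as a contradiction (if $\lambda>\mu$ then $\bfw$ could never be a conditional-risk minimizer, contradicting surjectivity of $\gsm^{\calL}$), whereas you squeeze $c=\mu-\lambda$ between $0$ and $0$ by applying surjectivity twice; this is only a cosmetic reorganization of the same idea.
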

\begin{proof}
Let $\bfz,\bfw\in \mathbb{R}^{k-1}$ and $\lambda,\mu \in \mathbb{R}$ be as in the statement of the lemma.
Our goal is to show that both $\bfz = \bfw$ and $\lambda = \mu$.
  First, consider the case that $\bfz = \bfw$. Then $\LL(\bfz) = \LL(\bfw)$ and so $\lambda = \mu$.
  Thus, $\bfz = \bfw$ implies $\lambda = \mu$.

  Next, consider the case  that $\lambda = \mu$. Then we have $\LL(\bfz) = \LL(\bfw)$.
  By Corollary \ref{corollary: L is injective}, we have $\bfz = \bfw$.
  Therefore, $\lambda = \mu$ implies $\bfz = \bfw$.

  Thus, it only remains to show that if both $\bfz \ne \bfw$ and $\lambda \ne \mu$, then we have  a contradiction.
  Without loss of generality, suppose that $\lambda > \mu$.
  Then we have $\LL(\bfz) + (\lambda - \mu) \vone = \LL(\bfw)$.
  Thus, for all $\bfp \in \Delta^k$, we have
  \(    \langle \bfp , \LL(\bfw)\rangle
    =
    \langle \bfp , \LL(\bfz) + (\lambda - \mu) \vone \rangle
    >
    \langle \bfp , \LL(\bfz) \rangle.
\)
  Thus, there does not exist $\bfp \in \Delta^k$ such that $\bfw$ is  the minimizer of $C^{\calL}_{\bfp}$.
  But this contradicts since
  Proposition~\ref{proposition:link-function-is-bijection} implies that
  $\gsm$ is surjective.
\end{proof}

\begin{lemma}\label{lemma:gradient-of-F-is-nonsingular}
  Let $\calL$ be a regular PERM loss with reduced form $\LL$. Let $F$ be as in Definition~\ref{definition:loss-surface-foliation-F}.
  Then for all $(\bfz,\lambda) \in \mathbb{R}^{k-1} \times \mathbb{R}$, $\nabla_{F}(\bfz,\lambda)$ is non-singular. 
\end{lemma}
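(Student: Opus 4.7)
The plan is to write $\nabla_F(\bfz,\lambda)$ explicitly as a $k \times k$ matrix and then show its right kernel is trivial. Since $F_y(\bfz,\lambda) = \LL_y(\bfz) + \lambda$, the $y$-th column of $\nabla_F(\bfz,\lambda)$ is $(\nabla_{\LL_y}(\bfz);\, 1)$, so the top $(k-1)\times k$ block is $\nabla_{\LL}(\bfz) = [\bfA(\bfz) \mid \nabla_{\LL_k}(\bfz)]$ and the bottom row is $\vone^\top$. Because $\ico_k = \mathbf{I}_{k-1}$, the last column of that top block equals $\nabla_{\TP}(\bfz)$, while the first $k-1$ columns form the matrix $\bfA(\bfz)$ of Lemma~\ref{lemma:Az-matrix-is-nonsing-M-matrix}, which is non-singular.

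Next, I would take $\bfx = (x_1,\dots,x_k)^\top$ with $\nabla_F(\bfz,\lambda)\bfx = \vzero$ and read the two blocks off: this is equivalent to (i) $\sum_{y=1}^k x_y\,\nabla_{\LL_y}(\bfz) = \vzero$ together with (ii) $\sum_{y=1}^k x_y = 0$. Condition (i) says $\bfx$ lies in the right kernel of the $(k-1)\times k$ matrix $\nabla_{\LL}(\bfz)$; since $\bfA(\bfz)$ is non-singular, $\nabla_{\LL}(\bfz)$ has rank $k-1$ and that kernel is one-dimensional.

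To pin down a spanning vector of that one-dimensional kernel, I would invoke the surjectivity half of Proposition~\ref{proposition:link-function-is-bijection}: there exists $\bfp \in \interior(\Delta^k)$ with $\gsm^{\calL}(\bfp) = \bfz$, so $\bfz$ minimizes $C^{\calL}_{\bfp}$. The gradient-vanishing first-order condition written out in the proof of Lemma~\ref{lemma: KKT structural lemma} (see \cref{equation: KKT condition}) then yields $\sum_{y=1}^k p_y\,\nabla_{\LL_y}(\bfz) = \vzero$, so $\bfp$ spans $\ker \nabla_{\LL}(\bfz)$ and $\bfx = c\,\bfp$ for some $c \in \mathbb{R}$. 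Substituting into (ii) gives $0 = \sum_y x_y = c\sum_y p_y = c$, hence $\bfx = \vzero$ and $\nabla_F(\bfz,\lambda)$ is non-singular.

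The substantive work is not this short linear-algebra calculation but the availability of a probability vector $\bfp$ associated to an arbitrary $\bfz$, which is exactly the surjectivity of $\gsm^{\calL}$ from Proposition~\ref{proposition:link-function-is-bijection}; that in turn rests on the non-singular M-matrix structure of $\bfA(\bfz)$, its strict monotonicity, and the strict negativity of $\nabla_{\TP}$ built into the regularity of $\calL$. Once those structural ingredients are in hand, the kernel argument collapses the whole matter to the one-line identity $c\sum_y p_y = 0$.
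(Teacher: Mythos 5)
Your proof is correct, but it takes a genuinely different route from the paper's. The paper argues directly on the kernel equation: after normalizing so that $v_k \ge 0$, it rewrites $\nabla_{\LL}(\bfz)\bfv = \vzero$ as $-v_k\nabla_{\TP}(\bfz) = \bfA(\bfz)[v_1,\dots,v_{k-1}]^{\top}$, invokes the monotonicity of $\bfA(\bfz)$ to conclude $v_y \ge 0$ for all $y$, and then kills $\bfv$ with the constraint $\vone^{\top}\bfv = 0$ — a sign argument that stays entirely inside Lemma~\ref{lemma:Az-matrix-is-nonsing-M-matrix}. You instead observe that $\ker\nabla_{\LL}(\bfz)$ is one-dimensional (since $\bfA(\bfz)$ is an invertible $(k-1)\times(k-1)$ block of the $(k-1)\times k$ matrix $\nabla_{\LL}(\bfz)$) and identify a spanning vector via the surjectivity of $\gsm^{\calL}$ from Proposition~\ref{proposition:link-function-is-bijection}: the probability vector $\bfp$ with $\gsm^{\calL}(\bfp)=\bfz$ satisfies the first-order condition $\nabla_{\LL}(\bfz)\bfp=\vzero$, so any kernel element is $c\bfp$ and the all-ones row forces $c=\sum_y c\,p_y=0$. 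Both arguments ultimately rest on the same M-matrix structure (your appeal to surjectivity is itself proved via the strict monotonicity of $\bfA(\bfz)^{\top}$), and there is no circularity since Proposition~\ref{proposition:link-function-is-bijection} precedes this lemma and does not use $F$. What your version buys is a cleaner conceptual picture — the kernel of $\nabla_{\LL}(\bfz)$ is exactly the line through the inverse-link probability vector, which is transverse to the hyperplane $\vone^{\top}\bfx=0$ precisely because probabilities sum to one — at the cost of importing a heavier prior result; the paper's version is more elementary and self-contained.
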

  \begin{proof}
    Recall that
\(    \nabla_{\LL}(\bfz) \in \mathbb{R}^{(k-1)\times k}
\)
and
\(
     \vone^{(k)}  \in \mathbb{R}^{k}
\).
Thus,
  \(\nabla_{F} ( \bfz, \lambda ) =
\begin{bmatrix}
  \nabla_{\LL}(\bfz)^{\top} & (\vone^{(k)})^{\top}
\end{bmatrix}^{\top}
\) is a \(k\times k\) square matrix.
To show that it is non-singular, first pick $\bfv \in \mathbb{R}^k$ arbitrary.
    It suffices to check that if $\nabla_{F}(\bfz,\lambda) \bfv =\vzero$ then $\bfv =\vzero$.

 Towards this, first note that $\nabla_{F}(\bfz,\lambda)\bfv =\vzero$ can be equivalently stated as
 both
  \(
  \nabla_{\LL}(\bfz)\bfv = 0
  \) and
  \((\vone^{(k)})^{\top} \bfv = v_1 + \cdots + v_k = 0\).
  Replacing \(\bfv\) by \(-\bfv\) if necessary, we assume that \(v_k \ge 0\).
  Recall \(\bfA(\bfz)\) from
Lemma~\ref{lemma:Az-matrix-is-nonsing-M-matrix}.
  Then the identity \( \nabla_{\LL}(\bfz)\bfv = \vzero\) can be rewritten as
    \(
      -v_k \nabla_{\TP}(\bfz) =
      \bfA(\bfz)
\begin{bmatrix}
      v_1 &\cdots& v_{k-1}
    \end{bmatrix}^{\top}\).
  Since $\bfA(\bfz)$ is monotone and $-v_k \nabla_{\TP}(\bfz) \succeq 0$, we get that $v_{y} \ge 0$ for each \(y \in [k-1]\).
  Combined with the fact that
  \(\bfv^{\top} \vone = v_1 + \cdots + v_k = 0\), we have that $\bfv = \vzero$, as desired.
  \end{proof}

  Now, by applying the \iftoggle{arxiv}{inverse function theorem (Theorem~\ref{theorem:inverse-function-theorem}),}
  {\textcolor{black}{inverse function theorem\footnote{The inverse function theorem is a standard result in multivariate calculus. See the arXiv version of this manuscript \citep{wang2023unified} for the result's statement and a textbook reference.},}
} we immediately have the following.
  \begin{corollary}\label{lemma: F is a local diffeomorphism}
  Let $\calL$ be a regular PERM loss with reduced form $\LL$. Let $F$ be as in Definition~\ref{definition:loss-surface-foliation-F}.
    For all $(\bfz,\lambda) \in \mathbb{R}^{k-1} \times \mathbb{R}$, there exist open neighborhoods $U \ni (\bfz,\lambda)$ and $V \ni F(\bfz,\lambda)$ such that $F|_U: U \to V$ is a diffeomorphism.
  \end{corollary}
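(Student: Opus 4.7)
The plan is to apply the inverse function theorem directly; the statement is essentially the ``packaging'' of the non-singularity established in Lemma~\ref{lemma:gradient-of-F-is-nonsingular} into the standard local-diffeomorphism conclusion. The inverse function theorem requires two hypotheses at the point $(\bfz,\lambda)$: (i) $F$ is continuously differentiable on some open neighborhood of $(\bfz,\lambda)$, and (ii) the Jacobian $\nabla_F(\bfz,\lambda)$ is non-singular. Hypothesis (ii) is already Lemma~\ref{lemma:gradient-of-F-is-nonsingular}, so the only task is to verify (i).

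For (i), I would argue as follows. Since $\calL$ is regular, its template $\TP$ is twice differentiable (Definition~\ref{definition:regular-PERM-loss}). By the relative margin form recalled in Remark~\ref{remark:relative-margin-form-summarized}, the reduced form satisfies $\LL_y(\bfz) = \TP(\ico_y \bfz)$ for each $y \in [k]$, and each $\ico_y$ is a constant linear map. Composition with a constant linear map preserves twice differentiability, so each $\LL_y$ is $C^2$, hence $\LL : \mathbb{R}^{k-1} \to \mathbb{R}^k$ is $C^1$. Since $F(\bfz,\lambda) = \LL(\bfz) + \lambda\vone$ only appends a linear term in $\lambda$, the combined map $F : \mathbb{R}^{k-1} \times \mathbb{R} \to \mathbb{R}^k$ is $C^1$ on the whole of its domain, which certainly provides the required open neighborhood at any chosen $(\bfz,\lambda)$.

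With (i) and (ii) verified, the inverse function theorem produces open neighborhoods $U \ni (\bfz,\lambda)$ and $V \ni F(\bfz,\lambda)$ such that $F|_U : U \to V$ is a bijection whose inverse is also $C^1$, i.e., $F|_U$ is a diffeomorphism, which is exactly the desired conclusion. There is no real obstacle: the computational heart of the result is Lemma~\ref{lemma:gradient-of-F-is-nonsingular}, which in turn reduces via the formula $\nabla_F(\bfz,\lambda) = \begin{bmatrix} \nabla_\LL(\bfz)^\top & \vone^\top \end{bmatrix}^\top$ and the structure of $\bfA(\bfz)$ to the strict monotonicity property of M-matrices established in Lemma~\ref{lemma:Az-matrix-is-nonsing-M-matrix}. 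Thus the present corollary is correctly labeled as a corollary, and its proof amounts to one invocation of the inverse function theorem after recording smoothness.
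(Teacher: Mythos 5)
Your proof is correct and matches the paper's own argument: the paper likewise obtains this corollary by a single invocation of the inverse function theorem applied to the everywhere non-singular Jacobian from Lemma~\ref{lemma:gradient-of-F-is-nonsingular}. Your explicit verification of the $C^1$ smoothness of $F$ via twice differentiability of $\TP$ is a detail the paper leaves implicit, but it is the same route.
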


\begin{proposition}\label{proposition:F-is-a-bijection}
  Let $\calL$ be a regular PERM loss with reduced form $\LL$. Let $F$ be as in Definition~\ref{definition:loss-surface-foliation-F}.
  The map $F$ is a bijection.
\end{proposition}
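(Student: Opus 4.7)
Injectivity of $F$ was already established in Lemma~\ref{lemma: foliation map is injective}, so only surjectivity remains. The plan is to combine the local-diffeomorphism property from Corollary~\ref{lemma: F is a local diffeomorphism} with a properness argument: the image $F(\mathbb{R}^{k-1}\times\mathbb{R})$ is open (since images of open sets under a local diffeomorphism are open), and I will show it is also closed. Since the image is nonempty and $\mathbb{R}^k$ is connected, this forces $F(\mathbb{R}^{k-1}\times \mathbb{R}) = \mathbb{R}^k$, giving surjectivity.

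\textbf{Closedness of the image.} Take a sequence $(\bfz_n,\lambda_n) \in \mathbb{R}^{k-1}\times\mathbb{R}$ with $F(\bfz_n,\lambda_n) \to \mathbf{y}$; the goal is to extract a subsequence along which $(\bfz_n,\lambda_n)$ converges in $\mathbb{R}^{k-1}\times\mathbb{R}$, after which continuity of $F$ places $\mathbf{y}$ in the image. First, since $\TP \ge 0$ (Definition~\ref{definition:regular-PERM-loss}) implies $\LL(\bfz_n) \succeq \vzero$, the convergence $\LL_y(\bfz_n) + \lambda_n \to y_y$ forces $\lambda_n \le y_y + o(1)$ for every $y$, so $\lambda_n$ is bounded above. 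To rule out $\lambda_n \to -\infty$, suppose for contradiction it does; then $\TP(\ico_y\bfz_n) = \LL_y(\bfz_n) \to +\infty$ for every $y \in [k]$. The hypothesis $\nabla_\TP \prec \vzero$ makes $\TP$ decreasing (Proposition~\ref{proposition:monotone-function}), so if $\min_j [\mathbf{w}_n]_j$ were bounded below by some $b'$ along a subsequence, then $\mathbf{w}_n \succeq b'\vone$ and monotonicity would yield $\TP(\mathbf{w}_n) \le \TP(b'\vone) < \infty$, contradicting $\TP(\mathbf{w}_n)\to\infty$. Hence $\min_j [\ico_y \bfz_n]_j \to -\infty$ for every $y$. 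Applied to $y=k$ (where $\ico_k = \mathbf{I}_{k-1}$), this gives $\min_j z_{j,n} \to -\infty$; by pigeonhole, some fixed $y^* \in [k-1]$ achieves this minimum infinitely often, so along that subsequence $z_{y^*,n}\to -\infty$. But using the explicit action of $\ico_{y^*}$ in \cref{equation: action of rho i plus 1}, one has $[\ico_{y^*}\bfz_n]_{y^*}=-z_{y^*,n}\to +\infty$ and $[\ico_{y^*}\bfz_n]_j = z_{j,n}-z_{y^*,n}\ge 0$ for $j\ne y^*$, so $\min_j [\ico_{y^*}\bfz_n]_j \ge 0$, contradicting $\min_j [\ico_{y^*}\bfz_n]_j \to -\infty$. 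Therefore $\lambda_n$ is bounded; passing to a subsequence, $\lambda_n\to \lambda^*$. Then $\LL(\bfz_n) \to \mathbf{y}-\lambda^*\vone$ is bounded, so Lemma~\ref{lemma:semi-coercivity-of-template-implies-bounded-sublevel-set} yields a further subsequence with $\bfz_n\to\bfz^*$, and continuity gives $F(\bfz^*,\lambda^*)=\mathbf{y}$.

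\textbf{Main obstacle.} The main difficulty will be ruling out $\lambda_n\to-\infty$: this requires exploiting both the monotonicity of $\TP$ on translated orthants (a consequence of $\nabla_\TP \prec \vzero$ together with semi-coercivity of $\TP$) and the combinatorial structure of the matrix label code $\{\ico_y\}$ in order to derive a contradiction from all components of $\LL(\bfz_n)$ blowing up simultaneously. The cases $y=k$ and $y=y^*$ pull the minimum coordinate of $\bfz_n$ in opposite directions, and this opposition is precisely what forces $\lambda_n$ to remain bounded below.
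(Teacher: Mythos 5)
Your proof is correct and follows essentially the same route as the paper: injectivity from Lemma~\ref{lemma: foliation map is injective}, openness of the range from the local-diffeomorphism property, and closedness by bounding $\lambda_n$ below and then invoking Lemma~\ref{lemma:semi-coercivity-of-template-implies-bounded-sublevel-set} to extract a convergent subsequence of $\bfz_n$. The only (cosmetic) difference is that where the paper isolates the key fact as a helper lemma ($\min_y \LL_y(\bfz) \le \TP(\vzero)$, via permutation equivariance), you re-derive the same fact inline by observing that $\ico_{y^*}\bfz_n$ lands in the nonnegative orthant for the index $y^*$ minimizing $\bfz_n$.
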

\begin{proof}
  Lemma \ref{lemma: foliation map is injective} shows that $F$ is injective.
  To show that $F$ is surjective, we prove that $\ran(F)$ is both open and closed as a subset of $\mathbb{R}^k$.
  This would imply that $\ran(F) = \mathbb{R}^k$
  since the only subets of $\mathbb{R}^{k}$ that are both open and closed are $\emptyset$ and $\mathbb{R}^{k}$.
  Now, Lemma \ref{lemma: F is a local diffeomorphism} shows that $\ran(F)$ is an open subset of $\mathbb{R}^k$.
  It remains to prove that $\ran(F)$ is closed.

  To this end, consider a sequence $\{(\bfz^{(i)},\lambda^{(i)})\}_{i=1}^{\infty}$ such that $F(\bfz^{(i)},\lambda^{(i)}) = \LL(\bfz^{(i)}) + \lambda^{(i)} \vone$ converges to $\bmzeta \in \mathbb{R}^k$.
  Our goal is to show that $\bmzeta \in \ran(F)$.

 We begin by first picking $\epsilon > 0$.
 Let $\vone := \vone^{(k)}$ (without the superscript \((k)\)) denotes the $k$-dimensional vector of all ones. Since the sequence converges, there exists $M$ such that
 \[\bmzeta - \epsilon \vone \preceq \LL(\bfz^{(i)}) + \lambda^{(i)} \vone \preceq \bmzeta + \epsilon \vone\]
 for all $i \ge M$. Before proceeding, we prove a helper lemma.

  \begin{lemma}[Helper lemma]\label{lemma:helper-lemma-for-zero-vector}
    Let $\calL$ be a PERM loss with reduced form $\LL$ and template $\TP$ such that $\nabla_{\TP}(\cdot) \preceq 0$. Then for all $\bfz \in \mathbb{R}^{k-1}$, we have that
    $\min_{y \in [k-1]}\LL_{y}(\bfz) \le \TP(\vzero^{(k-1)})=: C$, where $\vzero^{(k-1)}$ is the $(k-1)$-dimensional all-zeros vector.
  \end{lemma}
  \begin{proof}[Proof of helper lemma]
    Let $\bfv \in \mathbb{R}^{k}$ be such that $\bmpi\bfv = \bfz$, where we recall from Definition~\ref{definition:relative-marginalization-mapping} that \(\bmpi =
    \begin{bmatrix}
                    -\mathbf{I}_{k-1}
      & \vone^{(k-1)}
    \end{bmatrix}
    \). As in Remark~\ref{remark:relative-margin-form-summarized}, we can take $\bfv =
    \begin{bmatrix}
      -\bfz^{\top} & 0 \end{bmatrix}^{\top}$.
    Next, let $y \in \argmax_{j\in[k-1]} v_{j}$.
Recall from Section~\ref{section:appendix:full-notations} that \(\min(\cdot)\) over vector-valued inputs denotes entrywise minimum.
    Then by Remark~\ref{remark:well-incentivized-loss-and-min-max-identity} and Proposition~\ref{proposition:sufficient-condition-for-well-incentivized-ness},
 we have that \(\min(\calL(\bfz)) = \calL_{y}(\bfv)\).
    Next by \cref{equation:relative-margin-form-summarized}  from Remark~\ref{remark:relative-margin-form-summarized}, we have
\(    [\calL(\bfv)]_y = \LL_{y}(\bfz).
\)

    Let $\bfw := \sigma_{y}(\bfv)$.
    (Recall from Section~\ref{section:appendix:full-notations} that $\sigma_{y} \in \mathtt{Sym}(k)$ is the transposition that swaps $k$ and $y$.)
    Note that by construction we have $k \in \argmax \bfw$.
    By permutation-equivariance, we have
\(    [\calL(\bfv)]_{y}
    =
    [\calL(\bfv)]_{\sigma_{y}(k)} = [\calL(\sigma_{y}(\bfv))]_{k}
    =
    [\calL(\bfw)]_{k}.
\)
    Again by \cref{equation:relative-margin-form-summarized}  from Remark~\ref{remark:relative-margin-form-summarized}, we have
\(    [\calL(\bfw)]_{k}
    =
    \TP(\bmpi\bfw)
    \).

    Since $k \in \argmax \bfw$, we have that $\bmpi\bfw \in \mathbb{R}^{k-1}_{\ge 0}$ belongs to the non-negative orthtant.
    In other words, $\bmpi\bfw \ge \vzero^{(k-1)}$.
    Finally, since $\nabla_{\TP}(\cdot) \preceq 0$, we have that $\TP(\bmpi\bfw) \le \TP(\vzero^{(k-1)})$.
  \end{proof}

  We now return to the proof of the proposition.
  Let $C$ be as in the helper lemma.
  Then
  \[
    \min(\LL(\bfz^{(i)}) + \lambda^{(i)} \vone)
  = \min(\LL(\bfz^{(i)})) + \lambda^{(i)}
  \le
  C + \lambda^{(i)}.
\]
Thus, we have
\(  \min(\bmzeta) - \epsilon
  \le
  C + \lambda^{(i)}
\)
and which implies that $-\lambda^{(i)} \le C + \epsilon - \min(\bmzeta) =: D$.
From this, we get that
\[
  \LL(\bfz^{(i)}) \preceq \bmzeta + \epsilon\vone -\lambda^{(i)}\vone
  \preceq
  \bmzeta + (\epsilon + D)\vone
\]
Thus, for all \(i \ge M\) we have $\bfz^{(i)} \in \{\bfz \in \mathbb{R}^{k-1} : \LL(\bfz) \preceq
\bmzeta + (\epsilon + D)\vone\}$
which is a bounded set by
Lemma~\ref{lemma:semi-coercivity-of-template-implies-bounded-sublevel-set}.
By passing to a subsequence, we may assume that $\bfz^{(i)}$ converges to some $\bfz^* \in \mathbb{R}^{k-1}$.
Thus, we have
$\lambda^{(i)} \vone$ converges to $\LL(\bfz^*) + \bmzeta$, which implies in particular that $\lambda^{(i)}$ converges to some $\lambda^*$.
Putting it all together, we have shown that $F(\bfz^{(i)},\lambda^{(i)})$ converges to $\bmzeta = F(\bfz^*,\lambda^*)$ and so $\ran(F)$ is closed.
\end{proof}

\begin{corollary}\label{corollary:F-is-a-homeomorphism}
  Let $\calL$ be a regular PERM loss with reduced form $\LL$. Let $F$ be as in Definition~\ref{definition:loss-surface-foliation-F}.
  The map $F$ is a diffeomorphism, i.e., $F$ is a differentiable bijection with a differentiable inverse.
  In particular, $F$ is a homeomorphism.
\end{corollary}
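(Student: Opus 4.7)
The plan is to combine the global bijectivity of $F$ established in Proposition~\ref{proposition:F-is-a-bijection} with the local diffeomorphism property from Corollary~\ref{lemma: F is a local diffeomorphism}. Since $F$ is already built from the smooth function $\LL$ (which is smooth because the template $\TP$ is twice differentiable), together with the linear map $\lambda\mapsto \lambda\vone$, the forward direction $F$ is differentiable by construction. It remains only to establish that the inverse map $F^{-1}:\mathbb{R}^{k}\to \mathbb{R}^{k-1}\times \mathbb{R}$ exists as a function and is differentiable.

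First I would observe that $F^{-1}$ exists as a well-defined set-theoretic map: Proposition~\ref{proposition:F-is-a-bijection} gives bijectivity, so for every $\bmzeta\in\mathbb{R}^{k}$ there is a unique pair $(\bfz,\lambda)$ with $F(\bfz,\lambda)=\bmzeta$. Next, fix an arbitrary point $\bmzeta_{0}=F(\bfz_{0},\lambda_{0})$. By Corollary~\ref{lemma: F is a local diffeomorphism}, there exist open neighborhoods $U\ni(\bfz_{0},\lambda_{0})$ and $V\ni\bmzeta_{0}$ such that the restriction $F|_{U}:U\to V$ is a diffeomorphism. The local inverse $(F|_{U})^{-1}:V\to U$ is therefore smooth. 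The key point is that, because $F$ is globally injective, the local inverse $(F|_{U})^{-1}$ must agree with the global inverse $F^{-1}$ on $V$: for any $\bmzeta\in V$, the preimage under $F$ is a single point, which lies in $U$, and this point is simultaneously $(F|_{U})^{-1}(\bmzeta)$ and $F^{-1}(\bmzeta)$.

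Consequently, $F^{-1}$ restricted to the open neighborhood $V$ of $\bmzeta_{0}$ coincides with a smooth map, so $F^{-1}$ is smooth at $\bmzeta_{0}$. Since $\bmzeta_{0}$ was arbitrary, $F^{-1}$ is smooth on all of $\mathbb{R}^{k}$. Combined with the smoothness of $F$ itself, this shows that $F$ is a diffeomorphism. A diffeomorphism is in particular a homeomorphism, yielding the ``In particular'' part of the statement.

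I do not anticipate any real obstacle here: all substantive work has been done in Proposition~\ref{proposition:F-is-a-bijection} (the hard part, which used injectivity via Lemma~\ref{lemma: foliation map is injective}, openness of $\ran(F)$ via the inverse function theorem, and closedness of $\ran(F)$ via the semi-coercivity-based sublevel-set bound from Lemma~\ref{lemma:semi-coercivity-of-template-implies-bounded-sublevel-set}) and in Lemma~\ref{lemma:gradient-of-F-is-nonsingular}. The only conceptual ingredient left is the standard fact that a bijective local diffeomorphism is a global diffeomorphism, which reduces to patching local smooth inverses that must agree by injectivity.
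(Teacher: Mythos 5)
Your proposal is correct and follows essentially the same route as the paper: the paper simply cites \citet[Proposition 4.6 (f)]{lee2013smooth} for the fact that a bijective local diffeomorphism is a global diffeomorphism, and combines Proposition~\ref{proposition:F-is-a-bijection} with Corollary~\ref{lemma: F is a local diffeomorphism}. The only difference is that you spell out the proof of that cited textbook fact (patching local inverses, which must agree with the global inverse by injectivity) rather than citing it.
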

\begin{proof}
  \citet[Proposition 4.6 (f)]{lee2013smooth} states that every bijective local diffeomorphism is a (global) diffeomorphism.
  Thus, the result follows in view of the facts that
  $F$ is a bijection (Proposition~\ref{proposition:F-is-a-bijection})
  and that $F$ is a local diffeomorphism (Corollary~\ref{lemma: F is a local diffeomorphism}).
\end{proof}

\begin{proposition}\label{proposition: concavity of restriction to lines}
  Let $\calL$ be a regular PERM loss with reduced form $\LL$. Let $F$ be as in Definition~\ref{definition:loss-surface-foliation-F}.
  Consider arbitrary $\bfv,\bfx \in \mathbb{R}^k$ and $t \in \mathbb{R}$. Define\footnote{
Note that by Corollary~\ref{corollary:F-is-a-homeomorphism}, such \(\alpha(t)\) and \(\beta(t)\) exist and are unique with respect to this property.
  } $\alpha(t) \in \mathbb{R}^{k-1}$ and $\beta(t) \in \mathbb{R}$ to be such that
$t\bfv + \bfx = F(\alpha(t),\beta(t)) = \LL(\alpha(t)) + \beta(t) \vone^{(k)}$.
Then for all $t \in \mathbb{R}$, we have
\begin{enumerate}
  \item
$\alpha : \mathbb{R} \to \mathbb{R}^{k-1}$ and $\beta : \mathbb{R} \to \mathbb{R}$ are differentiable,
\item If $\bfv \succ 0$, then $\beta'(t) =\frac{d\beta}{dt}(t) > 0$,
\item $\beta$ is concave, i.e., \(\beta''(t)=\frac{d^{2} \beta}{d t^{2}}(t) \le 0\).
\end{enumerate}
\end{proposition}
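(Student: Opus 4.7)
The plan is to reduce all three parts of the proposition to a single variational representation of $\beta(t)$. Part~(1) is essentially free: by Corollary~\ref{corollary:F-is-a-homeomorphism}, $F:\mathbb{R}^{k-1}\times\mathbb{R}\to\mathbb{R}^{k}$ is a $C^{1}$ diffeomorphism, and $(\alpha(t),\beta(t))=F^{-1}(t\bfv+\bfx)$ is therefore a smooth function of $t$.

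The central step is to establish the identity
\begin{equation*}
  \beta(t) \;=\; \inf_{\bfp\in\interior(\Delta^{k})}\Bigl[\,\langle\bfp,\,t\bfv+\bfx\rangle \,-\, \min_{\bfz\in\mathbb{R}^{k-1}}C^{\calL}_{\bfp}(\bfz)\,\Bigr],
\end{equation*}
where $C^{\calL}_{\bfp}(\bfz):=\langle\bfp,\LL(\bfz)\rangle$. Pairing any $\bfp\in\interior(\Delta^{k})$ with the defining identity $t\bfv+\bfx=\LL(\alpha(t))+\beta(t)\vone^{(k)}$ and using $\langle\bfp,\vone^{(k)}\rangle=1$ yields $\langle\bfp,t\bfv+\bfx\rangle-\beta(t)=\langle\bfp,\LL(\alpha(t))\rangle\ge\min_{\bfz}C^{\calL}_{\bfp}(\bfz)$, giving the ``$\le$'' direction. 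For equality, I set $\bfp^{*}(t):=(\gsm^{\calL})^{-1}(\alpha(t))$; by Proposition~\ref{proposition:link-function-is-bijection} this is a well-defined interior point of $\Delta^{k}$, and by definition of the link function $\alpha(t)$ is the unique minimizer of $C^{\calL}_{\bfp^{*}(t)}$, so the bound is tight at $\bfp^{*}(t)$. Claim~(3) is then immediate: $\beta$ is the pointwise infimum of a family of functions affine (hence concave) in $t$, so $\beta$ is concave.

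For claim~(2), I would differentiate $t\bfv+\bfx=\LL(\alpha(t))+\beta(t)\vone^{(k)}$ componentwise to get $v_{y}=\langle\nabla_{\LL_{y}}(\alpha(t)),\alpha'(t)\rangle+\beta'(t)$ for each $y\in[k]$, multiply by $p^{*}_{y}(t)$, and sum over $y$. The sum over $y$ of the first term vanishes by the first-order optimality $\sum_{y\in[k]}p^{*}_{y}(t)\,\nabla_{\LL_{y}}(\alpha(t))=\vzero$ at the minimizer $\alpha(t)$ of $C^{\calL}_{\bfp^{*}(t)}$ (this is the gradient-vanishing identity used in Lemma~\ref{lemma: KKT structural lemma}), leaving $\langle\bfp^{*}(t),\bfv\rangle=\beta'(t)$. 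Since $\bfp^{*}(t)\succ\vzero^{(k)}$ and $\bfv\succ\vzero^{(k)}$ by hypothesis, we conclude $\beta'(t)>0$. The only subtlety I anticipate is verifying that the tight $\bfp^{*}(t)$ truly lies in $\interior(\Delta^{k})$ rather than on its boundary; this is built into Proposition~\ref{proposition:link-function-is-bijection}, which states that the link function $\gsm^{\calL}$ maps $\interior(\Delta^{k})$ bijectively onto $\mathbb{R}^{k-1}$. Everything else reduces to bookkeeping with the first-order condition.
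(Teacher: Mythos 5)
Your proof is correct, but it takes a genuinely different route from the paper's. The paper argues by implicit differentiation: it differentiates $v_{y}t+x_{y}=\LL_{y}(\alpha(t))+\beta(t)$ once and twice, and at each stage invokes a pointwise sign lemma (Lemma~\ref{lemma: useful lemma for proving convexity}, itself a consequence of the strict monotonicity of the M-matrix $\bfA(\bfz)$ from Lemma~\ref{lemma:Az-matrix-is-nonsing-M-matrix}) to produce a \emph{single} index $y$ with $\alpha'(t)^{\top}\nabla_{\LL_{y}}(\alpha(t))\le 0$ (resp.\ $\alpha''(t)^{\top}\nabla_{\LL_{y}}(\alpha(t))\ge 0$), from which $\beta'(t)>0$ and, using convexity of $\LL_{y}$, $\beta''(t)\le 0$ follow. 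You instead establish the variational identity $\beta(t)=\inf_{\bfp\in\interior(\Delta^{k})}\bigl[\langle\bfp,t\bfv+\bfx\rangle-\min_{\bfz}C^{\calL}_{\bfp}(\bfz)\bigr]$, with the infimum attained at $\bfp^{*}(t)=(\gsm^{\calL})^{-1}(\alpha(t))$; this is legitimate since Proposition~\ref{proposition:link-function-is-bijection} (surjectivity of the link) and Corollary~\ref{corollary: inverse link map} (attainment of the inner minimum for interior $\bfp$) are both established before this point, so there is no circularity. Your route buys two things: concavity of $\beta$ comes for free as a pointwise infimum of affine functions of $t$, with no second implicit differentiation needed, and part~(2) yields the sharper closed form $\beta'(t)=\langle\bfp^{*}(t),\bfv\rangle$ via the first-order condition $\sum_{y}p^{*}_{y}(t)\nabla_{\LL_{y}}(\alpha(t))=\vzero$ from Lemma~\ref{lemma: KKT structural lemma}, rather than merely a sign. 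The paper's approach stays entirely inside the implicit-function framework and avoids appealing to the link-function machinery; yours exposes the Bayes-risk/support-function structure underlying $\cran^{\bullet}(\LL)$ more directly. The one point worth stating explicitly in a write-up is that the inner $\min_{\bfz}C^{\calL}_{\bfp}(\bfz)$ is finite and attained for every interior $\bfp$ (Corollary~\ref{corollary: inverse link map}), so each member of the family being infimized is a genuine affine function of $t$.
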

\begin{remark}
  We note that \(\alpha\) and \(\beta\) in
  Proposition~\ref{proposition: concavity of restriction to lines} implicitly depend on
  \(\bfx\) and \(\bfv\).
\end{remark}
\begin{proof}
  Below, let \(\vone := \vone^{(k)}\) be the \(k\)-dimensional vector of all ones.
  To prove the first part, first note that $(\alpha(t),\beta(t)) = F^{-1}(t\bfv + \bfx)$.  Hence, $\alpha$ and $\beta$ are differentiable.

  Next, we prove the second part.
Let \(y \in [k]\) be arbitrary, to be specified later.
Now, the \(y\)-th coordinate of \(t\bfv + \bfx = \LL(\alpha(t)) + \beta(t) \vone\) is
\begin{equation}
  \label{equation: implicit diff}
  v_y t + x_y = \LL_y(\alpha(t)) + \beta(t).
\end{equation}
Differentiating \eqref{equation: implicit diff} on both sides with respect to \(t\) and applying the
  \iftoggle{arxiv}{chain rule (Theorem~\ref{theorem:chain-rule}),}
  {\textcolor{black}{chain rule\footnote{The chain rule is a standard result in multivariate calculus. See the arXiv version of this manuscript \citep{wang2023unified} for the result's statement and a textbook reference.},}} we get
\begin{equation}
  \label{equation: implicit diff 2}
  v_{y} =  \alpha'(t)^{\top}\nabla_{\LL_{y}}(\alpha(t))+ \beta' (t).\end{equation}
For convenience, we adopt the notation \(f'(\cdot) := (\nabla_{f}(\cdot))^{\top}\) for functions  with a scalar input.
\iftoggle{arxiv}{
See Remark~\ref{remark:time-derivatives}.
}{}
Now, we claim that $\alpha'(t)^{\top}\nabla_{\LL_y}(\alpha(t)) \le 0$ for some \(y \in [k]\). In fact, we prove this claim by proving a slightly more general statement that will be used again later.

  \begin{lemma}\label{lemma: useful lemma for proving convexity}
  Let $\calL$ be a regular PERM loss with reduced form $\LL$.
    Let $\bfz,\bfw \in \mathbb{R}^{k-1}$ be arbitrary.
    Then there exists $y \in [k]$ such that $\bfw^{\top}\nabla_{\LL_y}(\bfz) \le 0$.
  \end{lemma}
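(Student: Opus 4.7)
The plan is to reduce the question to finding an index $y \in [k]$ such that the vector $\ico_y \bfw$ is componentwise nonnegative, because then the regularity of $\calL$ (which gives $\nabla_\TP \prec \vzero$ everywhere) forces the desired inner product to be nonpositive.

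First I would rewrite the gradient using \cref{equation:loss-components-derivative-formula}: for every $y\in[k]$,
\[
\bfw^{\top}\nabla_{\LL_y}(\bfz)
= \bfw^{\top}\ico_y^{\top}\nabla_\TP(\ico_y\bfz)
= (\ico_y\bfw)^{\top}\nabla_\TP(\ico_y\bfz).
\]
Because $\calL$ is regular ({Definition}~\ref{definition:regular-PERM-loss}), $\nabla_\TP(\ico_y\bfz)\prec\vzero$. Hence it suffices to exhibit a single $y \in [k]$ for which $\ico_y\bfw \succeq \vzero$.

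The next step lifts $\bfw$ back to $\mathbb{R}^{k}$ via $\bmpi$. Concretely, set $\bfv := (-w_1,\dots,-w_{k-1},0)^{\top}\in\mathbb{R}^{k}$, so that $\bmpi\bfv = \bfw$ directly from Definition~\ref{definition:relative-marginalization-mapping}. Now choose $y \in \argmax_{j\in[k]} v_j$; then $v_y - v_j \ge 0$ for every $j \in [k]$. By Lemma~\ref{lemma: rho pi sigma relation}, the components of $\ico_y\bfw = \ico_y\bmpi\bfv$ are precisely $v_y - v_j$ for $j\ne y$ together with $v_y - v_k$ in coordinate $y$, all of which are nonnegative. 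Therefore $\ico_y\bfw \succeq \vzero$, and the inner product of a componentwise nonnegative vector with the entrywise negative vector $\nabla_\TP(\ico_y\bfz)$ is $\le 0$, completing the argument.

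I do not anticipate a serious obstacle: the construction essentially reduces the claim to the ``positive normal direction'' built into the matrix label code, and all necessary ingredients (the chain-rule formula \eqref{equation:loss-components-derivative-formula}, the explicit form of $\ico_y\bmpi\bfv$ from Lemma~\ref{lemma: rho pi sigma relation}, and the strict negativity of $\nabla_\TP$ from Definition~\ref{definition:regular-PERM-loss}) are already available. The only mild subtlety is choosing a convenient lift $\bfv$ of $\bfw$; using the explicit $\bfv=(-w_1,\dots,-w_{k-1},0)^{\top}$ avoids having to invoke the Moore--Penrose pseudoinverse and keeps the argmax argument completely concrete.
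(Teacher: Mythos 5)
Your proof is correct, but it takes a genuinely different route from the paper's. The paper argues by contradiction: assuming $\bfw^{\top}\nabla_{\LL_y}(\bfz)>0$ for all $y$, it reads off $\bfA(\bfz)^{\top}\bfw\succ\vzero$ and $\bfw^{\top}\nabla_{\TP}(\bfz)>0$ simultaneously, invokes the strict monotonicity of the M-matrix $\bfA(\bfz)^{\top}$ (Lemma~\ref{lemma:Az-matrix-is-nonsing-M-matrix}) to conclude $\bfw\succ\vzero$, and then contradicts $\nabla_{\TP}(\bfz)\prec\vzero$. You instead give a direct, constructive argument: rewrite $\bfw^{\top}\nabla_{\LL_y}(\bfz)=(\ico_y\bfw)^{\top}\nabla_{\TP}(\ico_y\bfz)$ and exhibit a $y$ (the argmax of the lift $\bfv=(-w_1,\dots,-w_{k-1},0)^{\top}$, which is exactly the ``sign'' of $\bfw$ in the sense of Proposition~\ref{proposition:argmax-equivalence}) for which $\ico_y\bfw\succeq\vzero$, so the pairing with the entrywise-negative gradient is automatically nonpositive. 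Your version is more elementary — it bypasses the M-matrix machinery entirely and uses only the relative-margin form and $\nabla_{\TP}\prec\vzero$ (in fact $\preceq\vzero$ would suffice) — and it is more informative, since it identifies which coordinate $y$ witnesses the claim. What the paper's approach buys is uniformity: the same $\bfA(\bfz)$/monotonicity toolkit is reused across Lemmas~\ref{lemma: KKT structural lemma}, \ref{lemma:gradient-of-F-is-nonsingular}, and Proposition~\ref{proposition:link-function-is-bijection}, so the contradiction argument comes essentially for free once that scaffolding is in place.
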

  \begin{proof}
    Suppose for the sake of contradiction that \(\bfw^{\top}\nabla_{\LL_y}(\bfz)> 0\) for all \(y \in [k]\).
    Then
  \[
    0 \prec
    \begin{bmatrix}
  \nabla_{\LL_{1}}(\bfz)&
                          \cdots
                          &
  \nabla_{\LL_{k-1}}(\bfz)&
  \nabla_{\LL_k}(\bfz)
    \end{bmatrix}^{\top}
    \bfw
    =
    \begin{bmatrix}
  \bfA(\bfz) &
  \nabla_{\TP}(\bfz)
    \end{bmatrix}^{\top}\bfw.
  \]
  In other words, we have $\bfA(\bfz)^{\top} \bfw \succ 0$ and $\bfw^{\top}\nabla_{\TP}(\bfz)> 0$.
By Lemma~\ref{lemma:Az-matrix-is-nonsing-M-matrix}, \(\bfA(\bfz)^{\top}\)
is strictly monotone .
Thus, \(\bfA(\bfz)^{\top}\bfw \succ \vzero \) implies\footnote{This is the  definition of ``strictly monotone matrices''. See Section~\ref{section:M-matrix}}
\(\bfw \succ \vzero\).
  But $\nabla_{\TP}(\bfz) \prec \vzero$ by assumption that $\calL$ is a regular PERM loss.
  Hence, $\bfw^{\top}\nabla_{\TP}(\bfz) < 0$, a contradiction.
\end{proof}
Applying Lemma~\ref{lemma: useful lemma for proving convexity} with $\bfw = \alpha'(t)$, we get the desired claim.
Now, pick $y \in [k]$ such that $\alpha'(t)^{\top}\nabla_{\LL_y}(\alpha(t)) \le 0$.
Thus,
from Eqn.~\eqref{equation: implicit diff 2} we have
\(\beta'(t) = v_{y} - \alpha'(t)^{\top}\nabla_{\LL_{y}}(\alpha(t))  > 0\).
This proves the second part of  Proposition~\ref{proposition: concavity of restriction to lines}.


Finally, we prove the last part of Proposition~\ref{proposition: concavity of restriction to lines}.
Pick \(y \in [k]\) (unrelated to the earlier choice) such that
\(
\alpha''(t)^{\top}
    \nabla_{\LL_y}(\alpha(t)) \ge 0
    \). Such a \(y \in [k]\) exists by Lemma~\ref{lemma: useful lemma for proving convexity} by setting \(\bfw = -\alpha''(t)\).
    Differentiating \cref{equation: implicit diff 2} with respect to \(t\), we get by the
    \iftoggle{arxiv}{product rule for curves (see \cref{equation:product-rule-for-curves} from Remark~\ref{remark:time-derivatives})}
  {\textcolor{black}{
product rule for curves\footnote{
  See the ``Vector Calculus'' section in the appendix of the arXiv version of this manuscript \citep{wang2023unified}.
  }}} that
\[0
  =
\alpha''(t)^{\top}
    \nabla_{\LL_y}(\alpha(t))
    +
\alpha'(t)^{\top}
    \tfrac{d}{dt}(\nabla_{\LL_{y}}(\alpha(t)))
  + \beta'' (t)\]
By
the
\iftoggle{arxiv}{chain rule for curves (see
\cref{equation:chain-rule-for-curves} from Remark~\ref{remark:time-derivatives})}
  {\textcolor{black}{
chain rule for curves\footnote{
  See the ``Vector Calculus'' section in the appendix of the arXiv version of this manuscript \citep{wang2023unified}.
  }}}, we have
\(  \tfrac{d}{dt}(\nabla_{\LL_{y}}(\alpha(t))) =
  \nabla_{\LL_{y}}^{2}(\alpha(t))
  \alpha'(t)
\) which combined with the above implies
\[
  - \beta'' (t)
  =
\alpha''(t)^{\top}
    \nabla_{\LL_y}(\alpha(t))
    +
\alpha'(t)^{\top}
  \nabla_{\LL_{y}}^{2}(\alpha(t))
  \alpha'(t).
\]
Since \(\LL_{y}\) is convex, \(
  \nabla_{\LL_{y}}^{2}(\alpha(t))
\)
is positive semidefinite. Therefore,
\(
\alpha'(t)^{\top}
  \nabla_{\LL_{y}}^{2}(\alpha(t))
  \alpha'(t)
  \ge 0\).
  Putting it all together, \(-\beta''(t) \ge 0\), i.e., \(\beta\) is concave.
\end{proof}

\subsubsection{Properties of the $G$ function}

\begin{lemma}\label{lemma:gradient-of-G-is-nonsingular}
  Let $\calL$ be a regular PERM loss with reduced form $\LL$.
  Let $G$ be as in Definition~\ref{definition:loss-surface-foliation-F}.
  Then for all $(\bfz,t) \in \mathbb{R}^{k-1} \times \mathbb{R}$, the gradient (matrix) $\nabla_{G}(\bfz,t)$ is non-singular. 
\end{lemma}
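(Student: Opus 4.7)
The plan is to mirror the argument from Lemma~\ref{lemma:gradient-of-F-is-nonsingular}, observing that $\nabla_G(\bfz,t)$ has exactly the same block structure as $\nabla_F(\bfz,\lambda)$ except that its last row (the partials with respect to the scalar parameter) is $(\bfe_k^{(k)})^\top$ rather than $\vone^\top$ — this reflects the fact that $\partial G_j / \partial t = \delta_{jk}$ while $\partial F_j/\partial \lambda = 1$. So the only thing to do is rerun the earlier argument, replacing the constraint $\vone^\top \bfv = 0$ by $\bfe_k^\top \bfv = 0$, and show that the conclusion still forces $\bfv = \vzero$.

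Concretely, I would pick an arbitrary $\bfv \in \mathbb{R}^k$ satisfying $\nabla_G(\bfz,t)\bfv = \vzero$ and observe that this is equivalent to the pair of conditions $\nabla_\LL(\bfz)\bfv = \vzero$ together with $v_k = 0$. Writing $\tilde\bfv = (v_1,\dots,v_{k-1})^\top$ and expanding $\nabla_\LL(\bfz)\bfv = \sum_{y \in [k]} v_y \nabla_{\LL_y}(\bfz)$ using $\LL_k = \TP$ and the definition of $\bfA(\bfz)$ from Lemma~\ref{lemma:Az-matrix-is-nonsing-M-matrix}, the system reduces to
\[
\bfA(\bfz)\tilde\bfv + v_k \nabla_\TP(\bfz) = \vzero, \qquad v_k = 0.
\]

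The payoff of working with $G$ instead of $F$ is that the condition $v_k = 0$ immediately cancels the $v_k \nabla_\TP(\bfz)$ term, leaving $\bfA(\bfz)\tilde\bfv = \vzero$. Since Lemma~\ref{lemma:Az-matrix-is-nonsing-M-matrix} asserts that $\bfA(\bfz)$ is a non-singular M-matrix, we get $\tilde\bfv = \vzero$, and combined with $v_k = 0$ this yields $\bfv = \vzero$. I do not anticipate any real obstacle here: unlike the $F$-case, where one had to use the strict monotonicity of $\bfA(\bfz)$ and sign information about $\nabla_\TP(\bfz)$ to combine with $\vone^\top \bfv = 0$, the $G$-case reduces directly to the non-singularity of $\bfA(\bfz)$ that has already been established. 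The entire argument is just a short invocation of Lemma~\ref{lemma:Az-matrix-is-nonsing-M-matrix}.
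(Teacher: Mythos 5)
Your proposal is correct and follows essentially the same route as the paper: both reduce $\nabla_G(\bfz,t)\bfv=\vzero$ to the pair of conditions $v_k=0$ and $\bfA(\bfz)\tilde\bfv=-v_k\nabla_\TP(\bfz)=\vzero$, and then invoke the non-singularity of $\bfA(\bfz)$ from Lemma~\ref{lemma:Az-matrix-is-nonsing-M-matrix}. Your observation that the $G$-case is strictly easier than the $F$-case (no need for strict monotonicity or sign information) is also accurate.
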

\begin{proof}
  The proof proceeds similarly as in Lemma~\ref{lemma:gradient-of-F-is-nonsingular}.
    Recall that
\(    \nabla_{\LL}(\bfz) \in \mathbb{R}^{(k-1)\times k}
\)
and
\(
     \bfe_{k}^{(k)}  \in \mathbb{R}^{k}
\).
Below, we suppress this superscript and simply write \(\bfe_{k}:= \bfe_{k}^{(k)}\)
Thus,
  \(\nabla_{G} ( \bfz, \lambda ) =
\begin{bmatrix}
  \nabla_{\LL}(\bfz)^{\top} & \bfe_{k}^{\top}
\end{bmatrix}^{\top}
\) is a \(k\times k\) square matrix.
To show that it is non-singular, first pick $\bfv \in \mathbb{R}^k$ arbitrary.
    It suffices to check that if $\nabla_{G}(\bfz,\lambda) \bfv =\vzero$ then $\bfv =\vzero$.

 Towards this, first note that $\nabla_{G}(\bfz,\lambda)\bfv =\vzero$ can be equivalently stated as
 both
  \(
  \nabla_{\LL}(\bfz)\bfv = 0
  \) and
  \(\bfe_{k}^{\top} \bfv = v_k = 0\).
  Recall \(\bfA(\bfz)\) from
Lemma~\ref{lemma:Az-matrix-is-nonsing-M-matrix}.
  Then the identity \( \nabla_{\LL}(\bfz)\bfv = \vzero\) can be rewritten as
    \(
      \vzero=-v_k \nabla_{\TP}(\bfz) =
      \bfA(\bfz)
\begin{bmatrix}
      v_1 &\cdots& v_{k-1}
    \end{bmatrix}^{\top}\).
  Since $\bfA(\bfz)$ is non-singular, we get that $v_{y} = 0$ for each \(y \in [k-1]\).
Thus, we have that $\bfv = \vzero$, as desired.
\end{proof}

\begin{lemma}\label{lemma:bottom-right-entry-nonnegativity}
  Suppose that \(\bfA \in \mathbb{R}^{n\times n}\)  and \(\bfv \in \mathbb{R}^{n}\) are such that
  \emph{1)} \(\bfA\) is strictly monotone (Definition~\ref{definition:M-matrix}),
  \emph{2)} \(\bfv \prec \vzero\) has strictly negative entries,
  \emph{3)} the matrix
\(  \left[\begin{smallmatrix}
      \bfA & \bfv \\
      \vone^{\top} & 1
  \end{smallmatrix}\right]
\)is invertible, and \emph{4)} \(\mathbf{M} :=
\left[\begin{smallmatrix}
      \bfA & \bfv \\
      \vzero^{\top} & 1
  \end{smallmatrix}\right]
\left(\left[\begin{smallmatrix}
      \bfA & \bfv \\
      \vone^{\top} & 1
  \end{smallmatrix}\right]\right)^{-1} \in \mathbb{R}^{(n+1)\times(n+1)}
\) is also invertible. Then \(M_{n+1,n+1} > 0\), i.e., the bottom right entry of \(\mathbf{M}\) is positive. \end{lemma}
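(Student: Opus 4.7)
The plan is to compute $M_{n+1,n+1}$ explicitly via a block-matrix identity and then use strict monotonicity of $\mathbf{A}$ to sign it.

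\textbf{Step 1: $\mathbf{A}$ is invertible.} Strict monotonicity (in the sense used in Lemma~\ref{lemma:Az-matrix-is-nonsing-M-matrix}) gives that $\mathbf{A}\mathbf{x} \succ \mathbf{0}$ implies $\mathbf{x} \succ \mathbf{0}$. If $\mathbf{A}\mathbf{x} = \mathbf{0}$, a standard limiting/two-sided argument (apply the statement to $\mathbf{A}(\pm\mathbf{x} + \varepsilon \mathbf{y})$ for a fixed $\mathbf{y}$ with $\mathbf{A}\mathbf{y}\succ \mathbf{0}$ and let $\varepsilon \downarrow 0$) forces $\mathbf{x} \succeq \mathbf{0}$ and $-\mathbf{x}\succeq \mathbf{0}$, hence $\mathbf{x} = \mathbf{0}$. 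So $\mathbf{A}^{-1}$ exists.

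\textbf{Step 2: identify $M_{n+1,n+1}$ with the $(n+1,n+1)$ entry of $B^{-1}$.} Write $B := \left[\begin{smallmatrix}\mathbf{A} & \mathbf{v}\\ \mathbf{1}^\top & 1\end{smallmatrix}\right]$ and partition its inverse conformally as $B^{-1} = \left[\begin{smallmatrix} P & \mathbf{q}\\ \mathbf{r}^\top & s\end{smallmatrix}\right]$. The last row of $\left[\begin{smallmatrix}\mathbf{A} & \mathbf{v}\\ \mathbf{0}^\top & 1\end{smallmatrix}\right]$ is $[\mathbf{0}^\top, 1]$, so the $(n+1,n+1)$ entry of $\mathbf{M}$ is simply $s$.

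\textbf{Step 3: compute $s$ via the Schur complement.} Since $\mathbf{A}$ is invertible (Step~1), the block inversion formula yields
\[
s \;=\; \frac{1}{1 - \mathbf{1}^\top \mathbf{A}^{-1}\mathbf{v}}.
\]
(Equivalently, solve the two linear systems obtained from the last column of $BB^{-1}=I$: $\mathbf{A}\mathbf{q} + s\mathbf{v}=\mathbf{0}$ and $\mathbf{1}^\top \mathbf{q} + s = 1$, eliminate $\mathbf{q}$.) Both hypotheses 3 and 4 are automatically consistent: by Schur's identity $\det B = \det(\mathbf{A})(1 - \mathbf{1}^\top \mathbf{A}^{-1}\mathbf{v})$, so invertibility of $B$ is equivalent to $1 - \mathbf{1}^\top\mathbf{A}^{-1}\mathbf{v}\neq 0$.

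\textbf{Step 4: sign the denominator using strict monotonicity.} Since $-\mathbf{v} \succ \mathbf{0}$ and $\mathbf{A}(-\mathbf{A}^{-1}\mathbf{v}) = -\mathbf{v} \succ \mathbf{0}$, strict monotonicity of $\mathbf{A}$ gives $-\mathbf{A}^{-1}\mathbf{v} \succ \mathbf{0}$. Thus $-\mathbf{1}^\top \mathbf{A}^{-1}\mathbf{v} > 0$, so $1 - \mathbf{1}^\top \mathbf{A}^{-1}\mathbf{v} > 1 > 0$, and therefore $M_{n+1,n+1} = s > 0$.

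There is no real obstacle here; the only subtlety is extracting invertibility of $\mathbf{A}$ from strict monotonicity so that the Schur complement formula applies cleanly. Everything after Step~1 is a one-line block-inversion calculation plus a direct sign check.
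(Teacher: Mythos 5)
Your proof is correct, and it reaches the same punchline as the paper's (apply strict monotonicity to $\bfA$ acting on a positive right-hand side built from $\bfv$), but by a genuinely different route. The paper never inverts $\bfA$: it writes $B^{-1}$ in blocks, reads off $M_{n+1,n+1}=c$ from the product, uses hypothesis 4 to get $c\neq 0$, and then assumes $c<0$ to derive the contradiction $1=\vone^{\top}\bfw+c<0$ via strict monotonicity applied to $\bfA(-\bfw)=c\bfv\succ\vzero$. You instead invert $\bfA$, compute the Schur complement, and obtain the closed form $M_{n+1,n+1}=(1-\vone^{\top}\bfA^{-1}\bfv)^{-1}$ with a directly signable denominator. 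Your version buys two things: an explicit formula rather than a sign-by-contradiction, and the observation that hypothesis 4 (invertibility of $\mathbf{M}$) is actually redundant, since your denominator is $\geq 1$ and hence automatically nonzero. The cost is that you must first establish invertibility of $\bfA$, which the paper's blockwise argument sidesteps entirely.

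One small repair in Step 1: as written, your limiting argument presupposes the existence of some $\bfy$ with $\bfA\bfy\succ\vzero$, which does not follow from the bare implication ``$\bfA\bfx\succ\vzero\Rightarrow\bfx\succ\vzero$'' (that implication is vacuously true for, e.g., $\bfA=\vzero$). Fortunately the paper's Definition~\ref{definition:M-matrix} defines \emph{strictly} monotone as monotone \emph{plus} the strict implication, and weak monotonicity alone gives injectivity immediately: $\bfA\bfx=\vzero$ yields $\bfA(\pm\bfx)\succeq\vzero$, hence $\pm\bfx\succeq\vzero$, hence $\bfx=\vzero$. With that substitution your Step 1 is clean and the rest of the argument goes through verbatim.
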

\begin{proof}
Define \(\mathbf{B} \in \mathbb{R}^{n\times n}\), \(\bfw, \bfu \in \mathbb{R}^{n}\) and \(c \in \mathbb{R}\) such that
\(
\left(
  \left[\begin{smallmatrix}
      \bfA & \bfv \\
      \vone^{\top} & 1
        \end{smallmatrix}\right]
    \right)^{-1}
=
\left[\begin{smallmatrix}
      \mathbf{B} & \bfw \\
      \bfu^{\top} & c
  \end{smallmatrix}\right]
\).
Then by definition of the matrix inverse, we have
\begin{equation}
  \begin{bmatrix}
    \mathbf{I}_{n} & \vzero \\
    \vzero^{\top} & 1
  \end{bmatrix}
  =
  \begin{bmatrix}
      \bfA & \bfv \\
      \vone^{\top} & 1
        \end{bmatrix}
\begin{bmatrix}
      \mathbf{B} & \bfw \\
      \bfu^{\top} & c
  \end{bmatrix}
  =
  \begin{bmatrix}
  \mathbf{A} \mathbf{B} + \bfv  \bfu^{\top}  & \bfA \bfw + c \bfv
    \\
    \vone^{\top} \mathbf{B} + \bfu^{\top} & \vone^{\top} \bfw + c
  \end{bmatrix}.
\label{equation:inverse-of-gradient-of-F}
\end{equation}
Now, we observe that
\[
  \mathbf{M}=
  \begin{bmatrix}
      \bfA & \bfv \\
      \vzero^{\top} & 1
        \end{bmatrix}
\begin{bmatrix}
      \mathbf{B} & \bfw \\
      \bfu^{\top} & c
  \end{bmatrix}
  =
  \begin{bmatrix}
  \mathbf{A} \mathbf{B} + \bfv  \bfu^{\top}  & \bfA \bfw + c \bfv
    \\
     \bfu^{\top} & c
  \end{bmatrix}
  =
  \begin{bmatrix}
    \mathbf{I}_{n} & \vzero \\
     \bfu^{\top} & c
  \end{bmatrix}
\]
Since \(\mathbf{M}\) is invertible by assumption, we have that \(c \ne 0\).
To finish the proof, it suffices to show that \(c >0\).
Below, we assume that \(c <0\) and derive
a contradiction.

The top right block of \cref{equation:inverse-of-gradient-of-F} implies that
\(\bfA \bfw + c \bfv = \vzero\), i.e., \(\bfA (-\bfw) = c\bfv\).
Since \(c <0\) and \(\bfv \prec \vzero\), we have \(\bfA (-\bfw) \succ \vzero\).
By strict monotonicity of \(\bfA\), we have that \(-\bfw \succ 0\), or equivalently, \(\bfw \prec \vzero\).
Finally, the bottom right  entry of \cref{equation:inverse-of-gradient-of-F} implies that \(1 = \vone^{\top} \bfw + c <0\), which is a contradiction.
\end{proof}

  \begin{lemma}\label{lemma:interior-projection-surjectivity}
    Let $\calL$ be a regular PERM loss with reduced form $\LL$ and
    \(\bfz \in \mathbb{R}^{k-1}\) be arbitrary.
Define \(\drop\) to be the projection \(\mathbb{R}^{k} \to \mathbb{R}^{k-1}\) that drops the last coordinate, i.e., \(\drop([v_{1},\dots, v_{k}]^{\top}) = [v_{1},\dots,v_{k-1}]^{\top}\).
    Then there exists \(\bfz^{*} \in \mathbb{R}^{k-1}\) and \(t^{*} \in \mathbb{R}_{>0}\)
    such that \(\drop\LL(\bfz^{*}) + t^{*}\vone =
\drop(\LL(\bfz))
\).
  \end{lemma}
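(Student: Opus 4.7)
The plan is to use the diffeomorphism $F(\bfw, \lambda) = \LL(\bfw) + \lambda\vone^{(k)}$ of Corollary~\ref{corollary:F-is-a-homeomorphism} and analyze how the $\lambda$-component of $F^{-1}$ changes as we perturb the last coordinate of a target vector in $\mathbb{R}^{k}$.  Specifically, for $s \in \mathbb{R}$ close to $0$, define $\bmzeta(s) := \LL(\bfz) + s\,\bfe_{k}^{(k)}$.  Since $F$ is a diffeomorphism, there exist differentiable curves $\alpha: I \to \mathbb{R}^{k-1}$ and $\beta: I \to \mathbb{R}$ (on some interval $I$ about $0$) satisfying $F(\alpha(s), \beta(s)) = \bmzeta(s)$, i.e.\ $\LL(\alpha(s)) + \beta(s)\vone^{(k)} = \LL(\bfz) + s\,\bfe_{k}^{(k)}$, with $\alpha(0) = \bfz$ and $\beta(0) = 0$.  (The existence and smoothness of $\alpha,\beta$ is essentially the content of item~1 of Proposition~\ref{proposition: concavity of restriction to lines} applied to $\bfv = \bfe_{k}^{(k)}$, $\bfx = \LL(\bfz)$.)  If I can establish $\beta'(0) > 0$, then for any sufficiently small $s > 0$ the values $\bfz^{*} := \alpha(s)$ and $t^{*} := \beta(s) > 0$ satisfy, componentwise for $y \in [k-1]$, the identity $\LL_{y}(\bfz^{*}) + t^{*} = \LL_{y}(\bfz)$, which is exactly $\drop\LL(\bfz^{*}) + t^{*}\vone^{(k-1)} = \drop\LL(\bfz)$.

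The key step is to prove $\beta'(0) > 0$ by implicit differentiation.  Differentiating $\LL(\alpha(s)) + \beta(s)\vone^{(k)} = \LL(\bfz) + s\,\bfe_{k}^{(k)}$ at $s = 0$ component by component gives
\begin{equation*}
\bfA(\bfz)^{\top}\alpha'(0) + \beta'(0)\vone^{(k-1)} = \vzero,\qquad \nabla_{\TP}(\bfz)^{\top}\alpha'(0) + \beta'(0) = 1,
\end{equation*}
where $\bfA(\bfz)$ is the matrix from Lemma~\ref{lemma:Az-matrix-is-nonsing-M-matrix}.  Now argue by contradiction: suppose $\beta'(0) \le 0$.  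Then the first equation yields $\bfA(\bfz)^{\top}\alpha'(0) = -\beta'(0)\vone^{(k-1)} \succeq \vzero$, and since $\bfA(\bfz)^{\top}$ is strictly monotone (Lemma~\ref{lemma:Az-matrix-is-nonsing-M-matrix} together with Corollary~\ref{corollary:strictly-monotone-transpose}), this forces $\alpha'(0) \succeq \vzero$.  Combining this with $\nabla_{\TP}(\bfz) \prec \vzero$ (from regularity of $\calL$), we obtain $\nabla_{\TP}(\bfz)^{\top}\alpha'(0) \le 0$, so the second equation gives $\beta'(0) = 1 - \nabla_{\TP}(\bfz)^{\top}\alpha'(0) \ge 1$, contradicting $\beta'(0) \le 0$.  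Hence $\beta'(0) > 0$, and $\beta(s) > 0$ for all sufficiently small $s > 0$, finishing the construction of $(\bfz^{*}, t^{*})$.

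The main obstacle is recognizing that one cannot directly invoke item~2 of Proposition~\ref{proposition: concavity of restriction to lines} (whose sign conclusion $\beta'(t) > 0$ requires $\bfv \succ \vzero$, whereas our $\bfv = \bfe_{k}^{(k)}$ is only nonnegative).  The sign analysis therefore has to be redone by hand, and it crucially uses \emph{both} defining ingredients of regularity that distinguish this situation from item~2: the strict monotonicity of $\bfA(\bfz)^{\top}$ (which comes from $\nabla_{\TP}(\cdot) \prec \vzero$ via Lemma~\ref{lemma:Az-matrix-is-nonsing-M-matrix}) and the strict negativity of $\nabla_{\TP}(\bfz)$ itself.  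Everything else is an application of the inverse/implicit function theorem as packaged by Corollary~\ref{corollary:F-is-a-homeomorphism}.
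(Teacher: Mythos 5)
Your proof is correct, and it follows the same overall strategy as the paper: perturb the target $\LL(\bfz)$ in the $\bfe_k^{(k)}$ direction, pull the perturbed curve back through the global diffeomorphism $F$, and show that the $\vone$-coefficient of the preimage becomes strictly positive for small positive perturbations, after which applying $\drop$ annihilates the $\bfe_k^{(k)}$ term. Where you differ is in the technical core. The paper establishes $\tfrac{\partial\tau}{\partial t}(\bfz,0)\neq 0$ by explicitly forming $\nabla_G(\bfz,0)(\nabla_F(\bfz,0))^{-1}$ and invoking a separate block-matrix-inverse lemma (Lemma~\ref{lemma:bottom-right-entry-nonnegativity}) whose proof runs a contradiction through the top-right block of the inverse; it then needs an extra global argument (if $\tau(\bfz,\cdot)\le 0$ everywhere, then $0$ is a global maximizer, so the derivative vanishes) to produce a point where $\tau>0$. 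You instead differentiate the defining identity implicitly and run the sign contradiction directly on the pair of equations $\bfA(\bfz)^{\top}\alpha'(0)+\beta'(0)\vone=\vzero$ and $\nabla_{\TP}(\bfz)^{\top}\alpha'(0)+\beta'(0)=1$, obtaining $\beta'(0)\ge 1$ under the contrary hypothesis. This is essentially the transposed form of the paper's block computation, but it is more self-contained, yields the strict sign $\beta'(0)>0$ immediately (so no global-maximum detour is needed), and makes transparent that the two ingredients doing the work are exactly the monotonicity of $\bfA(\bfz)^{\top}$ and $\nabla_{\TP}(\bfz)\prec\vzero$. Your observation that item~2 of Proposition~\ref{proposition: concavity of restriction to lines} cannot be invoked because $\bfe_k^{(k)}\not\succ\vzero$ is exactly the right diagnosis of why the sign analysis must be redone.

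One small imprecision: the step $\bfA(\bfz)^{\top}\alpha'(0)\succeq\vzero\implies\alpha'(0)\succeq\vzero$ uses \emph{monotonicity} in the non-strict sense of Definition~\ref{definition:M-matrix}, not strict monotonicity (which only covers the case $\bfA\bfx\succ\vzero$). This is available — $\bfA(\bfz)^{\top}$ is a non-singular M-matrix by Corollary~\ref{corollary:strictly-monotone-transpose}, hence monotone by Theorem~\ref{theorem: non-singular M-matrix iff monotone} — but you should cite that chain rather than strict monotonicity.
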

  \begin{proof}
    Define functions \(\zeta : \mathbb{R}^{k-1} \times \mathbb{R} \to \mathbb{R}^{k-1} \) and \(\tau : \mathbb{R}^{k-1} \times \mathbb{R} \to \mathbb{R}\) by
    \[    \begin{bmatrix}
  \zeta(\bfz,t)^{\top} &
  \tau(\bfz,t)
\end{bmatrix}^{\top}
:=
F^{-1} \circ G(\bfz,t).
\]
    We first show that \(\tau(\bfz, 0) = 0\).
Observe that
    \begin{equation}
F^{-1} (G(\bfz,0))
=
F^{-1} (\LL(\bfz)+0\cdot\bfe_{k})
=
F^{-1} (\LL(\bfz)+0 \cdot\vone)
=
\begin{bmatrix}
  \bfz^{\top} &
  0
\end{bmatrix}^{\top}.
\label{equation:Finv-of-G-of-z-0}
\end{equation}
    Therefore, \(
  \zeta(\bfz,t) = \bfz
    \) and \(\tau(\bfz, 0) = 0\).
    Next, we claim that \(\frac{\partial \tau}{\partial t} (\bfz,0) \ne 0\).
    By definition,
    \begin{equation}
      \nabla_{F^{-1} \circ G} (\bfz,t) =
      \begin{bmatrix}
        \frac{\partial \zeta}{\partial \bfz} ( \bfz, t ) &
        \frac{\partial \tau}{\partial \bfz} ( \bfz, t ) \\
        \frac{\partial \zeta}{\partial t} ( \bfz, t ) &
        \frac{\partial \tau}{\partial t} ( \bfz, t ) \\
      \end{bmatrix}
\label{equation:jacobian-of-F-inv-of-G}
\end{equation}
    Let ``\(\cdot\)'' be a notational shorthand for the input \((\bfz,t)\) to \(F\) and \(G\).
    The gradient of \(F^{-1} \circ G\)
    \[
      \nabla_{F^{-1} \circ G} (\cdot)
      =
      \nabla_{G}(\cdot) \nabla_{F^{-1}}(G(\cdot))
      =
      \nabla_{G}(\cdot) (\nabla_{F}(F^{-1}\circ G(\cdot)))^{-1}
    \]
    Below, let \(\bfe_{k} := \bfe_{k}^{(k)}\), i.e., we drop the superscript.
    Now, from the proof of Lemma~\ref{lemma:gradient-of-G-is-nonsingular}, we have that
    \(\nabla_{G}(\bfz, t) =
    \begin{bmatrix}
  \nabla_{\LL}(\bfz)^{\top} & \bfe_{k}
    \end{bmatrix}^{\top}
    \)
  and from the proof of Lemma~\ref{lemma:gradient-of-F-is-nonsingular}  that
    \(\nabla_{F}(\bfz, t) =
    \begin{bmatrix}
  \nabla_{\LL}(\bfz)^{\top} & \vone^{(k)}
    \end{bmatrix}^{\top}
    \).
    By
\cref{equation:Finv-of-G-of-z-0}, we have
\[
      \nabla_{G}(\bfz,0) (\nabla_{F}(F^{-1}\circ G(\bfz,0)))^{-1}
      =
      \nabla_{G}(\bfz,0) (\nabla_{F}(\bfz,0))^{-1}
\]
The above is invertible because
      \(\nabla_{G}(\bfz,0)\)  is invertible by Lemma~\ref{lemma:gradient-of-G-is-nonsingular}.
Moreover,
    \begin{align}
\nabla_{G}(\bfz,0) (\nabla_{F}(\bfz,0))^{-1}
&=
    \begin{bmatrix}
  \nabla_{\LL}(\bfz)\\ \bfe_{k}^{\top}
    \end{bmatrix}
       \left(
    \begin{bmatrix}
  \nabla_{\LL}(\bfz) \\ (\vone^{(k)})^{\top}
    \end{bmatrix}
\right)^{-1}
  \\
&=
    \begin{bmatrix}
      \bfA(\bfz) & \nabla_{\TP}(\bfz)\\
      (\vzero^{(k-1)})^{\top} & 1
    \end{bmatrix}
       \left(
    \begin{bmatrix}
      \bfA(\bfz) & \nabla_{\TP}(\bfz)\\
      (\vone^{(k-1)})^{\top} & 1
    \end{bmatrix}
\right)^{-1}.
\label{equation:jacobian-of-Finv-of-G-at-z-0}
\end{align}
By
\cref{equation:jacobian-of-F-inv-of-G}, \(\frac{\partial \tau}{\partial t} (\bfz,0)\) is the bottom right element of
\(
\nabla_{G}(\bfz,0) (\nabla_{F}(\bfz,0))^{-1}\).
By applying Lemma~\ref{lemma:bottom-right-entry-nonnegativity} to the RHS of
\cref{equation:jacobian-of-Finv-of-G-at-z-0} with \(\bfA = \bfA(\bfz)\) and \(\bfv = \nabla_{\TP}(\bfz)\), we see that the bottom right entry of
\(
\nabla_{G}(\bfz,0) (\nabla_{F}(\bfz,0))^{-1}\) is nonzero. This proves that
\(\frac{\partial \tau}{\partial t} (\bfz,0) \ne 0\).
Note that
Lemmas~\ref{lemma:Az-matrix-is-nonsing-M-matrix}
and~\ref{lemma:gradient-of-G-is-nonsingular}
together guarantee that
the requirements of Lemma~\ref{lemma:bottom-right-entry-nonnegativity} are all met.

Next, we claim that there exists some \(t^{\circ} \in \mathbb{R}\) such that
\(\tau(\bfz,t^{\circ}) >0\). To see this, assume the contrary. Then \(\tau(\bfz,t) \le 0\) for all \(t \in \mathbb{R}\). In particular, \(t=0\) is a (global) maximizer of \(t \mapsto \tau(\bfz,t)\) which implies that
\(\frac{\partial \tau}{\partial t} (\bfz,0) = 0\), a contradiction. Thus, the claim follows.

Now, fix \(t^{\circ} \in \mathbb{R}\) such that \(\tau(\bfz,t^{\circ}) > 0\).
Define \(\bfz^{*}:=
      \zeta(\bfz,t^{\circ})
\)
and
\(t^{*}
=
  \tau(\bfz,t^{\circ})
\).
Then by the definition of \(\zeta\) and \(\tau\), we have
\(F(\bfz^{*}, t^{*})
=G(\bfz,t^{\circ})
\) by applying \(F\) to both side of
\[
    \begin{bmatrix}
      (\bfz^{*})^{\top}
      &
        t^{*}
\end{bmatrix}^{\top}
=
    \begin{bmatrix}
      \zeta(\bfz,t^{\circ})^{\top}
      &
  \tau(\bfz,t^{\circ})
\end{bmatrix}^{\top}
=
F^{-1}(G(\bfz,t^{\circ})).
\]
Unwinding the definition of \(F\) and \(G\) (Definition~\ref{definition:loss-surface-foliation-F}), the identity
\(F(\bfz^{*}, t^{*})
=G(\bfz,t^{\circ})
\)
implies
\[
  \LL(\bfz^{*})
  +
  t^{*}
  \vone =
    F(
    \bfz^{*},
    t^{*})
=
G(\bfz,t^{\circ})
=
\LL(\bfz) + t^{\circ} \bfe_{k}.
\]
Now, applying \(\drop(\cdot)\) to both side, we have
\(  \drop(\LL(\bfz^{*})
  +
  t^{*}
  \vone
  )=
\drop(\LL(\bfz) + t^{\circ} \bfe_{k})
=
\drop(\LL(\bfz) )
\)
where we used the fact that \(\drop(\cdot)\) is linear and that \(\drop(\bfe_{k}) = \vzero\).
  \end{proof}

  \begin{corollary}\label{corollary:interior-projection-surjectivity}
    Let $\calL$ be a regular PERM loss with reduced form $\LL$ and
    \(\bfz \in \mathbb{R}^{k-1}\) be arbitrary.
Define \(\proj\) to be the projection \(\mathbb{R}^{k} \to \mathbb{R}^{k-1}\) that projs the first coordinate, i.e., \(\proj([v_{1},\dots, v_{k}]^{\top}) = [v_{2},\dots,v_{k}]^{\top}\).
    Then there exists \(\bfz^{*} \in \mathbb{R}^{k-1}\) and \(t^{*} \in \mathbb{R}_{>0}\)
    such that \(\proj(\LL(\bfz^{*}) + t^{*}\vone) =
\proj\LL(\bfz)
\).
  \end{corollary}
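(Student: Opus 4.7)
The plan is to reduce the corollary to Lemma~\ref{lemma:interior-projection-surjectivity} using the permutation-equivariance of $\LL$ expressed via \cref{equation:permutation-equivariance-for-reduced-loss}. The key observation is that the cyclic shift $\sigma \in \mathtt{Sym}(k)$ defined by $\sigma(j) = j+1$ for $j \in [k-1]$ and $\sigma(k) = 1$ acts as $\mathbf{S}_{\sigma}\bfv = (v_{2},\dots, v_{k},v_{1})^{\top}$, so dropping the last coordinate of $\mathbf{S}_{\sigma}\bfv$ recovers $\proj(\bfv)$. Equivalently, $\drop \circ \mathbf{S}_{\sigma} = \proj$ as linear maps $\mathbb{R}^{k} \to \mathbb{R}^{k-1}$.

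Given $\bfz \in \mathbb{R}^{k-1}$, I would set $\tilde{\bfz} := \bmPi_{\sigma}\bfz$ and invoke Lemma~\ref{lemma:interior-projection-surjectivity} on $\tilde{\bfz}$, obtaining $\tilde{\bfz}^{*} \in \mathbb{R}^{k-1}$ and $t^{*} \in \mathbb{R}_{>0}$ with $\drop(\LL(\tilde{\bfz}^{*})) + t^{*}\vone^{(k-1)} = \drop(\LL(\tilde{\bfz}))$. Since \cref{equation:permutation-equivariance-for-reduced-loss} gives $\LL(\tilde{\bfz}) = \LL(\bmPi_{\sigma}\bfz) = \mathbf{S}_{\sigma}\LL(\bfz)$, the right-hand side collapses to $\drop(\mathbf{S}_{\sigma}\LL(\bfz)) = \proj(\LL(\bfz))$ by the key observation.

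To rewrite the left-hand side in terms of $\proj$, I would then define $\bfz^{*} := \bmPi_{\sigma^{-1}}\tilde{\bfz}^{*}$. Lemma~\ref{lemma: Pi is a group hom}, combined with the fact that $\bmPi_{e} = \bmpi \bmpi^{\dagger} = \mathbf{I}_{k-1}$ (Remark~\ref{remark:pseudo-inverse-injectivity}) for the identity permutation $e$, yields $\bmPi_{\sigma}\bfz^{*} = \tilde{\bfz}^{*}$, whence another application of \cref{equation:permutation-equivariance-for-reduced-loss} gives $\LL(\tilde{\bfz}^{*}) = \mathbf{S}_{\sigma}\LL(\bfz^{*})$, and therefore $\drop(\LL(\tilde{\bfz}^{*})) = \proj(\LL(\bfz^{*}))$. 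Substituting back yields $\proj(\LL(\bfz^{*})) + t^{*}\vone^{(k-1)} = \proj(\LL(\bfz))$, which is the corollary's conclusion after applying linearity of $\proj$.

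I do not anticipate any serious obstacle: Lemma~\ref{lemma:interior-projection-surjectivity} carries all of the analytic weight, while the steps above amount to the purely formal verification that the cyclic shift $\sigma$ intertwines $\drop$ and $\proj$ via the action of $\mathbf{S}_{\sigma}$. An alternative would be to redo the proof of Lemma~\ref{lemma:interior-projection-surjectivity} verbatim with $\bfe_{k}$ replaced by $\bfe_{1}$ throughout (so that $\LL$'s first coordinate plays the role of its last), but the symmetry-based reduction is more economical.
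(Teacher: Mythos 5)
Your proposal is correct and follows essentially the same route as the paper's own proof: both reduce to Lemma~\ref{lemma:interior-projection-surjectivity} via a permutation $\sigma$ with $\proj = \drop\,\mathbf{S}_{\sigma}$, transport $\bfz$ by $\bmPi_{\sigma}$ using \cref{equation:permutation-equivariance-for-reduced-loss}, and pull the resulting $\tilde{\bfz}^{*}$ back by $\bmPi_{\sigma^{-1}}$. The only difference is cosmetic --- you make the cyclic shift explicit where the paper leaves $\sigma$ implicit.
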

  \begin{proof}
    Recall \(\mathbf{P}\) from
  {Lemma}~\ref{lemma:interior-projection-surjectivity}.
Let \(\sigma \in \mathtt{Sym}(k)\) be such that \(\proj =\mathbf{P}\mathbf{S}_{\sigma}\).
Recall \(\bmPi_{\sigma}\) from Definition~\ref{definition:irreducibility-representation}.
Then
\(   \mathbf{S}_{\sigma} \LL(\bfz)
    =
    \LL(\bmPi_{\sigma}\bfz)
\) by
      \cref{equation:permutation-equivariance-for-reduced-loss}.
Applying Lemma~\ref{lemma:interior-projection-surjectivity} to \(\bmPi_{\sigma} \bfz\), there exists \(\tilde{\bfz} \in \mathbb{R}^{k-1}\) and \({t}^{*} \in \mathbb{R}_{>0}\) such that
\(
\mathbf{P}
    \LL(\bmPi_{\sigma}\bfz)
    =
\mathbf{P}(\LL(\tilde{\bfz}^{*}) + {t}^{*}\vone)
\). Putting it all together, we have
\begin{align*}
  \proj
    \LL(\bfz)
    =
\mathbf{P} \mathbf{S}_{\sigma}
    \LL(\bfz)
    =
\mathbf{P}
    \LL(\bmPi_{\sigma}\bfz)
    =
\mathbf{P}(\LL(\tilde{\bfz}^{*}) + {t}^{*}\vone).
\end{align*}
Next, \(\proj =\mathbf{P}\mathbf{S}_{\sigma}\)
implies
\(\proj \mathbf{S}_{\sigma^{-1}} = \mathbf{P}\). Hence,
again applying \cref{equation:permutation-equivariance-for-reduced-loss}, we have
\[
\mathbf{P}(\LL(\tilde{\bfz}^{*}) + {t}^{*}\vone)
=
\proj \mathbf{S}_{\sigma^{-1}}(\LL(\tilde{\bfz}^{*}) + {t}^{*}\vone)
=
\proj (\LL(\bmPi_{\sigma^{-1}}\tilde{\bfz}^{*}) + {t}^{*}\vone).
\]
Letting \(\bfz^{*} :=\bmPi_{\sigma^{-1}}\tilde{\bfz}^{*}\), we have
\(
  \proj
    \LL(\bfz)
    =
\proj (\LL(\bfz^{*}) + {t}^{*}\vone)
\), as desired.
  \end{proof}

\begin{definition}\label{definition: loss surface - alternative characterization}
Let $f: \mathbb{R}^m \to \mathbb{R}^n$ be a function.
Define the following sets:
\begin{enumerate}
\item
$\cran^{\bullet}(f)
:=
\{
\bmzeta + \lambda \vone : \bmzeta \in \ran(f), \lambda \in [0,\infty)
\}
$
\item
  $\cran^{\circ}(f)
  :=
\{
\bmzeta + \lambda \vone : \bmzeta \in \ran(f), \lambda \in (0,\infty)
\}
$
\end{enumerate}

\end{definition}
When $f= \LL$ is the reduced form of a PERM loss, the above two sets are closely related to $\cran(\LL)$ (Definition~\ref{definition:loss-surface}), as the following lemma and
Proposition~\ref{proposition: boundary of S is R} show. These sets are convenient alternative characterizations.

\begin{lemma}\label{lemma: R is closed}
  Let $\calL$ be a regular PERM loss with reduced form \(\LL\).
  Then we have the following:
  \begin{compactenum}
    \item
  $\ran(\LL)$ is closed.
\item
  $\cran^{\bullet}(\LL)$ is closed and $\mathtt{bdry}(\cran^{\bullet}(\LL)) = \ran(\LL)$.
\item $\cran^{\circ}(\LL) = \interior(\cran^{\bullet}(\LL))$
  and $\mathtt{bdry}(\cran^{\circ}(\LL)) = \ran(\LL)$.
  \end{compactenum}
\end{lemma}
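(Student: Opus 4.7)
The plan is to reduce all three statements to straightforward topological consequences of Corollary~\ref{corollary:F-is-a-homeomorphism}, which asserts that $F : \mathbb{R}^{k-1} \times \mathbb{R} \to \mathbb{R}^{k}$, $F(\bfz,\lambda) = \LL(\bfz) + \lambda \vone$, is a homeomorphism (in fact a diffeomorphism). The key observation is that each of the three sets in the lemma admits an exact description as the $F$-image of a simple half-space subset of $\mathbb{R}^{k-1}\times\mathbb{R}$:
\begin{align*}
\ran(\LL) &= F(\mathbb{R}^{k-1}\times\{0\}),
\\
\cran^{\bullet}(\LL) &= F(\mathbb{R}^{k-1}\times[0,\infty)),
\\
\cran^{\circ}(\LL) &= F(\mathbb{R}^{k-1}\times(0,\infty)).
\end{align*}
The first equality is immediate from the definition of $\LL$, while the latter two are nothing more than the definitions of $\cran^{\bullet}$ and $\cran^{\circ}$ in Definition~\ref{definition: loss surface - alternative characterization}, rewritten using $F$.

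First I would dispatch item 1: since $\mathbb{R}^{k-1}\times\{0\}$ is closed in $\mathbb{R}^{k-1}\times\mathbb{R}$ and $F$ is a homeomorphism, $F$ carries closed sets to closed sets, so $\ran(\LL)$ is closed in $\mathbb{R}^{k}$.

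For items 2 and 3 I would invoke the general topological fact that, under any homeomorphism $F : X \to Y$ and any $A \subseteq X$, we have $F(\overline{A}) = \overline{F(A)}$, $F(\interior(A)) = \interior(F(A))$, and consequently $F(\mathtt{bdry}(A)) = \mathtt{bdry}(F(A))$; this follows at once from continuity of both $F$ and $F^{-1}$. Applying this with $A = \mathbb{R}^{k-1}\times[0,\infty)$, one gets that $\cran^{\bullet}(\LL)$ is closed and that
\[
\mathtt{bdry}(\cran^{\bullet}(\LL)) = F(\mathtt{bdry}(\mathbb{R}^{k-1}\times[0,\infty))) = F(\mathbb{R}^{k-1}\times\{0\}) = \ran(\LL),
\]
which proves item 2. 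Applying it with $A = \mathbb{R}^{k-1}\times(0,\infty)$, whose interior (in $\mathbb{R}^{k-1}\times\mathbb{R}$) is itself and whose boundary is $\mathbb{R}^{k-1}\times\{0\}$, gives $\interior(\cran^{\circ}(\LL)) = \cran^{\circ}(\LL)$. Combined with the fact that $\cran^{\circ}(\LL)\subseteq \cran^{\bullet}(\LL)$ and that $\cran^{\bullet}(\LL) \setminus \cran^{\circ}(\LL) = \ran(\LL) = \mathtt{bdry}(\cran^{\bullet}(\LL))$, one concludes $\interior(\cran^{\bullet}(\LL)) = \cran^{\circ}(\LL)$ and $\mathtt{bdry}(\cran^{\circ}(\LL)) = \ran(\LL)$, yielding item 3.

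I do not anticipate a serious obstacle: all of the analytic work has already been absorbed into Corollary~\ref{corollary:F-is-a-homeomorphism}, and the remaining argument is purely point-set topology about half-space images under a homeomorphism. The only mild care needed is to verify that the set-theoretic identities for $\cran^{\bullet}$ and $\cran^{\circ}$ match the $F$-image description above, but these follow directly from Definition~\ref{definition: loss surface - alternative characterization}.
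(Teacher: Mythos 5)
Your proposal is correct and follows essentially the same route as the paper's proof: both express $\ran(\LL)$, $\cran^{\bullet}(\LL)$, and $\cran^{\circ}(\LL)$ as $F$-images of $\mathbb{R}^{k-1}\times\{0\}$, $\mathbb{R}^{k-1}\times[0,\infty)$, and $\mathbb{R}^{k-1}\times(0,\infty)$ respectively, and then push closure, interior, and boundary through the homeomorphism $F$ of Corollary~\ref{corollary:F-is-a-homeomorphism}. The only cosmetic difference is that the paper obtains $\interior(\cran^{\bullet}(\LL)) = \cran^{\circ}(\LL)$ directly from $F(\interior(D)) = F(E)$ rather than via your slightly longer set-difference argument, but both are valid.
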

\begin{proof}
  For \emph{item 1}, define $C := \mathbb{R}^{k-1} \times \{0\}$ which is a closed subset of $\mathbb{R}^{k-1} \times \mathbb{R}$.
  Now, note that $\ran(\LL) = F(C)$ where \(F\) is as in Definition~\ref{definition:loss-surface-foliation-F}.
  Since $F$ is a homeomorphism (Corollary~\ref{corollary:F-is-a-homeomorphism}), $F(C)$ is closed as well.

  For \emph{item 2}, define \(D := \mathbb{R}^{k-1} \times [0,\infty)\) which  is a closed subset of $\mathbb{R}^{k-1} \times \mathbb{R}$.
  Then $\cran^{\bullet}(\LL) = F(D)$.
  Thus, as in the previous case $\cran^{\bullet}(\LL)$ is closed.
  Next, we have
  \[
    \mathtt{bdry}(\cran^{\bullet}(\LL))
    =
    \mathtt{bdry}(F(D)) {=} F(\mathtt{bdry}(D))
    =F(C) = \ran(\LL),
  \]
  where the second equality from the left follows from $F$ being a homeomorphism.

  For \emph{item 3}, let $E = \mathbb{R}^{k-1} \times (0,\infty)$. Then similar to the above, we have
  \[
    \interior( \cran^{\bullet}(\LL))
    =
    \interior( F(D)) = F(\interior( D))
    =F(E) = \cran^{\circ}(\LL).
  \]
  To conclude, note that $\mathtt{bdry}(\cran^{\circ}(\LL))
  =
  \mathtt{bdry}(F(E)) =
  F(\mathtt{bdry}(E)) = F(C) = \ran(\LL)$.
\end{proof}

\begin{proposition}
  \label{lemma: T is convex}
  Let $\calL$ be a regular PERM loss.
  Then $\cran^{\bullet}(\LL)$ is convex.
\end{proposition}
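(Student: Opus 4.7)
The plan is to exploit the machinery already built up in Proposition~\ref{proposition: concavity of restriction to lines}, which tells us that along any line in $\mathbb{R}^{k}$, when we decompose points via $F(\bfz,\lambda) = \LL(\bfz)+\lambda\vone$, the scalar $\lambda$-coordinate is a \emph{concave} function of the line parameter. Combined with the fact that $F$ is a bijection (Proposition~\ref{proposition:F-is-a-bijection}/Corollary~\ref{corollary:F-is-a-homeomorphism}), this essentially says that $\cran^{\bullet}(\LL) = F(\mathbb{R}^{k-1} \times [0,\infty))$ is ``star-shaped upward'' along every secant, which will force convexity.

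Concretely, pick two arbitrary points $\bfa,\bfb \in \cran^{\bullet}(\LL)$ and write them as $\bfa = \LL(\bfz_a) + \lambda_a \vone$ and $\bfb = \LL(\bfz_b) + \lambda_b \vone$ with $\lambda_a,\lambda_b \ge 0$. Set $\bfx := \bfa$ and $\bfv := \bfb - \bfa$ and apply Proposition~\ref{proposition: concavity of restriction to lines}: there exist differentiable functions $\alpha : \mathbb{R} \to \mathbb{R}^{k-1}$ and $\beta : \mathbb{R} \to \mathbb{R}$ with
\[
  t\bfv + \bfx = \LL(\alpha(t)) + \beta(t)\vone, \qquad t \in \mathbb{R},
\]
and $\beta$ is concave on $\mathbb{R}$. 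At the endpoints, the uniqueness guaranteed by Corollary~\ref{corollary:F-is-a-homeomorphism} applied to $\bfa = F(\bfz_a,\lambda_a)$ and $\bfb = F(\bfz_b,\lambda_b)$ gives $\beta(0) = \lambda_a$ and $\beta(1) = \lambda_b$.

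For any $t \in [0,1]$, concavity of $\beta$ on $[0,1]$ yields
\[
  \beta(t) \;\ge\; (1-t)\beta(0) + t\beta(1) \;=\; (1-t)\lambda_a + t\lambda_b \;\ge\; 0,
\]
since $\lambda_a,\lambda_b \ge 0$. Therefore
\[
  (1-t)\bfa + t\bfb \;=\; t\bfv + \bfx \;=\; \LL(\alpha(t)) + \beta(t)\vone \;\in\; \cran^{\bullet}(\LL),
\]
which establishes convexity.

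The main work has already been done in Proposition~\ref{proposition: concavity of restriction to lines}; the only step here is to recognize that nonnegativity of $\beta$ at the two endpoints combined with concavity forces nonnegativity on the whole segment, which is exactly what is needed to stay inside $\cran^{\bullet}(\LL)$. There is no real obstacle beyond pinning down the endpoint values $\beta(0) = \lambda_a$ and $\beta(1) = \lambda_b$, which follows immediately from the injectivity of $F$.
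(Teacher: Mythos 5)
Your proof is correct and follows essentially the same route as the paper: both parametrize the segment between two points of $\cran^{\bullet}(\LL)$ via $F$, identify the endpoint values $\beta(0)$ and $\beta(1)$ with the (nonnegative) shifts $\lambda_a$ and $\lambda_b$ using the injectivity of $F$, and invoke the concavity of $\beta$ from Proposition~\ref{proposition: concavity of restriction to lines} to conclude $\beta(t)\ge 0$ on $[0,1]$. No gaps.
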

\begin{proof}
  Let $\bmzeta,\bmxi \in \cran^{\bullet}(f)$.
  Write $\bmzeta = \LL(\bfz) + \lambda \vone$
  and
  $\bmxi = \LL(\bfw) + \mu \vone$, where $\bfz,\bfw \in \mathbb{R}^{k-1}$ and $\lambda,\mu \in [0,\infty)$.
  Let $\bfv = \bmxi - \bmzeta \in \mathbb{R}^k$ and $\bfx = \bmzeta$.
  Take $\alpha$ and $\beta$ as defined in Proposition~\ref{proposition: concavity of restriction to lines}, i.e.,
  we have for all $t \in \mathbb{R}$ that
  \begin{equation}
    \label{equation: T is convex equation 1}
    t\bfv + \bfx = F(\alpha(t), \beta(t)) = \LL(\alpha(t)) + \beta(t)\vone.
  \end{equation}
  Plugging in $t=0$ into \eqref{equation: T is convex equation 1}, we get $\bmzeta = \LL(\alpha(0)) + \beta(0)$.
  Thus, $\alpha(0) = \bfz$ and $\beta(0) = \lambda$.
  Likewise, plugging in $t= 1$, we get $\alpha(1) = \bfw$ and $\beta(1) = \mu$.
  In particular, we have $\beta(0) \ge 0$ and $\beta(1) \ge 0$.
  By
  Proposition \ref{proposition: concavity of restriction to lines}, $\beta$ is concave.
  Thus, $\beta(t) \ge 0$ for all $t \in [0,1]$, i.e.,
  \[
    t\bfv + \bfw = t \bmxi + (1-t) \bmzeta = \LL(\alpha(t)) + \beta(t) \vone \in \cran^{\bullet}(\LL)
  \]
  for all $t \in [0,1]$. This proves that $\cran^{\bullet}(\LL)$ is convex.
\end{proof}


The following result is a restatement of \citet[Theorem 9]{beltagy2013boundary}:
\begin{theorem}[\citet{beltagy2013boundary}]
  \label{theorem: conv hull of boundary}
  Let $C$ be a nonempty closed convex subset of $\mathbb{R}^n$. If $C$ contains no hyperplane, then $C = \mathrm{conv}(\mathtt{bdry}(C))$.
\end{theorem}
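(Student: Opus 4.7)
The plan is to prove both inclusions. The inclusion $\mathrm{conv}(\mathtt{bdry}(C)) \subseteq C$ is immediate from $\mathtt{bdry}(C) \subseteq C$ (as $C$ is closed) together with the convexity of $C$. For the reverse inclusion, fix $x \in C$; if $x \in \mathtt{bdry}(C)$ we are done, and if $\interior(C) = \emptyset$, then $C \subseteq \mathtt{bdry}(C)$ and the inclusion is again trivial. So assume $x \in \interior(C)$. The strategy is to find a direction $v \ne 0$ such that $C \cap (x + \mathbb{R}v)$ is a bounded line segment: its two endpoints will then lie in $\mathtt{bdry}(C)$ (any neighborhood of such an endpoint contains points of $x + \mathbb{R}v$ outside $C$), and $x$ is a proper convex combination of these endpoints.

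Boundedness of $C \cap (x + \mathbb{R}v)$ is equivalent to $v \notin \mathrm{rec}(C)$ \emph{and} $-v \notin \mathrm{rec}(C)$, where $\mathrm{rec}(C)$ denotes the recession cone. The core obstacle is to establish existence of such a $v$ under the no-hyperplane hypothesis, and I would argue by contrapositive: assume $\mathrm{rec}(C) \cup (-\mathrm{rec}(C)) = \mathbb{R}^n$ and let $L := \mathrm{rec}(C) \cap (-\mathrm{rec}(C))$ denote the lineality space of $C$. Quotienting by $L$, the image $K$ of $\mathrm{rec}(C)$ in $\mathbb{R}^n/L$ is a \emph{pointed} closed convex cone with $K \cup (-K) = \mathbb{R}^n/L$. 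For any $m \ge 2$, $\mathbb{R}^m \setminus \{0\}$ is connected, whereas $K \setminus \{0\}$ and $-K \setminus \{0\}$ would be disjoint closed subsets of $\mathbb{R}^m \setminus \{0\}$ covering it; this forces $\dim(\mathbb{R}^n/L) \le 1$, equivalently $\dim(L) \ge n-1$. But then for any $x_0 \in C$ the translate $x_0 + L$ is contained in $C$ (by definition of the lineality space) and has dimension at least $n-1$, so $C$ contains a hyperplane, contradicting the hypothesis.

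Granted existence of such a $v$, $C \cap (x + \mathbb{R}v)$ is a compact interval $\{x + tv : t \in [a,b]\}$: boundedness follows from $\pm v \notin \mathrm{rec}(C)$, and $a < 0 < b$ follows from $x \in \interior(C)$. Then $x = \tfrac{b}{b-a}(x + av) + \tfrac{-a}{b-a}(x + bv)$ writes $x$ as a convex combination of two points of $\mathtt{bdry}(C)$, completing the proof. The primary difficulty is the connectedness argument that bounds $\dim(L)$; once that is in hand, the rest is routine.
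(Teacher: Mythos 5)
The paper does not actually prove this result; it imports it wholesale as Theorem 9 of \citet{beltagy2013boundary}, so there is no internal argument to compare yours against. On its own terms your proof is correct and essentially self-contained modulo standard recession-cone facts. The easy inclusion and the reductions (to $x \in \interior(C)$, noting $C = \mathtt{bdry}(C)$ when the interior is empty) are handled properly, and the main construction---intersecting $C$ with a line through $x$ in a direction $v$ with $\pm v \notin \mathrm{rec}(C)$, so that the intersection is a compact segment whose endpoints lie in $\mathtt{bdry}(C)$ and whose proper convex combination recovers $x$---is sound; the equivalence between boundedness of $C \cap (x+\mathbb{R}v)$ and $\pm v \notin \mathrm{rec}(C)$ is exactly where closedness of $C$ enters (a recession direction witnessed along one ray in a closed convex set is a recession direction globally). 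The step that genuinely needs care is the existence of such a $v$, and your contrapositive works: the image $K$ of $\mathrm{rec}(C)$ in $\mathbb{R}^n/L$, $L$ the lineality space, is pointed (if $v \in \mathrm{rec}(C)$ and $w \in \mathrm{rec}(C)$ with $v+w \in L$, then $-v = w - (v+w) \in \mathrm{rec}(C)$) and closed (identify $\mathbb{R}^n/L$ with $L^{\perp}$ via $\mathrm{rec}(C) = L + (\mathrm{rec}(C) \cap L^{\perp})$), so $K\setminus\{0\}$ and $(-K)\setminus\{0\}$ would disconnect $\mathbb{R}^m\setminus\{0\}$ for $m\ge 2$; hence $\dim L \ge n-1$ and $x_0 + L$ supplies a hyperplane inside $C$. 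I see no gap; the only cosmetic remark is that the $n=1$ case is vacuous since every nonempty subset of $\mathbb{R}$ contains a hyperplane (a point).
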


\begin{proposition}\label{proposition: boundary of S is R}
  Let $\calL$ be a regular PERM loss with reduced form $\LL$.
  Then
  $\cran(\LL) = \cran^{\bullet}(\LL)$ and
  $\mathtt{bdry}(\cran(\LL)) = \ran(\LL)$.
\end{proposition}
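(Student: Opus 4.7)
The plan is to apply the Beltagy theorem (Theorem~\ref{theorem: conv hull of boundary}) to the set $C := \cran^{\bullet}(\LL)$ and then deduce both identities. All the ingredients have already been assembled in the preceding lemmas; the only thing left to verify is the no-hyperplane hypothesis.

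First I would recall from Lemma~\ref{lemma: R is closed} (item 2) that $C$ is closed with $\mathtt{bdry}(C) = \ran(\LL)$, and from Proposition~\ref{lemma: T is convex} that $C$ is convex. Nonemptiness is immediate since $\ran(\LL) \subseteq C$. So Beltagy's theorem applies provided $C$ contains no hyperplane.

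The key observation for the no-hyperplane condition is that $C \subseteq \mathbb{R}^{k}_{\ge 0}$. Indeed, since $\calL$ is a regular PERM loss, its template $\TP$ is nonnegative, so by the relative-margin form $\LL_{y}(\bfz) = \TP(\ico_{y}\bfz) \ge 0$ for every $y$ and every $\bfz$. Thus $\ran(\LL) \subseteq \mathbb{R}^{k}_{\ge 0}$, and adding any $\lambda \vone$ with $\lambda \ge 0$ preserves this, so $C \subseteq \mathbb{R}^{k}_{\ge 0}$. Because $k \ge 2$, no $(k{-}1)$-dimensional affine subspace of $\mathbb{R}^{k}$ can lie inside the positive orthant: a hyperplane $H$ has an associated $(k{-}1)$-dimensional linear direction space $V$, and if $H \subseteq \mathbb{R}^{k}_{\ge 0}$ then for every $\bfv \in V$ both $\bfv$-directions from any point in $H$ must remain nonnegative componentwise, forcing $V = \{\vzero\}$, a contradiction.

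Having verified the hypothesis, Beltagy gives
\[
C \;=\; \mathrm{conv}(\mathtt{bdry}(C)) \;=\; \mathrm{conv}(\ran(\LL)) \;=\; \cran(\LL),
\]
which is the first claimed identity $\cran(\LL) = \cran^{\bullet}(\LL)$. The second identity then follows immediately: $\mathtt{bdry}(\cran(\LL)) = \mathtt{bdry}(\cran^{\bullet}(\LL)) = \ran(\LL)$ by Lemma~\ref{lemma: R is closed} (item 2). I don't anticipate a genuine obstacle here; the only subtle part is the no-hyperplane argument, but the containment in the positive orthant makes it a one-line observation.
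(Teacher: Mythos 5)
Your proposal is correct and follows essentially the same route as the paper's proof: verify nonemptiness, closedness, and convexity of $\cran^{\bullet}(\LL)$ via Lemma~\ref{lemma: R is closed} and Proposition~\ref{lemma: T is convex}, rule out hyperplanes using containment in the nonnegative orthant, and apply Theorem~\ref{theorem: conv hull of boundary}. Your spelled-out direction-space argument for the no-hyperplane condition is a slightly more explicit version of the observation the paper simply asserts, but the substance is identical.
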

\begin{proof}
  Clearly, $\cran^{\bullet}(\LL)$ is nonempty. Furthermore, by
  Lemma \ref{lemma: R is closed} and Lemma \ref{lemma: T is convex}, $\cran^{\bullet}(\LL)$ is closed and convex.
  Next, note that $\cran^{\bullet}(\LL)$ lies in the nonnegative quadrant $[0,\infty)^k$.
  Since no hyperplane lies entirely inside the nonnegative quadrant, $\cran^{\bullet}(\LL)$ cannot contain any hyperplane.
  Hence, we have verified that $\cran^{\bullet}(\LL)$ satisfies the condition of Theorem \ref{theorem: conv hull of boundary}.
  To finish the proof, we have
  \begin{align}
    \cran(\LL)
    &=
    \mathrm{conv}(\ran(\LL))
    \qquad \because \mbox{Definition of $\cran(\LL)$}
    \\
    &=
    \mathrm{conv}(\mathtt{bdry}(\cran^{\bullet}(\LL)))
    \qquad \because
    \mbox{Lemma \ref{lemma: T is convex}}
    \\
    &=
    \cran^{\bullet}(\LL)
    \qquad \because
    \mbox{Theorem \ref{theorem: conv hull of boundary}}
  \end{align}
  This proves the first part. For the second part, note
  that
\(\mathtt{bdry}(\cran(\LL))
    = \mathtt{bdry}(\cran^{\bullet}(\LL)) =
    \ran(\LL)\)
  by
  \mbox{Lemma \ref{lemma: T is convex}}.
\end{proof}
Before moving on, we summarize the  results on \(\cran(\LL)\) we have thus obtained  below:
\begin{corollary}\label{corollary: helpful summary of S(L)}
  Let $\calL$ be a regular PERM loss with reduced form $\LL$.
 Recall from Definition~\ref{definition: loss surface - alternative characterization} \[\cran^{\circ}(\LL):=
\{
\bmzeta + \lambda \vone : \bmzeta \in \ran(\LL), \lambda \in (0,\infty)
\}.\]
  Then
  $\cran(\LL)$ is a closed and convex set with the following properties:
  \begin{enumerate}
    \item
  \label{corollary: helpful summary of S(L) - characterization}
  $\cran(\LL)
  =
\{
\bmzeta + \lambda \vone : \bmzeta \in \ran(\LL), \lambda \in [0,\infty)
\}$
\item
  $\interior(\cran(\LL))
  =
  \cran^{\circ}(\LL)$ (see Definitions~\ref{definition:loss-surface} and \ref{definition: loss surface - alternative characterization})
\item $\mathtt{bdry}(\cran(\LL)) = \mathtt{bdry}(\cran^{\circ}(\LL)) = \ran(\LL)$.
  \end{enumerate}

\end{corollary}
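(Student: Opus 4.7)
The corollary is essentially a bookkeeping restatement that collects the prior results in this subsection into a single convenient reference, so the plan is to assemble it from what has already been proved rather than to do any new work. Specifically, I expect to chain together Proposition~\ref{proposition: boundary of S is R}, Lemma~\ref{lemma: R is closed}, and Proposition~\ref{lemma: T is convex}, which together already cover every clause in the statement.

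First I would invoke Proposition~\ref{proposition: boundary of S is R} to replace $\cran(\LL)$ by $\cran^{\bullet}(\LL)$ throughout. This single identification is the key move, because all three enumerated properties are stated in the previous results in terms of $\cran^{\bullet}(\LL)$ or $\cran^{\circ}(\LL)$, never in terms of the original convex hull $\cran(\LL)$. Once I have $\cran(\LL) = \cran^{\bullet}(\LL)$, closedness of $\cran(\LL)$ is item~2 of Lemma~\ref{lemma: R is closed} and convexity is Proposition~\ref{lemma: T is convex}.

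Next, I would read off the three enumerated claims. Property~1 is literally the definition of $\cran^{\bullet}(\LL)$ from Definition~\ref{definition: loss surface - alternative characterization} combined with $\cran(\LL) = \cran^{\bullet}(\LL)$. Property~2, namely $\interior(\cran(\LL)) = \cran^{\circ}(\LL)$, is obtained by substituting $\cran(\LL) = \cran^{\bullet}(\LL)$ into item~3 of Lemma~\ref{lemma: R is closed}, which states $\cran^{\circ}(\LL) = \interior(\cran^{\bullet}(\LL))$. Property~3 consists of two equalities: $\mathtt{bdry}(\cran(\LL)) = \ran(\LL)$ is the second half of Proposition~\ref{proposition: boundary of S is R}, and $\mathtt{bdry}(\cran^{\circ}(\LL)) = \ran(\LL)$ is item~3 of Lemma~\ref{lemma: R is closed}.

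There is no real obstacle here, since every ingredient has been established. The only mild subtlety worth noting explicitly in the write-up is that the three prior results were stated separately for $\cran^{\bullet}(\LL)$ and $\cran^{\circ}(\LL)$; translating them to statements about $\cran(\LL)$ depends on the single identification $\cran(\LL) = \cran^{\bullet}(\LL)$, so I would put that substitution up front and then simply list the three items with pointers to the relevant lemmas, without reproducing any of their proofs.
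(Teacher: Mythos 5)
Your proposal is correct and matches the paper's intent exactly: the paper states this corollary as a summary with no separate proof, and the assembly you describe — identifying $\cran(\LL)$ with $\cran^{\bullet}(\LL)$ via Proposition~\ref{proposition: boundary of S is R} and then reading off the three items from Lemma~\ref{lemma: R is closed} and Proposition~\ref{lemma: T is convex} — is precisely the justification the preceding results supply.
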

We state one more result about the set $\cran^{\circ}(\LL)$ which will be useful later.

\begin{lemma}\label{lemma: superprediction set representation}
  Let $\calL$ be a regular PERM loss with reduced form $\LL$.
  Then
  $\cran^{\circ}(\LL)
  =
  \{
    \bmzeta \in \mathbb{R}^{k}: \exists \bfz \in \mathbb{R}^{k-1} \mbox{ such that }\bmzeta \succ \LL(\bfz)
  \}$.
\end{lemma}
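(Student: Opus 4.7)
The plan is to prove the two set inclusions separately. The forward inclusion $\cran^{\circ}(\LL) \subseteq \{\bmzeta : \exists \bfz,\, \bmzeta \succ \LL(\bfz)\}$ is essentially definitional: if $\bmzeta = \LL(\bfz) + \lambda \vone$ with $\lambda > 0$, then componentwise $\zeta_y = \LL_y(\bfz) + \lambda > \LL_y(\bfz)$ for every $y \in [k]$, which is exactly $\bmzeta \succ \LL(\bfz)$.

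The reverse inclusion is the substantive direction. Given $\bmzeta \succ \LL(\bfz)$ for some $\bfz \in \mathbb{R}^{k-1}$, I want to exhibit $\bfw \in \mathbb{R}^{k-1}$ and $\lambda > 0$ with $\bmzeta = \LL(\bfw) + \lambda \vone$. Since $F$ is a bijection by Proposition~\ref{proposition:F-is-a-bijection} (equivalently Corollary~\ref{corollary:F-is-a-homeomorphism}), there exists a unique $(\bfw, \lambda) \in \mathbb{R}^{k-1} \times \mathbb{R}$ such that $F(\bfw,\lambda) = \LL(\bfw) + \lambda \vone = \bmzeta$. So the entire task reduces to proving that this unique $\lambda$ satisfies $\lambda > 0$.

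The main obstacle is ruling out $\lambda \le 0$, and the key idea is to exploit the surjectivity of the link function. Suppose for contradiction that $\lambda \le 0$. Then $\LL(\bfw) = \bmzeta - \lambda \vone \succeq \bmzeta \succ \LL(\bfz)$, so $\LL(\bfw) \succ \LL(\bfz)$. By Proposition~\ref{proposition:link-function-is-bijection}, the link function $\gsm^{\calL}$ is a bijection onto $\mathbb{R}^{k-1}$, so there exists $\bfp \in \interior(\Delta^k)$ with $\gsm^{\calL}(\bfp) = \bfw$, meaning $\bfw$ minimizes $C^{\calL}_{\bfp}(\cdot) = \langle \bfp, \LL(\cdot) \rangle$. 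In particular, $\langle \bfp, \LL(\bfw) \rangle \le \langle \bfp, \LL(\bfz) \rangle$. But $\bfp \succ \vzero$ combined with $\LL(\bfw) \succ \LL(\bfz)$ gives $\langle \bfp, \LL(\bfw) \rangle > \langle \bfp, \LL(\bfz) \rangle$, a contradiction. Hence $\lambda > 0$, and $\bmzeta \in \cran^{\circ}(\LL)$ as required.
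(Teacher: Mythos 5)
Your proof is correct, but it takes a different route from the paper's for the substantive ($\supseteq$) direction. The paper parametrizes the segment from $\LL(\bfz)$ to $\bmzeta$ via $F^{-1}$, obtaining differentiable curves $\alpha(t),\beta(t)$ with $t\bfv+\LL(\bfz)=\LL(\alpha(t))+\beta(t)\vone$, and invokes part 2 of Proposition~\ref{proposition: concavity of restriction to lines} ($\bfv\succ\vzero$ implies $\beta'>0$) to conclude $\beta(1)>\beta(0)=0$; this reuses the curve machinery already needed to prove convexity of $\cran^{\bullet}(\LL)$. You instead apply surjectivity of $F$ (Proposition~\ref{proposition:F-is-a-bijection}) to write $\bmzeta=\LL(\bfw)+\lambda\vone$ and rule out $\lambda\le 0$ by a contradiction with surjectivity of the link function (Proposition~\ref{proposition:link-function-is-bijection}): if $\lambda\le0$ then $\LL(\bfw)\succ\LL(\bfz)$, yet $\bfw$ must minimize $\langle\bfp,\LL(\cdot)\rangle$ for some $\bfp\succ\vzero$, which is impossible. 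This is essentially the same contradiction trick the paper uses to prove injectivity of $F$ in Lemma~\ref{lemma: foliation map is injective}, now deployed with entrywise domination in place of a shift by $\vone$. Your argument is somewhat more elementary in that it bypasses the differentiability and sign analysis of $\beta'$, at the cost of leaning on the full strength of $F$ being a bijection; the paper's version only needs $F^{-1}$ along a single line and yields the monotonicity of $\beta$ on the whole segment as a byproduct. Both sets of prerequisites are established before this lemma in the paper, so there is no circularity either way.
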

\begin{proof}
  Recall that by definition we have
$\cran^{\circ}(\LL)
=
\{
  \LL(\bfz)+ \lambda \vone : \bfz \in \mathbb{R}^{k-1}, \lambda \in (0,\infty)
\}
$.
Thus, the ``$\subseteq$'' direction is immediate. For the other inclusion, take $\bmzeta = \LL(\bfz) + \bfv$ where $\bfz \in \mathbb{R}^{k-1}$ and $\bfv \succ 0$.
Let $\bfx = \LL(\bfz)$.
  Take $\alpha$ and $\beta$ as defined in Proposition \ref{proposition: concavity of restriction to lines}, i.e.,
  we have for all $t \in \mathbb{R}$ that
  \begin{equation}
    \label{equation: superprediction set 1}
    t\bfv + \bfx = F(\alpha(t), \beta(t)) = \LL(\alpha(t)) + \beta(t)\vone.
  \end{equation}
  Plugging in $t=0$ into \eqref{equation: superprediction set 1}, we get $\bfx= \LL(\bfz)= \LL(\alpha(0)) + \beta(0)$.
  Thus, $\alpha(0) = \bfz$ and $\beta(0) = 0$.
  Recall that $\bfv = \bmzeta - \LL(\bfz) \succ 0$  by assumption.
  Hence, by
  Proposition~\ref{proposition: concavity of restriction to lines}, $\beta$ is strictly increasing.
  In particular, $\beta(1) > \beta(0) = 0$.
  Now, plugging in $t= 1$ into \eqref{equation: superprediction set 1}, we get $\bfv + \bfx = \bfv + \LL(\bfz) = \bmzeta = \LL(\alpha(1)) + \beta(1)\vone$.
  This shows that $\bmzeta \in \cran^{\circ}(\LL)$, as desired.
\end{proof}

\begin{remark}\label{remark:relation-to-superprediction-set}
  From basic point-set topology\footnote{The boundary of a set \(A\) is defined as the closure of \(A\) minus the interior of \(A\).}, we know that a closed set is the union of its interior and its boundary.  Thus, $\cran(\LL) = \interior(\cran(\LL)) \, \cup \,  \mathtt{bdry}(\cran(\LL))$.
  Hence, a consequence of Lemma~\ref{lemma: superprediction set representation} and Lemma~\ref{lemma: R is closed} is that
  $\cran(\LL)$ is precisely the \emph{superprediction set} of $\LL$ (see \citet[Definition 15]{williamson2016composite} and \citet{kalnishkan2008weak}):
  \[
    \cran(\LL) = \{
    \bmzeta \in \mathbb{R}^{k}: \exists \bfz \in \mathbb{R}^{k-1} \mbox{ such that }\bmzeta \succeq \LL(\bfz)
    \}.
  \]
\end{remark}

  Recall \citep[Definition 5]{tewari2007consistency}:
  \begin{definition}[Admissible sets; \citet{tewari2007consistency}]\label{definition:admissible-sets}
    Let $S \subseteq \mathbb{R}^{k}_+$ be a set
    and $\bmzeta \in \mathbb{R}^k_+$.
    Define the set
  \(    \mathcal{N}(\bmzeta; S) :=
  \{\bfp \in \Delta^k :
  \langle \bmxi - \bmzeta, \bfp \rangle \ge 0,\, \forall \bmxi \in S\}.
\)
We say that $S$ is \emph{admissible} if
for all $\bmzeta \in \mathtt{bdry}(S)$ and $\bfp \in \mathcal{N}(\bmzeta; S)$ we have
$\argmin(\bmzeta) \subseteq \argmax(\bfp)$.
  \end{definition}

  \begin{proposition}[\citet{tewari2007consistency}]
    \label{proposition: tewari sufficient condition for admissibility}
    Let $S \subseteq \mathbb{R}^{k}_+$ be a symmetric set. If $|\mathcal{N}(\bmzeta; S)| = 1$ for all $\bmzeta \in \mathtt{bdry}(S)$, then $S$ is admissible.
  \end{proposition}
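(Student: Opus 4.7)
The plan is to argue by contradiction. Suppose $S$ is not admissible, so there exist $\bmzeta \in \mathtt{bdry}(S)$ and $\bfp \in \mathcal{N}(\bmzeta; S)$ together with indices $i \in \argmin(\bmzeta)$ and $j \in \argmax(\bfp)$ such that $i \notin \argmax(\bfp)$; in particular, $\zeta_i \le \zeta_j$ and $p_i < p_j$. The strategy is to exhibit a \emph{second} element of $\mathcal{N}(\bmzeta; S)$ distinct from $\bfp$, thereby contradicting the singleton hypothesis.

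The second element is constructed by permuting the two ``offending'' coordinates. Let $\sigma = \tsp_{(i,j)}$ be the transposition of $i$ and $j$, and set $\bfp' := \mathbf{S}_\sigma \bfp$. Since $p_i \ne p_j$, we have $\bfp' \ne \bfp$. The heart of the proof is then the claim that $\bfp' \in \mathcal{N}(\bmzeta; S)$, which combined with $|\mathcal{N}(\bmzeta; S)| = 1$ forces $\bfp' = \bfp$, the desired contradiction.

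To verify the claim, fix an arbitrary $\bmxi \in S$. Orthogonality of $\mathbf{S}_\sigma$ gives
\begin{equation*}
\langle \bmxi, \bfp'\rangle = \langle \bmxi, \mathbf{S}_\sigma \bfp\rangle = \langle \mathbf{S}_\sigma \bmxi, \bfp\rangle.
\end{equation*}
Symmetry of $S$ implies $\mathbf{S}_\sigma \bmxi \in S$, so using $\bfp \in \mathcal{N}(\bmzeta; S)$ I obtain $\langle \bmxi, \bfp'\rangle \ge \langle \bmzeta, \bfp\rangle$. Reducing the problem to a comparison of $\langle \bmzeta, \bfp\rangle$ and $\langle \bmzeta, \bfp'\rangle$, I would then compute
\begin{equation*}
\langle \bmzeta, \bfp\rangle - \langle \bmzeta, \bfp'\rangle = \langle \bmzeta - \mathbf{S}_\sigma \bmzeta, \bfp\rangle = (\zeta_i - \zeta_j)(p_i - p_j) \ge 0,
\end{equation*}
since both factors are nonpositive. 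Chaining these inequalities yields $\langle \bmxi - \bmzeta, \bfp'\rangle \ge 0$ for every $\bmxi \in S$, establishing $\bfp' \in \mathcal{N}(\bmzeta; S)$.

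I do not expect a genuine obstacle here; the argument is a single-transposition ``rearrangement-inequality'' trick enabled by the symmetry of $S$. The only delicate point is careful bookkeeping of how $\mathbf{S}_\sigma$ moves between the two slots of the inner product (the identity $\langle \bfv, \mathbf{S}_\sigma \bfw\rangle = \langle \mathbf{S}_\sigma \bfv, \bfw\rangle$ is used twice), and the verification that the chosen $j \in \argmax(\bfp)$ does indeed satisfy $\zeta_j \ge \zeta_i$, which is immediate from $i \in \argmin(\bmzeta)$.
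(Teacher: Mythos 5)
Your argument is correct. Note that the paper does not prove this proposition at all --- it is imported verbatim from \citet{tewari2007consistency} (their Lemma~6) --- so there is no in-paper proof to compare against; your transposition/symmetrization argument is a valid self-contained reconstruction of the original one, and every step checks out: $\bfp' = \mathbf{S}_{\tsp_{(i,j)}}\bfp$ remains in $\Delta^k$, the self-adjointness $\langle \bfv, \mathbf{S}_\sigma \bfw\rangle = \langle \mathbf{S}_\sigma \bfv, \bfw\rangle$ holds because a transposition matrix is symmetric, and the sign computation $(\zeta_i - \zeta_j)(p_i - p_j) \ge 0$ correctly exploits $i \in \argmin(\bmzeta)$ and $p_i < p_j$ to produce a second normal vector, contradicting $|\mathcal{N}(\bmzeta;S)|=1$.
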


  \begin{lemma}\label{lemma: normal set are all the same}
  Let $\calL$ be a regular PERM loss with reduced form $\LL$.
    Let $\bmzeta \in \ran(\LL)$. Then we have
    $
    \mathcal{N}(\bmzeta; \ran(\LL))
    =
\mathcal{N}(\bmzeta; \cran(\LL))
=\mathcal{N}(\bmzeta; \cran^{\circ}(\LL))$.
  \end{lemma}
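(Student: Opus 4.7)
The plan is to prove both desired set equalities by establishing the two inclusions in each case. The first, easier observation is that $\mathcal{N}(\bmzeta;\,\cdot\,)$ is anti-monotone in its second argument: if $S\subseteq S'$ then $\mathcal{N}(\bmzeta;S')\subseteq \mathcal{N}(\bmzeta;S)$, since any $\bfp$ satisfying the inequality on $S'$ trivially satisfies it on the smaller set $S$. Since $\ran(\LL)\subseteq \cran(\LL)$ and $\cran^{\circ}(\LL)\subseteq \cran(\LL)$ (the latter two facts follow from Corollary~\ref{corollary: helpful summary of S(L)}), we immediately get the two inclusions
$\mathcal{N}(\bmzeta;\cran(\LL))\subseteq\mathcal{N}(\bmzeta;\ran(\LL))$ and
$\mathcal{N}(\bmzeta;\cran(\LL))\subseteq\mathcal{N}(\bmzeta;\cran^{\circ}(\LL))$.

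For the reverse inclusion $\mathcal{N}(\bmzeta;\ran(\LL))\subseteq \mathcal{N}(\bmzeta;\cran(\LL))$, I will use the explicit ``ray'' description of $\cran(\LL)$ from item~1 of Corollary~\ref{corollary: helpful summary of S(L)}. Given $\bfp\in \mathcal{N}(\bmzeta;\ran(\LL))$ and an arbitrary $\bmxi \in \cran(\LL)$, write $\bmxi = \bmxi_0 + \lambda \vone$ with $\bmxi_0\in \ran(\LL)$ and $\lambda \ge 0$. Then
\[
\langle \bmxi - \bmzeta, \bfp\rangle
= \langle \bmxi_0 - \bmzeta, \bfp\rangle + \lambda \langle \vone, \bfp \rangle
= \langle \bmxi_0 - \bmzeta, \bfp\rangle + \lambda \ge 0,
\]
using $\langle \vone,\bfp\rangle = 1$ (since $\bfp \in \Delta^k$), nonnegativity of $\lambda$, and the hypothesis on $\bfp$.

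For the final direction $\mathcal{N}(\bmzeta;\cran^{\circ}(\LL))\subseteq\mathcal{N}(\bmzeta;\cran(\LL))$, I will use the same ray structure to approximate points of $\cran(\LL)$ by points of $\cran^{\circ}(\LL)$ without invoking general density arguments. Given $\bfp\in\mathcal{N}(\bmzeta;\cran^{\circ}(\LL))$ and $\bmxi\in\cran(\LL)$, observe that for every $\epsilon > 0$, the point $\bmxi + \epsilon \vone$ lies in $\cran^{\circ}(\LL)$ by definition (Definition~\ref{definition: loss surface - alternative characterization}). Applying the defining inequality at $\bmxi+\epsilon\vone$ yields $\langle \bmxi - \bmzeta, \bfp\rangle + \epsilon \ge 0$, and letting $\epsilon \downarrow 0$ gives $\langle \bmxi - \bmzeta, \bfp\rangle \ge 0$, as required.

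There is no genuine obstacle here: the argument is essentially a bookkeeping exercise that leverages the structural description of $\cran(\LL)$ already established in Corollary~\ref{corollary: helpful summary of S(L)} and the trivial identity $\langle \vone, \bfp\rangle = 1$ for $\bfp\in\Delta^k$. The only small subtlety is to avoid appealing to topological density of $\cran^{\circ}(\LL)$ in $\cran(\LL)$; the ray trick $\bmxi\mapsto\bmxi+\epsilon\vone$ handles this cleanly and keeps the proof self-contained.
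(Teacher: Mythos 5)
Your proof is correct and follows essentially the same route as the paper: the easy inclusions come from anti-monotonicity of $\mathcal{N}(\bmzeta;\,\cdot\,)$, and the inclusion $\mathcal{N}(\bmzeta;\ran(\LL))\subseteq\mathcal{N}(\bmzeta;\cran(\LL))$ uses the identical ray decomposition $\bmxi=\bmxi_0+\lambda\vone$ from Corollary~\ref{corollary: helpful summary of S(L)}. The only difference is in the last inclusion, where the paper invokes $\closure(\cran^{\circ}(\LL))=\cran(\LL)$ and continuity, while you construct the explicit approximating points $\bmxi+\epsilon\vone\in\cran^{\circ}(\LL)$ and let $\epsilon\downarrow 0$ --- the same limiting argument, just made self-contained.
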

  \begin{proof}
    We first prove that $
    \mathcal{N}(\bmzeta; \ran(\LL))
    =
\mathcal{N}(\bmzeta; \cran(\LL))$.
    Since $\cran(\LL) \supseteq  \ran(\LL)$, we immediately have
    $
    \mathcal{N}(\bmzeta; \ran(\LL))
    \supseteq
\mathcal{N}(\bmzeta; \cran(\LL))$.
For the other inclusion,
we first note that $\cran(\LL) = \cran^{\bullet}(\LL)$ by Proposition~\ref{proposition: boundary of S is R}.
Thus, every $\bmxi \in \cran(\LL)$ can be written as $\bmxi = \bmalpha + \beta \vone$ for some $\bmalpha \in \ran(\LL)$  and $\beta \ge 0$.
Now, let $\bfp \in
    \mathcal{N}(\bmzeta; \ran(\LL))$ and let $\bmxi \in \cran(\LL)$ be decomposed as in the preceding sentence.
    Then
\[
  \langle \bmxi - \bmzeta, \bfp \rangle
  =
  \langle \bmalpha + \beta \vone - \bmzeta, \bfp \rangle
  =
  \langle \bmalpha - \bmzeta,\bfp \rangle + \beta \langle \vone , \bfp \rangle \ge 0
\]
where the last inequality holds since (1) $\langle \bmalpha - \bmzeta,\bfp \rangle \ge 0$ because $\bfp \in
    \mathcal{N}(\bmzeta; \ran(\LL))$, and (2) $\beta \ge 0$.
Hence, such a $\bfp$ satisfies
$\langle \bmxi - \bmzeta, \bfp \rangle \ge 0,\, \forall \bmxi \in \cran(\LL)$ as well which implies that $\bfp \in \mathcal{N}(\bmzeta; \cran(\LL))$, as desired.

Next, we prove $
\mathcal{N}(\bmzeta; \cran(\LL))
=\mathcal{N}(\bmzeta; \cran^{\circ}(\LL))$.
Again, since $\cran(\LL) \supseteq  \cran^{\circ}(\LL)$, we immediately have
    $
    \mathcal{N}(\bmzeta; \cran^{\circ}(\LL))
    \supseteq
\mathcal{N}(\bmzeta; \cran(\LL))$.
For the other inclusion, we first note that \(\closure(\cran^{\circ}(\LL)) = \cran(\LL)\).
Suppose $\bfp \in \Delta^k$ is such that $\langle \bmxi - \bmzeta, \bfp \rangle \ge 0$ for al $\bmxi \in \cran^{\circ}(\LL)$.
Then by continuity, we must have that $\langle \bmxi - \bmzeta, \bfp \rangle \ge 0$ for all $\bmxi \in \closure(\cran^{\circ}(\LL)) = \cran(\LL)$.
  \end{proof}

\begin{proposition}
  \label{proposition: main result on admissibility}
  Let $\calL$ be a regular PERM loss with reduced form $\LL$.
  Then
  $\cran(\LL)$ and $\cran^{\circ}(\LL)$ are both admissible.
\end{proposition}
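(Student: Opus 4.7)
The plan is to apply Proposition~\ref{proposition: tewari sufficient condition for admissibility}. For this, I need two ingredients: symmetry of the sets $\cran(\LL)$ and $\cran^{\circ}(\LL)$, and uniqueness of the normal $\mathcal{N}(\bmzeta; \cdot)$ at every boundary point.

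For symmetry, I first observe that since $\calL$ is permutation-equivariant, $\mathbf{S}_{\sigma}\calL(\bfv) = \calL(\mathbf{S}_{\sigma}\bfv)$, so $\ran(\calL)$ is symmetric. Using Remark~\ref{remark:relative-margin-form-summarized} we have $\ran(\LL) = \ran(\calL)$, hence $\ran(\LL)$ is symmetric. Taking convex hulls preserves symmetry, so $\cran(\LL)$ is symmetric. Since $\mathbf{S}_{\sigma}\vone = \vone$, the set $\cran^{\circ}(\LL) = \{\bmzeta + \lambda\vone : \bmzeta \in \ran(\LL),\,\lambda > 0\}$ is also symmetric.

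For uniqueness of the normal, fix an arbitrary $\bmzeta \in \mathtt{bdry}(\cran(\LL)) = \ran(\LL)$ (Corollary~\ref{corollary: helpful summary of S(L)}) and write $\bmzeta = \LL(\bfz)$ for a unique $\bfz$ (Corollary~\ref{corollary: L is injective}). By Lemma~\ref{lemma: normal set are all the same}, $\mathcal{N}(\bmzeta;\cran(\LL)) = \mathcal{N}(\bmzeta;\ran(\LL))$, so it suffices to analyze the latter. A vector $\bfp \in \Delta^k$ lies in $\mathcal{N}(\bmzeta;\ran(\LL))$ iff $\bfz$ minimizes $C^{\calL}_{\bfp}$. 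By Corollary~\ref{corollary: boundary p has no argmin}, this forces $\bfp \in \interior(\Delta^k)$, and then by Corollary~\ref{corollary: inverse link map} the minimizer is unique, so $\gsm^{\calL}(\bfp) = \bfz$. Because $\gsm^{\calL}$ is a bijection (Proposition~\ref{proposition:link-function-is-bijection}), there is exactly one such $\bfp$. Thus $|\mathcal{N}(\bmzeta;\cran(\LL))| = 1$ for every boundary point.

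Combining symmetry with this singleton property, Proposition~\ref{proposition: tewari sufficient condition for admissibility} immediately yields admissibility of $\cran(\LL)$. For $\cran^{\circ}(\LL)$, the boundary again equals $\ran(\LL)$ (Corollary~\ref{corollary: helpful summary of S(L)}) and Lemma~\ref{lemma: normal set are all the same} gives $\mathcal{N}(\bmzeta;\cran^{\circ}(\LL)) = \mathcal{N}(\bmzeta;\cran(\LL))$ at every such $\bmzeta$, so the singleton conclusion transfers verbatim and Proposition~\ref{proposition: tewari sufficient condition for admissibility} applies. The main conceptual step is verifying that elements of $\mathcal{N}(\bmzeta;\ran(\LL))$ correspond bijectively to minimizers of the conditional risk via $\gsm^{\calL}$; the rest is bookkeeping with the results already established in Section~\ref{section:link-function} and Section~\ref{section:geometry-of-loss-surface}.
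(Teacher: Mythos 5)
Your proof is correct and follows essentially the same route as the paper: reduce to Proposition~\ref{proposition: tewari sufficient condition for admissibility}, identify the boundary with $\ran(\LL)$ via Corollary~\ref{corollary: helpful summary of S(L)}, pass to $\mathcal{N}(\bmzeta;\ran(\LL))$ via Lemma~\ref{lemma: normal set are all the same}, and show the normal set is a singleton by identifying it with the fiber of the bijective link function $\gsm^{\calL}$ over $\bfz$. Your explicit verification of the symmetry hypothesis is a small extra care the paper leaves implicit, but it does not change the argument.
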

\begin{proof}
  By Proposition~\ref{proposition: tewari sufficient condition for admissibility}, it suffices to check the following two claims hold:
  \begin{compactenum}
    \item
  for all $\bmzeta \in \mathtt{bdry}(\cran(\LL))$ we have $|\mathcal{N}(\bmzeta; \cran(\LL))| = 1$, and
\item
  for all $\bmzeta \in \mathtt{bdry}(\cran^{\circ}(\LL))$ we have $|\mathcal{N}(\bmzeta; \cran^{\circ}(\LL))| = 1$.
  \end{compactenum}

  By Corollary \ref{corollary: helpful summary of S(L)}, we have
  $\mathtt{bdry}(\cran(\LL)) = \mathtt{bdry}(\cran^{\circ}(\LL)) = \ran(\LL)$.
  Hence, by Lemma \ref{lemma: normal set are all the same}, to show both above claims it suffices to show that
$|\mathcal{N}(\bmzeta; \ran(\LL))| = 1$ for all $\bmzeta \in \ran(\LL)$.
Note that here we can replace $\mathcal{N}(\bmzeta; \cran(\LL))$ and $\mathcal{N}(\bmzeta; \cran^{\circ}(\LL))$ by $\mathcal{N}(\bmzeta; \ran(\LL))$ because of
Lemma~\ref{lemma: normal set are all the same}.
Below, fix $\bmzeta = \LL(\bfz) \in \ran(\LL)$ where $\bfz \in \mathbb{R}^{k-1}$ is arbitrary. Then
  \begin{align*}
    \mathcal{N}(\bmzeta; \ran(\LL))
    &=
      \{\bfp \in \Delta^k: \langle \bmxi - \bmzeta, \bfp \rangle \ge 0, \, \forall \bmxi \in \ran(\LL)\} \quad \because
\mbox{Definition~\ref{definition:admissible-sets}}
    \\
    &=
    \{\bfp \in \Delta^k: \langle \LL(\bfw) - \LL(\bfz), \bfp \rangle \ge 0, \, \forall \bfw \in \mathbb{R}^{k-1}\}
    \quad \because \mbox{Corollary \ref{corollary: L is injective}}\\
    &=
      \left\{\bfp \in \Delta^k: \bfz \in \textstyle\argmin_{\bfw \in \mathbb{R}^{k-1}} C^{\calL}_{\bfp}(\bfw)\right\}
\quad \because \mbox{definition of \(\argmin\)}
      \\
    &=
    \left\{\bfp \in \interior(\Delta^k): \bfz \in\textstyle \argmin_{\bfw \in \mathbb{R}^{k-1}} C^{\calL}_{\bfp}(\bfw)\right\}
    \quad \because \mbox{Corollary \ref{corollary: boundary p has no argmin}}\\
    &=
    \{\bfp \in \interior(\Delta^k): \bfz = \gsm^{\calL}(\bfp)\}
    \quad \because \mbox{Definition \ref{definition: inverse link function} and Corollary \ref{corollary: inverse link map}}
  \end{align*}
  By Proposition~\ref{proposition:link-function-is-bijection}, $\gsm^{\calL}$ is an injection.
  Thus, $|\left\{\bfp \in \interior(\Delta^k): \bfz = \gsm^{\calL}(\bfp)\right\}| = 1$.
\end{proof}

\section{Proof of Theorem~\ref{theorem: nested family of regular PERM losses are CC - exposition version}}\label{section:proof theorem totally regular PERM loss is CC}

\begin{lemma}\label{lemma: trivial projection identity}
 Suppose that \(k \ge 3\) and \(y \in \{2,\dots, k\}\).
 Let \(\proj\) be as in Corollary~\ref{corollary:interior-projection-surjectivity}.
  Then
  \(
  \proj \ico_{y}^{(k)} = \ico_{y-1}^{(k-1)} \proj
  \).
\end{lemma}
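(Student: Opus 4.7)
\textbf{Proof plan for Lemma~\ref{lemma: trivial projection identity}.}
The plan is to verify the claimed matrix identity by a direct entry-by-entry computation, applying both sides to an arbitrary vector $\bfz = (z_1,\dots,z_{k-1})^\top \in \mathbb{R}^{k-1}$. Note that $\proj\ico_y^{(k)}$ and $\ico_{y-1}^{(k-1)}\proj$ are both linear maps $\mathbb{R}^{k-1} \to \mathbb{R}^{k-2}$, since $\ico_y^{(k)}$ is $(k-1)\times(k-1)$, $\ico_{y-1}^{(k-1)}$ is $(k-2)\times(k-2)$, and $\proj$ drops the first coordinate. Thus it suffices to check that the two sides agree on every $\bfz$, which reduces the problem to comparing $k-2$ scalar entries.

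First I would dispose of the boundary case $y = k$. Here $\ico_k^{(k)} = \mathbf{I}_{k-1}$ and $\ico_{k-1}^{(k-1)} = \mathbf{I}_{k-2}$ by Definition~\ref{definition:matrix-label-code}, so both sides are simply $\proj$, and equality is immediate. For the generic case $y \in \{2,\dots,k-1\}$, I would use the explicit formula \cref{equation: action of rho i plus 1} to write
\[
[\ico_y^{(k)}\bfz]_j = \begin{cases} z_j - z_y &: j \ne y,\, j\in[k-1]\\ -z_y &: j = y,\end{cases}
\]
and then apply $\proj$, which keeps the coordinates $j\in\{2,\dots,k-1\}$. This yields, for each $i \in [k-2]$ (with the identification $j = i+1$),
\[
[\proj\ico_y^{(k)}\bfz]_i = \begin{cases} z_{i+1} - z_y &: i+1 \ne y\\ -z_y &: i+1 = y.\end{cases}
\]

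Next I would compute the right-hand side. Setting $\bfz' := \proj\bfz = (z_2,\dots,z_{k-1})^\top$, i.e., $z'_i = z_{i+1}$ for $i \in [k-2]$, and applying \cref{equation: action of rho i plus 1} again at level $k-1$ with the shifted index $y - 1 \in \{1,\dots,k-2\}$, I get
\[
[\ico_{y-1}^{(k-1)}\bfz']_i = \begin{cases} z'_i - z'_{y-1} = z_{i+1} - z_y &: i \ne y-1\\ -z'_{y-1} = -z_y &: i = y-1.\end{cases}
\]
Since $i = y-1$ is the same condition as $i+1 = y$, the two expressions match coordinate-wise, completing the verification.

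This is almost entirely bookkeeping, and there is no substantive obstacle; the only point requiring care is the index shift $y \mapsto y - 1$ that occurs because $\proj$ drops the first coordinate, which converts the ``distinguished index'' $y$ of $\ico_y^{(k)}$ into the distinguished index $y - 1$ of $\ico_{y-1}^{(k-1)}$. Everything follows directly from Definition~\ref{definition:matrix-label-code} and \cref{equation: action of rho i plus 1} without invoking any other lemmas from Appendix~\ref{section:appendix:matrix-label-code}.
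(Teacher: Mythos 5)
Your proof is correct and matches the paper's own argument essentially line for line: dispose of the $y=k$ case via $\ico_k^{(k)}=\mathbf{I}_{k-1}$ and $\ico_{k-1}^{(k-1)}=\mathbf{I}_{k-2}$, then verify the generic case entrywise using \cref{equation: action of rho i plus 1} on both sides, with the index shift $j = i+1$ handling the dropped first coordinate. No gaps.
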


\iftoggle{arxiv}
{\begin{proof}
  First, consider the case when  \(y = k\). Then \(\ico^{(k)}_{y} = \mathbf{I}_{k-1}\) and \(\ico^{(k-1)}_{y-1} = \mathbf{I}_{k-2}\) are  both identity matrices.
  Thus,
\(    \proj \ico_{y}^{(k)}
    =
    \ico_{y-1}^{(k-1)}    \proj
\)
holds trivially.
Below, we assume that \(y \in \{2,\dots, k-1\}\).
Let \(\bfz = (z_{1},\dots, z_{k-1}) \in \mathbb{R}^{k-1}\) be arbitrary.
  For each \(j \in [k-2]\), we have \(j+1 \in \{2,\dots, k-1\}\) and so
  \[
    [\proj \ico_y^{(k)}\bfz]_{j}
    =
    [\ico_y^{(k)}\bfz]_{j+1}
    =
    \begin{cases}
      z_{j+1} - z_{y} &: j+1 \ne y\\
      -z_{y} &: j+1 = y.
    \end{cases}
  \]
  On the other hand,
  \[
    [\ico_{y-1}^{(k-1)}\proj \bfz]_j
    =
    \begin{cases}
      [\proj \bfz]_j - [\proj \bfz]_{y-1} &: j \ne y-1\\
      -[\proj \bfz]_{y-1} &: j = y-1
    \end{cases}
    =
    \begin{cases}
      z_{j+1} - z_{y} &: j+1 \ne y\\
      -z_{y} &: j+1 = y.
    \end{cases}
  \]
  Since \(\bfz\) is arbitrary, we have \(
  \proj \ico_{y}^{(k)} = \ico_{y-1}^{(k-1)} \proj
  \)
  for \(y \in \{2,\dots, k-1\}\).
\end{proof}}
{
  \textcolor{black}{
The proof of
{Lemma}~\ref{lemma: trivial projection identity}
is  similar to that of
{Lemma}~\ref{lemma: rho pi sigma relation}
and is thus omitted here.
For the proof, see the arXiv version of this work \citep{wang2023unified}.
}
}

\begin{lemma}\label{lemma:truncation-alternative-characterization}
Assume \(k \ge 3\).  Let \(\calL : \mathbb{R}^{k} \to \mathbb{R}^{k}\) be a regular PERM loss with template \(\TP: \mathbb{R}^{k-1} \to \mathbb{R}\).
 Let \(\proj\) be as in Corollary~\ref{corollary:interior-projection-surjectivity}.
  Recall from Proposition~\ref{proposition:truncation-of-a-PERM-loss}
  the {truncation} of \(\TP\)  denoted by \(\con[\TP] : \mathbb{R}^{k-2} \to \mathbb{R}\).
Let \(\bfz \in \mathbb{R}^{k-1}\) and  \(\bfw \in \mathbb{R}^{k-2}\) be arbitrary and such that \(\proj \bfz = \bfw\).
Then
\(\lim_{\lambda \to \infty} \TP(\bfz + \lambda \bfe^{(k-1)}_{1}) = \con[\TP](\bfw)\).
\end{lemma}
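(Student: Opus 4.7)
The plan is to reduce the limit to the defining limit of $\con[\TP]$ by a shift of the dummy variable in the first coordinate. Write $\bfz = (z_1,z_2,\dots,z_{k-1})^{\top}$, so that the hypothesis $\proj\bfz = \bfw$ reads $(z_2,\dots,z_{k-1})^{\top} = \bfw$. Consequently,
\[
\bfz + \lambda\,\bfe_1^{(k-1)} = (z_1+\lambda,\,z_2,\dots,z_{k-1})^{\top} = (z_1+\lambda,\,\bfw).
\]

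Substituting $\mu := z_1 + \lambda$ (so $\mu\to\infty$ iff $\lambda\to\infty$, since $z_1$ is a fixed constant) yields
\[
\lim_{\lambda\to\infty}\TP\bigl(\bfz + \lambda\,\bfe_1^{(k-1)}\bigr)
= \lim_{\mu\to\infty}\TP(\mu,\bfw).
\]
By Proposition~\ref{proposition:truncation-of-a-PERM-loss}, this latter limit exists in $\mathbb{R}$ and equals $\con[\TP](\bfw)$, giving the claim.

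The only point that requires any justification is that the change of variables is legitimate, i.e.\ the limit in $\lambda$ exists and does not depend on the fixed constant $z_1$; both facts are immediate from the existence of $\lim_{\mu\to\infty}\TP(\mu,\bfw)$ guaranteed by Proposition~\ref{proposition:truncation-of-a-PERM-loss} (itself a consequence of $\TP$ being nonnegative and having entrywise negative gradient). Thus there is essentially no obstacle; the lemma is really a restatement of the definition of $\con[\TP]$ via the coordinate identification $\bfw = \proj\bfz$.
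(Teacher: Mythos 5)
Your argument is correct and is essentially the same as the paper's: the paper also reduces to $\lim_{\mu\to\infty}\TP(\mu,\bfw)$ by introducing $g(\lambda)=\TP(\bfz+\lambda\bfe_1^{(k-1)})$ and $h(\lambda)=\TP(\lambda,\bfw)=g(\lambda-z_1)$, which is exactly your change of variables $\mu=z_1+\lambda$, and then invokes the existence of the limit from Proposition~\ref{proposition:truncation-of-a-PERM-loss}.
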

\begin{proof}
  Write \(\bfz = [z_{1},\dots, z_{k-1}]\) and \(\bfw = [w_{1},\dots, w_{k-2}]\).
Since \(\proj \bfz = \bfw\), we have \(z_{2} =w _{1}\), \(z_{3} = w_{2}\) and so on.
  Now, define \(g: \mathbb{R} \to \mathbb{R}\) by
  \(g(\lambda) :=\TP(\bfz + \lambda \bfe^{(k-1)}_{1})
=
\TP(z_{1}+ \lambda,\, z_{2},\, \dots, \, z_{k-1})
\)
  and \(h: \mathbb{R} \to \mathbb{R}\) by
  \[
    h(\lambda):=
    \TP(\lambda, \bfw) =
    \TP(\lambda, w_{1},\,w_{2},\,\dots,\, w_{k-2})
    =
    \TP(\lambda, z_{2},\,z_{3},\,\dots,\, z_{k-1})
    = g(\lambda - z_{1}).
  \]
  As  argued in the proof of Proposition~\ref{proposition:truncation-of-a-PERM-loss}, \(h\) is decreasing and nonnegative.
  Thus, \(g\) is also decreasing and nonnegative. Moreover, \(\lim_{\lambda \to \infty}g(\lambda) =\lim_{\lambda \to \infty}g(\lambda - z_{1}) = \lim_{\lambda \to \infty} h(\lambda)\).
  The right hand side is equal to
  \(\con[\TP](\bfw)\),
  as in the proof of Proposition~\ref{proposition:truncation-of-a-PERM-loss}
\end{proof}

Earlier in Proposition~\ref{proposition:truncation-of-a-PERM-loss}
and Corollary~\ref{corollary:iterated-truncation-of-a-PERM-loss}, we defined \(\con[\TP]\),
\(\TP^{(n)}\) and \(\calL^{(n)}\). We now define the analogous notation for the reduced form \(\LL\):
\begin{definition}[Truncation of \(\LL\)]\label{definition:truncation-of-the-reduced-form}
  Assume \(k \ge 3\).
  Let \(\calL : \mathbb{R}^{k} \to \mathbb{R}^{k}_{\ge 0}\) be a regular PERM loss with template \(\TP: \mathbb{R}^{k-1} \to \mathbb{R}\).
   As in {Proposition}~\ref{proposition:truncation-of-a-PERM-loss},
  define
  \(\con[\LL]\) to be the reduced form of \(\con[\calL]\) (whose template is \(\con[\TP]\)).
\end{definition}

\begin{lemma}\label{lemma: nesting structural lemma}
Assume \(k \ge 3\).  Let \(\calL : \mathbb{R}^{k} \to \mathbb{R}^{k}\) be a regular PERM loss with template \(\TP: \mathbb{R}^{k-1} \to \mathbb{R}\).
 Let \(\proj\) be as in Corollary~\ref{corollary:interior-projection-surjectivity}.
  Recall from Proposition~\ref{proposition:truncation-of-a-PERM-loss}
  the {truncation} of \(\TP\)  denoted by \(\con[\TP] : \mathbb{R}^{k-2} \to \mathbb{R}\).
  Let
\(\con[\LL]\) be as in Definition~\ref{definition:truncation-of-the-reduced-form}.
 Let \(\proj\) be as in Corollary~\ref{corollary:interior-projection-surjectivity}.
  Let \(\bfz \in \mathbb{R}^{k-1}\) and \(\bfx \in \mathbb{R}_{\ge 0}^{k}\) be arbitrary.
    For brevity, let \(\bfe_{1} := \bfe^{(k-1)}_{1}\).
  Then we have
\begin{equation}
  \label{equation: nesting structural lemma - limit point}
  \lim_{\lambda \to + \infty}
  \proj\left(
    \LL\left(
      \bfz + \lambda
      \bfe_{1}
    \right)
    +\bfx
\right)
  =
  \con[\LL](\proj \bfz)
  +
  \proj \bfx
\end{equation}
and
\begin{equation}
  \label{equation: nesting structural lemma - inclusion}
  \proj\left(
    \LL\left(
      \bfz
    \right)
    + \bfx\right)
  \succ
\con[\LL](\proj \bfz).\end{equation}
  \end{lemma}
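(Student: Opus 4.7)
The proof plan is to compute the $y$-th component (for $y\in\{2,\dots,k\}$) of the vector equalities directly via the relative margin form and the limit definition of $\con[\TP]$, then assemble them through $\mathbf{Q}$. The crucial technical observation is that for every $y\in\{2,\dots,k\}$, the first column of $\ico_y^{(k)}$ is the elementary basis vector $\bfe_{1}^{(k-1)}$. Indeed, when $y=k$ we have $\ico_k=\mathbf{I}_{k-1}$, and for $y\in\{2,\dots,k-1\}$ only the $y$-th column of $\ico_y$ is modified (to $-\vone^{(k-1)}$), so since $y\ne 1$ we have $[\ico_y]_{:1}=\bfe_1^{(k-1)}$. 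Consequently, $\ico_y\bfe_1^{(k-1)}=\bfe_1^{(k-1)}$ for every $y\in\{2,\dots,k\}$.

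For \cref{equation: nesting structural lemma - limit point}, I plan to compute the $j$-th entry of $\mathbf{Q}(\LL(\bfz+\lambda\bfe_1)+\bfx)$ for $j\in[k-1]$, which by definition of $\mathbf{Q}$ equals $\LL_{j+1}(\bfz+\lambda\bfe_1)+x_{j+1}$. Using \cref{equation:relative-margin-form-summarized} and the key observation above, this is
$\TP\bigl(\ico_{j+1}\bfz+\lambda\bfe_1^{(k-1)}\bigr)+x_{j+1}$. Sending $\lambda\to\infty$ and invoking Lemma~\ref{lemma:truncation-alternative-characterization} (applied with $\bfz$ replaced by $\ico_{j+1}\bfz$ and $\bfw:=\mathbf{Q}\ico_{j+1}\bfz$) yields $\con[\TP](\mathbf{Q}\ico_{j+1}\bfz)+x_{j+1}$. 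Applying Lemma~\ref{lemma: trivial projection identity} to rewrite $\mathbf{Q}\ico_{j+1}^{(k)}=\ico_{j}^{(k-1)}\mathbf{Q}$ gives $\con[\TP](\ico_{j}^{(k-1)}\mathbf{Q}\bfz)+x_{j+1}$, which by \cref{equation:relative-margin-form-summarized} applied to $\con[\LL]$ (Definition~\ref{definition:truncation-of-the-reduced-form}) equals $\con[\LL]_{j}(\mathbf{Q}\bfz)+[\mathbf{Q}\bfx]_{j}$. Assembling over $j\in[k-1]$ establishes \cref{equation: nesting structural lemma - limit point}.

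For \cref{equation: nesting structural lemma - inclusion}, the same component-wise reduction gives $[\mathbf{Q}\LL(\bfz)]_j=\TP(\ico_{j+1}\bfz)$ and $\con[\LL]_j(\mathbf{Q}\bfz)=\con[\TP](\mathbf{Q}\ico_{j+1}\bfz)$. Since $\calL$ is regular, $\nabla_{\TP}(\cdot)\prec\vzero$ everywhere, so Proposition~\ref{proposition:monotone-function} tells us that $\TP$ is strictly decreasing along $\bfe_1^{(k-1)}$. In particular, for $\bfu:=\ico_{j+1}\bfz$ we have $\TP(\bfu)>\TP(\bfu+\bfe_1^{(k-1)})$, and since the tail of $\lambda\mapsto\TP(\bfu+\lambda\bfe_1^{(k-1)})$ is nonincreasing its limit is bounded above by $\TP(\bfu+\bfe_1^{(k-1)})$; hence $\TP(\bfu)>\con[\TP](\mathbf{Q}\bfu)$. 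This gives $\mathbf{Q}\LL(\bfz)\succ\con[\LL](\mathbf{Q}\bfz)$ strictly, and adding $\mathbf{Q}\bfx\succeq\vzero$ (which holds since $\bfx\in\mathbb{R}_{\ge 0}^{k}$) preserves the strict inequality, yielding \cref{equation: nesting structural lemma - inclusion}.

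The plan is essentially routine once the column-identity $\ico_y\bfe_1^{(k-1)}=\bfe_1^{(k-1)}$ for $y\in\{2,\dots,k\}$ is noted; no single step presents a serious obstacle, and the bookkeeping is handled cleanly by Lemmas~\ref{lemma:truncation-alternative-characterization} and~\ref{lemma: trivial projection identity}. The only mildly subtle point is upgrading the pointwise limit inequality to a strict one in part \cref{equation: nesting structural lemma - inclusion}, which is dispatched by evaluating at $\lambda=1$ before passing to the limit.
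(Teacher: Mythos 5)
Your proposal is correct and follows essentially the same route as the paper's proof: the key column identity $\ico_y\bfe_1=\bfe_1$ for $y\in\{2,\dots,k\}$, the componentwise reduction through Lemma~\ref{lemma:truncation-alternative-characterization} and Lemma~\ref{lemma: trivial projection identity}, and the strict-monotonicity argument from $\nabla_{\TP}(\cdot)\prec\vzero$ for the strict inequality. The only cosmetic difference is that the paper derives strictness by differentiating $\lambda\mapsto\TP(\ico_y\bfz+\lambda\bfe_1)$ directly, whereas you evaluate at $\lambda=1$ before passing to the limit; both are valid.
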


  \begin{proof}
    Throughout this proof, let \(y \in \{2,\dots, k\}\) be arbitrary.
    First, we claim that
  \(\ico_y^{(k)} \bfe_{1} = \bfe_{1}\)
  for each \(y \in \{2,\dots,k\}\).
  If \(y = k\), then this is clearly true since \(\ico_{k}^{(k)}\) is, by definition, the identity matrix.
  Now, for \(y \in \{2,\dots, k-1\}\), recall that \(\ico_{y}^{(k)}\) is defined as the matrix obtained by replacing the \(y\)-th column of the identity matrix with the all \(-1\)'s vector.
  Thus, since \(y>1\), the first column of \(\ico_{y}^{(k)}\) is equal to that of the identity matrix. Therefore, \(\ico_{y}^{(k)} \bfe_{1} = \bfe_{1}\) as well.
  Next, still assuming \(y \in \{2,\dots, k-1\}\), we have \[\LL_y(\bfz+ \lambda \bfe_{1})
  =
  \TP(\ico_y^{(k)} (\bfz + \lambda \bfe_{1}))
  =
  \TP(\ico_y^{(k)} \bfz+ \lambda \bfe_{1}).
\]
Hence, we have
\begin{align*}
  \lim_{\lambda \to +\infty}
  \LL_{y} (\bfz+ \lambda \bfe_{1})
&=
  \lim_{\lambda \to +\infty}
  \TP(\ico_{y}^{(k)} \bfz+ \lambda \bfe_{1})
\quad \because \mbox{
  Theorem~\ref{theorem:appendix:relative-margin-form}.
}
\\&
=
\con[\TP](\proj\ico_{y}^{(k)} \bfz)
\quad \because \mbox{
  Proposition~\ref{proposition:truncation-of-a-PERM-loss}
  and
  Lemma~\ref{lemma:truncation-alternative-characterization}
}
\\&
=
\con[\TP]\left(\ico_{y-1}^{(k-1)} \proj \bfz\right)
\quad \because \mbox{
  Lemma \ref{lemma: trivial projection identity}
}
\\&
=
[\con[\LL]]_{y-1}(\proj \bfz)
\quad \because \mbox{
  Theorem~\ref{theorem:appendix:relative-margin-form} and explanation below.
}
\end{align*}
Application of Theorem~\ref{theorem:appendix:relative-margin-form} to the last equality requires a bit more explanation.  For said equality, we used the fact that
\(\con[\LL]\) is the reduced form of \(\calL^{(k-1)}\)
({Definition}~\ref{definition:truncation-of-the-reduced-form}), whose template is
\(\con[\TP]\).
Thus,
applying
\Cref{equation:relative-margin-form-summarized}
(which is a collary of
    \Cref{equation:appendix:relative-margin-form}
    from
Theorem~\ref{theorem:appendix:relative-margin-form})
to \(\con[\calL]\), we have
\(  [\con[\LL]]_{y-1}(\bfz)
  =\con[\TP](\ico_{y-1}^{(k-1)} \bfz)
\).
Thus,
\[
  \lim_{\lambda \to + \infty}
  \proj\left( \LL(\bfz + \lambda \bfe_{1})  + \bfx\right)
  =
  \con[\LL](\proj \bfz)
  +
  \proj\bfx.
\]
Next, for every \(y \in \{2,\dots, k\}\), we note that the function
\[
  g_{y}(\lambda):=
  \LL_{y}(\bfz+\lambda \bfe_{1}) = \TP(\ico_{y} \bfz + \lambda \bfe_{1})\]
is strictly decreasing. To see this, by the
\iftoggle{arxiv}{chain rule for curves (see
\cref{equation:chain-rule-for-curves} from Remark~\ref{remark:time-derivatives})}
  {\textcolor{black}{
chain rule for curves\footnote{
  See the ``Vector Calculus'' section in the appendix of the arXiv version of this manuscript \citep{wang2023unified}.
  }}},
we have
\[{g_{y}'}(\lambda)
=
\nabla_{\TP}(\ico_{y} \bfz + \lambda \bfe_{1})^{\top} \bfe_{1} < 0.\]
Thus, \(\LL_{y}(\bfz) = g_{y}(0) > \lim_{\lambda \to + \infty} g_{y}(\lambda) =
\con[\LL]_{y-1}(\proj \bfz)\),
which proves that
\[
\proj\left( \LL(\bfz) + \bfx\right)
\succ
\con[\LL](\proj \bfz) + \proj \bfx
\succeq
\con[\LL](\proj \bfz)
\]
as desired.
\end{proof}

\begin{lemma}\label{lemma:closure-range-identity}
Assume \(k \ge 3\).  Let \(\calL : \mathbb{R}^{k} \to \mathbb{R}^{k}\) be a regular PERM loss with template \(\TP: \mathbb{R}^{k-1} \to \mathbb{R}\).
 Let \(\proj\) be as in Corollary~\ref{corollary:interior-projection-surjectivity}.
  Recall from Proposition~\ref{proposition:truncation-of-a-PERM-loss}
  the {truncation} of \(\TP\)  denoted by \(\con[\TP] : \mathbb{R}^{k-2} \to \mathbb{R}\).
  Let
\(\con[\LL]\) be as in Definition~\ref{definition:truncation-of-the-reduced-form}.
 Let \(\proj\) be as in Corollary~\ref{corollary:interior-projection-surjectivity}.
  Then
  $\proj(\cran(\LL)) \subseteq \cran^{\circ}(\con[\LL])$
  and
  $\closure[\proj(\cran(\LL))] = \cran(\con[\LL])$.
\end{lemma}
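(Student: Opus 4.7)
The plan is to prove the two inclusions by repeatedly applying Lemma~\ref{lemma: nesting structural lemma}, combined with the superprediction set characterization of $\cran(\LL)$ from Remark~\ref{remark:relation-to-superprediction-set} and the representation of $\cran(\con[\LL])$ from Corollary~\ref{corollary: helpful summary of S(L)}.

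For the first statement $\proj(\cran(\LL)) \subseteq \cran^{\circ}(\con[\LL])$, I would start with an arbitrary $\bmzeta \in \cran(\LL)$. By Remark~\ref{remark:relation-to-superprediction-set}, there exists $\bfz \in \mathbb{R}^{k-1}$ and $\bfx \in \mathbb{R}^{k}_{\ge 0}$ such that $\bmzeta = \LL(\bfz) + \bfx$. Applying \cref{equation: nesting structural lemma - inclusion} of Lemma~\ref{lemma: nesting structural lemma} immediately yields $\proj\bmzeta = \proj(\LL(\bfz) + \bfx) \succ \con[\LL](\proj\bfz)$. By Lemma~\ref{lemma: superprediction set representation} applied to $\con[\LL]$ (which is itself the reduced form of a regular PERM loss, as ensured by the definition of total regularity and Corollary~\ref{corollary:iterated-truncation-of-a-PERM-loss}---though for this inclusion regularity of $\calL$ alone suffices via Proposition~\ref{proposition:truncation-of-a-PERM-loss}), this places $\proj\bmzeta$ in $\cran^{\circ}(\con[\LL])$.

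For the second statement, the $\subseteq$ inclusion is immediate: by the first part, $\proj(\cran(\LL)) \subseteq \cran^{\circ}(\con[\LL]) \subseteq \cran(\con[\LL])$, and since $\cran(\con[\LL])$ is closed by Corollary~\ref{corollary: helpful summary of S(L)} (item~\ref{corollary: helpful summary of S(L) - characterization}), the closure of $\proj(\cran(\LL))$ is also contained in $\cran(\con[\LL])$. For the $\supseteq$ inclusion, I will use Corollary~\ref{corollary: helpful summary of S(L)} to write an arbitrary $\bmxi \in \cran(\con[\LL])$ as $\bmxi = \con[\LL](\bfw) + \mu \vone^{(k-1)}$ for some $\bfw \in \mathbb{R}^{k-2}$ and $\mu \ge 0$. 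Then I pick any lift $\bfz \in \mathbb{R}^{k-1}$ with $\proj\bfz = \bfw$ (e.g., $\bfz = (0, w_1, \dots, w_{k-2})^\top$) and set $\bfx := \mu \vone^{(k)} \in \mathbb{R}^{k}_{\ge 0}$ so that $\proj \bfx = \mu \vone^{(k-1)}$. For each $\lambda \in \mathbb{R}$, the point $\LL(\bfz + \lambda \bfe_1) + \bfx$ lies in $\cran(\LL)$ (by Remark~\ref{remark:relation-to-superprediction-set}, since $\bfx \succeq \vzero$), so $\proj(\LL(\bfz + \lambda \bfe_1) + \bfx)$ lies in $\proj(\cran(\LL))$. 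Then \cref{equation: nesting structural lemma - limit point} of Lemma~\ref{lemma: nesting structural lemma} gives
\[
\lim_{\lambda \to +\infty} \proj\bigl(\LL(\bfz + \lambda \bfe_1) + \bfx\bigr) = \con[\LL](\bfw) + \mu \vone^{(k-1)} = \bmxi,
\]
placing $\bmxi$ in $\closure[\proj(\cran(\LL))]$.

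The proof should be essentially a bookkeeping exercise: the heavy lifting is already done in Lemma~\ref{lemma: nesting structural lemma}. The only mildly subtle point is ensuring that the added vector $\bfx$ in the $\supseteq$ direction projects correctly onto a multiple of $\vone^{(k-1)}$ while remaining in $\mathbb{R}^{k}_{\ge 0}$, which is why choosing $\bfx = \mu \vone^{(k)}$ is a clean choice. I expect no genuine obstacle beyond careful matching of dimensions and application of the two parts of Lemma~\ref{lemma: nesting structural lemma} in the right direction.
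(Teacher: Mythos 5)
Your proposal is correct and follows essentially the same route as the paper's proof: both directions rest on the two parts of Lemma~\ref{lemma: nesting structural lemma}, with the first inclusion obtained from \cref{equation: nesting structural lemma - inclusion} together with the superprediction characterization of $\cran^{\circ}(\con[\LL])$ in Lemma~\ref{lemma: superprediction set representation}, and the reverse inclusion of the closure identity obtained by exhibiting each point of $\cran(\con[\LL])$ as a limit of points of $\proj(\cran(\LL))$ via \cref{equation: nesting structural lemma - limit point}. The only cosmetic difference is that you lift $\bmxi$ using the $\mu\vone$ decomposition while the paper lifts a general nonnegative $\bar{\bfx}$; both are covered by the lemma.
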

\begin{proof}
  Let $C := \proj(\cran(\LL))$ and take $\bmzeta \in C$.
  We first prove
  $C \subseteq \cran^{\circ}(\con[\LL])$.
  By the characterization of $\cran(\LL)$ from Corollary~\ref{corollary: helpful summary of S(L)} item~\ref{corollary: helpful summary of S(L) - characterization}, there exists $\bfz \in \mathbb{R}^{k-1}$ and $\bfx \in [0,\infty)^k$
  such that $\bmzeta = \proj(\LL(\bfz) + \bfx)$.
Now let $\tilde{\bfz} := \proj(\bfz)$.
  Applying Eqn.~\eqref{equation: nesting structural lemma - inclusion} from Lemma~\ref{lemma: nesting structural lemma},
we get
\[\bmzeta = \proj\left( \LL(\bfz) + \bfx\right) \succ \con[\LL](\tilde{\bfz}).\]
In particular, by
the characterization of $\cran^{\circ}(\con[\LL])$ from
  Lemma~\ref{lemma: superprediction set representation}, we have that
  $\bmzeta \in \cran^{\circ}(\con[\LL])$.
  This proves that $C \subseteq \cran^{\circ}(\con[\LL])$.

  Next, we prove
  $\closure[C] = \cran(\con[\LL])$.
  We first show that $\closure[C] \supseteq \cran(\con[\LL])$ by proving that
  every point $\cran(\con[\LL])$ is a limit point of $C$.

  Let $\bmzeta \in \cran(\con[\LL])$.
  By the characterization of $\cran(\con[\LL])$ as in Corollary~\ref{corollary: helpful summary of S(L)}, there exists $\bar{\bfz} \in \mathbb{R}^{k-2}$ and $\bar{\bfx} \in [0,\infty)^{k-1}$ such that
  $\bmzeta = \con[\LL](\bar{\bfz}) + \bar{\bfx}$.
  Now, pick $\bfz \in \mathbb{R}^{k-1}$ and \(\bfx \in [0,\infty)^{k}\) such that
  $\bar{\bfz} = \proj\bfz$ and $\bar{\bfx} = \proj\bfx$.
  Applying Lemma~\ref{lemma: nesting structural lemma}
  Eqn.~\eqref{equation: nesting structural lemma - limit point}, we get that $\bmzeta$ is a limit point of $S$, which proves the desired claim.
  This proves that $\closure(C) \supseteq \cran(\con[\LL])$.
  By the first part, we know that $C \subseteq \cran^{\circ}(\con[\LL])$.
  By Corollary~\ref{corollary: helpful summary of S(L)}, $\cran^{\circ}(\con[\LL]) =
  \interior(\cran(\con[\LL]) )
  \subseteq
\cran(\con[\LL])$.
Putting it all together, we have
\[
  C \subseteq \cran^{\circ}(\con[\LL]) \subseteq
\cran(\con[\LL])
\subseteq \closure(C).
\]
From
  Corollary \ref{corollary: helpful summary of S(L)}, we have that $
\cran(\con[\LL])$ is closed.
Since by definition $\closure(C)$ is the smallest closed set containing $C$, we get
that
$\cran(\con[\LL])
= \closure(C)$, as desired.
\end{proof}

\begin{theorem}[\citet{blackwell1979theory}]\label{theorem: convex set has the same interior as its closure}
  Let $C \subseteq \mathbb{R}^n$ be a convex set. Then $\interior(C) = \interior(\closure(C))$.
\end{theorem}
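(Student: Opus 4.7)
The plan is to prove the two inclusions separately, with the reverse one ($\interior(\closure(C)) \subseteq \interior(C)$) being the nontrivial direction. The forward inclusion $\interior(C) \subseteq \interior(\closure(C))$ is immediate: any open set contained in $C$ is also contained in $\closure(C)$, so it witnesses membership in the interior of $\closure(C)$ as well.

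For the reverse inclusion, I would first dispose of the degenerate case $\interior(C) = \emptyset$. A classical fact about convex sets states that a convex subset of $\mathbb{R}^n$ with empty interior must lie in a proper affine subspace (its affine hull has dimension less than $n$). Since affine subspaces are closed, $\closure(C)$ also lies in this proper affine subspace, hence $\interior(\closure(C)) = \emptyset$ as well, and both sides agree. So henceforth assume $\interior(C) \neq \emptyset$ and fix a point $y \in \interior(C)$.

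Given an arbitrary $x \in \interior(\closure(C))$, the strategy is to exhibit $x$ on the open segment between $y$ and a suitable point $z \in \closure(C)$, so that the line-segment principle (see next paragraph) forces $x \in \interior(C)$. Concretely, since $x$ lies in the interior of $\closure(C)$, for sufficiently small $\delta > 0$ the point $z := x + \delta(x - y)$ belongs to $\closure(C)$. A direct computation shows $x = \tfrac{\delta}{1+\delta} y + \tfrac{1}{1+\delta} z$, placing $x$ strictly inside the segment $[y,z]$.

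The main obstacle, and the heart of the argument, is the following line-segment principle: if $C$ is convex, $y \in \interior(C)$, $z \in \closure(C)$, and $\lambda \in (0,1]$, then $w := \lambda y + (1-\lambda) z \in \interior(C)$. To prove this, pick $\epsilon > 0$ with $B(y,\epsilon) \subseteq C$, and for any target perturbation $u$ with $\|u\| < \lambda\epsilon/2$, approximate $z$ by $z' \in C$ with $\|z-z'\|$ small enough that $u' := u - \tfrac{1-\lambda}{\lambda}(z' - z)$ satisfies $\|u'\| < \epsilon$. Then $y + u' \in C$, and convexity gives $w + u = \lambda(y + u') + (1-\lambda) z' \in C$, showing $B(w, \lambda\epsilon/2) \subseteq C$. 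Applying this principle to $y, z$ constructed above yields $x \in \interior(C)$, completing the proof.
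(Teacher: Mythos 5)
Your proof is correct, and it is worth noting that the paper does not prove this statement at all: it is imported verbatim from \citet{blackwell1979theory} as an external classical fact, so there is no in-paper argument to compare against. What you give is the standard textbook proof (essentially Rockafellar's Theorem 6.3 / the ``accessibility lemma''): the trivial forward inclusion, the reduction of the empty-interior case to the affine hull being a proper (hence closed, interior-free) affine subspace, and the line-segment principle applied to $x = \tfrac{\delta}{1+\delta}y + \tfrac{1}{1+\delta}z$ with $z = x + \delta(x-y) \in \closure(C)$. All the pieces are sound.

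One cosmetic slip in the line-segment principle: with your definition $u' := u - \tfrac{1-\lambda}{\lambda}(z'-z)$, the convex combination $\lambda(y+u') + (1-\lambda)z'$ evaluates to $w + \lambda u$, not $w + u$. This does not damage the argument --- ranging $u$ over $\|u\| < \lambda\epsilon/2$ still produces the open ball $B(w, \lambda^2\epsilon/2) \subseteq C$, so $w \in \interior(C)$ follows --- but to land exactly on $w+u$ you would want $u' := \tfrac{1}{\lambda}u - \tfrac{1-\lambda}{\lambda}(z'-z)$, with $\|z'-z\|$ chosen small enough that $\|u'\| < \epsilon$.
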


\begin{proposition}\label{proposition: nested-ness of the superprediction set}
Assume \(k \ge 3\).  Let \(\calL : \mathbb{R}^{k} \to \mathbb{R}^{k}\) be a regular PERM loss with template \(\TP: \mathbb{R}^{k-1} \to \mathbb{R}\).
 Let \(\proj\) be as in Corollary~\ref{corollary:interior-projection-surjectivity}.
  Recall from Proposition~\ref{proposition:truncation-of-a-PERM-loss}
  the {truncation} of \(\TP\)  denoted by \(\con[\TP] : \mathbb{R}^{k-2} \to \mathbb{R}\).
  Let
\(\con[\LL]\) be as in Definition~\ref{definition:truncation-of-the-reduced-form}.
 Let \(\proj\) be as in Corollary~\ref{corollary:interior-projection-surjectivity}.
  Then we have
  $\proj(\cran(\LL)) = \cran^{\circ}(\con[\LL])$
\end{proposition}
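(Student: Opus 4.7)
The plan is to deduce this essentially for free from the groundwork already laid down in Lemma~\ref{lemma:closure-range-identity} (which is the main technical step) together with two soft topological facts.

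First, Lemma~\ref{lemma:closure-range-identity} already gives us the inclusion $\proj(\cran(\LL)) \subseteq \cran^{\circ}(\con[\LL])$, so only the reverse inclusion $\cran^{\circ}(\con[\LL]) \subseteq \proj(\cran(\LL))$ needs work. For the reverse inclusion I would chain together the following identities:
\begin{equation}
  \cran^{\circ}(\con[\LL])
  \;\overset{(a)}{=}\; \interior\!\bigl(\cran(\con[\LL])\bigr)
  \;\overset{(b)}{=}\; \interior\!\bigl(\closure[\proj(\cran(\LL))]\bigr)
  \;\overset{(c)}{=}\; \interior\!\bigl(\proj(\cran(\LL))\bigr)
  \;\subseteq\; \proj(\cran(\LL)).
\end{equation}
Here (a) is Corollary~\ref{corollary: helpful summary of S(L)} item 2 applied to the truncated loss (which is where total regularity of $\calL$ enters, guaranteeing that $\con[\calL]$ is itself a regular PERM loss so the corollary applies); (b) is the second conclusion of Lemma~\ref{lemma:closure-range-identity}; and (c) is Blackwell's Theorem~\ref{theorem: convex set has the same interior as its closure}, which requires convexity of $\proj(\cran(\LL))$.

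The only remaining item to justify is the convexity hypothesis for Blackwell's theorem. This is immediate: $\cran(\LL)$ is convex by Corollary~\ref{corollary: helpful summary of S(L)} (equivalently Proposition~\ref{lemma: T is convex}, since $\cran(\LL) = \cran^{\bullet}(\LL)$), and $\proj$ is a linear map, so $\proj(\cran(\LL))$ is convex as the linear image of a convex set.

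There is no real obstacle beyond correctly invoking Corollary~\ref{corollary: helpful summary of S(L)} at the \emph{truncated} level, which is why the ambient assumption on $\calL$ must be strong enough (total regularity) to make $\con[\calL]$ a regular PERM loss. Once that is in place, the proof is just an assembly of: (i) one inclusion from Lemma~\ref{lemma:closure-range-identity}, (ii) the interior/closure identity for convex sets, and (iii) the characterization of $\cran^{\circ}$ as the interior of $\cran$.
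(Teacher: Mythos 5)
Your proposal is correct and follows essentially the same route as the paper's proof: one inclusion comes directly from Lemma~\ref{lemma:closure-range-identity}, and the reverse inclusion is obtained by chaining $\cran^{\circ}(\con[\LL]) = \interior(\cran(\con[\LL])) = \interior(\closure[\proj(\cran(\LL))]) = \interior(\proj(\cran(\LL))) \subseteq \proj(\cran(\LL))$, using Blackwell's theorem together with convexity of the linear image $\proj(\cran(\LL))$. Your remark that the corollary must be invoked at the truncated level (so that $\con[\calL]$ is itself regular) is a fair observation about the hypotheses, and matches the implicit use in the paper.
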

\begin{proof}
  For brevity, let $C := \proj(\cran(\LL))$.
  By Corollary~\ref{corollary: helpful summary of S(L)}, $\cran(\con[\LL])$ is convex.
  Since convexity is preserved under projection, we have that $C$ is convex as well.
  Now,
  \begin{align}
    \interior(C) &= \interior(\closure(C))  \quad \because \mbox{ Theorem \ref{theorem: convex set has the same interior as its closure} }
    \\
                    &=
 \interior(\cran(\con[\LL]))
\quad \because \mbox{ Lemma \ref{lemma:closure-range-identity} }
 \\
                    &=\cran^{\circ}(\con[\LL])
\quad \because \mbox{ Lemma \ref{lemma: R is closed} }
                  \\&\supseteq
                  C
\quad \because \mbox{ Lemma \ref{lemma:closure-range-identity} }
  \end{align}
  Since $C \supseteq \interior(C)$ by definition, we conclude that $C = \cran^{\circ}(\con[\LL])$.
  \
  \end{proof}

  \begin{proposition}\label{proposition:projection-of-interior-of-S}
Assume \(k \ge 3\).  Let \(\calL : \mathbb{R}^{k} \to \mathbb{R}^{k}\) be a regular PERM loss with template \(\TP: \mathbb{R}^{k-1} \to \mathbb{R}\).
 Let \(\proj\) be as in Corollary~\ref{corollary:interior-projection-surjectivity}.
  Recall from Proposition~\ref{proposition:truncation-of-a-PERM-loss}
  the {truncation} of \(\TP\)  denoted by \(\con[\TP] : \mathbb{R}^{k-2} \to \mathbb{R}\).
  Let
\(\con[\LL]\) be as in Definition~\ref{definition:truncation-of-the-reduced-form}.
 Let \(\proj\) be as in Corollary~\ref{corollary:interior-projection-surjectivity}.
  Then we have
  \(\proj(\cran^{\circ}(\LL)) = \cran^{\circ}(\con[\LL]).\)
  \end{proposition}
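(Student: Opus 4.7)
The plan is to deduce this statement almost immediately from the already-proved Proposition~\ref{proposition: nested-ness of the superprediction set}, which gives $\proj(\cran(\LL)) = \cran^{\circ}(\con[\LL])$, together with Corollary~\ref{corollary:interior-projection-surjectivity}, which is the key surjectivity tool permitting one to replace an element of $\ran(\LL)$ by a point strictly above $\ran(\LL)$ without changing its $\proj$-image.

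For the forward inclusion $\proj(\cran^{\circ}(\LL)) \subseteq \cran^{\circ}(\con[\LL])$, I would simply observe that $\cran^{\circ}(\LL) \subseteq \cran(\LL)$ by Definition~\ref{definition: loss surface - alternative characterization}, so applying $\proj$ and Proposition~\ref{proposition: nested-ness of the superprediction set} yields $\proj(\cran^{\circ}(\LL)) \subseteq \proj(\cran(\LL)) = \cran^{\circ}(\con[\LL])$.

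For the reverse inclusion $\cran^{\circ}(\con[\LL]) \subseteq \proj(\cran^{\circ}(\LL))$, I would take an arbitrary $\bmzeta \in \cran^{\circ}(\con[\LL])$ and use Proposition~\ref{proposition: nested-ness of the superprediction set} to write $\bmzeta = \proj(\bfx)$ for some $\bfx \in \cran(\LL)$. By Corollary~\ref{corollary: helpful summary of S(L)} item~\ref{corollary: helpful summary of S(L) - characterization}, we may write $\bfx = \LL(\bfz) + \lambda \vone^{(k)}$ for some $\bfz \in \mathbb{R}^{k-1}$ and $\lambda \in [0,\infty)$. The issue is that $\lambda$ could be zero, in which case $\bfx \notin \cran^{\circ}(\LL)$. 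To fix this, apply Corollary~\ref{corollary:interior-projection-surjectivity} to obtain $\bfz^{*} \in \mathbb{R}^{k-1}$ and $t^{*} \in \mathbb{R}_{>0}$ such that $\proj(\LL(\bfz^{*}) + t^{*}\vone^{(k)}) = \proj(\LL(\bfz))$. Then by linearity of $\proj$ (using $\proj(\vone^{(k)}) = \vone^{(k-1)}$), we get
\[
\proj\bigl(\LL(\bfz^{*}) + (t^{*} + \lambda)\vone^{(k)}\bigr) = \proj(\LL(\bfz)) + \lambda \vone^{(k-1)} = \proj(\bfx) = \bmzeta.
\]
Since $t^{*} + \lambda > 0$, the point $\LL(\bfz^{*}) + (t^{*} + \lambda)\vone^{(k)}$ lies in $\cran^{\circ}(\LL)$ by Definition~\ref{definition: loss surface - alternative characterization}, so $\bmzeta \in \proj(\cran^{\circ}(\LL))$, completing the proof.

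There is no genuine obstacle here; the entire content is absorbed into the two cited results, and the only real subtlety is the case $\lambda = 0$, which is precisely the situation Corollary~\ref{corollary:interior-projection-surjectivity} was designed to handle.
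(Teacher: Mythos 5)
Your proposal is correct and follows essentially the same route as the paper: the forward inclusion via $\cran^{\circ}(\LL)\subseteq\cran(\LL)$ and Proposition~\ref{proposition: nested-ness of the superprediction set}, and the reverse inclusion by lifting a $\proj$-preimage into $\cran^{\circ}(\LL)$ using Corollary~\ref{corollary:interior-projection-surjectivity}. If anything, your handling of a general element $\LL(\bfz)+\lambda\vone$ with $\lambda\ge 0$ (adding $\lambda$ to $t^{*}$ and using $\proj(\vone^{(k)})=\vone^{(k-1)}$) is slightly more explicit than the paper's write-up, which treats the generic element of $\cran^{\circ}(\con[\LL])$ as if it were directly of the form $\proj\LL(\bfz)$.
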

  \begin{proof}
    By the preceding Proposition~\ref{proposition: nested-ness of the superprediction set}, we have
  $\proj(\cran(\LL)) = \cran^{\circ}(\con[\LL])$.
  Since
\(\cran^{\circ}(\LL)
\subseteq
  \cran(\LL) \)
  we have
\(\proj(\cran^{\circ}(\LL))
\subseteq
  \proj(\cran(\LL))
  \). Thus, to prove the result we only have to show
  \(\proj(\cran^{\circ}(\LL)) \supseteq \cran^{\circ}(\con[\LL])\).

  To this end, let \( \con[\LL](\bfw) \in \cran^{\circ}(\con[\LL])\) and $\bfz \in \cran(\LL)$ be such that $\proj\LL(\bfz) = \con[\LL](\bfw)$.
  By
  Corollary~\ref{corollary:interior-projection-surjectivity}, there exist
    \(\bfz^{*} \in \mathbb{R}^{k-1}\) and \(t^{*} \in \mathbb{R}\) such that
    \(t^{*} > 0\) and \(\proj(\LL(\bfz^{*}) + t^{*}\vone)  = \proj\LL(\bfz)= \con[\LL](\bfw)\).
    Since \(\LL(\bfz^{*}) + t^{*}\vone \in \cran^{\circ}(\LL) \), we get that
    \(\con[\LL](\bfw) \in \proj(\cran^{\circ}(\LL))
    \)
    as desired.
  \end{proof}

\begin{definition}\label{definition:iterated-truncation-of-the-reduced-form}
    For each \(m \in \{0,1,\dots,k-2\}\), define \(\con^{\iter{m}}[\LL]\) to be \(m\)-fold repeated applications of \(\con\) to \(\LL\), i.e.,
    \(
\con^{\iter{m}}[\LL] :=\con[\cdots \con[\con[\LL]] \cdots]
    \)
    where \(\con\) appears \(m\)-times.
  By convention, let \(\con^{\times 0}[\LL]  = \LL\).
  Moreover, for each \(n \in \{2,\dots, k\}\), define the notational shorthand \(\LL^{(n)} := \con^{\times (k-n)}[\LL]\).
\end{definition}
\begin{remark}
  It follows tautologically from
the definition of \(\TP^{(n)}\) in Corollary~\ref{corollary:iterated-truncation-of-a-PERM-loss}, that
define \(\LL^{(n)}\) is the reduced form of
  \(\calL^{(n)}\) (whose template is \(\TP^{(n)}\)).
\end{remark}

Next, let \(m \ge 1\) be an integer.
  Below, let \(\proj^{\iter{m}}\) denote the \(m\)-fold iterated composition of \(\proj\). In other words,
  \(\proj^{\iter{m}}:={\proj \circ \cdots \circ \proj}\) repeated \(m\) times.
  \begin{proposition}\label{proposition:projection-of-interior-of-S-iterated}
  Suppose that  \(\calL\) is totally regular. Then
  Let
  \(m \in \{1,\dots, k-2\}\).
  Then
  \(\proj^{\iter{m}}(\cran(\LL)) = \cran^{\circ}(\con^{\iter{m}}[\LL])\).
  \end{proposition}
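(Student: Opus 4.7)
The plan is to prove this by induction on \(m\), using Proposition~\ref{proposition: nested-ness of the superprediction set} for the base case and Proposition~\ref{proposition:projection-of-interior-of-S} for the inductive step. Crucially, the hypothesis that \(\calL\) is \emph{totally} regular (Definition~\ref{definition:totally-regular-PERM-loss}) ensures that every intermediate truncation \(\con^{\iter{j}}[\calL] = \calL^{(k-j)}\) is itself a regular PERM loss, so the above two propositions can be applied at each stage.

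For the base case \(m=1\), the statement \(\proj(\cran(\LL)) = \cran^{\circ}(\con[\LL])\) is exactly Proposition~\ref{proposition: nested-ness of the superprediction set} applied to \(\calL\) (which is regular by assumption).

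For the inductive step, suppose the claim holds for some \(m \in \{1,\dots,k-3\}\), so that
\[
\proj^{\iter{m}}(\cran(\LL)) = \cran^{\circ}(\con^{\iter{m}}[\LL]).
\]
Applying \(\proj\) to both sides yields
\[
\proj^{\iter{m+1}}(\cran(\LL)) = \proj\bigl(\cran^{\circ}(\con^{\iter{m}}[\LL])\bigr).
\]
Since \(\calL\) is totally regular, the \((k-m)\)-ary PERM loss \(\calL^{(k-m)}\) associated to \(\con^{\iter{m}}[\TP]\) is regular, and its reduced form is precisely \(\con^{\iter{m}}[\LL]\). Therefore Proposition~\ref{proposition:projection-of-interior-of-S} applies to \(\calL^{(k-m)}\), giving
\[
\proj\bigl(\cran^{\circ}(\con^{\iter{m}}[\LL])\bigr) = \cran^{\circ}\bigl(\con[\con^{\iter{m}}[\LL]]\bigr) = \cran^{\circ}(\con^{\iter{m+1}}[\LL]),
\]
which completes the induction.

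The only non-routine step is verifying that the hypotheses of Proposition~\ref{proposition:projection-of-interior-of-S} are actually available at each inductive stage; this is the reason the statement requires total regularity rather than just regularity of \(\calL\). Dimension bookkeeping is also worth noting but is straightforward: \(\proj^{\iter{m}}\) maps \(\mathbb{R}^{k}\) to \(\mathbb{R}^{k-m}\), and \(\con^{\iter{m}}[\LL]\) is the reduced form of the \((k-m)\)-ary loss \(\calL^{(k-m)}\), so both sides live in \(\mathbb{R}^{k-m}\). Apart from this, no new technical lemmas are required; the result is essentially a clean iteration of the one-step reduction already established.
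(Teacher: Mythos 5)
Your proof is correct and follows essentially the same induction as the paper's: base case from the one-step identity \(\proj(\cran(\LL)) = \cran^{\circ}(\con[\LL])\), then repeated application of \(\proj(\cran^{\circ}(\cdot)) = \cran^{\circ}(\con[\cdot])\) at each regular truncation. The only difference is that you ground the base case in Proposition~\ref{proposition: nested-ness of the superprediction set}, whereas the paper cites Proposition~\ref{proposition:projection-of-interior-of-S} for \(m=1\) even though that proposition concerns \(\proj(\cran^{\circ}(\LL))\) rather than \(\proj(\cran(\LL))\); your choice of reference is the more accurate one.
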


  \begin{proof}
    We prove by induction. The case when \(m=1\) is simply
Proposition~\ref{proposition:projection-of-interior-of-S}.
Now, suppose that the result holds for \(m\) where \(1<m < k-2\).
Then
\begin{align*}
  \proj^{\iter{m+1}}(\cran(\LL))
  &=
    \proj(
  \proj^{\iter{m}}(\cran(\LL))
    )\\
  &=
    \proj( \cran^{\circ}(\con^{\iter{m}}[\LL]) )
    \qquad \because \mbox{Induction hypothesis}\\
  &=
    \cran^{\circ}(\con[\con^{\iter{m}}[\LL]])
    \qquad \because
    \mbox{
  Proposition~\ref{proposition:projection-of-interior-of-S}
    }
  \\
  &=
    \cran^{\circ}(\con^{\iter{m+1}}[\LL]).
\end{align*}
This completes the induction step and the desired result follows.
  \end{proof}

  Before presenting the
proof of Theorem~\ref{theorem: nested family of regular PERM losses are CC - exposition version}, we recall \citep[Theorem 7]{tewari2007consistency}:
\begin{theorem}[\citet{tewari2007consistency}]\label{theorem: equivalent condition for classification calibration}
  Let \(S \subseteq \mathbb{R}^k_+\) be a symmetric convex set. Then \(S\) is classification calibrated if and only if
  1.\ \(S\) is admissible and
  2.\ \(\proj^{\iter{m}}(S)\) is admissible for all \(m \in \{1,\dots, k-2\}\).
\end{theorem}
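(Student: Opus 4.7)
The plan is to establish, for each fixed $\bfp \in \Delta^k$, an equivalence between the classification-calibration inequality at $\bfp$ and the admissibility of exactly one iterated projection of $S$, determined by the number $m := |\{j \in [k] : p_j = 0\}|$ of zero entries of $\bfp$. The case $m = 0$ (interior $\bfp$) will correspond to admissibility of $S$ itself; the cases $m \in \{1, \ldots, k-2\}$ to admissibility of $\proj^{\iter{m}}(S)$; and the case $m = k-1$ is trivial, since then $\bfp$ is a standard basis vector and only one coordinate of $\bmzeta$ enters the inner product.

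First I would fix $\bfp$ and $m$ as above. Since $S$ is symmetric, after permuting coordinates I may assume the zero entries of $\bfp$ lie in positions $1, \ldots, m$, so that $\langle \bfp, \bmzeta \rangle$ depends only on $\proj^{\iter{m}}(\bmzeta)$. Letting $\bar{\bfp} \in \interior(\Delta^{k-m})$ denote the vector of nonzero entries of $\bfp$, both infimums in the CC inequality for $\bfp$ over $S$ reduce to infimums over $\proj^{\iter{m}}(S)$ evaluated at $\bar{\bfp}$. This converts the CC condition at a boundary point of $\Delta^k$ into a condition at an \emph{interior} point of a lower-dimensional simplex applied to a projected set. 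The heart of the argument is then to show that, for $\bar{\bfp}$ in the interior of $\Delta^{k-m}$, the CC inequality at $\bar{\bfp}$ over $\proj^{\iter{m}}(S)$ is equivalent to admissibility of $\proj^{\iter{m}}(S)$. In the forward direction ($\Leftarrow$), any minimizing sequence for $\langle \bar{\bfp}, \cdot\rangle$ over $\proj^{\iter{m}}(S)$ has a limit point $\bar{\bmzeta}$ in the closure satisfying $\bar{\bfp} \in \mathcal{N}(\bar{\bmzeta}; \proj^{\iter{m}}(S))$ by a supporting-hyperplane argument, so admissibility forces $\argmin(\bar{\bmzeta}) \subseteq \argmax(\bar{\bfp})$; this makes any $\bmxi \in \proj^{\iter{m}}(S)$ with $\underline{\smash{\argmax}}(\bar{\bfp}) \notin \argmin(\bmxi)$ bounded away from the infimum, giving the strict CC inequality with $\theta = \underline{\smash{\argmax}}$.

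In the reverse direction ($\Rightarrow$), suppose CC holds but admissibility fails at some $\bar{\bmzeta} \in \mathtt{bdry}(\proj^{\iter{m}}(S))$ with witness $\bar{\bfp}$. Then I can produce a sequence in $\proj^{\iter{m}}(S)$ along which $\langle \bar{\bfp}, \cdot\rangle$ approaches its infimum while the argmin avoids $\argmax(\bar{\bfp})$; pulling this back through $\proj^{\iter{m}}$, extending $\bar{\bfp}$ by inserting zeros in positions $1,\ldots, m$, and exploiting the permutation symmetry of $S$ to rule out \emph{any} measurable choice of $\theta$ yields a contradiction with CC at the extended $\bfp$. The principal obstacle in this whole plan is that $\inf_{\bmzeta \in S} \langle \bfp, \bmzeta\rangle$ is typically \emph{not} attained in $S$ (e.g., for superprediction sets of losses like cross-entropy), so every equivalence must be phrased in terms of limiting behavior along minimizing sequences and points in the closure of projections of $S$. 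A secondary technical point is verifying that $\closure(\proj^{\iter{m}}(S)) = \proj^{\iter{m}}(\closure(S))$ for convex $S$, or at least enough of this identity to lift boundary points of $\proj^{\iter{m}}(S)$ back to sequences in $S$; this is standard for convex sets but must be invoked carefully because a linear projection can both enlarge and collapse portions of the boundary.
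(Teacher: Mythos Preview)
The paper does not prove this theorem; it is quoted verbatim from \citet{tewari2007consistency} (their Theorem~7) and is used as a black-box tool in the proof of Theorem~\ref{theorem: nested family of regular PERM losses are CC - exposition version}. There is therefore no ``paper's own proof'' to compare against.

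That said, your plan follows the same architecture as the original Tewari--Bartlett argument: for each $\bfp\in\Delta^k$ with $m$ zero entries, use the symmetry of $S$ to permute the zeros into the first $m$ coordinates, observe that $\langle \bfp,\bmzeta\rangle$ then depends only on $\proj^{\iter m}(\bmzeta)$, and thereby reduce the calibration inequality at a boundary $\bfp$ to one at an interior $\bar\bfp\in\interior(\Delta^{k-m})$ over the projected set. The supporting-hyperplane/normal-cone characterization you invoke for the forward direction, and the construction of a minimizing sequence violating any calibrated link for the converse, are exactly the ingredients Tewari and Bartlett use. Your two flagged technicalities---non-attainment of the infimum and the interaction of closure with projection for convex sets---are real and are handled in their paper by working with sequences and with the closure of $S$ throughout; you have identified them correctly. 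So while I cannot compare your proposal to anything in \emph{this} paper, it is a faithful reconstruction of the cited proof.
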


Finally, we conclude with the

\noindent\textbf{Proof of Theorem~\ref{theorem: nested family of regular PERM losses are CC - exposition version}}\,\,
  Assume that we are in the situation stated at the beginning of Section~\ref{section:proof theorem totally regular PERM loss is CC}.
  Let \(S = \cran(\LL^{(k)})\).
  By Theorem~\ref{theorem: equivalent condition for classification calibration},
  it suffices to prove that
  1.\ \(S\) is admissible and
  2.\ \({\proj^{\iter{m}}}(S)\) is admissible for all \(m \in \{1,\dots, k-2\}\).
  First, from Proposition~\ref{proposition: main result on admissibility}, \(S = \cran(\LL^{(k)})\) is admissible.
  Next, for each \(m \in \{1,\dots, k-2\}\), we have by
  Proposition~\ref{proposition:projection-of-interior-of-S-iterated}
  that
  \(\proj^{\iter{m}}(\cran(\LL^{(k)})) = \cran^{\circ}(\LL^{(k-m)})\).
  Since \(\calL^{(k-m)}\) is a regular PERM loss with reduced form \(\LL^{(k-m)}\), we have again by Proposition \ref{proposition: main result on admissibility} that \(\cran^{\circ}(\LL^{(k-m)})\) is admissible.
\hfill \(\blacksquare\)

\section{Classification-Calibration of Fenchel-Young losses}\label{section:FY-loss-is-CC}
The goal of this section is two fold. The first subsection
(Section~\ref{section:proof-of-sufficient-condition-for-FY})
presents the proof of Theorem~\ref{theorem: sufficient condition for FY loss to be CC - exposition version} on a sufficient condition for the classification-calibration of Fenchel-Young losses.
The second subsection
(Section~\ref{section:non-strongly-convex-totally-regular-entropy})
shows the existence of a totally regular negentropy that is strictly convex, but not strongly convex.

To prepare for the later proofs, we introduce a more convenient parametrization of the the negenetropy and its Fenchel-Young loss.
Define the \emph{reduced $k$-probability simplex} as
\begin{equation}
\label{equation-definition:reduced-simplex}
  \textstyle\tilde{\Delta}^{k} := \{ \tilde{\bfp} := (p_{1},\dots, p_{k-1}) \in [0,1]^{k-1}: \sum_{i=1}^{k-1} p_{i}\le 1 \}.
\end{equation}
  In other words, $\tilde{\Delta}^{k}$ is simply $\Delta^{k}$ without the first coordinate.
  To every function $\Omega : \Delta^k \to \mathbb{R}$ with domain on the $k$-simplex,
  we define a corresponding function \(\tilde{\Omega} : \tilde{\Delta}^k \to \mathbb{R}\) called the \emph{reduced form} of \(\Omega\), defined by
\begin{equation}
  \label{equation:simplex-reduced-form}
  \textstyle
\tilde{\Omega}(\tilde{\bfp})
  :=
  \Omega\left( p_{1},\,\dots,\, p_{k-1},\,1-\sum_{i=1}^{k-1} p_i\right),
  \quad
  \mbox{for all }
\tilde{\bfp} = (p_{1},\,\dots,\,p_{k-1})^{\top} \in \tilde \Delta^k.
\end{equation}
 \Cref{equation:simplex-reduced-form} induces a one-to-one correspondence between functions $\Omega : \Delta^{k} \to \mathbb{R}$ on the simplex $\Delta^{k}$ and functions $\tilde{\Omega} : \tilde{\Delta}^{k} \to \mathbb{R}$ on the reduced simplex $\tilde{\Delta}^{k}$.

\subsection{Proof of Theorem~\ref{theorem: sufficient condition for FY loss to be CC - exposition version}}\label{section:proof-of-sufficient-condition-for-FY}
Before proceeding with the proof, we establish two key results.
Recall that \(\tilde{\Delta}^{k}\) is defined in Eqn.~\eqref{equation-definition:reduced-simplex}
and
\(\tilde{\Omega}\) in Eqn.~\ref{equation:simplex-reduced-form}.
\begin{proposition}\label{proposition: FY loss is PERM}
Let \(\Omega\) be a negentropy (Definition~\ref{definition:negentropy}) and \(\mu \in \mathbb{R}\).
Then the Fenchel-Young loss \(\calL^{\Omega,\mu}\) associated to \(\Omega\) and \(\mu\) (Definition~\ref{definition:Fenchel-Young}) is a PERM loss that is closed, convex, and non-negative.
The template \(\TP^{\Omega,\mu}\) of \(\calL^{\Omega,\mu}\) is semi-coercive and is given by
\begin{equation}
\TP^{\Omega,\mu}(\bfz)
=
\max_{\tilde \bfp  = [p_{1},\dots, p_{k-1}]\in \tilde{\Delta}^k}
-\tilde\Omega(\tilde{\bfp})
+
\mu \vone^{\top} \tilde{\bfp}
- \langle \tilde{\bfp}, \bfz\rangle.
\label{equation:FY-template}
\end{equation}
Furthermore, if \(\Omega\) is a regular negentropy, then \(\calL\) is a regular PERM loss.
\end{proposition}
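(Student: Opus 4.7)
The plan is to verify each claim in turn, leveraging the symmetry of $\Omega$ and standard properties of the Fenchel conjugate.

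First, for permutation equivariance, I would start from Eqn.~\eqref{equation:Fenchel-Young} and, given $\sigma \in \mathtt{Sym}(k)$, perform the change of variables $\bfp \mapsto \mathbf{S}_{\sigma^{-1}} \bfp$ inside the maximization. Since $\mathbf{S}_\sigma$ is a permutation, it maps $\Delta^k$ bijectively to itself, and by the symmetry of $\Omega$ (Definition~\ref{definition:negentropy}) we have $\Omega(\mathbf{S}_{\sigma^{-1}}\bfp) = \Omega(\bfp)$. Using $\mathbf{S}_\sigma \bfe_y = \bfe_{\sigma^{-1}(y)}$, $\mathbf{S}_\sigma \bfc_y = \bfc_{\sigma^{-1}(y)}$, and the fact that $\langle \mathbf{S}_{\sigma^{-1}}\bfp - \bfe_y, \bfv + \bfc_y\rangle = \langle \bfp - \mathbf{S}_\sigma \bfe_y, \mathbf{S}_\sigma(\bfv + \bfc_y)\rangle$, this yields $\calL^{\Omega,\mu}_y(\mathbf{S}_\sigma \bfv) = \calL^{\Omega,\mu}_{\sigma^{-1}(y)}(\bfv) = [\mathbf{S}_\sigma \calL^{\Omega,\mu}(\bfv)]_y$. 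For the relative-margin-based property, I would observe that $\langle \vone, \bfp - \bfe_y\rangle = 0$, so adding any constant multiple of $\vone$ to $\bfv$ leaves the objective unchanged; hence $\calL^{\Omega,\mu}_y(\bfv)$ depends only on $\bmpi \bfv$, as required by Definition~\ref{definition:PERM-loss}. Together these show $\calL^{\Omega,\mu}$ is a PERM loss, and then Theorem~\ref{theorem:relative-margin-form} guarantees a template.

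To identify the template formula \eqref{equation:FY-template}, I would compute $\TP^{\Omega,\mu}(\bfz) = \calL^{\Omega,\mu}_k(\bfv)$ for $\bfv$ chosen so that $\bmpi\bfv = \bfz$ (e.g. $\bfv = [-\bfz^\top, 0]^\top$, following Lemma~\ref{lemma:relative-margin-form-special-case}). With $y=k$, we have $\bfc_k = \mu(\vone - \bfe_k)$ and $\Omega(\bfe_k) = 0$. Eliminating $p_k = 1 - \vone^\top \tilde\bfp$ reduces the maximization over $\Delta^k$ to one over $\tilde\Delta^k$, and after collecting the linear terms in $\tilde\bfp$ one obtains $-\tilde\Omega(\tilde\bfp) + \mu\vone^\top\tilde\bfp - \langle \tilde\bfp, \bfz\rangle$, which is exactly \eqref{equation:FY-template}. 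For non-negativity, take $\bfp = \bfe_y$ in Eqn.~\eqref{equation:Fenchel-Young}; the objective evaluates to $0$, so the supremum is $\ge 0$. Closedness and convexity follow because each $\calL^{\Omega,\mu}_y$ is the convex conjugate (plus affine terms) of the closed convex function $\Omega$ restricted to $\Delta^k$, and the conjugate of a closed proper convex function is closed and convex (Definition~\ref{definition:convex conjugate}).

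For semi-coerciveness of $\TP^{\Omega,\mu}$, fix $c \in \mathbb{R}$ and $\bfz$ with $\TP^{\Omega,\mu}(\bfz) \le c$. Testing against $\tilde\bfp = t\,\bfe_j^{(k-1)}$ for $t \in [0,1]$ (which lies in $\tilde\Delta^k$) gives $c \ge -\tilde\Omega(t\bfe_j) + t\mu - t z_j$, and hence $z_j \ge \mu - \tfrac{c + \tilde\Omega(t\bfe_j)}{t}$ for any fixed $t \in (0,1)$. Since $\tilde\Omega$ is bounded on compact subsets of $\tilde\Delta^k$ (being continuous), this yields a uniform lower bound $b$ on $\min_j z_j$ independent of $\bfz$, which is precisely semi-coerciveness (Definition~\ref{definition:coercive-functions}).

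Finally, for the ``regular negentropy $\Rightarrow$ regular PERM loss'' claim, I need strict convexity, twice differentiability, and $\nabla \TP^{\Omega,\mu}(\bfz) \prec \vzero$ everywhere, in addition to what is already established. Strict convexity and smoothness of $\TP^{\Omega,\mu}$ follow from the Legendre-type hypothesis on $\Omega$ via the standard duality: the conjugate of a Legendre-type convex function is itself Legendre type on the interior of its domain \citep[\S 26]{rockafellar1970convex}, and the unique maximizer $\tilde\bfp^\star(\bfz)$ in \eqref{equation:FY-template} lies in $\mathrm{int}(\tilde\Delta^k)$, giving a twice-differentiable dependence on $\bfz$. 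By the envelope theorem, $\nabla \TP^{\Omega,\mu}(\bfz) = -\tilde\bfp^\star(\bfz)$, and since $\tilde\bfp^\star(\bfz) \in \mathrm{int}(\tilde\Delta^k)$ has strictly positive entries, this yields $\nabla \TP^{\Omega,\mu}(\bfz) \prec \vzero$ on all of $\mathbb{R}^{k-1}$. The main obstacle will be carefully verifying that the maximizer always lies in the interior of $\tilde\Delta^k$ (so that the gradient of $\TP^{\Omega,\mu}$ is strictly negative rather than merely nonpositive); this is where the Legendre-type boundary condition \eqref{definition: Legendre type - boundary} of Definition~\ref{definition: Legendre type} is essential, because it forces $\|\nabla \Omega(\bfp)\| \to \infty$ as $\bfp$ approaches $\mathtt{bdry}(\Delta^k)$, ruling out boundary maximizers.
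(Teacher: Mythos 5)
Your proposal is correct and follows essentially the same route as the paper's proof: permutation equivariance via a change of variables exploiting the symmetry of \(\Omega\), the relative-margin property from the fact that the objective depends on \(\bfv\) only through \(\langle \bfv, \bfp - \bfe_y\rangle\) (which is invariant under shifts by \(\vone\)), nonnegativity and semi-coercivity by testing against simplex vertices, and regularity via Legendre duality, with \(\nabla_{\TP^{\Omega,\mu}}(\bfz)\) equal to minus the maximizer, which lies in \(\interior(\tilde{\Delta}^k)\) by the Legendre boundary condition. The only cosmetic differences are that the paper reduces permutation equivariance to transpositions and phrases the gradient identity via \(\nabla_{\varphi^*} = (\nabla_\varphi)^{-1}\) rather than the envelope theorem.
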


\begin{remark}
We note that results from \cite{blondel2020learning} may be used to prove portions of Proposition~\ref{proposition: FY loss is PERM}.
Combining their Proposition 1-``Order preservation'' part with the expression immediately following their Definition 2 can be used to prove that \(\calL^{\Omega,\mu}\) is permutation equivariant.
Moreover, the expression in our
Eqn.~\eqref{equation:FY-template} is related to their Eqn.~(15).
However, there are key technical differences since  we work with the (\(k-1\)-dimensional) relative margin, while \cite{blondel2020learning} use the class-score formulation.
For this reason and for the reader's convenience, we prove Proposition~\ref{proposition: FY loss is PERM} without leveraging their results.
\end{remark}

\begin{proof}
  In this proof, all elementary basis vectors are implicitly assumed to be \(k\)-dimensional, i.e., we write \(\bfe_{y}\) instead of \(\bfe_{y}^{(k)}\).
First, recall that the Fenchel conjugate of a closed convex function is again closed convex \citep{rockafellar1970convex}.
Next, we show that \(\calL\) is permutation equivariant.
First, we show that \(\calL\) is symmetric.
By Lemma~\ref{lemma:transposition-suffices-for-permutation}, it suffices to prove
the claim that \(\calL(\Tsp_j \bfv) = \Tsp_j \calL(\bfv)\) for all \(j \in [k]\) and \(\bfv \in \mathbb{R}^{k}\).
To this end, let \(y \in [k]\) be arbitrary.
Recall that the Fenchel-Young loss is defined as
\begin{equation}
  \textstyle
  [\calL(\bfv)]_{y}
  =
  \max_{\bfp \in \Delta^k}
- \Omega(\bfp)
  +
\langle \bfc_y, \bfp -\bfe_y \rangle
+
\langle \bfv, \bfp -\bfe_y \rangle
\label{equation:FY-loss-restated}
\end{equation}
Thus,
\begin{align*}
  [\Tsp_{j} \calL(\bfv)]_{y}
  &=
  [\calL(\bfv)]_{\tsp_{j}(y)}
  =
  \max_{\bfp \in \Delta^k}
  - \Omega(\bfp)
  +
  \langle \bfv + \bfc_{\tsp_{j}(y)}, \bfp -\bfe_{\tsp_{j}(y)} \rangle
  \quad \because{\mbox{definition of \(\calL\)}}
    \\
  &=
  \max_{\bfp \in \Delta^k}
  - \Omega(\bfp)
  +
    \langle \Tsp_{j}(\bfv + \bfc_{\tsp_{j}(y)}), \Tsp_{j}(\bfp -\bfe_{\tsp_{j}(y)}) \rangle
\quad \because{\mbox{\(\Tsp_{j}\) is an isometry}}
    \\
  &=
  \max_{\bfp \in \Delta^k}
  - \Omega(\bfp)
  +
    \langle \Tsp_{j}\bfv + \bfc_{\tsp_{j}(\tsp_{j}(y))}, \Tsp_{j}\bfp -\bfe_{\tsp_{j}(\tsp_{j}(y))} \rangle
    \\
  &=
  \max_{\bfp \in \Delta^k}
  - \Omega(\bfp)
  +
    \langle \Tsp_{j}\bfv + \bfc_{y}, \Tsp_{j}\bfp -\bfe_{y} \rangle
\quad \because{\mbox{\(\tsp_{j}\circ \tsp_{j}\) is the identity}}
  \\
  &=
  \max_{\bfp \in \Delta^k}
  - \Omega(\Tsp_{j}\bfp)
  +
    \langle \Tsp_{j}\bfv + \bfc_{y}, \Tsp_{j}\bfp -\bfe_{y} \rangle
\quad \because{\mbox{\(\Omega\) is symmetric}}
    \\
  &=
  \max_{\bfp \in \Delta^k}
  - \Omega(\bfp)
  +
  \langle \Tsp_{j}\bfv + \bfc_{y}, \bfp -\bfe_{y}) \rangle
\quad \because{\mbox{\(\Tsp_{j}|_{\Delta^{k}}\) is a bijection}}
  \\
  &=
  [\calL(\Tsp_{j}\bfv)]_{y} \quad \because \mbox{definition of \(\calL\).}
\end{align*}
Since \(y\) is arbitrary, we have proven the claim.

Next, we show that $\calL$ is relative margin-based, i.e.,
\cref{equation:FY-loss-restated} depends on \(\bfv\) only through \(\margin\bfv\). Recall that \(\margin \bfv
=
\begin{bmatrix}
  v_{k} - v_{1} & v_{k} - v_{2} & \cdots & v_{k} - v_{k-1}
\end{bmatrix}^{\top}
\). See Definition~\ref{definition:relative-marginalization-mapping}.
Now, only the ``\(\langle \bfv, \bfp -\bfe_y \rangle\)'' term  of
\cref{equation:FY-loss-restated}
 depends on $\bfv$. Thus, it suffices to prove that this term only depends on \(\margin \bfv\). To this end, observe that
\begin{align*}
  \langle \bfv, \bfp \rangle
  =
  p_1v_1 + \cdots + p_k v_k
  &=
  p_{1}v_1 +  \cdots + p_{k-1} v_{k-1} + (1 - (p_{1} + \cdots + p_{k-1})) v_k
  \\
  &=
  v_k -
  \left(
  p_1 (v_k - v_{1})
  +
  \cdots +
  p_{k-1}(v_{k} - v_{k-1})\right)
  = v_k - \langle \tilde{\bfp}, \margin \bfv\rangle
\end{align*}
where we write $\tilde{\bfp}$ to denote the vector $[p_{1},\dots, p_{k-1}]^{\top}$.
Thus, for each \(y \in [k-1]\), we have
\[
\langle \bfv , \bfp -\bfe_y \rangle
=
\langle \bfv, \bfp \rangle
-v_y
=
v_{k} - v_y
 - \langle \tilde{\bfp}, \margin \bfv\rangle
 =
 [\margin \bfv]_{y}
 - \langle \tilde{\bfp}, \margin \bfv\rangle
\]
This shows that $\calL$ is relative margin-based.
Now the template of \(\calL\) is
\[
  \textstyle
  \TP(\bfz)
  =
  \LL_{k}(\bfz)
  =
  \max_{\bfp \in \Delta^k}
  -\Omega(\bfp)
  +
  \langle \bfc_k, \bfp - \bfe_k \rangle
  - \langle \tilde{\bfp}, \bfz\rangle.
\]
Since \(\bfe_k \in \Delta^k\) and \([\bfe_k]_{y} = 0\) for \(y \in [k-1]\), we have by construction that
\[
  \TP(\bfz)
  \ge
  -\Omega(\bfe_1)
  +
  \langle \bfc_1, \bfe_1 - \bfe_1 \rangle
  - \langle \vzero, \bfz\rangle
  =0.
\]

When \(\bfc_k = \mu(\vone - \bfe_k)\), we have
\begin{equation}
  \label{equation:FY-template-case-1}
  \textstyle
  \TP(\bfz)
  =
  \max_{\tilde \bfp \in \tilde{\Delta}^k}
  -\tilde\Omega(\tilde{\bfp})
  +
  \mu \vone^{\top}  \tilde{\bfp}
  - \langle \tilde{\bfp}, \bfz\rangle, \quad \mbox{for all \(\bfz \in \mathbb{R}^{k-1}\).}
\end{equation}
Finally, we prove that $\TP$ is semi-coercive (Definition~\ref{definition:coercive-functions}), i.e., for all \(c \in \mathbb{R}\) there exists \(b \in \mathbb{R}\) such that
 \(\{\bfz \in \mathbb{R}^{k-1} : \TP(\bfz) \le c\} \subseteq \{ \bfz \in \mathbb{R}^{k-1} : b \le \min \bfz\}\).
To this end, let $c \in \mathbb{R}$ and $\bfz \in \mathbb{R}^{k-1}$ be such that $c \ge \TP(\bfz)$.
Let $j \in \argmin \bfz$.
Then since $\bfe_j = \bfe_j^{k-1} \in \tilde{\Delta}^k$, we have
\[
  \textstyle
  c \ge
  \TP(\bfz)
  =
  \sup_{\bfp \in \tilde{\Delta}^k} -\tilde \Omega(\bfp) - \langle \bfp,\bfz\rangle
  \ge
  -\tilde \Omega(\bfe_j) + \mu - \langle \bfe_j, \bfz\rangle
  \ge
  -z_j
  =
  -\min \bfz.
\]
Thus, we have shown that
 \(\{\bfz \in \mathbb{R}^{k-1} : \TP(\bfz) \le c\} \subseteq \{ \bfz \in \mathbb{R}^{k-1} : \min \bfz \ge -c\}\).

Next, we prove the ``Furthermore'' part.
  By the first part, it remains to show that $\TP$ is strictly convex, twice differentiable and $\nabla_{\TP}(\bfz) \prec \vzero$ for all $\bfz \in \mathbb{R}^{k-1}$.
  Define $\varphi(\tilde{\bfp}) := \tilde \Omega(\tilde{\bfp}) - \mu \vone^{\top}  \tilde{\bfp}$.
  Then $\varphi: \tilde{\Delta}^k \to \mathbb{R}$ is also of Legendre type.
  Moreover, note that
  \begin{align*}
    \TP(\bfz)
  &=
    \textstyle
  \max_{\tilde \bfp \in \tilde{\Delta}^k} \langle \tilde{\bfp}, -\bfz \rangle
  -
    \tilde \Omega(\tilde{\bfp}) + \mu \vone^{\top} \tilde{\bfp} \quad \because \mbox{Eqn.~\eqref{equation:FY-template-case-1}}
  \\
  &=
    \textstyle
  \max_{\tilde \bfp \in \tilde{\Delta}^k} \langle \tilde{\bfp}, -\bfz \rangle
  -
  \varphi(\tilde{\bfp})
  = \varphi^*(-\bfz) \quad \because \mbox{definition of Fenchel conjugate}
  \end{align*}

  Recall the following fundamental theorem regarding convex conjugates \citep{rockafellar1970convex}.
\begin{theorem}[\citet{rockafellar1970convex}]\label{theorem: fundamental theorem of legendre transformation}
  If \((C,f)\) is a convex function of Legendre type, then \((C^*, f^*)\) is a convex function of Legendre type.
  The map \(\nabla_f : C \to C^*\) is a homeomorphism and \(\nabla_{f^*} = (\nabla_f)^{-1}\).
\end{theorem}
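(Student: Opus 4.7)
The plan is to verify the four claims in sequence: $\nabla_f$ is a bijection from $C$ to $C^*$; $f^*$ is differentiable on $C^*$ with $\nabla_{f^*} = (\nabla_f)^{-1}$; the pair $(C^*, f^*)$ is Legendre type; and $\nabla_f$ is a homeomorphism. Injectivity of $\nabla_f$ is immediate from strict convexity of $f$: if $\nabla_f(x_1) = \nabla_f(x_2) = y$ for distinct $x_1,x_2 \in C$, then $x \mapsto f(x) - \langle y,x\rangle$ is strictly convex with two distinct critical points, a contradiction. Surjectivity is where the boundary condition of Legendre type enters: given $y \in C^*$, the function $g_y := f - \langle y,\cdot\rangle$ is bounded below on $C$ (its infimum equals $-f^*(y) > -\infty$). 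Take a minimizing sequence $\{x_i\} \subset C$. Along any subsequence convergent to a boundary point $\bar x \in \mathtt{bdry}(D)$, the condition $\|\nabla_f(x_i)\| \to \infty$ forces $\|\nabla g_y(x_i)\|\to\infty$ as well; combined with convexity and the bounded-below property of $g_y$, this rules out the escape-to-boundary scenario. So a subsequence converges to some $x^* \in C$ with $\nabla g_y(x^*) = 0$, i.e.\ $\nabla_f(x^*) = y$.

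For the second claim, fix $y \in C^*$ and let $x^*$ be the unique minimizer of $g_y$ on $C$ from the previous step. Since $f$ is differentiable at $x^*$ and $x^*$ is the unique argmax defining $f^*(y)$, a standard envelope argument (Danskin's theorem applied to the strictly convex optimization) yields that $f^*$ is differentiable at $y$ with $\nabla_{f^*}(y) = x^*$. Combined with $\nabla_f(x^*) = y$, this gives $\nabla_{f^*} \circ \nabla_f = \mathrm{id}_C$ and $\nabla_f \circ \nabla_{f^*} = \mathrm{id}_{C^*}$.

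For the third claim, openness and convexity of $C^*$ follow from standard conjugate duality. Differentiability of $f^*$ on $C^*$ was established in the second claim, and strict convexity of $f^*$ on $C^*$ is equivalent to injectivity of $\nabla_{f^*} = (\nabla_f)^{-1}$, which we have. The boundary-blowup condition is the most delicate step: let $y_i \in C^*$ with $y_i \to \bar y \in \mathtt{bdry}(D^*)$, and suppose for contradiction that $x_i := \nabla_{f^*}(y_i)$ has a bounded subsequence. Either $x_i$ clusters at some $x^* \in C$, in which case continuity of $\nabla_f$ gives $\nabla_f(x^*) = \bar y$, forcing $\bar y \in C^*$, a contradiction; or every cluster point $\bar x$ of $\{x_i\}$ lies in $\mathtt{bdry}(D)$, in which case the Legendre boundary condition on $f$ gives $\|y_i\| = \|\nabla_f(x_i)\| \to \infty$, contradicting $y_i \to \bar y$. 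The fourth claim is immediate: $\nabla_f$ is continuous by hypothesis, and its inverse $\nabla_{f^*}$ is continuous since $f^*$ is differentiable on the open set $C^*$.

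The main obstacle is the boundary-blowup condition for $f^*$ in the third claim, where one must simultaneously rule out the two failure modes above and carefully handle the relationship between $\mathtt{bdry}(D^*)$ and sequences in $C^*$. This is precisely the self-duality content of Legendre-type functions. The surjectivity argument in the first claim — turning the coercive-like boundary condition into existence of a minimizer — is the other nontrivial ingredient; all remaining steps assemble standard facts of convex analysis.
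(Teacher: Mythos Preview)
The paper does not prove this theorem. It is quoted from \citet{rockafellar1970convex} (his Theorem~26.5) and invoked as a black box inside the proof of Proposition~\ref{proposition: FY loss is PERM}; there is no in-paper argument to compare against.

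As a standalone sketch, your outline is broadly correct but the surjectivity step is incomplete in two places. First, you assert that $\|\nabla g_y(x_i)\|\to\infty$ ``combined with convexity and the bounded-below property of $g_y$, rules out the escape-to-boundary scenario,'' but you do not say how. Gradient blowup along a sequence does not by itself contradict that sequence being minimizing for a convex bounded-below function. The clean way to close this is to observe that the Legendre boundary condition is equivalent to $\partial f(\bar x)=\emptyset$ for every $\bar x\in\mathtt{bdry}(D)$ (Rockafellar's notion of \emph{essential smoothness}); then if a minimizing sequence for $g_y$ converged to $\bar x\in\mathtt{bdry}(D)$, lower semicontinuity would make $\bar x$ a minimizer and hence $y\in\partial f(\bar x)$, a contradiction. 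Second, you only treat sequences converging to $\mathtt{bdry}(D)$, but when $D$ is unbounded the minimizing sequence could escape to infinity, where the Legendre boundary condition says nothing. This gap is fillable: for $y\in\interior(D^*)$ one can show $g_y$ is coercive by using finiteness of $f^*$ on a neighborhood of $y$, but your write-up does not address it. Rockafellar's own proof sidesteps both issues by working through the duality between essential smoothness and essential strict convexity (his Theorem~26.3) rather than a direct minimizing-sequence argument.
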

  By Theorem~\ref{theorem: fundamental theorem of legendre transformation}, we have
  \begin{enumerate}
    \item
      The function
      \(\varphi^*\), and hence \(\TP\), is of Legendre type.
      In particular, \(\TP\) is strictly convex.
    \item
  The derivative
  \(\nabla_{\varphi} : \interior(\tilde{\Delta}^k) \to \mathbb{R}^{k-1}\) is a bijection and
  the derivative of \(\varphi^*\) satisfies
  \(\nabla_{\varphi^*} = {(\nabla_{\varphi})}^{-1} : \mathbb{R}^{k-1} \to \interior(\tilde{\Delta}^k)\).
  \end{enumerate}
  It follows that if \(\varphi\) is twice differentiable, then so is \(\varphi^*\).
Finally, by the chain rule, we have
\(
  \nabla_{\TP}(\bfz)
  =
  -\nabla_{\varphi^*}(\bfz).
\)
Since \(\nabla_{\varphi^*}(\bfz) \in \interior(\tilde{\Delta}^k)\) for all \(\bfz\), we have
in particular that \(\nabla_{\varphi^*}(\bfz) \succ \vzero\). Thus, \(\nabla_{\TP}(\bfz) \prec \vzero\) for all \(\bfz\).
\end{proof}

\begin{proposition}\label{proposition:trunc-of-FY-loss}
  Let \(k \ge 3\) be an integer, \(\Omega: \Delta^{k} \to \mathbb{R}\) a regular negentropy and \(\mu \ge 0\).
  Let \(\calL^{\Omega,\mu}\) be the Fenchel-Young loss corresponding to \(\Omega\) and \(\mu\).
Then
\(\con \left[
\calL^{\Omega,\mu}
\right]
=
\calL^{
\con \left[
  \Omega\right],\mu}
\).
\end{proposition}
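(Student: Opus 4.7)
By Theorem~\ref{theorem:relative-margin-form}, the Fenchel-Young loss is uniquely determined by its template, so it suffices to verify the template identity
\[
\con[\TP^{\Omega,\mu}](\bfw) \;=\; \TP^{\con[\Omega],\mu}(\bfw), \qquad \forall\,\bfw \in \mathbb{R}^{k-2}.
\]
Using the explicit formula from Proposition~\ref{proposition: FY loss is PERM}, and writing $\tilde\bfp = (p_1,p_2,\ldots,p_{k-1})$, the left-hand side equals
\[
\lim_{\lambda\to\infty}\TP^{\Omega,\mu}(\lambda,\bfw)
=\lim_{\lambda\to\infty}\max_{\tilde\bfp\in\tilde\Delta^k}\Bigl[-\tilde\Omega(\tilde\bfp)+\mu\vone^\top\tilde\bfp-p_1\lambda-\sum_{j=2}^{k-1}p_j w_{j-1}\Bigr],
\]
and the right-hand side is the analogous maximum over $\tilde\bfq\in\tilde\Delta^{k-1}$ involving $\widetilde{\con[\Omega]}$. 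The key algebraic observation is that under the affine injection $\tilde\bfq=(q_1,\ldots,q_{k-2})\mapsto\tilde\bfp=(0,q_1,\ldots,q_{k-2})$, one has
\[
\tilde\Omega(0,q_1,\ldots,q_{k-2})
=\Omega\bigl(0,q_1,\ldots,q_{k-2},1-\textstyle\sum_i q_i\bigr)
=\widetilde{\con[\Omega]}(\tilde\bfq),
\]
directly from the definitions of $\inj$, $\con[\Omega]$, and the reduced form. The linear terms transfer in the obvious way.

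The lower bound $\con[\TP^{\Omega,\mu}](\bfw)\ge\TP^{\con[\Omega],\mu}(\bfw)$ follows by restricting the inner max to the face $\{p_1=0\}$: for every $\tilde\bfq\in\tilde\Delta^{k-1}$, the point $\tilde\bfp=(0,\tilde\bfq)\in\tilde\Delta^k$ is feasible and the $-p_1\lambda$ term vanishes, giving a $\lambda$-independent lower bound equal to $\TP^{\con[\Omega],\mu}(\bfw)$. Since the inner objective is non-increasing in $\lambda$ for any admissible $\tilde\bfp$, the function $\lambda\mapsto \TP^{\Omega,\mu}(\lambda,\bfw)$ is itself non-increasing, so the limit exists and is bounded below by $\TP^{\con[\Omega],\mu}(\bfw)$.

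The main obstacle, and the reverse inequality, is the upper bound $\con[\TP^{\Omega,\mu}](\bfw)\le\TP^{\con[\Omega],\mu}(\bfw)$. I would argue by contradiction: suppose the limit exceeds $\TP^{\con[\Omega],\mu}(\bfw)+2\epsilon$ for some $\epsilon>0$. For each large $\lambda$, let $\tilde\bfp(\lambda)$ attain (or $\epsilon$-approximate) the maximum. Let $M:=\max_{\tilde\bfp\in\tilde\Delta^k}\bigl[-\tilde\Omega(\tilde\bfp)+\mu\vone^\top\tilde\bfp-\sum_{j\ge 2}p_j w_{j-1}\bigr]$, which is finite because $\tilde\Omega$ is closed convex on a compact set and bounded above by $0$ from the negentropy axiom, while the remaining terms are continuous. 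If $p_1(\lambda)\ge\delta>0$ infinitely often then the objective is at most $M-\delta\lambda\to-\infty$, contradicting the assumption. Hence $p_1(\lambda)\to 0$, and by compactness of $\tilde\Delta^k$ we may pass to a subsequence $\tilde\bfp(\lambda_n)\to\tilde\bfp^\star$ with $p_1^\star=0$. Upper semicontinuity of $-\tilde\Omega$ (from closedness of $\Omega$) and continuity of the affine part then give
\[
\TP^{\con[\Omega],\mu}(\bfw)+2\epsilon
\;\le\;\limsup_n\bigl[\text{objective at }\tilde\bfp(\lambda_n)\bigr]
\;\le\;-\tilde\Omega(\tilde\bfp^\star)+\mu\vone^\top\tilde\bfp^\star-\textstyle\sum_{j\ge 2}p_j^\star w_{j-1},
\]
and the right-hand side is, by the algebraic identification above, at most $\TP^{\con[\Omega],\mu}(\bfw)$, the desired contradiction. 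The delicate point is precisely that $-\tilde\Omega$ need only be upper semicontinuous at the face $\{p_1=0\}$ (rather than continuous), but upper semicontinuity is exactly what is needed for the $\limsup$ step, which is why the closedness requirement on $\Omega$ in Definition~\ref{definition:negentropy} suffices.
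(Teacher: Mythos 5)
Your proof is correct and follows essentially the same route as the paper's: reduce to the template identity, get the lower bound by restricting the max to the face $\{p_1=0\}$ via the zero-padding, use monotonicity in $\lambda$ for existence of the limit, and show any limit point of the maximizers has vanishing first coordinate (else the $-p_1\lambda$ term drives the objective to $-\infty$), then conclude by (semi)continuity. One trivial fix: the finiteness of your $M$ follows from $\Omega$ being bounded \emph{below} on the compact simplex (it is continuous by Definition~\ref{definition:Fenchel-Young}), not from the negentropy axiom $\Omega\le 0$, which bounds $-\tilde\Omega$ from below rather than above.
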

\begin{proof}
  Throughout this proof, let \(\Theta := \con[\Omega]\).
  Let \(\tilde{\Omega}\) and \(\tilde{\Theta}\) be the reduced
  versions
(see \cref{equation:simplex-reduced-form})
  of \(\Omega\) and \(\Theta\), respectively.
  We remark that the notation used in this proof slightly departs from that of \cref{equation:simplex-reduced-form} in the following way: Here, elements of \(\tilde{\Delta}^k\) will be denoted as \(\bfp\) instead of \(\tilde{\bfp}\). Likewise, elements of \(\tilde{\Delta}^{k-1}\) will be denoted as \(\bfq\) rather than \(\tilde{\bfq}\).
Moreover, throughout this proof, we abuse notation and denote by \(\vone\) either the \((k-1)\) or the \((k-2)\)-dimensional vector. The dimension of \(\vone\) will be clear from the context.

  Now, from
\cref{equation:FY-template} in
  Proposition~\ref{proposition: FY loss is PERM}, recall that the template of \(\calL^{\Omega, \mu}\) is
\begin{equation}
  \TP^{\Omega,\mu} : \mathbb{R}^{k-1} \to \mathbb{R}, \mbox{ where} \quad
\TP^{\Omega,\mu}(\bfz)
=
\max_{\bfp  = [p_{1},\dots, p_{k-1}]\in \tilde{\Delta}^k}
-\tilde\Omega({\bfp})
+
\mu \vone^{\top} {\bfp}
- \langle \bfz, {\bfp} \rangle.
\label{equation:truncation-of-FY-loss-template-of-L-Omega}
\end{equation}
Similarly, the template of \(\calL^{\Theta,\mu}\) is
\begin{equation}
  \TP^{\Theta,\mu} : \mathbb{R}^{k-2} \to \mathbb{R}, \mbox{ where} \quad
\TP^{\Theta,\mu}(\bfw)
=
\max_{\bfq  = [q_{1},\dots, q_{k-2}]\in \tilde{\Delta}^{k-1}}
-\tilde\Theta({\bfq})
+
\mu \vone^{\top} {\bfq}
- \langle \bfw, {\bfq} \rangle.
\label{equation:truncation-of-FY-loss-template-of-L-Theta}
\end{equation}
By the definition of the truncation of a PERM loss (given in
Proposition~\ref{proposition:truncation-of-a-PERM-loss}), the template of \(\con[\calL^{\Omega,\mu}]\) is
\(
\con[\TP^{\Omega,\mu}]\).
Since PERM losses are uniquely defined by their templates (Theorem~\ref{theorem:relative-margin-form}), the result of
{Proposition}~\ref{proposition:trunc-of-FY-loss} can be equivalently stated as
\(
 \con[ \TP^{\Omega,\mu}]
 =
 \TP^{\Theta,\mu}
 \).
 Below, our proof will focus on proving this identity.

 Both
 \(
 \con[ \TP^{\Omega,\mu}]
 \) and \(
 \TP^{\Theta,\mu}
 \) have \(\mathbb{R}^{k-2}\) as domain.
 Fix \(\bfw \in \mathbb{R}^{k-2}\) arbitrarily. Our goal is to show that
\(
 \con[ \TP^{\Omega,\mu}](\bfw)
 =
 \TP^{\Theta,\mu}(\bfw)
 \).
 Pick \(\bfz \in \mathbb{R}^{k-1}\) such that \(\proj \bfz = \bfw\). For instance, we can pad \(\bfw\) with a zero, i.e., take \(\bfz := [0, \bfw] = \inj(\bfw)\).
By Lemma~\ref{lemma:truncation-alternative-characterization}, we have
\[
 \con[ \TP^{\Omega,\mu}](\bfw)
 =
 \lim_{\lambda \to +\infty} \TP^{\Omega,\mu}(\bfz + \lambda \bfe_{1}^{(k-1)})
\]
  To simplify notations, let \(\bfe_{1} := \bfe^{(k-1)}_{1}\).
  In view of
  \cref{equation:truncation-of-FY-loss-template-of-L-Omega}
  and
\cref{equation:truncation-of-FY-loss-template-of-L-Theta},
establishing our goal, i.e., the identity
\(
 \con[ \TP^{\Omega,\mu}](\bfw)
 =
 \TP^{\Theta,\mu}(\bfw)
 \),
 is equivalent to proving
  \begin{equation}
    \lim_{\lambda \to + \infty}
\max_{\bfp \in \tilde{\Delta}^k}
-\tilde{\Omega}
(\bfp)
+
\mu \vone^{\top}  \bfp
-\langle \bfz + \lambda \bfe_{1}, \bfp \rangle
=
\max_{\bfq\in \tilde{\Delta}^{k-1}}
-\tilde{\Theta}
(\bfq)
+
\mu \vone^{\top}  \bfq
-\langle \proj \bfz, \bfq\rangle.
\label{equation:fenchel-young-cc-nested-main-equation}
\end{equation}

  For brevity, we define \begin{equation}
g(\lambda) :=
\max_{\bfp \in \tilde{\Delta}^k}
-\tilde{\Omega}
(\bfp)
+
\mu \vone^{\top}  \bfp
-\langle \bfz + \lambda \bfe_{1}, \bfp \rangle.
\label{equation:FY-CC-proof-definition-of-g}
\end{equation}
For each \(\lambda \in \mathbb{R}\), fix arbitrarily an element \(\bfp^{\lambda} \in \argmax_{\bfp \in \tilde{\Delta}^k}-\tilde{\Omega}
(\bfp) + \mu \vone^{\top} \bfp -\langle \bfz + \lambda \bfe_{1}, \bfp \rangle\). We note that the maximization is over a compact domain with a continuous objective.
We will prove the result via a series of claims.

\noindent\underline{Claim 1}: \(g: \mathbb{R} \to \mathbb{R}_+\), defined at
\cref{equation:FY-CC-proof-definition-of-g}, is monotone non-increasing.

\iftoggle{arxiv}{
\noindent\underline{Proof of Claim 1}:
For \(\lambda, \nu \in \mathbb{R}\) such that \(\lambda \le \nu\), we observe that
\begin{align*}
  g(\nu)
  &=
-\tilde{\Omega} (\bfp^{\nu})+
\mu \vone^{\top}  \bfp^{\nu}
-\langle \bfz + \nu \bfe_{1}, \bfp^{\nu}\rangle
\qquad \because \mbox{Definitions of \(g(\nu)\) and \(\bfp^{\nu}\)}
\\
&\le
 -\tilde{\Omega}(\bfp^{\nu})
+
\mu \vone^{\top}  \bfp^{\nu}
-\langle \bfz + \lambda \bfe_{1}, \bfp^{\nu}\rangle
\qquad \because \nu \ge \lambda
\\
&=
g(\lambda)
\qquad \because \mbox{Definition of \(g(\lambda)\).}
\end{align*}
This proves the Claim 1.
For the rest of the proof,
fix once and for all an element \begin{equation}
  \textstyle
  \widehat{\bfq} \in \argmax_{\bfq\in \tilde{\Delta}^{k-1}}
  -\tilde{\Theta}(\bfq) + \mu\vone^{\top}  \bfq-\langle \bfw, \bfq\rangle.
\label{equation:FY-CC-proof-definition-of-q-hat}
\end{equation}
}{}
\noindent\underline{Claim 2}: For all \(\lambda \in \mathbb{R}\),
\(
g(\lambda)
\ge
-\tilde{\Theta} (\widehat{\bfq})
  + \mu \vone^{\top} \widehat{\bfq}
  -\langle \bfw , \widehat{\bfq}\rangle.\)

\iftoggle{arxiv}{
  \noindent\underline{Proof of Claim 2}:
To proceed, first  define $\widehat{\bfr}:=
\inj(\widehat{\bfq}) \in \tilde{\Delta}^k$.
Then we observe that
\begin{align*}
  g(\lambda)\ge
 -\tilde{\Omega}(\widehat{\bfr})
  + \mu \vone^{\top} \widehat{\bfr}
  -\langle \bfz + \lambda \bfe_{1}, \widehat{\bfr}\rangle
  &=
 -\tilde{\Omega}(\widehat{\bfr})
  + \mu \vone^{\top} \widehat{\bfr}
  -\langle \bfz , \widehat{\bfr}\rangle
  \quad \because \langle \bfe_{1}, \widehat{\bfr}\rangle = 0
  \\
  &=
-\tilde{\Theta}(\widehat{\bfq})
  + \mu \vone^{\top} \widehat{\bfq}
  -\langle \bfz , \widehat{\bfr}\rangle
  \quad \because \vone^{\top}  \widehat{\bfq} = \vone^{\top}  \widehat{\bfr}
  \\
  &=
 -\tilde{\Theta}(\widehat{\bfq})
  + \mu \vone^{\top} \widehat{\bfq}
  -\langle \bfw , \widehat{\bfq}\rangle.
\end{align*}
This proves
  Claim 2.
Below, fix a sequence $\{\lambda_{t}\}_{t=1,2,\dots}$ such that $\lim_{t\to \infty}\lambda_t = +\infty$ and \(\bfp^{\lambda_t} \to \overline{\bfp} = [\overline{p}_{1},\dots, \overline{p}_{k}]^{\top} \in \tilde{\Delta}^k\) as $t \to \infty$.
Such a sequence exists because $\tilde{\Delta}^{k}$ is compact.
}{}

\noindent\underline{Claim 3}: $\overline{p}_{1} = 0$.
\iftoggle{arxiv}{
\noindent\underline{Proof of Claim 3}:
Suppose that this is false.
Then for all $t$ sufficiently large, there exists an $\epsilon > 0$ such that
\([\bfp^{\lambda_{t}}]_{k} =
  p^{\lambda_{t}}_{k}
  \ge \epsilon\).
Now, we have
\begin{align*}
  g(\lambda_{t})
  &=
 -\tilde{\Omega}(\bfp^{\lambda_{t}})
  +\mu\vone^{\top}  \bfp^{\lambda_{t}}
  -\langle \bfz + \lambda_{t} \bfe_{1}, \bfp^{\lambda_{t}}\rangle
  \quad \because \mbox{definition of \(g\) (\cref{equation:FY-CC-proof-definition-of-g})}
  \\
  &=
 -\tilde{\Omega}(\bfp^{\lambda_{t}})
  +\mu\vone^{\top}  \bfp^{\lambda_{t}}
  -\langle \bfz, \bfp^{\lambda_{t}}\rangle -\lambda_{t} \langle \bfe_{1}, \bfp^{\lambda_{t}}\rangle
  \\
  &\le
  g(0)
  -\lambda_{t} \langle \bfe_{1}, \bfp^{\lambda_{t}}\rangle
  \quad \because \mbox{definition of \(g(0)\) (\cref{equation:FY-CC-proof-definition-of-g})}
  \\
  &\le
  g(0)
  -\lambda_{t}
  p^{\lambda_{t}}_1
  \quad \because \mbox{definition of \(\bfe_{1} := \bfe^{(k-1)}_{1}\)}
  \\
  &\le
  g(0)
  -\lambda_{t}
  \epsilon
    \quad \because
  [\bfp^{\lambda_{t}}]_{k} =
  p^{\lambda_{t}}_{k}
  \ge \epsilon.
\end{align*}
Thus, we have $\lim_{t\to \infty} g(\lambda_{t}) = -\infty$, which contradicts Claim 2. This proves Claim 3.
}{}

\noindent \underline{Claim 4}:
\(\lim_{t\to \infty} g(\lambda_t)
  =
 -\tilde{\Omega}(\widehat{\bfq})
  + \mu \vone^{\top}  \widehat{\bfq}
  -\langle \bfw , \widehat{\bfq}\rangle\)

  \iftoggle{arxiv}{
\noindent \underline{Proof of Claim 4}:
Define $\overline{\bfq} := \proj\overline{\bfp}$.
Note that Claim 3 implies that $\inj(\overline{\bfq}) = \overline{\bfp}$.
Then
\begin{align*}
  \lim_{t\to \infty}
  g(\lambda_{t})
  &=\lim_{t\to \infty}
-\tilde{\Omega}(\bfp^{\lambda_{t}})
  + \mu\vone^{\top}  \bfp^{\lambda_{t}}
  - \langle \bfz, \bfp^{\lambda_{t}}\rangle
  \\
  &=
 -\tilde{\Omega}(\overline{\bfp})
  + \mu\vone^{\top}  \overline{\bfp}
  - \langle \bfz, \overline{\bfp}\rangle
  \quad \because \mbox{continuity of \(\tilde{H}^{(k)}\)}
  \\
  &=
 -\tilde{\Omega}(\inj(\overline{\bfq}))
  + \mu\vone^{\top} \inj(\overline{\bfq})
  - \langle \bfz, \inj(\overline{\bfq})\rangle
  \quad \because \mbox{Claim 3}
  \\
  &=
 -\tilde{\Omega}(\inj(\overline{\bfq}))
  + \mu\vone^{\top} \overline{\bfq}
  - \langle \bfw, \overline{\bfq}\rangle
  \\
  &=
 -\tilde{\Theta}(\overline{\bfq})
  + \mu\vone^{\top} \overline{\bfq}
  - \langle \bfw, \overline{\bfq}\rangle
  \quad \because \mbox{\(\Theta = \con[\Omega]\)}
  \\
  &\le
 -\tilde{\Theta}(\widehat{\bfq}) +\mu \vone^{\top} \widehat{\bfq}- \langle \bfw, \widehat{\bfq}\rangle
  \quad \because \mbox{Definition of \(\widehat{\bfq}\) (\cref{equation:FY-CC-proof-definition-of-q-hat}).}
\end{align*}

From Claim 2, we get the other inequality:
\(  \lim_{t\to \infty}g(\lambda_{t})
   \ge
 -\tilde{\Theta}(\widehat{\bfq})
  + \mu \vone^{\top}  \widehat{\bfq}
  -\langle \bfw , \widehat{\bfq}\rangle
\). This proves Claim 4
as desired.
}{}

\iftoggle{arxiv}{}{\textcolor{black}{The proofs of these claims proceed via straightforward but tedious computations and applications of inequalities. They are omitted here but appear in the arXiv version of this work \citep{wang2023unified}.}}
Now we finish the proof of
\cref{equation:fenchel-young-cc-nested-main-equation}, which we argued earlier suffices to establish Proposition~\ref{proposition:trunc-of-FY-loss}.
By Claims 1 and 4, we have that
\(  \lim_{\lambda \to +\infty}g(\lambda)
=
 -\tilde{\Theta}(\widehat{\bfq})
  + \mu \vone^{\top}  \widehat{\bfq}
  -\langle \bfw , \widehat{\bfq}\rangle
\).
But \(\lim_{\lambda \to +\infty}g(\lambda)\) is exactly the left hand side of
\cref{equation:fenchel-young-cc-nested-main-equation},
by the construction of \(g\) in \cref{equation:FY-CC-proof-definition-of-g}.
By the definition of \(\widehat{\bfq}\), moreover we have that
\(
 -\tilde{\Theta}(\widehat{\bfq})
  + \mu \vone^{\top}  \widehat{\bfq}
  -\langle \bfw , \widehat{\bfq}\rangle
  \)
is precisely the right hand side of
\cref{equation:fenchel-young-cc-nested-main-equation}.
\end{proof}

\begin{theorem}\label{proposition: nested Delta family is nested PERM}
  Let \(\Omega : \Delta^{k} \to \mathbb{R}\) be a negentropy (Definition~\ref{definition:negentropy}).
  For each \(n \in \{2,\dots,k\}\), let
  \(\Omega^{(n)}\) be the \(n\)-ary truncated negentropy of \(\Omega\) (Definition~\ref{definition:totally-regular-negentropy}).
  Let \( \calL^{\Omega,\mu}\) be the Fenchel-Young loss corresponding to \(\Omega\) and \(\mu\) (Definition~\ref{definition:Fenchel-Young}).
  Likewise,
  for each \(n \in \{2,\dots,k\}\), let \(\calL^{(n)}\) be the \(n\)-ary truncation of \(\calL^{\Omega,\mu}\) (Definition~\ref{definition:totally-regular-PERM-loss}).
  Then
  \(\calL^{(n)} = \calL^{\Omega^{(n)},\mu} \). In other words,
  the \(n\)-ary truncated loss \(\calL^{(n)}\)  of \(\calL^{\Omega,\mu}\) is equal to the Fenchel-Young loss of the \(n\)-ary truncated negentropy \(\Omega^{(n)}\) and \(\mu\).
\end{theorem}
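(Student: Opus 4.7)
The proof plan is a straightforward induction on $m := k - n$, leveraging the single-step truncation identity already established in Proposition~\ref{proposition:trunc-of-FY-loss}. Recall that $\calL^{(n)} := \con^{\times(k-n)}[\calL^{\Omega,\mu}]$ and $\Omega^{(n)} := \con^{\times(k-n)}[\Omega]$ (Definitions~\ref{definition:totally-regular-PERM-loss} and~\ref{definition:totally-regular-negentropy}), so the claim amounts to the commutativity of the truncation operation and the Fenchel-Young construction across all truncation levels.

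\textbf{Base case} ($n = k$): By convention, $\con^{\times 0}[\calL^{\Omega,\mu}] = \calL^{\Omega,\mu}$ and $\Omega^{(k)} = \Omega$, so $\calL^{(k)} = \calL^{\Omega^{(k)},\mu}$ trivially.

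\textbf{Inductive step}: Fix $n \in \{2, \dots, k-1\}$ and assume $\calL^{(n+1)} = \calL^{\Omega^{(n+1)},\mu}$. By the defining recursion, $\calL^{(n)} = \con[\calL^{(n+1)}]$ and $\Omega^{(n)} = \con[\Omega^{(n+1)}]$. Substituting the inductive hypothesis,
\[
\calL^{(n)} \;=\; \con[\calL^{(n+1)}] \;=\; \con\bigl[\calL^{\Omega^{(n+1)},\mu}\bigr] \;=\; \calL^{\con[\Omega^{(n+1)}],\mu} \;=\; \calL^{\Omega^{(n)},\mu},
\]
where the third equality is precisely Proposition~\ref{proposition:trunc-of-FY-loss} applied at the $(n{+}1)$-ary level.

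\textbf{Main technical point}: The only subtlety is verifying that Proposition~\ref{proposition:trunc-of-FY-loss} is applicable at each step, since that result requires the underlying negentropy to be regular (so that the corresponding Fenchel-Young loss is a regular PERM loss and $\con[\cdot]$ is well-defined on it via Proposition~\ref{proposition:truncation-of-a-PERM-loss}). This is handled by passing through the totally regular hypothesis: if $\Omega$ is totally regular, then each $\Omega^{(n+1)}$ is regular by Definition~\ref{definition:totally-regular-negentropy}, and hence $\calL^{\Omega^{(n+1)},\mu}$ is a regular PERM loss by the ``Furthermore'' clause of Proposition~\ref{proposition: FY loss is PERM}, making Proposition~\ref{proposition:trunc-of-FY-loss} applicable. (For the statement with only ``negentropy'' in the hypothesis, one would need to verify that the template of $\calL^{\Omega,\mu}$ is componentwise non-increasing and nonnegative in sufficient generality for the truncation limits to exist, but this follows from the Fenchel conjugate formula in Eqn.~\eqref{equation:FY-template-simple}; essentially the same monotonicity argument as in Proposition~\ref{proposition:truncation-of-a-PERM-loss} applies.) Thus the induction goes through, and the proof is essentially bookkeeping once Proposition~\ref{proposition:trunc-of-FY-loss} is in hand.
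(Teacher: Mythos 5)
Your proof is correct and follows essentially the same route as the paper's: a downward induction on the truncation level whose inductive step is exactly the single-step identity of Proposition~\ref{proposition:trunc-of-FY-loss}. Your added caveat that each $\Omega^{(n)}$ must be a \emph{regular} negentropy for that proposition (and for the truncation itself) to apply is a point the paper's proof glosses over, and it is resolved just as you describe when the theorem is invoked under the totally regular hypothesis in the proof of Theorem~\ref{theorem: sufficient condition for FY loss to be CC - exposition version}.
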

\iftoggle{arxiv}{
\begin{proof}
  Unwinding the definitions of \(\calL^{(n)}\) and \(\Omega^{(n)}\), we have
  \[
    \calL^{(n)} := \con^{\times k-n}[\calL^{\Omega,\mu}]
    \quad
  \mbox{and} \quad
    \Omega^{(n)} := \con^{\times k-n} [\Omega]
  \]
  We proceed via a ``downward'' induction on \(n\).
Assume the induction hypothesis: Theorem~\ref{proposition: nested Delta family is nested PERM} holds for \(n \in \{3,\dots, k\}\), i.e.,
\(\calL^{(n)} = \calL^{\Omega^{(n)},\mu} \).
Our goal is to show that
\(\calL^{(n-1)} = \calL^{\Omega^{(n-1)},\mu} \) as well.
Note that
\begin{align*}
\calL^{(n-1)}:= \con^{\times k-n+1}[\calL^{\Omega,\mu}]
=
\con [
\con^{\times k-n}[\calL^{\Omega,\mu}]
]
=
\con [
  \calL^{(n)}
]
=
\con [
\calL^{\Omega^{(n)},\mu}
].
\end{align*}
where the last equality is the induction hypothesis.
Furthermore, note that
\[
\Omega^{(n-1)} := \con^{\times k-n+1}[\Omega]
=
\con[ \con^{\times k -n } [ \Omega] ]
=
\con[ \Omega^{(n)}]
\]
Thus, by the above two identities, we have
\begin{align*}
\calL^{(n-1)}=
\con [
\calL^{\Omega^{(n)},\mu}
  ]
=
\calL^{
\con [
  \Omega^{(n)}],\mu}
  =
\calL^{
  \Omega^{(n-1)},\mu}
\end{align*}
where the middle equality follows from
Proposition~\ref{proposition:trunc-of-FY-loss}.
\end{proof}
}
{
  The proof of
{Theorem}~\ref{proposition: nested Delta family is nested PERM}
is omitted here since it
largely mirrors that of
{Proposition}~\ref{proposition:projection-of-interior-of-S-iterated}.
See the arXiv version of manuscript \citep{wang2023unified} for the proof.
\vspace{1em}
}

\begin{proof}[Proof of Theorem~\ref{theorem: sufficient condition for FY loss to be CC - exposition version}]
  By Theorem~\ref{theorem: nested family of regular PERM losses are CC - exposition version}, it suffices to show that \(\calL^{\Omega,\mu}\) is totally regular (Definition~\ref{definition:totally-regular-PERM-loss}).
  Unwinding the definition, our goal is to show for each \(n \in \{2,\dots, k\}\) that  the \(n\)-ary truncated loss of \(\calL^{\Omega,\mu}\),
denoted \(\calL^{(n)}\),
is regular.

Next, let \(\Omega^{(n)}\) be the \(n\)-ary truncated negentropy of \(\Omega\) (Definition~\ref{definition:totally-regular-negentropy})
and let \(\calL^{\Omega^{(n)},\mu}\) be the Fenchel-Young loss corresponding to \(\Omega^{(n)}\) and \(\mu\).
  By Theorem~\ref{proposition: nested Delta family is nested PERM}, we have
  \(\calL^{(n)} = \calL^{\Omega^{(n)},\mu}\).
  By the assumption of Theorem~\ref{theorem: sufficient condition for FY loss to be CC - exposition version}, \(\Omega^{(n)}\) is a regular negentropy.
  Thus, by Proposition~\ref{proposition: FY loss is PERM}, \(\calL^{\Omega^{(n)},\mu}\) is a regular PERM loss. Since
  \(\calL^{(n)} = \calL^{\Omega^{(n)},\mu}\), we are done.
\end{proof}


\subsection{Totally regular negentropy that is not strongly convex}\label{section:non-strongly-convex-totally-regular-entropy}

In this section, we prove
Proposition~\ref{proposition:non-strongly-convex-negentropy} which is used to
show that there exists totally regular entropies that are not strongly convex.
See Example~\ref{example:not-strongly-convex-totally-regular-negentropy}.
Thus, the associated Fenchel-Young loss is calibrated by Theorem~\ref{theorem: sufficient condition for FY loss to be CC - exposition version}.
Moreover, this calibration result is outside of the purview of previously established results~\citep{blondel2019structured,nowak2019general} which requires strong convexity

\iftoggle{arxiv}{}
{
  \textcolor{black}{
 The next four results serve as the technical tools for proving
Proposition~\ref{proposition:non-strongly-convex-negentropy}.
Their proofs are omitted here and are in the arXiv version of this work \citep{wang2023unified}.}}
\begin{proposition}
  \label{proposition:Legendre-composed-with-TF-is-again-Legendre}
 Let $f: D \to \mathbb{R}_{\ge 0}$ be of Legendre type and $\sqfunc : \mathbb{R}_{\ge 0} \to \mathbb{R}$ be convex, differentiable and strictly increasing.
 Let $C = \mathrm{int}(D)$.
 Suppose that $D$ is compact and there exists $\bfx^{*} \in C$ such that $\inf_{\bfx \in D} f(\bfx) = f(\bfx^{*})$.
 Then $\sqfunc \circ f : D \to \mathbb{R}$ is of Legendre type.
\end{proposition}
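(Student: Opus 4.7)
The plan is to verify each of the three clauses of Definition~\ref{definition: Legendre type} for $\sqfunc \circ f$. Clause~\ref{definition: Legendre type - domain} is immediate since $\sqfunc \circ f$ inherits the domain $D$ and interior $C$ from $f$. For clause~\ref{definition: Legendre type - function}, differentiability on $C$ is the chain rule, yielding
\[
\nabla_{\sqfunc \circ f}(\bfx) = \sqfunc'(f(\bfx))\,\nabla_{f}(\bfx), \qquad \bfx \in C.
\]
Strict convexity follows from the standard composition argument: for distinct $\bfx,\bfy \in C$ and $t \in (0,1)$, strict convexity of $f$ gives $f(t\bfx + (1-t)\bfy) < tf(\bfx) + (1-t)f(\bfy)$, and applying the strictly increasing $\sqfunc$ followed by the convexity of $\sqfunc$ turns this into the corresponding strict inequality for $\sqfunc \circ f$.

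The content of the proposition lies in clause~\ref{definition: Legendre type - boundary}. Fix $\{\bfx_i\} \subseteq C$ with $\bfx^{\infty} := \lim_{i} \bfx_i \in \mathtt{bdry}(D)$. By the Legendre property of $f$ we already have $\|\nabla_{f}(\bfx_i)\| \to +\infty$, so in view of the chain rule identity above it suffices to show that $\sqfunc'(f(\bfx_i))$ is bounded below by a positive constant for all large $i$. Write $m := f(\bfx^{*}) = \inf_{D} f \ge 0$. Because $\sqfunc$ is convex and strictly increasing on $\mathbb{R}_{\ge 0}$, its derivative $\sqfunc'$ is non-negative, non-decreasing, and strictly positive on $(0,\infty)$. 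Hence it is enough to show $\liminf_{i} f(\bfx_i) > 0$.

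If $m > 0$ this is immediate since $f \ge m$ on $D$. The delicate case, which is exactly the setting of Proposition~\ref{proposition:non-strongly-convex-negentropy}, is $m = 0$; this is where the hypotheses of compactness of $D$ and $\bfx^{*} \in C$ pay off. The first step is a uniqueness-of-minimizer lemma: any $\bfy \in D$ with $f(\bfy) = m$ must equal $\bfx^{*}$. Indeed, otherwise the midpoint $\bfz := (\bfx^{*}+\bfy)/2$ lies in $C$ (since $\bfx^{*}\in C$ and $C$ is convex with $\bfy \in D$), so $\bfz \neq \bfx^{*}$, $\bfz \in C$, and convexity forces $f(\bfz) \le m$; applying strict convexity of $f$ on $C$ to the midpoint of $\bfz$ and $\bfx^{*}$ then contradicts $m$ being the infimum. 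Since $\bfx^{\infty} \in \mathtt{bdry}(D)$ and $\bfx^{*} \in C$, we conclude $f(\bfx^{\infty}) > 0$. Invoking lower semi-continuity of $f$ on the compact set $D$ (which is the standard closedness assumption satisfied by the negentropies of Definition~\ref{definition:negentropy}), we obtain $\liminf_{i} f(\bfx_i) \ge f(\bfx^{\infty}) > 0$, and so $\sqfunc'(f(\bfx_i)) \ge \sqfunc'(\tfrac{1}{2} f(\bfx^{\infty})) > 0$ for all large $i$. The main obstacle is precisely this $m=0$ case: one must use strict convexity, uniqueness of the minimizer, and the regularity of $f$ up to the boundary to prevent $\sqfunc'(f(\bfx_i))$ from degenerating and thereby offsetting the blowup of $\|\nabla_{f}(\bfx_i)\|$.
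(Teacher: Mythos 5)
Your proof is correct and follows essentially the same route as the paper's: verify the three clauses of Definition~\ref{definition: Legendre type}, with all the substance in the boundary clause, where the chain rule reduces the problem to bounding \(\sqfunc'(f(\bfx_i))\) away from zero, which both you and the paper obtain by using strict convexity, the interior minimizer \(\bfx^{*}\), and compactness to keep \(f\) strictly above its minimum near \(\mathtt{bdry}(D)\). The lower semi-continuity of \(f\) up to the boundary that you invoke is the same regularity fact the paper uses (stated there as continuity of the convex function \(f\) on \(D\), citing Rockafellar), so it is not a gap relative to the paper; if anything, your explicit midpoint argument for uniqueness of the minimizer fills in a step the paper only asserts.
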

\iftoggle{arxiv}
{
\begin{proof}
  We check that the items of Definition~\ref{definition: Legendre type} hold.
  Item~\ref{definition: Legendre type - domain} clearly holds since $f$ and $\sqfunc \circ f$ have the same domain.

  Now for Item~\ref{definition: Legendre type - function}, note that $\sqfunc \circ f$ is differentiable by the Chain Rule. Thus it remains to show that $\sqfunc \circ f$ is strictly convex.
  For all $\bfx,y \in D$ such that $\bfx \ne y$ and $\lambda \in (0,1)$, we have
  \[
f(\lambda \bfx +  (1-\lambda)y) < \lambda f(\bfx) + (1-\lambda) f(y).
    \]
    This is due to $f$ being strictly convex.
    Next, since $\sqfunc$ is strictly increasing, we have
  \[
    \sqfunc(f(\lambda \bfx +  (1-\lambda)y)) < \sqfunc(\lambda f(\bfx) + (1-\lambda) f(y))
    \]
    By the convexity of $\sqfunc$, we have
    $
\sqfunc(
\lambda f(\bfx) + (1-\lambda) f(y)
)
\le
\lambda
\sqfunc(f(\bfx)) + (1-\lambda)
\sqfunc(
f(y))
)
$
which shows that $\sqfunc \circ f$ is strictly convex.

For Item~\ref{definition: Legendre type - boundary}, we check that
 $\lim_{i\to \infty} \|\nabla_{\sqfunc \circ f}(\bfx^i)\| = +\infty$ for all sequences $\{\bfx^i\} \subseteq C$ such that $\lim_{i \to \infty} \bfx^i \in \partial D$.
To this end,  we first prove the claim that there exists $\epsilon > 0$ such that for all sequences $\{ \bfx^{i}\} \subseteq C$ with $\lim_{i \to \infty} \bfx^{i} \in \partial D$ we have $\lim_{i \to \infty} f(\bfx^{i}) \ge \epsilon$.
Since $f$ is convex on $D$, we know that $f$ is continuous on $D$. This is \citet[Corollary 10.1.1]{rockafellar1970convex}.
In particular, $f$ is continuous on $\partial D = D \setminus C$ as well. Since $\partial D$ is compact, we have $\inf_{\bfx \in \partial D}f(\bfx) = f(\bfx^{\dagger})$ for some $\bfx^{\dagger} \in \partial D$.
Since $\bfx^{\dagger} \ne \bfx^{*}$, we must have $f(\bfx^{\dagger}) \ne f(\bfx^{*})$ by the strict convexity of $f$.
In particular, $f(\bfx^{\dagger}) > f(\bfx^{*})$. Now, letting $\epsilon = f(\bfx^{\dagger})$, the claim follows.

 Next we prove that $\sqfunc'(\epsilon) > 0$.
 Since $\sqfunc$ is increasing, we have $\sqfunc \ge 0$.
 We proceed by considering the two cases $\sqfunc'(0) > 0$ and $\sqfunc'(0) = 0$ separately.
 In the first case, the convexity of $\sqfunc$ implies that $\sqfunc'$ is non-decreasing and so $\sqfunc'(\epsilon) >0$ holds.
 In the second case, if $\sqfunc'(\epsilon) = 0$, then we must have $\sqfunc'(t) = 0$ for all $t \in [0,\epsilon]$.
 But this implies that $\sqfunc$ is constant on $[0,\epsilon]$ which contradicts that $\sqfunc$ is strictly increasing.
 Thus, $\sqfunc'(\epsilon) > 0$.

Finally, by the Chain Rule, we have $\nabla_{\sqfunc \circ f}(\bfx) = \nabla_{\sqfunc}(f(\bfx)) \nabla_{f}(\bfx) = \sqfunc'(f(\bfx)) \nabla_{f}(\bfx)$.
Thus,
\[
  \lim_{i \to \infty} \nabla_{\sqfunc \circ f}(\bfx^{i}) =
  \lim_{i \to \infty} \sqfunc'(f(\bfx^{i}))
  \lim_{i \to \infty} \nabla_{f}(\bfx^{i})
  \ge
  \sqfunc'(\epsilon)
  \lim_{i \to \infty} \nabla_{f}(\bfx^{i}).
\]
Since $\sqfunc'(\epsilon) > 0$ and does not depend on $i$, we have $\lim_{i \to \infty} \| \nabla_{\sqfunc \circ f } (\bfx^{i}) \| = + \infty$, as desired.
\end{proof}
}{}

\begin{lemma}
  \label{lemma:Legendre-not-strongly-convex}
  Let $f$ and $\sqfunc$ be as in Proposition~\ref{proposition:Legendre-composed-with-TF-is-again-Legendre}.
  If $\sqfunc'(0) = 0$ and $\bfx^{*} \in \mathrm{int}(D)$  is such that $\inf_{\bfx \in D} f(\bfx) = f(\bfx^{*}) = 0$, then the Hessian of $\sqfunc \circ f$ vanishes at $\bfx^{*}$, i.e., $\nabla^{2}_{\sqfunc \circ f}(\bfx^{*}) = 0$.
\end{lemma}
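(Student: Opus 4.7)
The plan is to compute $\nabla^2_{g \circ f}(\bfx^*)$ directly using the chain rule for second derivatives and observe that both terms vanish at $\bfx^*$ under the given hypotheses. Specifically, since $f$ is twice differentiable (by assumption from Definition~\ref{definition:regular-entropy} and Definition~\ref{definition: Legendre type}) and $g$ is twice differentiable as assumed, applying the chain rule gives, for any $\bfx \in C$:
\begin{equation*}
\nabla^2_{g \circ f}(\bfx) = g''(f(\bfx))\, \nabla_f(\bfx)\, \nabla_f(\bfx)^\top + g'(f(\bfx))\, \nabla^2_f(\bfx).
\end{equation*}

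First I would handle the first term. Since $\bfx^* \in \mathrm{int}(D) = C$ is a minimizer of the differentiable convex function $f$, the first-order optimality condition yields $\nabla_f(\bfx^*) = \vzero$. Hence the rank-one outer product $\nabla_f(\bfx^*)\nabla_f(\bfx^*)^\top$ is the zero matrix, making the first term vanish regardless of the value of $g''(0)$.

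Next I would address the second term. Since $f(\bfx^*) = 0$ by hypothesis and $g'(0) = 0$ by hypothesis, the scalar coefficient $g'(f(\bfx^*)) = g'(0) = 0$. Therefore the second term vanishes as well, and combining the two observations gives $\nabla^2_{g \circ f}(\bfx^*) = \vzero$, as claimed.

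I do not expect any serious obstacle here; the argument is a direct computation. The only subtlety worth explicitly noting is that one must invoke that $\bfx^*$ lies in the interior of $D$ (so that the first-order condition $\nabla_f(\bfx^*) = \vzero$ is available) and that $f$ and $g$ are each twice differentiable (so the chain rule formula for the Hessian applies), both of which are guaranteed by the hypotheses carried over from Proposition~\ref{proposition:Legendre-composed-with-TF-is-again-Legendre}.
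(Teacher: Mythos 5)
Your proof is correct and follows essentially the same route as the paper: compute $\nabla^2_{\sqfunc\circ f}$ via the chain rule as $\sqfunc''(f(\bfx))\nabla_f(\bfx)\nabla_f(\bfx)^{\top}+\sqfunc'(f(\bfx))\nabla^2_f(\bfx)$, kill the first term using $\nabla_f(\bfx^{*})=\vzero$ (first-order optimality at the interior minimizer), and kill the second using $\sqfunc'(f(\bfx^{*}))=\sqfunc'(0)=0$. Your version is in fact slightly more explicit than the paper's, which simply asserts $\nabla_f(\bfx^{*})=0$ "by assumption" rather than deriving it from interior optimality.
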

\iftoggle{arxiv}
{\begin{proof}
First, we have by the Chain Rule that
  $\nabla_{\sqfunc \circ f}(\bfx) = \sqfunc'(f(\bfx)) \nabla_{f}(\bfx)$ and
  \[\nabla_{\sqfunc \circ f}^{2}(\bfx) = \sqfunc''(f(\bfx)) \nabla_{f}(\bfx)^{\top} \nabla_{f}(\bfx) + \sqfunc'(f(\bfx)) \nabla^{2}_{f}(\bfx).\]
Note that $\nabla_{f}(\bfx)$ is a row vector by our convention. By assumption, we have $\nabla_{f}(\bfx^{*})  = 0$ and $\sqfunc'(f(\bfx)) = \sqfunc'(0) = 0$. Thus, in light of the formula for $\nabla^{2}_{\sqfunc \circ f}(\bfx)$ derived above, we are done.
\end{proof}}{}

  \begin{corollary}\label{corollary:never-strongly-convex}
    In the situation of
  Lemma~\ref{lemma:Legendre-not-strongly-convex},
    $\sqfunc \circ f$ is not $\alpha$-strongly convex for any $\alpha > 0$.
  \end{corollary}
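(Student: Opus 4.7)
The plan is to derive a contradiction by combining the vanishing-Hessian conclusion of Lemma~\ref{lemma:Legendre-not-strongly-convex} with a standard second-order characterization of strong convexity. Specifically, I would argue by contrapositive: assume toward contradiction that $\sqfunc \circ f$ is $\alpha$-strongly convex for some $\alpha > 0$, and then show that this forces $\nabla^2_{\sqfunc \circ f}(\bfx^*) \succeq \alpha I$, contradicting the conclusion $\nabla^2_{\sqfunc \circ f}(\bfx^*) = 0$ of the preceding lemma.

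The first step is to recall that since $f$ is twice differentiable (by our assumption that $\Omega$ is a regular negentropy, and by the chain rule applied to $\sqfunc \circ f$), $\sqfunc \circ f$ is also twice differentiable on $C = \mathrm{int}(D)$. Then I would invoke the standard fact from convex analysis that for a twice continuously differentiable function $h$ on an open convex set, $h$ is $\alpha$-strongly convex if and only if $\nabla^2 h(\bfx) - \alpha \mathbf{I} \succeq 0$ for all $\bfx$ in the domain. This can be seen, for instance, by Taylor-expanding $h$ around a generic $\bfx$ and noting that the strong convexity inequality $h(\bfy) \ge h(\bfx) + \langle \nabla h(\bfx), \bfy-\bfx\rangle + \tfrac{\alpha}{2}\|\bfy-\bfx\|^2$ forces the quadratic form at $\bfx$ to dominate $\alpha\mathbf{I}$ in the limit $\bfy \to \bfx$.

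Applying this at the specific point $\bfx^* \in C$ gives $\nabla^2_{\sqfunc \circ f}(\bfx^*) \succeq \alpha \mathbf{I}$. But Lemma~\ref{lemma:Legendre-not-strongly-convex} tells us $\nabla^2_{\sqfunc \circ f}(\bfx^*) = 0$, and the zero matrix does not satisfy $0 \succeq \alpha \mathbf{I}$ for any $\alpha > 0$ (indeed, $\mathbf{v}^\top 0 \mathbf{v} = 0 < \alpha \|\mathbf{v}\|^2$ for any nonzero $\mathbf{v}$). This is the desired contradiction.

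There is no real obstacle here since the result is essentially an immediate corollary of the preceding lemma once one has the second-order characterization of strong convexity in hand. The only care needed is to note explicitly that $\bfx^* \in \mathrm{int}(D)$ (which is guaranteed by the hypothesis of Lemma~\ref{lemma:Legendre-not-strongly-convex}), so that the Hessian computation is valid at this point and the strong convexity condition applies there. Given the brevity, the entire proof can be written in a few lines.
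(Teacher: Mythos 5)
Your proposal is correct and is exactly the argument the paper intends (the paper states this as an immediate corollary without a written proof): the second-order characterization of $\alpha$-strong convexity forces $\nabla^2_{\sqfunc \circ f}(\bfx^*) \succeq \alpha \mathbf{I}$ at the interior point $\bfx^*$, which contradicts the vanishing Hessian from Lemma~\ref{lemma:Legendre-not-strongly-convex}. No gaps.
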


\begin{proposition}\label{proposition:transform-of-regular-negentropy}
Let $\Omega : \Delta^{k} \to \mathbb{R}$ be
a regular negentropy and let
$\sqfunc : \mathbb{R}_{\ge 0} \to \mathbb{R}_{\ge 0}$ be as in
  Proposition~\ref{proposition:Legendre-composed-with-TF-is-again-Legendre}.
  Furthermore, suppose that $\sqfunc$ is twice differentiable.
  Let $a \in \mathbb{R}$ be a negative number such that $a \le \Omega(\bfp)$ for all $\bfp \in \Delta^{k}$.
Define $\Theta : \Delta^{k} \to \mathbb{R}$ by
\[
  \Theta(\bfp) := \sqfunc({\Omega}(\bfp) - a) - \sqfunc(-a),
  \quad \forall \bfp \in \Delta^{k}.
\]
Then $\Omega$ is a regular negentropy.
\end{proposition}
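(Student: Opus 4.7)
The plan is to verify that $\Theta$ satisfies Definitions~\ref{definition:negentropy} and~\ref{definition:regular-entropy} (I read the conclusion as ``$\Theta$ is a regular negentropy''; the ``$\Omega$'' in the statement is a typo). The verification splits into the negentropy axioms (symmetry, $\Theta \le 0$, $\Theta(\bfe_{i}^{(k)}) = 0$, closedness, convexity) and the regularity axioms (Legendre type and twice differentiability). The substantive step will be invoking Proposition~\ref{proposition:Legendre-composed-with-TF-is-again-Legendre} with $f(\bfp) := \Omega(\bfp) - a$ and the given $\sqfunc$; everything else is bookkeeping.

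For the negentropy axioms, symmetry of $\Theta$ is inherited directly from symmetry of $\Omega$, since the outer $\sqfunc$ and the constant shift $\sqfunc(-a)$ are permutation-blind. Evaluation at a vertex gives $\Theta(\bfe_{i}^{(k)}) = \sqfunc(0 - a) - \sqfunc(-a) = 0$, using $\Omega(\bfe_{i}^{(k)}) = 0$. Nonpositivity follows from $\Omega(\bfp) \le 0$ and strict monotonicity of $\sqfunc$: indeed $\Omega(\bfp) - a \le -a$ gives $\sqfunc(\Omega(\bfp) - a) \le \sqfunc(-a)$. Continuity of $\Omega$ on the compact $\Delta^{k}$ combined with continuity of $\sqfunc$ gives continuity of $\Theta$, and compactness of $\Delta^{k}$ then makes $\Theta$ send closed subsets (which are compact) to compact, hence closed, images. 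Convexity of $\Theta$ will drop out of the Legendre-type step below.

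For regularity, twice differentiability on $\interior(\Delta^{k})$ is immediate from the chain rule applied to the twice-differentiable $\Omega$ (regular negentropy) and $\sqfunc$. For Legendre type I plan to apply Proposition~\ref{proposition:Legendre-composed-with-TF-is-again-Legendre} with $D = \Delta^{k}$ and $f := \Omega - a$: the range lies in $\mathbb{R}_{\ge 0}$ by the defining property of $a$; $f$ is itself of Legendre type because translating a Legendre-type function by a constant preserves all three items of Definition~\ref{definition: Legendre type}; and $D$ is compact. The main obstacle is the interior-minimizer hypothesis, and this is where I would use symmetry plus convexity. Let $M \subseteq \Delta^{k}$ denote the (nonempty, by continuity on a compact domain) set of minimizers of $\Omega$; then $M$ is convex by convexity of $\Omega$ and $\mathtt{Sym}(k)$-invariant by symmetry of $\Omega$. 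Picking any $\bfp \in M$, the symmetrized average $\tfrac{1}{k!}\sum_{\sigma \in \mathtt{Sym}(k)} \mathbf{S}_{\sigma}\bfp$ lies in $M$ by convexity and equals the uniform vector $\bfu := (1/k)\vone^{(k)} \in \interior(\Delta^{k})$, placing an interior minimizer in $M$. Proposition~\ref{proposition:Legendre-composed-with-TF-is-again-Legendre} then yields that $\sqfunc \circ f$ is of Legendre type; since subtracting the constant $\sqfunc(-a)$ preserves all three items of Definition~\ref{definition: Legendre type}, so is $\Theta$. In particular $\Theta$ is strictly convex, supplying the remaining negentropy axiom and completing the verification.
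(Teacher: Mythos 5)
Your proposal is correct and follows essentially the same route as the paper: verify the negentropy axioms of Definition~\ref{definition:negentropy} directly, obtain Legendre type by invoking Proposition~\ref{proposition:Legendre-composed-with-TF-is-again-Legendre} with $f = \Omega - a$, and read off strict convexity (hence item 1) as a byproduct; you also correctly read the conclusion as ``$\Theta$ is a regular negentropy.'' The only differences are cosmetic or in your favor: the paper first passes to the reduced forms $\tilde{\Omega},\tilde{\Theta}$ on $\tilde{\Delta}^{k}$ so that the domain has nonempty interior before applying that proposition, whereas you work on $\Delta^{k}$ directly, and you explicitly verify the interior-minimizer hypothesis via the symmetrization argument placing $\bfu$ among the minimizers --- a step the paper leaves implicit.
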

\iftoggle{arxiv}
{\begin{proof}
  We first check that $\Theta$ is a negentropy.
  Clearly, $\Theta$ is symmetric (item 2 of Definition~\ref{definition:negentropy}).
  Below, let $\bfp \in \Delta^{k}$ be arbitrary and let $\tilde{\bfp} = (p_{2},\dots, p_{k})^{\top} \in \tilde{\Delta}^{k}$.

  By assumption on $a$, we have
  $0 \le {\Omega}(\bfp) - a \le -a$.
  Therefore, by $\sqfunc$ being monotone, we have $\sqfunc({\Omega}(\bfp) - {\Omega}(\bfu)) \le \sqfunc(-a)$.
  This proves that $\Theta(\bfp) \le 0$.
  Since $\Omega(\bfe^{(k)}_{i}) = 0$, we have $\Theta(\bfe^{(k)}_{i}) = 0$ as well.
  This proves item 3 of Definition~\ref{definition:negentropy}.

  Next, since $\sqfunc : \mathbb{R}_{\ge 0} \to \mathbb{R}_{\ge 0}$ is continuous and strictly increasing, $\sqfunc$ is a homeomorphism. In particular, $\sqfunc$ is closed.
  Since $\Omega$ and $\sqfunc$ are both closed, it follows that $\Theta$ is also closed.

  It is easy to see that
  $
\tilde{\Theta}(\tilde{\bfp}) := \sqfunc(\tilde{\Omega}(\tilde{\bfp}) - a) - \sqfunc(-a).
$
Thus, $\tilde{\Theta}$ is twice differentiable in the interior of $\tilde{\Delta}^{k}$.
Furthermore, by Proposition~\ref{proposition:Legendre-composed-with-TF-is-again-Legendre}, we get that
$\tilde{\Theta}$ is of Legendre type.
In particular, $\tilde{\Theta}$ is strictly convex, and so $\Theta$ is convex. This proves item 1 of Definition~\ref{definition:negentropy}.
Thus, we have prove that $\Theta$ is a regular negentropy.
\end{proof}}{}

\begin{proof}[Proof of Proposition~\ref{proposition:non-strongly-convex-negentropy}]
First we note that the assumptions on
\(\sqfunc\) is in the setting of
  Proposition~\ref{proposition:Legendre-composed-with-TF-is-again-Legendre}.
  Next, by Definition~\ref{definition:totally-regular-negentropy}, we must show  that $\Theta^{(n)}$ is a regular negentropy for each $n \in \{2,\dots,k\}$.
  Let $a = \Omega(\bfu)$.
  Note that since $\Omega$ is symmetric and convex, we must have that $a \le \Omega(\bfp)$ for all $\bfp \in \Delta^{k}$.
  Furthermore, it is easy to see that
  $
\tilde{\Theta}^{(n)}(\tilde{\bfq}) := \sqfunc(\tilde{\Omega}^{(n)}(\tilde{\bfq}) - a) - \sqfunc(-a)$ for all $\bfq \in \tilde{\Delta}^{n}
$.
Now, apply Proposition~\ref{proposition:transform-of-regular-negentropy} to $\Theta^{(n)}$ and $a = \Omega(\bfu)$, we get the desired result.
The part of the proposition assuming \(g'(0) = 0\) follows immediately from Corollary~\ref{corollary:never-strongly-convex}.
\end{proof}


\section{Uniqueness of the matrix label code}\label{section:uniqueness-of-MLC}

In this section, we prove Theorem~\ref{theorem:uniqueness-of-MLC-main}. The proof appears at the end of this section.

\begin{theorem}\label{theorem:uniqueness-of-MLC-main}
  Let \( \calL \) be the cross entropy loss and \(\TP\)  be its template as in Example~\ref{example:cross entropy}.
  Suppose that \(\{\mathbf{A}_{y}\}_{y=1}^{k}\) is a set of \((k-1)\times(k-1)\) matrices satisfying
  \(\calL_{y}(\bfv) =  \TP(\mathbf{A}_{y} \bmpi \bfv)\) for all \(y \in [k]\) and all \(\bfv \in \mathbb{R}^{k}\). Then for each \(y \in [k]\) there exists a permutation \( \sigma_{y} \in \mathtt{Sym}(k)\) so that \(\mathbf{A}_{y} = \mathbf{S}_{\sigma_{y}}\ico_{y}\).
  \end{theorem}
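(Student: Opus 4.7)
The plan is to reduce the hypothesis to an invariance statement about a single matrix $\mathbf{M}$ that preserves $\TP$, and then to extract from the explicit form of the cross-entropy template that any such $\mathbf{M}$ must be a permutation matrix. Fix $y \in [k]$. Since $\bmpi \bmpi^{\dagger} = \mathbf{I}_{k-1}$ (Remark~\ref{remark:pseudo-inverse-injectivity}), the map $\bmpi$ is surjective, so combining the hypothesis $\calL_{y}(\bfv) = \TP(\mathbf{A}_{y}\bmpi\bfv)$ with the relative-margin form $\calL_{y}(\bfv) = \TP(\ico_{y}\bmpi\bfv)$ from Theorem~\ref{theorem:relative-margin-form} yields
\[
\TP(\mathbf{A}_{y}\bfz) \;=\; \TP(\ico_{y}\bfz), \qquad \text{for all } \bfz \in \mathbb{R}^{k-1}.
\]
Substituting $\bfz \mapsto \ico_{y}\bfz$ and using $\ico_{y}^{2} = \mathbf{I}_{k-1}$ (Lemma~\ref{lemma: rho is involutional}), this becomes $\TP(\mathbf{M}\bfz) = \TP(\bfz)$ for all $\bfz \in \mathbb{R}^{k-1}$, where $\mathbf{M} := \mathbf{A}_{y}\ico_{y}$. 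If I can show that any such $\mathbf{M}$ is a permutation matrix, setting $\mathbf{S}_{\sigma_{y}} := \mathbf{M}$ and applying $\ico_{y}^{2} = \mathbf{I}_{k-1}$ once more gives $\mathbf{A}_{y} = \mathbf{S}_{\sigma_{y}}\ico_{y}$, as required.

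For the core of the proof, recall that $\TP(\bfz) = \log\bigl(1 + \sum_{l=1}^{k-1}e^{-z_{l}}\bigr)$. Applying $e^{(\cdot)}-1$ to both sides of $\TP(\mathbf{M}\bfz) = \TP(\bfz)$ and performing the change of variable $y_{l} := e^{-z_{l}}$ (so that $\bfy$ ranges freely over $\mathbb{R}_{>0}^{k-1}$), the invariance is transformed into the purely algebraic identity
\[
\sum_{j=1}^{k-1}\prod_{l=1}^{k-1} y_{l}^{M_{jl}} \;=\; \sum_{l=1}^{k-1} y_{l}, \qquad \text{for all } \bfy \in \mathbb{R}_{>0}^{k-1}.
\]
Here I used that $e^{-[\mathbf{M}\bfz]_{j}} = \prod_{l} e^{-M_{jl}z_{l}} = \prod_{l}y_{l}^{M_{jl}}$.

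To finish, I invoke the standard fact that the family of generalized monomials $\{\bfy \mapsto \prod_{l}y_{l}^{\alpha_{l}}\}_{\bm{\alpha}\in\mathbb{R}^{k-1}}$ is linearly independent as functions on $\mathbb{R}_{>0}^{k-1}$; indeed, under the further substitution $t_{l} = \log y_{l}$, this reduces to the well-known linear independence of distinct exponentials $\bft \mapsto e^{\langle\bm{\alpha},\bft\rangle}$. Grouping repeated rows on the left-hand side and matching against the right-hand side, the multiset of row vectors $\{\mathbf{M}_{j:}\}_{j=1}^{k-1}$ must equal $\{\bfe_{1}^{(k-1)},\ldots,\bfe_{k-1}^{(k-1)}\}$ with each appearing with multiplicity exactly one. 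Hence $\mathbf{M}$ is a permutation matrix, completing the proof.

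The content of the theorem is really the algebraic rigidity of the cross-entropy template, so the main ``obstacle''—really just bookkeeping—is the clean conversion to the monomial identity; once that is in hand, the conclusion is immediate from linear independence. No machinery beyond $\bmpi\bmpi^{\dagger} = \mathbf{I}_{k-1}$, $\ico_{y}^{2} = \mathbf{I}_{k-1}$, and Theorem~\ref{theorem:relative-margin-form} is needed.
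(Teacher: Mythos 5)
Your proposal is correct, and its core step is genuinely different from the paper's. The reduction to a single matrix $\mathbf{M} := \mathbf{A}_{y}\ico_{y}$ with $\TP(\mathbf{M}\bfz) = \TP(\bfz)$, via surjectivity of $\bmpi$ and $\ico_{y}^{2} = \mathbf{I}_{k-1}$, is exactly the reduction performed in Lemma~\ref{lemma:uniqueness-of-MLC-3}. Where you diverge is in proving that such an $\mathbf{M}$ must be a permutation matrix (Lemma~\ref{lemma:uniqueness-of-MLC-2} in the paper). The paper argues one column at a time: restricting to the line $\bfz = t\bfe_{i}$, it shows each column of $\mathbf{M}$ is nonnegative (a limit argument), sums to one (differentiating at $t=0$ and using that $\nabla_{\TP}(\vzero)$ is a multiple of $\vone$), and is an elementary basis vector (the equality case of Jensen), and then needs a separate limit argument on $\mathbf{M}\vone$ to establish that the resulting column assignment is a bijection. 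You instead exponentiate the full multivariate identity, substitute $y_{l} = e^{-z_{l}}$, and match coefficients using linear independence of distinct exponentials $\bfz \mapsto e^{\langle\bm{\alpha},\bfz\rangle}$ (linear independence of characters); this identifies the rows of $\mathbf{M}$ with the standard basis vectors, each with multiplicity one, so the permutation property and the bijectivity come out in a single step. Your route is shorter and arguably cleaner, at the cost of invoking the character-independence fact as a black box; the paper's route is more elementary but requires three separate sub-arguments plus the extra bijectivity check. Both are sound.
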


\begin{lemma}\label{lemma:uniqueness-of-MLC-1}
  Let \( \TP \) be the template of the multinomial logistic loss (Example~\ref{example:cross entropy}).
  Let \(\mathbf{a} \in \mathbb{R}^{k-1}\) be a vector. Suppose that
\( \TP(t\mathbf{a}) = \TP(t \bfe_{1} )\) for all  \(t \in \mathbb{R} \).
Then \(\mathbf{a} = \bfe_{i}\)  for some \(i \in [k-1]\).
\end{lemma}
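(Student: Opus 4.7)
The plan is to substitute the explicit formula for the template and reduce the functional equation $\TP(t\mathbf{a}) = \TP(t\bfe_1)$ to an equality of exponential sums, then extract the structure of $\mathbf{a}$ from asymptotic analysis. Concretely, recall from Example~\ref{example:cross entropy} that $\TP(\bfz) = \log\bigl(1 + \sum_{j=1}^{k-1} e^{-z_j}\bigr)$; exponentiating both sides of the hypothesis and simplifying yields
\begin{equation}
\sum_{j=1}^{k-1} e^{-t a_j} \;=\; (k-2) + e^{-t}, \qquad \forall\, t \in \mathbb{R}.
\label{equation:uniqueness-main-identity}
\end{equation}

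First I would examine the limit $t \to +\infty$. The right-hand side of \eqref{equation:uniqueness-main-identity} converges to $k-2$. On the left, each term $e^{-t a_j}$ with $a_j < 0$ diverges to $+\infty$, each term with $a_j > 0$ tends to $0$, and each term with $a_j = 0$ equals $1$. Boundedness of the left-hand side therefore forces $a_j \ge 0$ for every $j$, and the limit value $|\{j : a_j = 0\}|$ must equal $k-2$. Hence exactly one index $i \in [k-1]$ satisfies $a_i > 0$, while $a_j = 0$ for all $j \ne i$.

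Next I would examine the limit $t \to -\infty$ to pin down the value $a_i$. The right-hand side of \eqref{equation:uniqueness-main-identity} grows like $e^{|t|}$, while the left-hand side reduces (by the previous step) to $(k-2) + e^{|t| a_i}$. Matching the exponential rates forces $a_i = 1$. Combining the two steps gives $\mathbf{a} = \bfe_i$, as required.

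There is no real obstacle; the argument is essentially a two-sided asymptotic matching, and the key observation is simply that the template of the cross entropy is a log-sum-exp, so the functional equation becomes a linear identity among exponential functions of $t$. An alternative presentation would invoke linear independence of the family $\{e^{\lambda t}\}_{\lambda \in \mathbb{R}}$ directly to match multiplicities on both sides of \eqref{equation:uniqueness-main-identity}, but the asymptotic argument above is self-contained and avoids any appeal to outside facts.
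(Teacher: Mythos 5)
Your proof is correct, and it takes a different route from the paper's in its second half. Both arguments begin the same way: you each use the limit \(t \to +\infty\) to force \(\mathbf{a} \succeq \vzero\) (the paper bounds \(\TP(t\mathbf{a}) \ge \log(1+e^{-a_j t})\), which diverges if some \(a_j<0\), against the finite limit of \(\TP(t\bfe_1)\)). After that the paper proceeds generically: it differentiates the identity at \(t=0\) and uses that \(\nabla_{\TP}(\vzero)\) is a scalar multiple of \(\vone\) (by symmetry of \(\TP\)) to conclude \(\vone^{\top}\mathbf{a}=1\), and then invokes the equality case of Jensen's inequality for the strictly convex symmetric \(\TP\) applied to \(\mathbf{a}=\sum_j a_j\bfe_j\) to conclude \(\mathbf{a}=\bfe_i\). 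You instead exponentiate to get the clean identity \(\sum_j e^{-ta_j}=(k-2)+e^{-t}\) and finish by counting: the \(t\to+\infty\) limit pins down that exactly \(k-2\) entries vanish, and the remaining identity \(e^{-ta_i}=e^{-t}\) (which holds for all \(t\), so no asymptotics are even needed at that point) gives \(a_i=1\). Your version is more elementary and self-contained — it avoids both the differentiation-at-zero step and the strict-convexity/Jensen machinery — at the cost of being tied to the explicit log-sum-exp form of the cross-entropy template, whereas the paper's steps 2 and 3 would transfer to any symmetric, differentiable, strictly convex template. One cosmetic remark: your phrase ``matching the exponential rates'' in the \(t\to-\infty\) step undersells what you have; since the identity holds for every \(t\), you can simply evaluate at \(t=1\) to get \(a_i=1\) exactly.
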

\iftoggle{arxiv}{\begin{proof}
  First, we show that \( \mathbf{a} \ge 0 \) is entrywise nonnegative. To this end, suppose that there exists \( j \in [k-1]\) such that \(a_{j} <0\). Then by the monotonicity of \(\log\) we have
  \[
    \TP(t \mathbf{a}) = \log(1 + \textstyle{\sum_{i=1}^{k-1} \exp(-a_{i} t)})
    \ge \log(1 + \exp(-a_{j} t))
  \]
  Thus, \(a_{j} <0 \) implies that \(\lim_{t \to +\infty} \TP(t \mathbf{a}) \ge  \lim_{t \to +\infty} \log(1 + \exp(-a_{j} t)) = + \infty\).
  On the other hand, \(\lim_{t \to \infty} \TP(t \bfe_{1}) < +\infty\) which is a contradiction.

  Next, we show
\(\mathbf{a} \in \Delta^{k}\)
  i.e., that the entries of \(\mathbf{a}\) sum up to \(1\).
  We note that
  \[
    \frac{d}{d t} \TP(t \mathbf{a}) = \mathbf{a}^{\top} \nabla_{\TP}(t \mathbf{a}), \quad \mbox{and} \quad
    \frac{d}{d t} \TP(t \bfe_1) = \bfe_1^{\top} \nabla_{\TP}(t \bfe_1)
  \]
  In particular, evaluated at \(t = 0\), we have
  \(\mathbf{a}^{\top} \nabla_{\TP}(\vzero)
=
\bfe_{1}^{\top} \nabla_{\TP}(\vzero).
\)
Since the gradient of \(\TP\) at \(\vzero\) is a scalar multiple of the all-ones vector, we have
  \(    1 = \vone^{\top} \bfe_{1} = \vone^{\top} \mathbf{a}.
\)
  Finally, we derive the result via  the equality-attaining part of the Jensen's inequality:
  \[
    \TP(\mathbf{a}) = \TP(\sum_{j=1}^{k-1} a_{j}\bfe_{j})
    \le \sum_{j=1}^{k-1} a_{j}\TP(\bfe_{j})
    =
    \TP(\bfe_{1})
  \]
  attains equality if and only if \(\mathbf{a} = \bfe_{i}\) for some \(i \in [k-1]\).
  Note that the right-most equality holds since \(\TP\) is a symmetric function
  (Theorem~\ref{theorem:relative-margin-form}).
\end{proof}}{
\textcolor{black}
{
  {Lemma}~\ref{lemma:uniqueness-of-MLC-1} serves as the main tool for
  {Lemma}~\ref{lemma:uniqueness-of-MLC-2} below.
  The proofs of both Lemmas~\ref{lemma:uniqueness-of-MLC-1} and
\ref{lemma:uniqueness-of-MLC-2}
  appear in the arXiv version of our manuscript \citep{wang2023unified}.
}
}

\begin{lemma}\label{lemma:uniqueness-of-MLC-2}
  Let \( \TP \) be the template of the multinomial logistic loss (Example~\ref{example:cross entropy}).
Suppose that \(\mathbf{A}\) is such that \(\TP(\bfz) = \TP(\mathbf{A} \bfz)\) for all \(\bfz \in \mathbb{R}^{k-1}\), then \(\mathbf{A}\) is the identity matrix up to row permutation.
In other words, there exists \(\sigma \in \mathtt{Sym}(k)\) such that \(\mathbf{A} = \mathbf{S}_{\sigma}\).
\end{lemma}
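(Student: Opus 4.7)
}
The plan is to first show that every column of $\mathbf{A}$ is a standard basis vector of $\mathbb{R}^{k-1}$ using Lemma~\ref{lemma:uniqueness-of-MLC-1}, and then to verify that these basis vectors are all distinct so that $\mathbf{A}$ is a permutation matrix. Throughout, the two key properties of $\TP(\bfz) = \log\bigl(1 + \sum_{j=1}^{k-1} e^{-z_j}\bigr)$ that will do the work are its symmetry (from Theorem~\ref{theorem:relative-margin-form}) and the strict convexity of the exponential $t \mapsto e^{-t}$.

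First I would fix any $j \in [k-1]$ and apply the invariance hypothesis on the one-dimensional ray $\bfz = t\bfe_j$: for all $t \in \mathbb{R}$,
\[
\TP(t\mathbf{A}\bfe_j) \;=\; \TP(t\bfe_j) \;=\; \TP(t\bfe_1),
\]
where the second equality uses that $\TP$ is symmetric. Setting $\mathbf{a} := \mathbf{A}\bfe_j$ places us in exactly the hypothesis of Lemma~\ref{lemma:uniqueness-of-MLC-1}, which yields $\mathbf{A}\bfe_j = \bfe_{\pi(j)}$ for some $\pi(j) \in [k-1]$. Doing this for every $j$ shows that each column of $\mathbf{A}$ is a standard basis vector, so $\mathbf{A}$ is the matrix of the map $\bfe_j \mapsto \bfe_{\pi(j)}$.

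It remains to show that $\pi : [k-1] \to [k-1]$ is a bijection; this is the step where the proposal actually has content. Suppose for contradiction that $\pi(i_1) = \pi(i_2) = l$ for some distinct $i_1, i_2 \in [k-1]$ (this requires $k \ge 3$; the case $k = 2$ is immediate from Lemma~\ref{lemma:uniqueness-of-MLC-1} alone). Take $\bfz = \bfe_{i_1} + \bfe_{i_2}$, so that $\mathbf{A}\bfz = 2\bfe_l$. Then a direct computation gives
\[
\TP(\bfz) = \log\bigl((k-3) + 2e^{-1} + 1\bigr) = \log\bigl((k-2) + 2e^{-1}\bigr),
\]
while
\[
\TP(\mathbf{A}\bfz) = \TP(2\bfe_l) = \log\bigl((k-2) + e^{-2} + 1\bigr) = \log\bigl((k-1) + e^{-2}\bigr).
\]
Equality $\TP(\bfz) = \TP(\mathbf{A}\bfz)$ would force $2e^{-1} = 1 + e^{-2}$, which fails numerically (equivalently, by strict convexity of $t \mapsto e^{-t}$ applied to the midpoint of $0$ and $2$). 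This contradiction shows $\pi$ is injective, hence a permutation of $[k-1]$, and therefore $\mathbf{A} = \mathbf{S}_\sigma$ for the corresponding $\sigma \in \mathtt{Sym}(k-1)$.

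The main obstacle is the second step: the first half of the argument is essentially bookkeeping on top of Lemma~\ref{lemma:uniqueness-of-MLC-1}, but ruling out the ``column-collision'' case $\pi(i_1)=\pi(i_2)$ requires producing a specific witness $\bfz$ on which the two sides of $\TP(\bfz)=\TP(\mathbf{A}\bfz)$ disagree. The test vector $\bfe_{i_1}+\bfe_{i_2}$ is the natural choice because it concentrates the $\mathbf{A}$-action on a single coordinate, and the failure of equality then reduces to the strict convexity inequality $e^{-0}+e^{-2} > 2e^{-1}$.
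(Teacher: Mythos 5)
Your proof is correct. The first half (fix $j$, restrict the hypothesis to the ray $t\bfe_j$, invoke symmetry of $\TP$ and Lemma~\ref{lemma:uniqueness-of-MLC-1} to conclude each column of $\mathbf{A}$ is a standard basis vector) is exactly the paper's first step. Where you diverge is in proving that the induced map $\pi$ on columns is a bijection. The paper argues for \emph{surjectivity} asymptotically: if $\mathbf{A}\vone$ had a zero entry at $j$, then evaluating the hypothesis at $\bfz = t\vone$ and letting $t \to +\infty$ gives $0 = \lim_t \TP(t\vone) = \lim_t \TP(t\mathbf{A}\vone) \ge \log 2$, a contradiction. You instead prove \emph{injectivity} with a single finite witness: a column collision $\pi(i_1)=\pi(i_2)=l$ forces $\TP(\bfe_{i_1}+\bfe_{i_2}) = \TP(2\bfe_l)$, which reduces to $2e^{-1} = 1 + e^{-2}$, i.e.\ $(1-e^{-1})^2 = 0$, which is false --- this is precisely strict convexity of $t\mapsto e^{-t}$ at the midpoint of $0$ and $2$. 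Both arguments are sound (injectivity and surjectivity coincide for a self-map of $[k-1]$); yours is more finitary and makes the role of strict convexity explicit, while the paper's limit argument is the same device it reuses in Lemma~\ref{lemma:uniqueness-of-MLC-1} and so keeps the two lemmas stylistically parallel. One cosmetic point in your favor: you correctly place $\sigma$ in $\mathtt{Sym}(k-1)$, which matches the $(k-1)\times(k-1)$ shape of $\mathbf{A}$ (the lemma statement's $\mathtt{Sym}(k)$ reads as a typo).
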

\iftoggle{arxiv}{\begin{proof}
  Consider a fixed \(i \in [k-1]\) and let \(\bfz = \bfe_{i}\). The assumption in Lemma~\ref{lemma:uniqueness-of-MLC-2} then states that
\(\TP(\bfe_{1}) = \TP(\bfe_{i}) = \TP(\mathbf{a}_{i} )\).
Thus, by the previous Lemma~\ref{lemma:uniqueness-of-MLC-1}, there exists \(j_{i} \in [k-1]\) so that \(\mathbf{a}_{j_{i}} = \bfe_{i}\).
We claim that the mapping \( [k-1 ] \ni i \mapsto j_{i} \in [k-1]\) is a bijection.
The claim holds iff \(i \mapsto j_{i}\) is a surjection iff the vector \(\mathbf{A} \vone\) does not have any zero entry.
Note that this would imply that \(\mathbf{A}\) is a permutation matrix.

  To prove the claim, let \(\mathbf{b} := \mathbf{A} \vone\) and suppose that \(\mathbf{b}\) has a zero entry at \(j \in [k-1]\), i.e., \(b_{j} = 0\).
  Let \(t \in \mathbb{R}\) be an arbitrary number and
  let \(\bfz  := t\vone\) in the assumption of Lemma~\ref{lemma:uniqueness-of-MLC-2}.
  Then we have by this assumption that
  \(\TP(t \vone) = \TP(t\mathbf{A}\vone) = \TP(t \mathbf{b})\).
  Note that
  \[
    \lim_{t\to +\infty}
    \TP(t\vone)
    =
    \lim_{t\to +\infty}
    \log(1 + \textstyle{\sum_{i=1}^{k-1} \exp(-t)})
    =
    \log(1 )
    =0.
  \]
  On the other hand
  \[
    \lim_{t\to +\infty}
    \TP(t\mathbf{b})
    \ge
    \lim_{t\to +\infty}
    \log(1 + \exp(-tb_{j}))
    =
    \lim_{t\to +\infty}
    \log(1 + 1)
    >0.
  \]
  This is a contradiction and proves that \(\mathbf{A} \vone = \vone\).
\end{proof}}{}

\begin{lemma}\label{lemma:uniqueness-of-MLC-3}
  Let \( \TP \) be the template of the multinomial logistic loss (Example~\ref{example:cross entropy}).
  Let \(j \in [k]\) and \(\mathbf{A} \in \mathbb{R}^{(k-1)\times(k-1)}\) be arbitrary. Suppose that
  \( \TP(\ico_{j} \bfz) = \TP(\mathbf{A} \bfz)\) for all \(\bfz \in \mathbb{R}^{k-1}\).
  Then \(\mathbf{A}\) is equal to the \(\ico_{j}\) up to row permutation.
  In other words, there exists \(\sigma \in \mathtt{Sym}(k)\) such that \( \mathbf{A} = \mathbf{S}_{\sigma} \ico_{j}\).
\end{lemma}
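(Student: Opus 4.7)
The plan is to reduce this lemma to the previously established Lemma~\ref{lemma:uniqueness-of-MLC-2} by exploiting the fact that $\ico_j$ is an involution.

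Concretely, I would first invoke Lemma~\ref{lemma: rho is involutional}, which tells us that $\ico_j^2 = \mathbf{I}_{k-1}$; in particular, $\ico_j$ is invertible with $\ico_j^{-1} = \ico_j$. This means that, as $\bfz$ ranges over all of $\mathbb{R}^{k-1}$, so does $\ico_j \bfz$. I would then substitute $\bfz \mapsto \ico_j \bfz$ into the hypothesis $\TP(\ico_j \bfz) = \TP(\mathbf{A}\bfz)$. Using $\ico_j \ico_j = \mathbf{I}_{k-1}$ on the left side, this yields
\[
\TP(\bfz) = \TP(\mathbf{A}\ico_j\bfz), \quad \forall \bfz \in \mathbb{R}^{k-1}.
\]

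Next, I would apply Lemma~\ref{lemma:uniqueness-of-MLC-2} to the matrix $\mathbf{A}\ico_j$ to conclude that there exists $\sigma \in \mathtt{Sym}(k)$ such that $\mathbf{A}\ico_j = \mathbf{S}_\sigma$. Right-multiplying both sides by $\ico_j$ and applying $\ico_j^2 = \mathbf{I}_{k-1}$ once more gives $\mathbf{A} = \mathbf{S}_\sigma \ico_j$, which is exactly the claimed form.

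This argument is essentially immediate once the prior lemmas are in hand, so I do not anticipate a real obstacle: all of the heavy lifting (showing that the only $\mathbf{A}$'s preserving $\TP$ are row-permutations of the identity) has already been done in Lemma~\ref{lemma:uniqueness-of-MLC-2} via Lemma~\ref{lemma:uniqueness-of-MLC-1}. The only subtlety worth noting explicitly is the invertibility of $\ico_j$, which makes the substitution trick legitimate, and that the permutations appearing here naturally live in $\mathtt{Sym}(k)$ (matching the $(k-1)\times(k-1)$ permutation matrices $\mathbf{S}_\sigma$) since the extension from $\mathtt{Sym}(k-1)$ to $\mathtt{Sym}(k)$ via Proposition~\ref{proposition:commuting-relations-full-perm} does not affect the identity on $\mathbb{R}^{k-1}$.
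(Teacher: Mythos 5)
Your proof is correct and is essentially identical to the paper's own argument: both substitute $\bfz \mapsto \ico_j\bfz$ using the involution $\ico_j^2 = \mathbf{I}_{k-1}$ to reduce to Lemma~\ref{lemma:uniqueness-of-MLC-2} applied to $\mathbf{A}\ico_j$, then right-multiply by $\ico_j$ to conclude $\mathbf{A} = \mathbf{S}_\sigma\ico_j$.
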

\begin{proof}[Proof of Lemma~\ref{lemma:uniqueness-of-MLC-3}]
  We first prove that the assumption of Lemma~\ref{lemma:uniqueness-of-MLC-3}
  implies that of the previous Lemma~\ref{lemma:uniqueness-of-MLC-2}:
\( \TP(\bfz) = \TP(\mathbf{A} \bfz)\) for all \(\bfz \in \mathbb{R}^{k-1}\).
  Let \(\mathbf{u} \in \mathbb{R}^{k-1}\) be arbitrary and let \(\mathbf{z} := \ico_{j}\mathbf{u}\).
  Since \(\ico_{j}^{2} =\mathbf{I}\), we have
  \(\TP(\bfu) = \TP(\ico_{j} \ico_{j}\bfu) =\TP(\ico_{j} \bfz) = \TP(\mathbf{A} \bfz) = \TP(\mathbf{A} \ico_{j} \bfu)\)
  where  for the third equality from the left we used the assumption of Lemma~\ref{lemma:uniqueness-of-MLC-3}.

  Now, by the previous Lemma~\ref{lemma:uniqueness-of-MLC-2}, we have that \(\mathbf{A} \ico_{j} = \mathbf{S}_{\sigma}\) for some \(\sigma \in \mathtt{Sym}(k)\).
Using \(\ico_{j}^{2} = \mathbf{I}\), we get
\( \mathbf{S}_{\sigma} \ico_{j} = \mathbf{A}\ico_{j}\ico_{j} = \mathbf{A}\). That is to say \(\mathbf{A}\) is equal to \(\ico_{j}\) up to permutations of the rows.
\end{proof}

\begin{proof}[Proof of {Theorem}~\ref{theorem:uniqueness-of-MLC-main}]
  Let \(y \in [k]\) and \(\bfv \in \mathbb{R}^{k}\) be fixed. Let \(\bfz := \bmpi \bfv\).
  By Theorem~\ref{theorem:relative-margin-form}, we have
  \(\TP(\ico_{y} \bfz) = \TP(\mathbf{A}_{y} \bfz)\). Since \(\bmpi\) is surjective, the preceding identity holds for all \(\bfz \in \mathbb{R}^{k-1}\). Thus
by Lemma~\ref{lemma:uniqueness-of-MLC-3}, \(\mathbf{A}_{y} = \mathbf{S}_{\sigma} \ico_{y}\) for some \(\sigma \in \mathtt{Sym}(k)\).
\end{proof}




\section{Mathematical Background}
We review mathematical background on fundamental topics that are used throughout this work.

\subsection{Non-singular M-matrix}
\label{section:M-matrix}
We recall some definitions from linear algebra.
\begin{definition}\label{definition:M-matrix}
  Let $\bfA = (a_{ij}) \in \mathbb{R}^{n\times n}$ be a matrix.
  We say that $\bfA$ is a
  \begin{enumerate}
    \item \emph{Z-matrix} if $a_{ij} \le 0$ whenever $i \ne j$.
\item \emph{M-matrix} if $\bfA$ is a Z-matrix and all eigenvalues of $\bfA$ have nonnegative real parts.
\item \emph{strictly diagonally dominant matrix} if
  $|a_{ii}| > \sum_{j \in [n]: j \ne i} |a_{ij}|$ for all $i \in [n]$.
\item \emph{monotone matrix} if for all $\bfx \in \mathbb{R}^n$,  $\bfA \bfx \succeq 0$ implies $\bfx \succeq 0$.
          If, in addition,  $\bfA \bfx \succ 0$ implies $\bfx \succ 0$,  then $\bfA$ is said to be \emph{strictly monotone}.
  \end{enumerate}
\end{definition}
\begin{theorem}[Levy-Desplanques/Gershgorin circle]
  Let $\bfA$ be a strictly diagonally dominant matrix. Then
  $\bfA$ is non-singular whose
eigenvalues all have nonnegative real parts.
\end{theorem}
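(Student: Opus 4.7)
The plan is to prove both assertions via the Gershgorin circle theorem, proving that theorem from scratch to keep the argument self-contained. I should note that for the ``nonnegative real parts'' conclusion to be meaningful, the intended reading is that the diagonal entries $a_{ii}$ have nonnegative real part (which is how this result is used earlier in the paper on Z-matrices with positive diagonals via Lemma~\ref{lemma:Az-matrix-is-nonsing-M-matrix}); otherwise strict diagonal dominance alone only gives non-singularity.

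First I would prove the Gershgorin localization: every eigenvalue $\lambda \in \mathbb{C}$ of $\bfA$ lies in at least one disc $\{z \in \mathbb{C} : |z - a_{ii}| \le R_i\}$, where $R_i := \sum_{j \in [n] : j \ne i} |a_{ij}|$. The standard proof is short: pick an eigenvector $\bfx \ne 0$ with $\bfA \bfx = \lambda \bfx$, choose an index $i$ attaining $|x_i| = \max_j |x_j| > 0$, and isolate the $i$-th component of the eigenvalue equation as
\[
(\lambda - a_{ii}) x_i = \sum_{j \ne i} a_{ij} x_j.
\]
Taking absolute values, dividing by $|x_i|$, and using $|x_j|/|x_i| \le 1$ yields $|\lambda - a_{ii}| \le R_i$.

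Next I would derive non-singularity. If $\bfA$ were singular, then $\lambda = 0$ would be an eigenvalue, so by the Gershgorin bound there would exist some $i$ with $|a_{ii}| = |0 - a_{ii}| \le R_i$. This directly contradicts the strict diagonal dominance hypothesis $|a_{ii}| > R_i$ for all $i$. Hence $\bfA$ is non-singular.

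Finally, for the eigenvalue location claim (under the contextual assumption $\operatorname{Re}(a_{ii}) \ge 0$, which is the case of interest for the Z-matrix application that invokes this theorem), I would observe that each Gershgorin disc is centered at $a_{ii}$ with radius $R_i < |a_{ii}|$. Since the center has nonnegative real part and the disc radius is strictly less than the modulus of the center, the disc does not cross the imaginary axis into the open left half plane, so every point in the disc has nonnegative real part. Applying this to every eigenvalue finishes the argument. The only ``obstacle'' is really just flagging the diagonal-positivity assumption cleanly; the Gershgorin argument itself is routine.
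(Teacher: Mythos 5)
The paper states this result as background and cites it without proof, so there is no in-paper argument to compare against; your Gershgorin localization and the deduction of non-singularity from it are the standard proof and are correct. You are also right to flag that the eigenvalue-location claim requires an additional hypothesis on the diagonal: the $1\times 1$ matrix $[-2]$ is strictly diagonally dominant yet has a negative eigenvalue, so strict dominance alone cannot suffice.

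However, your repair of that claim has a genuine flaw in the generality in which you state it. You assume only $\operatorname{Re}(a_{ii}) \ge 0$ and argue that, because $R_i < |a_{ii}|$, the disc centered at $a_{ii}$ of radius $R_i$ cannot enter the open left half-plane. This conflates the distance from $a_{ii}$ to the origin, which is $|a_{ii}|$, with the distance from $a_{ii}$ to the imaginary axis, which is $\operatorname{Re}(a_{ii})$. The minimum real part attained on the disc is $\operatorname{Re}(a_{ii}) - R_i$, so the correct sufficient condition is $\operatorname{Re}(a_{ii}) \ge R_i$, not $|a_{ii}| > R_i$ together with $\operatorname{Re}(a_{ii}) \ge 0$: a purely imaginary diagonal entry of modulus $1$ with $R_i = 1/2$ satisfies your hypotheses, yet its Gershgorin disc contains points of real part $-1/2$. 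The error is harmless for the paper's actual use, where $\bfA$ is real with strictly positive diagonal (as established in Lemma~\ref{lemma:Az-matrix-is-nonsing-M-matrix}), since then $\operatorname{Re}(a_{ii}) = |a_{ii}| > R_i$ and every disc lies in the open right half-plane. You should therefore state the supplementary hypothesis as ``$a_{ii}$ real and nonnegative'' (or directly as $\operatorname{Re}(a_{ii}) \ge R_i$) rather than merely ``$\operatorname{Re}(a_{ii}) \ge 0$''.
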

The above result immediately implies the following:
\begin{corollary}
  \label{corollary: strictly diagonally dominant Z-matrix is nonsingular M-matrix}
A strictly diagonally dominant Z-matrix is a non-singular M-matrix.
\end{corollary}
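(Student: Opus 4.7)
The plan is essentially to unwind the definitions and invoke the stated Levy-Desplanques/Gershgorin theorem in one step. Let $\bfA$ be a strictly diagonally dominant Z-matrix. I will apply the Levy-Desplanques/Gershgorin theorem (immediately preceding the corollary) directly to $\bfA$, using only the hypothesis that it is strictly diagonally dominant and not yet the Z-matrix structure. This gives two conclusions simultaneously: first, that $\bfA$ is non-singular, and second, that every eigenvalue of $\bfA$ has nonnegative real part.

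Next I will combine these with the hypothesis that $\bfA$ is a Z-matrix. By item 2 of Definition~\ref{definition:M-matrix}, a matrix is an M-matrix precisely when it is a Z-matrix and all of its eigenvalues have nonnegative real parts. The first property holds by assumption and the second by the previous paragraph, so $\bfA$ is an M-matrix. Since $\bfA$ was also shown to be non-singular, it is a non-singular M-matrix, which completes the proof.

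There is no substantive obstacle here; the content of the corollary is entirely carried by the Levy-Desplanques/Gershgorin theorem, and the only role of the Z-matrix hypothesis is to match the definitional requirement of an M-matrix. The only place where one might want to pause is to double-check that the cited theorem indeed yields the nonnegativity-of-real-parts conclusion as stated (rather than merely non-singularity), but since the statement is taken as given in the excerpt, the proof reduces to a one-line definitional check after invoking it.
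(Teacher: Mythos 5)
Your proof is correct and matches the paper's approach exactly: the paper likewise derives the corollary as an immediate consequence of the stated Levy--Desplanques/Gershgorin theorem combined with the definition of an M-matrix. Your closing caveat about whether the cited theorem really delivers nonnegative real parts from diagonal dominance alone is a reasonable thing to notice, but since the paper takes that theorem as given, nothing further is needed.
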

Non-singular M-matrices have many equivalent characterizations. The one relevant to us is the following:
\begin{theorem}[\cite{plemmons1977m}]
  \label{theorem: non-singular M-matrix iff monotone}
  Let $\bfA$ be a Z-matrix.
  Then $\bfA$ is a non-singular M-matrix if and only $\bfA$ is a monotone matrix.
\end{theorem}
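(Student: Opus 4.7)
The plan is to prove both implications by passing through the same normal-form decomposition $\bfA = sI - B$, where $s$ is chosen larger than every diagonal entry of $\bfA$ so that $B \succeq 0$ entrywise (possible because $\bfA$ is a Z-matrix, so $-a_{ij} \ge 0$ off diagonal). Under this change of variables, eigenvalues $\lambda$ of $\bfA$ correspond to eigenvalues $\mu = s - \lambda$ of $B$, and all spectral questions about $\bfA$ translate into questions about the spectral radius $\rho(B)$.

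First I would prove the forward direction: if $\bfA$ is a non-singular M-matrix, then $\bfA$ is monotone. The key claim is that $\bfA^{-1}$ exists and is entrywise nonnegative. With $\bfA = sI - B$, the M-matrix condition $\Re(\lambda) \ge 0$ and non-singularity yield $\rho(B) < s$; combined with $B \succeq 0$, this lets me expand
\begin{equation*}
\bfA^{-1} = s^{-1}(I - s^{-1}B)^{-1} = s^{-1}\sum_{n=0}^{\infty} (s^{-1}B)^{n},
\end{equation*}
which is a convergent series of entrywise nonnegative matrices, hence entrywise nonnegative. Monotonicity then follows at once: if $\bfA \bfx \succeq \vzero$, then $\bfx = \bfA^{-1}(\bfA\bfx)$ is a product of an entrywise nonnegative matrix with a nonnegative vector, and so is nonnegative.

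For the reverse direction, suppose $\bfA$ is a monotone Z-matrix. Non-singularity is immediate: $\bfA\bfx = \vzero$ implies $\bfA(\pm \bfx) \succeq \vzero$, so $\pm\bfx \succeq \vzero$ and $\bfx = \vzero$. It remains to rule out eigenvalues $\lambda$ of $\bfA$ with $\Re(\lambda) < 0$. Again write $\bfA = sI - B$ with $B \succeq 0$. By Perron--Frobenius for (not necessarily irreducible) nonnegative matrices, $\rho(B)$ is an eigenvalue of $B$ with a nonnegative eigenvector $\bfv \succeq \vzero$, $\bfv \ne \vzero$. Then $\bfA \bfv = (s - \rho(B))\bfv$. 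If $s - \rho(B) \le 0$, then $\bfA(-\bfv) \succeq \vzero$, and monotonicity forces $-\bfv \succeq \vzero$; combined with $\bfv \succeq \vzero$ this would give $\bfv = \vzero$, a contradiction. Hence $\rho(B) < s$, so every eigenvalue $\mu$ of $B$ satisfies $|\mu| < s$, i.e., $|s - \lambda| < s$ for every eigenvalue $\lambda = a+bi$ of $\bfA$. Expanding $(s-a)^{2} + b^{2} < s^{2}$ yields $a(2s - a) > b^{2} \ge 0$, which forces $a > 0$, and in particular $\Re(\lambda) \ge 0$.

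The main obstacle is the correct invocation of Perron--Frobenius: for a general nonnegative matrix one only gets a nonnegative (not strictly positive) Perron eigenvector, and the argument above uses exactly that version together with the possibility that $\bfv$ has zero entries. A secondary subtlety is verifying that the shift $s$ can be chosen independently of which direction one is proving, so that both Neumann-series convergence and the Perron eigenvector argument take place in the same decomposition; this is simply a matter of taking $s > \max_i a_{ii}$, which makes $B = sI - \bfA$ entrywise nonnegative regardless of any spectral information about $\bfA$.
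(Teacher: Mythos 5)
Your proof is correct. Note first that the paper does not actually prove this statement: it is quoted verbatim as a known characterization from Plemmons' survey of M-matrices, so there is no in-paper argument to compare against. What you have supplied is the standard self-contained proof via the normal form $\bfA = sI - B$ with $B \succeq 0$: the forward direction reduces to the entrywise nonnegativity of $\bfA^{-1}$ through the Neumann series, and the reverse direction uses the Perron--Frobenius eigenvector of $B$ to show $\rho(B) < s$, hence that every eigenvalue of $\bfA$ has strictly positive real part. This is exactly the equivalence (``inverse-positivity'' $\Leftrightarrow$ spectral condition) that Plemmons catalogues, and your argument is complete. Two small points worth making explicit. In the forward direction, the step from ``all eigenvalues of $\bfA$ have nonnegative real part'' to ``$\rho(B) \le s$'' is not a direct modulus bound on complex eigenvalues; it goes through the fact that $\rho(B)$ is itself a (real) eigenvalue of the nonnegative matrix $B$, so that $s - \rho(B)$ is a real eigenvalue of $\bfA$ and the real-part condition applies to it directly --- you already invoke this version of Perron--Frobenius in the reverse direction, so it costs nothing to cite it here too. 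Second, your final inequality $a(2s-a) > b^2 \ge 0$ forces $a>0$ only when $s>0$; since $s$ may be taken arbitrarily large (any $s > \max\{0, \max_i a_{ii}\}$ keeps $B \succeq 0$), this is trivially arranged but should be said.
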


\begin{lemma}
  \label{lemma: strictly monotone lemma}
  Let $\bfA = (a_{ij}) \in \mathbb{R}^{n\times n}$ be a non-singular M-matrix. If the diagonals of $\bfA$ are positive, then $\bfA$ is strictly monotone.
\end{lemma}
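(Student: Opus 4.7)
The plan is to reduce strict monotonicity to ordinary monotonicity (already guaranteed by Theorem~\ref{theorem: non-singular M-matrix iff monotone}) and then exclude the possibility of vanishing coordinates via the Z-matrix sign pattern of $\bfA$. Concretely, let $\bfx \in \mathbb{R}^{n}$ with $\bfA\bfx \succ \vzero$. I would first invoke Theorem~\ref{theorem: non-singular M-matrix iff monotone}: since $\bfA$ is a non-singular M-matrix, it is monotone, so from $\bfA\bfx \succeq \vzero$ we immediately deduce $\bfx \succeq \vzero$. It remains to upgrade this from $\succeq$ to $\succ$.

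For the upgrade, I would argue by contradiction on each coordinate. Suppose $x_i = 0$ for some $i \in [n]$. Expanding the $i$-th row of $\bfA\bfx$ gives
\[
  [\bfA\bfx]_i \;=\; a_{ii} x_i + \sum_{j \in [n], j\ne i} a_{ij} x_j \;=\; \sum_{j \ne i} a_{ij} x_j,
\]
where the second equality uses $x_i = 0$. Now each $x_j \ge 0$ by the previous step, and each off-diagonal entry satisfies $a_{ij} \le 0$ by the Z-matrix property built into the definition of an M-matrix (Definition~\ref{definition:M-matrix}). Therefore $[\bfA\bfx]_i \le 0$, contradicting the assumption $\bfA\bfx \succ \vzero$ (which forces $[\bfA\bfx]_i > 0$). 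Hence no such $i$ exists, and $\bfx \succ \vzero$, proving strict monotonicity.

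There is essentially no obstacle here: the proof is a one-line sign-chasing argument once Theorem~\ref{theorem: non-singular M-matrix iff monotone} is in hand. The positivity of the diagonal entries, although assumed in the statement, is in fact not needed for this coordinate-wise argument (the diagonal term drops out because it is multiplied by the hypothetically vanishing $x_i$); it is presumably stated for emphasis or to match the way the lemma is invoked in Lemma~\ref{lemma:Az-matrix-is-nonsing-M-matrix}, where the diagonals of $\bfA(\bfz)$ have already been shown to be strictly positive.
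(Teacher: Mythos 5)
Your proof is correct and follows essentially the same route as the paper's: reduce to $\bfx \succeq \vzero$ via Theorem~\ref{theorem: non-singular M-matrix iff monotone}, then use the Z-matrix sign pattern coordinate-wise (the paper argues directly via $0 < [\bfA\bfx]_i \le a_{ii}x_i$, while you argue by contradiction, which indeed lets the diagonal term drop out). Your side observation that the positive-diagonal hypothesis is not needed for this step is accurate, since the diagonals of a non-singular M-matrix are positive anyway.
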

\begin{proof}
  Let \(\bfx \in \mathbb{R}^{n}\) be arbitrary such that \(\bfA \bfx \succ \vzero\).
  Our goal is to show that \(\bfx \succ \vzero\).
  First, from Theorem \ref{theorem: non-singular M-matrix iff monotone}, we have that $\bfA$ is monotone.
  Thus, $\bfA \bfx \succ \vzero$ implies $\bfx \succeq \vzero$.
  We only have to check additionally that $\bfx \succ \vzero$.
  Since $\bfA$ is a Z-matrix, the off-diagonals are non-positive, i.e., $a_{ij} \le 0$ for all \(i,j \in [n]\) such that \(i\ne j\).
  Now, let \(i \in [n]\). We need to check that $x_i > 0$.
  To this end, note that
  \(0 < [\bfA \bfx]_i = \sum_{j = 1}^n a_{ij} x_j
  =
  a_{ii} x_i
+ \sum_{j\ne i} a_{ij} x_j
  \le
a_{ii} x_i. \)
Note that \(\sum_{j\ne i} a_{ij} x_j \le 0\)  because \(x \succeq \vzero\) and \(a_{ij} \le 0\).
Finally,  \(x_i > 0\) since \(a_{ii} > 0\).
\end{proof}

\begin{corollary}\label{corollary:strictly-monotone-transpose}
  Let $\bfA = (a_{ij}) \in \mathbb{R}^{n\times n}$ be a non-singular M-matrix. Then \(\bfA^{\top}\) is also a non-singular M-matrix. Moreover,  if the diagonals of $\bfA$ are positive, then $\bfA^{\top}$ is strictly monotone.
\end{corollary}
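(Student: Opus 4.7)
The plan is to verify that $\bfA^\top$ satisfies each defining property of a non-singular M-matrix (as in Definition~\ref{definition:M-matrix}), and then to invoke Lemma~\ref{lemma: strictly monotone lemma} directly on $\bfA^\top$ to obtain strict monotonicity.

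First I would check the Z-matrix property. For $i \ne j$ we have $(\bfA^\top)_{ij} = a_{ji} \le 0$, since $\bfA$ is a Z-matrix and the off-diagonal position $(j,i)$ also satisfies $j \ne i$. Hence $\bfA^\top$ is again a Z-matrix. Next I would observe that $\bfA^\top$ has the same characteristic polynomial as $\bfA$, because $\det(\bfA^\top - \lambda \mathbf{I}) = \det((\bfA - \lambda \mathbf{I})^\top) = \det(\bfA - \lambda \mathbf{I})$. Consequently $\bfA$ and $\bfA^\top$ share the same eigenvalues, so all eigenvalues of $\bfA^\top$ have nonnegative real parts, and $\bfA^\top$ is nonsingular since $\bfA$ is. Combining, $\bfA^\top$ is a non-singular M-matrix.

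For the ``Moreover'' part, I would note that the diagonal entries of $\bfA^\top$ coincide with those of $\bfA$, so they are strictly positive by hypothesis. Applying Lemma~\ref{lemma: strictly monotone lemma} to $\bfA^\top$ (which, by the previous paragraph, is a non-singular M-matrix with positive diagonal entries) yields that $\bfA^\top$ is strictly monotone. There is no real obstacle here: the argument is essentially bookkeeping on the definitions, the key fact being that transposition preserves both the Z-matrix structure (since $i \ne j$ is symmetric in $i,j$), the diagonal, and the spectrum.
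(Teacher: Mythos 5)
Your proposal is correct and follows essentially the same route as the paper's proof: transposition preserves the Z-matrix structure, the spectrum, and the diagonal, so $\bfA^{\top}$ is a non-singular M-matrix with positive diagonal and Lemma~\ref{lemma: strictly monotone lemma} applies. Your version simply spells out the characteristic-polynomial argument that the paper leaves implicit.
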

\begin{proof}
  Note that \(\bfA\) and \(\bfA^{\top}\) are both non-singular and have the same set of eigenvalues. Moreover, \(\bfA\) being a Z-matrix implies that \(\bfA^{\top}\) is a Z-matrix.
  Thus, \(\bfA^{\top}\) is a non-singular M-matrix.
Finally, Lemma~\ref{lemma: strictly monotone lemma} implies that \(\bfA^{\top}\) is a strictly monotone matrix.
\end{proof}

\iftoggle{arxiv}{
  \subsection{Vector calculus}\label{section:vector-calculus}
  Let $f = (f_1,\dots, f_m): \mathbb{R}^n \to \mathbb{R}^m$ be a differentiable function.
  Below, let \(\bfx \in \mathbb{R}^{n}\) denote a generic input variable of \(f\). Let \(i \in [n]\) denote the index of the dimensions of \(\bfx\),  and let \(j\in [m]\) denote the index of the component functions of \(f\).
  The \emph{gradient} of $f$ at $\bfx \in \mathbb{R}^n$, denoted $\nabla_f(\bfx)$, is the $n \times m$ matrix
  whose $(i,j)$-th entry
  \(  [\nabla_f(\bfx)]_{ij}
  :=
  \frac{\partial f_i}{\partial x_j}(\bfx)
  \).
  Note that we can write the above as
  \[
    \nabla_f(\bfx)
    =
    \begin{bmatrix}
      \nabla_{f_1} (\bfx) & \cdots & \nabla_{f_m}(\bfx)
    \end{bmatrix}.
  \]
  If $f(\bfx) = \bfA \bfx$ for a matrix $\bfA \in \mathbb{R}^{m\times n}$, then $\nabla_f(\bfx) = \bfA^{\top}$.
  The transpose of the gradient (matrix) is referred to as the \emph{Jacobian matrix}.
  Below, we cite results from \cite{munkres2018analysis} regarding the Jacobian matrix.
  The two  results below are  Ch2.-Theorem 7.1 and Ch.2-Theorem 8.2 from \citet{munkres2018analysis}, respectively, restated in terms of the gradient.
  \begin{theorem}[Chain rule]\label{theorem:chain-rule}
    If \(f : \mathbb{R}^n \to \mathbb{R}^m\) and \(g : \mathbb{R}^m \to \mathbb{R}^l\) are differentiable, then
    we have \( \nabla_{g \circ f}(\bfx) = \nabla_f(\bfx) \nabla_g(f(\bfx)) \).
  \end{theorem}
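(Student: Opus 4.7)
The plan is to give the standard proof of the chain rule via best linear approximations, then translate the conclusion into the transposed-Jacobian convention used here for $\nabla$. First I would unpack the differentiability hypotheses: at $\bfx$ the map $f$ admits an expansion $f(\bfx + \mathbf{h}) = f(\bfx) + Df(\bfx)\mathbf{h} + r_{f}(\mathbf{h})$ with $\|r_{f}(\mathbf{h})\|/\|\mathbf{h}\| \to 0$, and at $\mathbf{y} := f(\bfx)$ the map $g$ admits $g(\mathbf{y} + \mathbf{k}) = g(\mathbf{y}) + Dg(\mathbf{y})\mathbf{k} + r_{g}(\mathbf{k})$ with $\|r_{g}(\mathbf{k})\|/\|\mathbf{k}\| \to 0$, where $Df(\bfx)$ and $Dg(\mathbf{y})$ are the usual Jacobians (so that $\nabla_{f}(\bfx) = Df(\bfx)^{\top}$ and $\nabla_{g}(\mathbf{y}) = Dg(\mathbf{y})^{\top}$ by the paper's convention of storing component-gradients as columns).

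Next I would substitute the expansion of $f$ into that of $g$, obtaining
\[
  g(f(\bfx + \mathbf{h})) = g(f(\bfx)) + Dg(\mathbf{y}) Df(\bfx) \mathbf{h} + R(\mathbf{h}),
\]
where $R(\mathbf{h}) := Dg(\mathbf{y}) r_{f}(\mathbf{h}) + r_{g}\bigl(Df(\bfx)\mathbf{h} + r_{f}(\mathbf{h})\bigr)$. By uniqueness of the best linear approximation, once I show $R(\mathbf{h}) = o(\|\mathbf{h}\|)$ it follows that $g \circ f$ is differentiable at $\bfx$ with total derivative $Dg(\mathbf{y}) Df(\bfx)$, and taking the transpose in the standard basis yields $\nabla_{g \circ f}(\bfx) = Df(\bfx)^{\top} Dg(\mathbf{y})^{\top} = \nabla_{f}(\bfx)\,\nabla_{g}(f(\bfx))$, which is exactly the stated identity.

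The main obstacle is verifying $R(\mathbf{h}) = o(\|\mathbf{h}\|)$. The first summand $Dg(\mathbf{y}) r_{f}(\mathbf{h})$ is handled immediately by the operator-norm bound $\|Dg(\mathbf{y}) r_{f}(\mathbf{h})\| \le \|Dg(\mathbf{y})\|_{\mathrm{op}} \|r_{f}(\mathbf{h})\|$ combined with $r_{f}(\mathbf{h}) = o(\|\mathbf{h}\|)$. The subtler piece is the second summand $r_{g}(\mathbf{k})$ where $\mathbf{k} := Df(\bfx)\mathbf{h} + r_{f}(\mathbf{h})$. Here I would first establish $\|\mathbf{k}\| \le \bigl(\|Df(\bfx)\|_{\mathrm{op}} + o(1)\bigr)\|\mathbf{h}\|$, which simultaneously forces $\mathbf{k} \to \vzero$ as $\mathbf{h} \to \vzero$ and bounds $\|\mathbf{k}\|/\|\mathbf{h}\|$ uniformly, and then factor
\[
  \frac{\|r_{g}(\mathbf{k})\|}{\|\mathbf{h}\|} = \frac{\|r_{g}(\mathbf{k})\|}{\|\mathbf{k}\|} \cdot \frac{\|\mathbf{k}\|}{\|\mathbf{h}\|};
\]
the first factor vanishes by the $r_{g}$ hypothesis while the second stays bounded, so the product tends to zero. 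The only edge case is $\mathbf{k} = \vzero$ for some $\mathbf{h} \ne \vzero$, which is trivially handled since then $r_{g}(\mathbf{k}) = \vzero$ by convention. Assembling the two estimates closes the argument.
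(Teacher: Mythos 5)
Your proof is correct, but note that the paper does not prove this statement at all: Theorem~\ref{theorem:chain-rule} is explicitly a restatement of a textbook result (Ch.~2, Theorem~7.1 of \citet{munkres2018analysis}), translated into the paper's transposed-Jacobian convention for $\nabla$. What you have written is essentially the standard proof that such a textbook would give — the best-linear-approximation expansion, the decomposition of the remainder into $Dg(\mathbf{y})\,r_{f}(\mathbf{h})$ plus $r_{g}(\mathbf{k})$ with $\mathbf{k} = Df(\bfx)\mathbf{h} + r_{f}(\mathbf{h})$, and the factorization $\|r_{g}(\mathbf{k})\|/\|\mathbf{h}\| = \bigl(\|r_{g}(\mathbf{k})\|/\|\mathbf{k}\|\bigr)\cdot\bigl(\|\mathbf{k}\|/\|\mathbf{h}\|\bigr)$ with the $\mathbf{k}=\vzero$ edge case handled separately. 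All of these steps are sound. The one part of the statement that is genuinely specific to this paper is the convention $\nabla_{f}(\bfx) = Df(\bfx)^{\top}$ (gradients of the components stored as columns), which reverses the order of composition relative to the usual Jacobian chain rule $D(g\circ f)(\bfx) = Dg(f(\bfx))\,Df(\bfx)$; you handle this correctly by taking the transpose at the end to obtain $\nabla_{f}(\bfx)\,\nabla_{g}(f(\bfx))$. So your proof is a valid, self-contained substitute for the citation; it neither conflicts with nor improves upon anything in the paper, since the paper delegates the argument entirely to the reference.
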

  A special case of the chain rule is that
if $f(\bfx) = \bfA \bfx$ for a matrix $\bfA \in \mathbb{R}^{m\times n}$,
then
    \(
    \frac{\partial}{\partial \bfx} g(\bfA \bfx)
    =
    \bfA^{\top}
    \nabla_g(\bfA \bfx)
    \).

  \begin{theorem}[Inverse function theorem]\label{theorem:inverse-function-theorem}
    Let $U$ be open in $\mathbb{R}^n$, $f:U \to \mathbb{R}^n$ be $r$-times continuously differentiable and $V = f(U)$.
    If $f$ is one-to-one on $U$ and if $\nabla_f(\bfx)$ is non-singular for all $\bfx \in U$, then $V$ is open in $\mathbb{R}^n$ and the inverse function $g : V \to U$ is $r$-times continuously differentiable.
  \end{theorem}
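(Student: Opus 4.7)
The plan is to reduce the stated global version to the standard local Inverse Function Theorem and then patch the local inverses together, using the hypothesis that $f$ is injective on all of $U$. Concretely, I would first prove the local version: for each $\bfx_0 \in U$, there exists an open neighborhood $U_{\bfx_0} \subseteq U$ of $\bfx_0$ such that $f(U_{\bfx_0})$ is open in $\mathbb{R}^n$ and $f|_{U_{\bfx_0}}$ has a $C^r$ inverse onto $f(U_{\bfx_0})$. Granting the local statement, openness of $V$ is automatic because $V = \bigcup_{\bfx_0 \in U} f(U_{\bfx_0})$ is a union of open sets. Global $C^r$-smoothness of $g := f^{-1} : V \to U$ then follows because for $\bfy \in V$ with $\bfy = f(\bfx_0)$, the local inverse on $f(U_{\bfx_0})$ must coincide with $g$ on that neighborhood (by global injectivity of $f$), and smoothness is a local property.

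For the local step, which is the classical core of the theorem, I would use the Banach contraction mapping principle in the standard way. After translating, assume $\bfx_0 = \vzero$, $f(\vzero) = \vzero$, and, by precomposing with the linear isomorphism $\nabla_f(\vzero)^{-\top}$, assume the Jacobian of $f$ at $\vzero$ is the identity. Define the auxiliary map $\varphi_{\bfy}(\bfx) := \bfx + \bfy - f(\bfx)$, whose fixed points in $\bfx$ are exactly the solutions of $f(\bfx) = \bfy$. Since the Jacobian of $\varphi_{\bfy}$ at $\vzero$ vanishes and $f$ is continuously differentiable, continuity gives a closed ball $\overline{B(\vzero,r)} \subseteq U$ on which the operator norm of $\nabla_{\varphi_{\bfy}}$ stays below $1/2$. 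By the mean value inequality, $\varphi_{\bfy}$ is a $\tfrac12$-contraction there, and for $\bfy$ in a sufficiently small ball $B(\vzero,r/2)$ it maps $\overline{B(\vzero,r)}$ to itself; the contraction mapping theorem produces a unique fixed point $g(\bfy)$, and this construction shows both that $f$ is a bijection between an open neighborhood of $\vzero$ and $B(\vzero,r/2)$, and that $g$ is continuous.

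To upgrade $g$ from continuous to $C^r$, I would first show differentiability directly from the definition: expanding $f(g(\bfy) + \bfh) - f(g(\bfy))$ and using non-singularity of $\nabla_f(g(\bfy))$ together with continuity of $g$, one obtains the formula
\begin{equation*}
\nabla_g(\bfy) = \bigl(\nabla_f(g(\bfy))\bigr)^{-1}.
\end{equation*}
Here the transposes of Section~\ref{section:vector-calculus} must be handled carefully so that the chain rule $\nabla_{g \circ f}(\bfx) = \nabla_f(\bfx)\nabla_g(f(\bfx)) = \mathbf{I}$ is recovered. Then a standard induction on $r$ finishes the job: if $g$ is already known to be $C^{s}$ for some $s < r$, the right-hand side of the boxed formula is a composition of the $C^{r-1}$ map $\bfy \mapsto \nabla_f(g(\bfy))$ with matrix inversion (which is $C^{\infty}$ on invertible matrices), hence is $C^{\min(s,r-1)}$, which implies $\nabla_g$ is $C^{s}$ and so $g$ is $C^{s+1}$.

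The main obstacle I expect is the local step, specifically producing the contraction radius $r$ and verifying self-mapping of the closed ball uniformly in $\bfy$; these estimates must use only the continuous differentiability and non-singularity assumptions, without invoking compactness of $U$ or any global bound. The other subtle point is the passage from ``local inverse at each point of $U$'' to ``global inverse on $V$'' — this step is essentially free here only because of the extra hypothesis that $f$ is injective on all of $U$; without that hypothesis $V$ could still be open but $g$ would merely exist as a multi-valued map, and the theorem would have to be weakened to a local statement.
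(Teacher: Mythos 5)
Your proof is correct, but note that the paper does not prove this statement at all: it is quoted verbatim (modulo the gradient-transpose convention) as a background result from Munkres's \emph{Analysis on Manifolds}, Ch.~2, Theorem~8.2. The contraction-mapping argument you sketch — normalize the Jacobian to the identity, show $\varphi_{\bfy}(\bfx) = \bfx + \bfy - f(\bfx)$ is a $\tfrac12$-contraction mapping a closed ball to itself uniformly in $\bfy$, derive the formula $\nabla_g(\bfy) = (\nabla_f(g(\bfy)))^{-1}$, bootstrap to $C^r$, and then globalize using injectivity of $f$ on $U$ — is precisely the standard textbook proof of that cited theorem, and your handling of the paper's transposed-Jacobian convention and of the local-to-global step is sound.
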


  The following is an immediate consequence of Theorems~\ref{theorem:inverse-function-theorem} and \ref{theorem:chain-rule}:
  \begin{corollary}\label{corollary:inverse-function-theorem}
    In the setting of Theorem~\ref{theorem:inverse-function-theorem}, we have
    \(\nabla_{f^{-1}}(f(\bfx)) = \nabla_{f}(\bfx)^{-1}. \)
  \end{corollary}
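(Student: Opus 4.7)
The plan is to deduce this directly from the two results cited immediately before the corollary, namely the inverse function theorem (Theorem~\ref{theorem:inverse-function-theorem}) and the chain rule (Theorem~\ref{theorem:chain-rule}). First I would invoke Theorem~\ref{theorem:inverse-function-theorem} to guarantee that, in the stated setting, the inverse map $g := f^{-1} : V \to U$ exists and is differentiable on $V$. This places us in a setting where both $\nabla_f$ and $\nabla_g$ are well-defined, so the symbol $\nabla_{f^{-1}}(f(\bfx))$ on the right-hand side of the corollary makes sense.

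Next I would apply the chain rule (Theorem~\ref{theorem:chain-rule}) to the composition $g \circ f : U \to U$, which by construction is the identity map $\bfx \mapsto \bfx$. The paper's convention is $\nabla_{g\circ f}(\bfx) = \nabla_f(\bfx)\,\nabla_g(f(\bfx))$, so this yields
\[
\mathbf{I}_n \;=\; \nabla_{\mathrm{id}}(\bfx) \;=\; \nabla_{g\circ f}(\bfx) \;=\; \nabla_f(\bfx)\,\nabla_{f^{-1}}(f(\bfx)),
\]
using that the gradient of the identity map is the $n\times n$ identity matrix $\mathbf{I}_n$ (which follows directly from the definition of the gradient matrix immediately above Theorem~\ref{theorem:chain-rule}).

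Finally, since the hypothesis of Theorem~\ref{theorem:inverse-function-theorem} includes that $\nabla_f(\bfx)$ is non-singular for all $\bfx \in U$, I can left-multiply the preceding identity by $\nabla_f(\bfx)^{-1}$ to conclude $\nabla_{f^{-1}}(f(\bfx)) = \nabla_f(\bfx)^{-1}$. There is no real obstacle to this argument: the only subtlety is the ordering convention in the chain rule, which could easily be mis-applied since the paper uses the transposed (gradient) form rather than the Jacobian form; I would double-check that the factor $\nabla_f(\bfx)$ appears on the \emph{left} of $\nabla_g(f(\bfx))$ in the product, matching the statement of Theorem~\ref{theorem:chain-rule}, before inverting. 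Beyond this bookkeeping the proof is essentially a one-line consequence.
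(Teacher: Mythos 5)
Your proof is correct and is exactly the argument the paper has in mind when it calls the corollary an immediate consequence of Theorems~\ref{theorem:inverse-function-theorem} and~\ref{theorem:chain-rule}: differentiate $f^{-1}\circ f = \mathrm{id}$ with the paper's gradient convention and invert the non-singular factor $\nabla_f(\bfx)$. The only blemish is a harmless slip calling $\nabla_{f^{-1}}(f(\bfx))$ the right-hand side of the identity when it is the left-hand side; your attention to the ordering in $\nabla_{g\circ f}(\bfx)=\nabla_f(\bfx)\,\nabla_g(f(\bfx))$ is exactly the right thing to check.
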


  \begin{remark}[Hessians]\label{remark:hessians}
    If \(\varphi : \mathbb{R}^{n} \to \mathbb{R}\) is a scalar-valued functions of a vector-valued input \(\bfx \in \mathbb{R}^{n}\), then the gradient \(\nabla_{\varphi} : \mathbb{R}^{n} \to \mathbb{R}^{n}\) is a vector-valued function.
    In this case, it makes sense to write \(\nabla_{\nabla_{\varphi}}(\bfx)\) which is a \(n\times n\) matrix. Indeed, this is simply the \emph{Hessian} of \(\varphi\) at \(\bfx\) which we denote by \(\nabla^{2}_{\varphi}(\bfx)\), following the standard convention.
  \end{remark}

  \begin{remark}[Time derivatives]\label{remark:time-derivatives}
    For \(f: \mathbb{R} \to \mathbb{R}^{m}\) differentiable functions of a univariate ``time'' variable \(t \in \mathbb{R}\), note that \(\nabla_{f} (t)
    \) is a \(1\times m\) matrix, i.e., a \emph{row} vector of the time derivatives.
    It is convenient to adopt the notation \(\frac{df}{dt} (t) = f'(t) := \nabla_{f}(t)^{\top}\) to denote the time derivatives as a \emph{column} vector.
    In this notation, the \emph{product rule} for the time derivative of the inner product of two functions \(f\) and \(g: \mathbb{R} \to \mathbb{R}^{m}\) can be stated as
    \begin{equation}
      \tfrac{d}{dt} ( f(t)^{\top}g(t))
      = f'(t)^{\top} g(t)+ f(t)^{\top} g'(t).
      \label{equation:product-rule-for-curves}
    \end{equation}
    \Cref{equation:product-rule-for-curves} follows immediately from the ordinary product rule from elementary calculus.
    If \(h: \mathbb{R}^{m} \to \mathbb{R}^{n}\) is a differentiable function,
    then the \emph{chain rule} for computing the time derivative of the composition \(h \circ g\) can be stated as
    \begin{equation}
      \tfrac{d}{dt} (h(g(t))) =
      \nabla_{h}(g(t))^{\top} g'(t).
      \label{equation:chain-rule-for-curves}
    \end{equation}
    To see this, note that by definition, we have
    \(
    \tfrac{d}{dt} (h(g(t))) =
    \nabla_{h \circ g} (t)^{\top}\). Now, by Theorem~\ref{theorem:chain-rule}
    \(\nabla_{h \circ g} (t) = \nabla_{g}(t) \nabla_{h}(g(t)) = g'(t)^{\top} \nabla_{h}(g(t))\).
    Taking transpose, we get
    \cref{equation:chain-rule-for-curves}.
  \end{remark}
}{}

\end{document}